\documentclass[11pt]{article}

\usepackage{graphicx, amsmath, fullpage, amssymb, amsthm, amsfonts, color, algorithm,mathtools}
\usepackage{algorithmicx, algpseudocode,float}

\usepackage{enumerate}

\newtheorem{assumption}{Assumption}

\numberwithin{equation}{section}
\theoremstyle{definition}

\usepackage[toc,page]{appendix} 
\usepackage[svgnames]{xcolor}
\theoremstyle{remark}
\usepackage{mathtools}
\usepackage{parskip}

\numberwithin{equation}{section}
\usepackage[colorlinks,
            linkcolor=red,
            anchorcolor=red,
            citecolor=blue
            ]{hyperref}
 
\newcommand{\R}{\mathbb R}

\newcommand{\1}{\mathbf 1}

\newcommand{\bQ}{\mathbf Q}
\newcommand{\bK}{\mathbf K}
\newcommand{\bD}{\mathbf D}
\newcommand{\bC}{\mathbf C}

\algrenewcommand\algorithmicrequire{\textbf{Input:}}
\algrenewcommand\algorithmicensure{\textbf{Output:}}

\usepackage{mathtools}
\usepackage{graphicx}
\usepackage{etoolbox}
\usepackage{amsmath, amssymb, amsthm}

\newcommand{\dR}{\mathbb{R}}

\newcommand{\dQ}{\mathbb{Q}}

\newcommand{\dE}{\mathbb{E}}
\newcommand{\dP}{\mathbb{P}}

\newcommand{\cA}{\mathcal{A}}

\newcommand{\cF}{\mathcal{F}}

\newcommand{\cH}{\mathcal{H}}

\newcommand{\cL}{\mathcal{L}}

\newcommand{\cN}{\mathcal{N}}

\newcommand{\cS}{\mathcal{S}}

\newcommand{\ind}{\mathbf{1}}

\providecommand{\given}{}
\DeclarePairedDelimiterXPP{\Pb}[1]{\mathbb{P}}{\lparen}{\rparen}{}{\renewcommand{\given}{\nonscript{}\:\delimsize\vert\nonscript{}\:\mathopen{}} #1}
\DeclarePairedDelimiterXPP{\E}[1]{\mathbb{E}}[]{}{\renewcommand{\given}{\nonscript{}\:\delimsize\vert\nonscript{}\:\mathopen{}} #1}
\DeclarePairedDelimiterX{\Set}[1]\lbrace\rbrace{\renewcommand{\given}{\nonscript{}\:\delimsize\vert\nonscript{}\:\mathopen{}} #1}

\DeclareMathOperator{\diag}{diag}
\DeclareMathOperator{\tr}{tr}
\DeclareMathOperator{\Var}{Var}
\DeclareMathOperator{\Cov}{Cov}
\DeclareMathOperator{\img}{im}

\DeclarePairedDelimiterX{\norm}[1]\lVert\rVert{\ifblank{#1}{\: \cdot \:}{#1}}

\DeclareMathOperator{\Poi}{Poi}
\DeclareMathOperator{\Bin}{Bin}

\newcommand{\bilingualcommand}[3]{%
	\newcommand{#1}[1][\ ]{%
		##1%
		\iflanguage{english}{\text{#2}}{%
			\iflanguage{french}{\text{#3}}{}%
		}%
		##1%
	}%
}

\bilingualcommand{\where}{where}{où}
\bilingualcommand{\textif}{if}{si}
\bilingualcommand{\textand}{and}{et}
\bilingualcommand{\textiff}{if and only if}{si et seulement si}
\bilingualcommand{\otherwise}{otherwise}{sinon}

\newcommand{\eps}{\varepsilon}

\newcommand{\quand}{\quad \textand{} \quad}

\theoremstyle{definition}

\newtheorem{theorem}{Theorem}
\newtheorem{lemma}{Lemma}
\newtheorem{proposition}{Proposition}
\newtheorem{definition}{Definition}

\newtheorem{remark}{Remark}
\newtheorem{example}{Example}

\begin{document}

\title{Achieving the Kesten–Stigum bound in the non-uniform hypergraph stochastic block model}



\author{ Manuel Fernandez V\thanks{Department of Mathematics, University of Southern California, manuelf7@usc.edu} \and Ludovic Stephan\thanks{Univ Rennes, Ensai, CNRS, CREST-UMR 9194, F-35000 Rennes, France,  ludovic.stephan@ensai.fr} \and Yizhe Zhu\thanks{Department of Mathematics, University of Southern California, yizhezhu@usc.edu}}

%
\date{}

\maketitle

    \begin{abstract}
We study the community detection problem in the non-uniform hypergraph stochastic block model (HSBM), where hyperedges of varying sizes coexist. This setting captures higher-order and multi-view interactions and raises a fundamental question: can multiple uniform hypergraph layers below the detection threshold be combined to enable weak recovery?
We answer this question by establishing a Kesten--Stigum-type bound for weak recovery in a general class of non-uniform HSBMs with $r$ blocks, generated according to multiple symmetric probability tensors. In the case $r=2$, we show that weak recovery is possible whenever the sum of the signal-to-noise ratios across all uniform hypergraph layers exceeds one, thereby confirming the positive part of a conjecture in \cite{chodrow2023nonbacktracking}. 
Moreover, we provide a polynomial-time spectral algorithm that achieves this threshold via an optimally weighted non-backtracking operator. For the unweighted non-backtracking matrix, our spectral method attains a different algorithmic threshold, also conjectured in \cite{chodrow2023nonbacktracking}.

Our approach develops a spectral theory for weighted non-backtracking operators on non-uniform hypergraphs, including a precise characterization of outlier eigenvalues and eigenvector overlaps. We introduce a novel Ihara--Bass formula tailored to weighted non-uniform hypergraphs, which yields an efficient low-dimensional representation and leads to a provable spectral reconstruction algorithm. 
Taken together, these results provide a principled and computationally efficient approach to clustering in non-uniform hypergraphs, and highlight the role of optimal weighting in aggregating heterogeneous higher-order interactions.
\end{abstract}

  \tableofcontents

\section{Introduction}

Community detection in network data is a central problem in statistics, probability, and machine learning. The stochastic block model (SBM) provides a canonical framework: vertices are partitioned into latent communities, and edges are generated according to their community memberships. Over the past decade, a sharp understanding of detection thresholds has emerged for sparse graph SBMs \cite{abbe2018community}. In particular, the Kesten--Stigum (KS) threshold \cite{kesten1966limit} characterizes the fundamental computational boundary for weak recovery \cite{decelle2011asymptotic,massoulie2014community,bordenave2018nonbacktracking,mossel2015reconstruction,mossel2018proof,abbe2018proof}.

Many real-world networks, however, involve higher-order interactions among more than two entities, such as co-authorship networks, biochemical systems, and group communications. These systems are more naturally modeled by hypergraphs \cite{zhou2007learning,abiad2026hypergraphs}, leading to the hypergraph stochastic block model (HSBM) \cite{ghoshdastidar2017consistency}. 
When all hyperedges have the same size \(q\), the \(q\)-uniform HSBM has been extensively studied. Existing results include exact recovery \cite{kim2018stochastic,cole2020exact,zhang2021exact,gaudio2022,bresler2024thresholds,ahn2018hypergraph,lee2020robust}, weak consistency and partial recovery \cite{ghoshdastidar2017consistency,ghoshdastidar2017uniform,dumitriu2025partial}, as well as spectral methods for weak recovery and hardness results below the Kesten--Stigum threshold \cite{Pal_2021,stephan2024sparse,gu2023weak,gu2024community,kunisky2024low,mossel2025weak}. 
Recent works have also explored higher-order spectral and message-passing methods for hypergraph community detection \cite{ruggeri2024message,li2026higher,schmidhuber2025quartic}. A common approach, both in theory and in practice, is to project hypergraphs onto weighted graphs \cite{kim2018stochastic,dumitriu2021spectra,cole2020exact,gaudio2022,alaluusua2023multilayer,dumitriu2025partial,morgan2026achievability}, thereby reducing higher-order interactions to pairwise ones; however, such projections may discard important structural information \cite{ke2019community,zhang2021exact,valimaa2025consistent,dumitriu2026optimal}.

In contrast, non-uniform HSBMs allow hyperedges of varying sizes, resulting in a richer and more complex structure. A natural perspective is to view a non-uniform HSBM as the union of multiple independent uniform hypergraphs \(G^{(1)},\dots,G^{(K)}\), all sharing the same vertex set and community assignment. This viewpoint raises a fundamental question: 

\begin{center}
 \textit{Can multiple hypergraph layers, each below the KS threshold, be combined to enable weak recovery?}   
\end{center}

Equivalently, how should one optimally aggregate information across hypergraph layers of different sizes? This question is closely related to multi-view clustering and correlated spiked models \cite{ma2023community,lei2023bias,yang2025fundamental,mergny2025spectral,gong2026fundamental,li2025algorithmic}, where a common latent structure is inferred from multiple noisy observations.
Despite recent progress on partial recovery, weak consistency, and exact recovery in non-uniform settings \cite{ghoshdastidar2017consistency,dumitriu2025partial,alaluusua2023multilayer,wang2023strong,dumitriu2026optimal}, the fundamental limits of detection remain less understood. In particular, \cite{chodrow2023nonbacktracking} conjectured that, in the binary case, a spectral method based on a belief-propagation Jacobian matrix achieves a Kesten--Stigum-type threshold determined by the combined signal-to-noise ratios across all hypergraph layers.

In this work, we resolve this conjecture and develop a more efficient spectral method based on a new weighted non-backtracking operator for non-uniform hypergraphs, which achieves the optimal detection threshold.  In the assortative case (all eigenvalues of the signal matrix are positive), without the knowledge of model parameters, an unweighted non-backtracking method can achieve weak recovery above another algorithmic threshold conjectured in \cite{chodrow2023nonbacktracking}.
See Section~\ref{sec:discuss} for further discussion.

\subsection{Our contribution}
Our contributions can be summarized as follows:
\begin{enumerate}
    \item We develop a spectral theory for weighted non-backtracking operators on non-uniform random hypergraphs, including bounds on their bulk eigenvalues, precise locations of outlier eigenvalues, and alignment of the corresponding leading eigenvectors with the planted community structure in general non-uniform HSBMs. We do not assume that the signal matrices of the individual layers \(G^{(k)}\) share a common eigenbasis, and our results apply to both assortative and disassortative settings. 
In particular, when some hypergraph layer \(G^{(k)}\) is disassortative, the corresponding optimal weight \(w^{(k)}\) may be negative, which introduces additional technical challenges in the analysis.
    
    \item We derive a new Ihara--Bass formula for weighted non-uniform hypergraphs (Proposition~\ref{thm:IharaBass}). This yields a reduced operator of dimension \(2Kn \times 2Kn\), leading to an efficient procedure for computing informative eigenvalues and constructing estimators of the community labels.
    
    \item In the two-block case, we reach the conjectured detection threshold for non-uniform HSBMs proposed in \cite{chodrow2023nonbacktracking}. More precisely, we show that weak recovery is possible whenever the combined signal-to-noise ratio across all layers exceeds the Kesten--Stigum threshold, and we provide a polynomial-time spectral algorithm that achieves this bound. 
Building on our new Ihara--Bass formula for weighted hypergraphs, we derive an efficient spectral method based on a reduced non-backtracking matrix of smaller dimension. In contrast, the belief-propagation Jacobian matrix in \cite{chodrow2023nonbacktracking} is derived using methods from statistical physics and operates on a larger space. 
We also establish the conjectured algorithmic threshold of \cite{chodrow2023nonbacktracking} for the unweighted non-backtracking operator; see Examples~\ref{example:weight_1} and \ref{example:weight_2} for more details.
\end{enumerate}

\subsection{Related work}

The non-uniform HSBM is closely connected to several lines of work in hypergraph clustering, multi-layer network models, and broader high-dimensional inference problems. We highlight some of these connections below.

\paragraph{Multi-layer and multi-view models}
The non-uniform HSBM can be naturally viewed as a multi-layer or multi-view model, where each layer corresponds to hyperedges of a fixed size. This perspective connects our setting to the literature on multilayer stochastic block models and multi-view clustering, where multiple noisy observations of a common latent partition are combined. Representative works include \cite{lei2023bias,gong2026fundamental,ma2023community,yang2025fundamental}. Related models include the censored SBM \cite{saade2015spectral}, the labeled SBM \cite{lelarge2015reconstruction,stephan2020non}, and SBMs with side information \cite{gaudio2024exact}, all of which incorporate additional sources of signal beyond a single network layer. When all \(G^{(1)},\dots,G^{(K)}\) are graph SBMs with a binary partition, \(G\) reduces to a special case of the multi-layer stochastic block model in which all layers share the same vertex labels. In this setting (corresponding to \(\rho=1,\mu=0\) in \cite{gong2026fundamental}), our spectral algorithm attains the information-theoretic lower bound established in \cite[Theorem 1.5]{gong2026fundamental}, as well as the detection threshold for labeled SBMs in \cite{lelarge2015reconstruction,stephan2020non}. We believe that an adaptation of our weighted non-backtracking operator could also attain the fundamental limit in the contextual multi-layer SBM \cite{yang2025fundamental,gong2026fundamental}.

\paragraph{Detection vs.\ estimation}
An important question in the non-uniform setting is whether combining multiple subcritical layers can enable recovery when each individual layer lies below its own detection threshold. This raises subtle issues of parameter estimation and identifiability. In particular, when all \(G^{(k)}\) are below the KS threshold, it may be impossible to consistently estimate the model parameters or to achieve non-trivial detection \cite{mossel2015reconstruction}. Such impossibility phenomena have been studied in related contexts \cite{mossel2025weak}. In our setting, finding the optimal weight for the non-backtracking spectral method requires knowledge of the model parameters of each $G^{(k)}$ is they are below the KS threshold. The assumption that these parameters are known is often referred to as the \textit{Nishimori condition} in statistical physics \cite{decelle2011asymptotic}.

\paragraph{Spectral methods with non-backtracking operators}
The Non-backtracking operators have proved to be a powerful tool for community detection in sparse graphs and hypergraphs \cite{bordenave2018nonbacktracking,stephan2020non,stephan2024sparse}, as well as in matrix and tensor completion problems \cite{bordenave2020detection,stephan2024non}. In the non-uniform hypergraph setting, \cite{chodrow2023nonbacktracking} considered an \emph{unweighted} non-backtracking spectral clustering method and conjectured a Kesten–Stigum-type detection threshold that is weaker than that of the belief-propagation Jacobian matrix. Our results also recover this conjectured threshold when all weights are taken to be equal (see \cite[Conjecture 4.3]{chodrow2023nonbacktracking}), and further provide a refined spectral analysis together with a dimension-reduction procedure beyond the two-block setting.

\paragraph{Spiked and factor models}
There are also conceptual connections between non-uniform HSBMs and spiked random matrix and tensor models. In particular, combining multiple hyperedge sizes can be viewed as aggregating multiple noisy observations of a low-rank signal, analogous to recent work on correlated spiked Wigner and Wishart models \cite{keup2025optimal,mergny2025spectral,li2025algorithmic,hong2023optimally}. More broadly, our sparse random hypergraph setting is related to general factor models and inference on trees, where signal propagation and reconstruction thresholds play a central role; see \cite{mezard2006reconstruction,liu2022statistical,mossel2025weak}.

\paragraph*{Organization of the paper} 
The rest of the paper is organized as follows. 
Section~\ref{sec:model} introduces the non-uniform HSBM, together with the weighted non-backtracking operator and the associated signal-to-noise parameters. 
Section~\ref{sec:main_result} presents the main results, including the spectral theory of the weighted non-backtracking matrix, an Ihara--Bass formula, eigenvector overlaps, and algorithms for weak recovery. 
Section~\ref{sec:proof_main} contains the proofs of Theorems~\ref{thm:main} and \ref{th:main_reduced} using perturbation analysis of non-Hermitian random matrices. 
Section~\ref{sec:matrix_decomp} decomposes the weighted non-backtracking matrix into several components and bounds their spectral norms via the high-trace method.  
Section~\ref{sec:local_hypertree} establishes the connection between the non-uniform HSBM and non-uniform Galton--Watson hypertrees and analyzes the associated branching processes. Section~\ref{sec:functional_hypertree} 
combines these ingredients to approximate hypergraph functionals by their analogues on the hypertree process, and translates quantities on the hypertree process into information about the pseudoeigenvectors of \(B\).   Section~\ref{sec:proof_prop_main} concludes the proof of Proposition~\ref{prop_main}.
Additional proofs are deferred to Appendix~\ref{appendix:proof}.

\section{Preliminaries}\label{sec:model}

\subsection{Model parameters and assumptions}
\begin{definition}[Hypergraph]
 A  \textit{hypergraph} 
 	$G$ is a pair $G=(V,H)$ where $V$ is a set of vertices and $H$ is the set of non-empty subsets of $V$ called \textit{hyperedges}. If any hyperedge $e\in H$ is a set of $q$ elements of $V$, we call $G$ \textit{$q$-uniform}.   In particular, a $2$-uniform hypergraph is an ordinary graph.
  The \textit{degree} of a vertex $x\in V$ is the number of hyperedges in $G$ that contain $x$. A $q$-uniform hypergraph is complete if any set of $q$ vertices is a hyperedge. 
\end{definition}

\begin{definition}[$q$-uniform hypergraph stochastic block model]
Consider  an order-$q$  \textit{symmetric probability tensor} $\mathbf P\in \mathbb R^{r^q}$ such that
$p_{i_1,\dots,i_q}=p_{s(i_1),\dots, s(i_q)}$
for any permutation $s$ on $[q]$.  We sometimes use 
$p_{\underline i}$,  $p_{i, \underline j}$ to specify the index for an entry in $\mathbf P$.
Let the vertex set of a hypergraph be  $V = [n]$, and assign each vertex $x$ a \emph{type} $\sigma(x) \in [r]$. Then, each hyperedge $e$ of size $q$ is included in $H$ with probability
        \[ \Pb{e \in H} = \frac{p_{\underline\sigma(e)}}{\binom{n}{q - 1}}, \]
for any hyperedge $e=\{x_1,\dots, x_q \}$, where $\underline \sigma(e)=\underline\sigma(\Set{x_1, \dots, x_q}):= (\sigma(x_1), \dots, \sigma(x_q))$.
The  \textit{hypergraph stochastic block model} is defined as the distribution of a random hypergraph  $G=(V,H)$ generated according to $\mathbf{P}$ and $\sigma$, where all entries in $\mathbf P$ are constant independent of $n$.  
\end{definition}

Now we are able to define our non-uniform hypergraph SBM as follows.  
\begin{definition}[Non-uniform hypergraph SBM]
For each $1\leq k\leq K$, generate a $q^{(k)}$-uniform    HSBM with probability tensor  $\mathbf P^{(k)}\in \R^{r^{q^{(k)}}}$. Let $G^{(1)},\dots, G^{(K)}$ be the random hypergraphs sampled from the $q^{(k)}$-uniform   HSBM with $q^{(k)}\geq 2$ for all $k\in [K]$. 
\end{definition}
The hyperedges in each \(G^{(k)}\) are endowed with labels (colors). In particular, even when some \(q^{(k)}\) coincide, hyperedges originating from different layers remain distinguishable.

We work under the following two assumptions on the degrees of the HSBM:
\begin{assumption}[Constant average degree]\label{assumption:degree}
For each $k \in [K]$,
    \begin{equation}\label{eq:constant_degree}
    \sum_{\underline j \in [r]^{q^{(k)}-1}} p_{i, \underline j}^{(k)} \prod_{\ell \in \underline j} \pi_{\ell} = d^{(k)}, \quad i \in [r],
\end{equation}
where the $\pi_i$ are the proportions of each type in $V$:
 $\pi_{i} = \frac{\#\Set{x \in [n] \given \sigma(x) = i}}{n}$. 
\end{assumption}
This condition ensures that each vertex has the same expected degree.
\begin{assumption}[Average degree lower bound]
   \begin{align}
    \sum_{1\leq k\leq K}(q^{(k)}-1)d^{(k)}>1. \label{eq:avg_deg_bound}
\end{align} 
\end{assumption}
  This is a natural assumption since otherwise there are no giant components in the HSBM and detection becomes impossible \cite{schmidt1985component}. 
For each $k$, we define the two-type average degree matrix
\begin{align} \label{eq:defD2}
\bD_{ij}^{(k)} = \sum_{\underline k \in [r]^{q-2}} p_{ij, \underline k}\prod_{\ell \in \underline k}\pi_\ell.
\end{align}
By our symmetry assumption on $\mathbf P^{(k)}$, $\bD^{(k)}$ is a symmetric matrix, and we have
\begin{equation}\label{eq:perron_eigenvalue_Q}
    d^{(k)} = \sum_{j \in [r]} \bD_{ij}^{(k)} \pi_j \quad \text{for all } i \in [r].
\end{equation}

The \emph{signal matrix} $\bQ^{(k)}$ is given by 
\begin{align}\label{eq:Q=DPi}
 \bQ^{(k)}_{ij} =  \bD^{(k)}_{ij}\pi_j    
\end{align}
so $\bQ^{(k)}=\bD^{(k)}\Pi$ with $\Pi = \diag(\pi)$. Since $\bQ^{(k)}$ is similar to the symmetric matrix $ \Pi^{1/2} \bD^{(k)}\Pi^{1/2}$, its eigenvalues are all real. If all eigenvalues of $\bQ^{(k)}$ are positive, we call $G^{(k)}$ an \textit{assortative} model; otherwise, it's called \textit{disassortative}.  Since $\bQ^{(k)}$ is a contraction from an order-$q^{(k)}$ symmetric probability tensor, Lemma~\ref{lem:deterministic_Q} implies that its eigenvalues satisfy certain constraints:
\begin{lemma} \label{lem:deterministic_Q}
    All eigenvalues of $\bQ^{(k)}$ are bounded below by $-\frac{d^{(k)}}{q^{(k)}-1}$.
\end{lemma}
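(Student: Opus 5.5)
The plan is to reduce the claim to a quadratic-form inequality for the symmetric matrix $\Pi^{1/2}\bD^{(k)}\Pi^{1/2}$, and then to prove that inequality using the symmetry and nonnegativity of $\mathbf P^{(k)}$ together with Assumption~\ref{assumption:degree}. Throughout I drop the superscript $(k)$ and write $q=q^{(k)}$, $d=d^{(k)}$.

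\textbf{Reduction.} Since $\bQ=\bD\Pi$ is similar to $M=\Pi^{1/2}\bD\Pi^{1/2}$, it suffices to show $\lambda_{\min}(M)\ge -d/(q-1)$. If some $\pi_i=0$, restrict to the types with $\pi_i>0$: the remaining columns of $\bQ$ vanish and only contribute zero eigenvalues. Then, by the Rayleigh quotient with $x=\Pi^{1/2}u$, it is enough to prove that for every $u\in\R^r$,
\begin{equation*}
\sum_{i,j}\bD_{ij}\pi_i\pi_j u_iu_j \;\ge\; -\frac{d}{q-1}\sum_i \pi_i u_i^2 .
\end{equation*}

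\textbf{Expanding over $q$-tuples.} Insert the definition \eqref{eq:defD2} of $\bD$. The left side becomes
\begin{equation*}
S=\sum_{\underline i\in[r]^q} p_{\underline i}\,w(\underline i)\, u_{i_1}u_{i_2}, \qquad w(\underline i)=\prod_{a=1}^q \pi_{i_a}.
\end{equation*}
Both $p$ and the weight $w(\underline i)$ are invariant under permutations of the indices. So in the factor $u_{i_1}u_{i_2}$ the positions $1,2$ can be replaced by any ordered pair $a\neq b$, and averaging over the $q(q-1)$ ordered pairs gives
\begin{equation*}
S=\frac{1}{q(q-1)}\sum_{\underline i} p_{\underline i}\,w(\underline i)\sum_{a\ne b}u_{i_a}u_{i_b}.
\end{equation*}

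\textbf{Key inequality.} The inner sum satisfies
\begin{equation*}
\sum_{a\ne b}u_{i_a}u_{i_b}=\Bigl(\sum_a u_{i_a}\Bigr)^2-\sum_a u_{i_a}^2\;\ge\; -\sum_a u_{i_a}^2 .
\end{equation*}
Because $p_{\underline i}\ge 0$ (these are probabilities), this termwise bound can be inserted under the outer sum to get
\begin{equation*}
S\ge -\frac{1}{q(q-1)}\sum_{\underline i}p_{\underline i}\,w(\underline i)\sum_a u_{i_a}^2 .
\end{equation*}
Using symmetry once more, each of the $q$ terms $u_{i_a}^2$ contributes the same amount as $u_{i_1}^2$, so this equals
\begin{equation*}
-\frac{1}{q-1}\sum_{i}\pi_i u_i^2\sum_{\underline j\in[r]^{q-1}}p_{i,\underline j}\prod_{\ell\in\underline j}\pi_\ell \;=\; -\frac{d}{q-1}\sum_i\pi_iu_i^2,
\end{equation*}
where the last step is Assumption~\ref{assumption:degree}, i.e.\ \eqref{eq:constant_degree}. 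This is exactly the required bound.

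The main point to get right is the symmetrization step. One must check that the weight $\prod_a\pi_{i_a}$ and the tensor entries are jointly permutation-invariant, so that the pair $(1,2)$ can be exchanged for any ordered pair $(a,b)$ with $a\ne b$. One must also note that nonnegativity of $\mathbf P$ is what allows the termwise lower bound. The rest is bookkeeping.
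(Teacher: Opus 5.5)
Your proof is correct and follows essentially the same route as the paper: both write the quadratic form $\sum_{a,b}\pi_a\pi_b\bD_{ab}v_av_b$ as a sum over $q$-tuples, symmetrize over the $q(q-1)$ ordered pairs, apply $\sum_{s\neq t}z_sz_t\ge -\sum_s z_s^2$ termwise using nonnegativity of $\mathbf P$, and conclude with Assumption~\ref{assumption:degree}. The only cosmetic difference is the reduction step: the paper uses self-adjointness of $\bQ$ in the $\pi$-weighted inner product and plugs in an eigenvector, whereas you use the Rayleigh quotient of $\Pi^{1/2}\bD\Pi^{1/2}$; your separate treatment of types with $\pi_i=0$ is a small extra care the paper omits.
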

Although the matrices $\bQ^{(k)}$ encode the same community assignment, they do not necessarily share a common eigenspace; see Example~\ref{example:weight_4}.

\paragraph{Weighting matrices}
Throughout this paper, we will need to consider weighted sums of the matrices $\bQ^{(k)}$. For a vector $a \in \R^K$, we define
\begin{equation}\label{eq:def_Q_weighted}
    \bQ_a = \sum_{k=1}^K a^{(k)} (q^{(k)}-1) \bQ^{(k)}.
\end{equation}

For simplicity, we will treat all operations appearing in the weights (e.g. $Q_{(q-2)},Q_{w^2}, Q_{(q-1)/(q-2)}$) as being applied elementwise to the $q^{(k)}, w^{(k)}$.

In particular, we will fix a reweighting $w$, and define the \emph{signal} and \emph{variance} matrices as
\begin{align}
    \bQ=\bQ_{w}=\sum_{k=1}^K w^{(k)} (q^{(k)}-1) \bQ^{(k)}, \quad \bK=\bQ_{w^2} = \sum_{k=1}^K \left(w^{(k)}\right)^2 (q^{(k)}-1)\bQ^{(k)}. \notag
\end{align}
Denote  the weighted two-type average degree matrix as 
\begin{align}
    \bD=\sum_{k=1}^K w^{(k)}(q^{(k)}-1) \bD^{(k)}. \notag
\end{align}
Again, $\bQ = \bD \Pi$ is similar to the symmetric matrix $\Pi^{1/2} \bD \Pi^{1/2}$, and hence its eigenvalues are real; we order them by absolute value:
\[ \mathrm{Sp}(\bQ) = \{|\mu_r| \leq \dots \leq |\mu_1| \} \]
Additionally, the eigenvectors $\phi_i$ of $\bQ$ are equal to $\Pi^{-1/2}\psi_i$, where $\{\psi_i\}_{1\leq i\leq r}$ is a set of orthonormal eigenvectors of $ \Pi^{1/2} \bD\Pi^{1/2}$, and hence
\begin{align}\label{eq:pi_orthogonal}
    \langle \phi_i,\phi_j\rangle_{\pi}:=\sum_{k\in [r]}\pi_k\phi_i(k)\phi_j(k)=\langle \psi_i,\psi_j\rangle =\delta_{ij}.
\end{align}
Further, $\bK$ is a nonnegative matrix, and thus by the Perron-Frobenius theorem its top eigenvalue is also nonnegative. We define
\begin{equation}\label{eq:def_vartheta}
    \vartheta = \lambda_1(\bK) = \sum_{k=1}^K \left(w^{(k)}\right)^2 (q^{(k)}-1)d^{(k)},
\end{equation}
where we use the fact that $\phi_1=(1,\dots, 1)^\top\in \R^r$ is the shared top eigenvector for all $\bQ^{(k)}, 1\leq k\leq K$ due to \eqref{eq:constant_degree}.
Define the \textit{inverse signal-to-noise ratios} for the weighted HSBM as 
\begin{align}\label{eq:SNR}
    \tau_i = \frac{\lambda_1(\bK)}{ \lambda_i(\bQ)^2} = \frac{\vartheta}{\mu_i^2}.
\end{align}

Note that $w^{(k)}$ can be negative and \eqref{eq:SNR} is scale-invariant in terms of $w^{(k)}$. 
Let $r_0$ be the only integer such that 
\begin{align}
    \tau_{r_0}<1,  \quad \tau_{r_0+1}\geq  1. \notag
\end{align}
Here, $r_0$ is the number of eigenspaces of $\bQ$ that can be recovered above the \textit{Kesten–Stigum threshold}.

\paragraph{Average and weighted degrees}

From \eqref{eq:perron_eigenvalue_Q} and the Perron-Frobenius theorem, the degree $d^{(k)}$ is the largest eigenvalue of $\bQ^{(k)}$, associated to the all-one eigenvector. As a result, all matrices of the form $\bQ_a$ admit the all-one vector as an eigenvector, with associated eigenvalue the weighted degree
\begin{equation}\label{eq:def_d_weighted}
    d_a := \sum_{k=1}^K a^{(k)} (q^{(k)}-1) d^{(k)}.
\end{equation}
In particular, when $a = \mathbf{1}$, we get the average degree of the graph that we denote with $d_{\mathbf{1}} = d$. We note that contrary to the typical hypergraph setting, $d_w$ is not necessarily the top eigenvalue of $\bQ_w$, and hence in some cases the top eigenvector of $\bQ_w$ can contain useful information on the community structure. On the other hand, $d_a$ is the top eigenvector of $\bQ_a$ as long as $a$ has only positive entries, and in particular $\vartheta = d_{w^2}$

\paragraph{Parameter scaling} In the proof, we treat all parameters $\pi_i$, $\mathbf{P}^{(k)}, q^{(k)}$ as absolute constants, in particular with respect to $\lesssim$. However, the interested reader can check that the implicit constants are always polynomials in $r, Q^{(k)}, K, (1 - \tau_{r_0})^{-1}$ (not $q^{(k)}$ !), and hence our results all hold under the growth condition
\begin{equation}
    r, d^{(k)}, K, (1 - \tau_{r_0})^{-1} = O(\operatorname{polylog}(n))
\end{equation}

\subsection{Non-backtracking operators for weighted   hypergraphs}

 For a given hypergraph $G=(V,H)$, let $\vec{H}$ be the \textit{oriented hyperedges} of $G$ such that 
\begin{align*}
    \vec{H}=\{ (x, e) : x\in e\cap V, e\in H \}.
\end{align*}
We shall sometimes use the notation $x\to e$ instead of  $(x,e)$ to emphasize that it is an oriented hyperedge. We assign a hyperedge coming from $G^{(k)}$ with weight $w^{(k)}$.
Now we can define the hypergraph non-backtracking operator as follows.

\begin{definition}[Non-backtracking operator for  hypergraphs] For a given non-uniform hypergraphs $(V, E_k, \{w^{(k)}\}, 1\leq k\leq K)$, let  $B$ be a matrix indexed by $\vec{H}$ such that
\begin{align} \label{eq:defB}
B_{(x \to e), (y \to f)} = \begin{cases}
    w^{(k)} & \text{if }  y \in e \setminus \{x\}, f \neq e, f\in E_k \\
    0 & \mathrm{otherwise}.
  \end{cases}
\end{align}
\end{definition}

Define the \textit{edge reversal operator} $J$ such that
\begin{equation}\label{eq:def_P}
    J_{(x\to e), (y\to f)} = \ind_{e = f, x \neq y}
\end{equation}
Equivalently, for any $u \in \dR^{\vec H}$,
\[ [{J}u](x\to e) = \sum_{x\in e, y\neq x} u_{y \to e}. \]

We can explicitly compute the spectrum of $J$ as follows.
\begin{lemma}\label{lem:J_eigenvalue}
    $J$  has eigenvalues $q^{(k)}-1$ with multiplicity $|E_k|$ for $1\leq k\leq K$ and eigenvalue $-1$ with multiplicity $\sum_{k=1}^K (q^{(k)}-1)|E_k|$. Consequently, $J$ is invertible.
\end{lemma}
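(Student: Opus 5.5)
The plan is to exploit the block structure of $J$. By \eqref{eq:def_P}, $J_{(x\to e),(y\to f)}$ vanishes unless $e=f$, so after ordering $\vec H$ by hyperedge, $J$ is block diagonal:
\[ J=\bigoplus_{e\in H} J_e . \]
Each block $J_e$ is indexed by the oriented hyperedges $\{(x\to e): x\in e\}$. If $e\in E_k$, this index set has exactly $q^{(k)}$ elements, and $J_e$ has entries $\ind_{x\neq y}$. In other words,
\[ J_e = \mathbf{1}\mathbf{1}^\top - I_{q^{(k)}}, \]
the adjacency matrix of the complete graph on $q^{(k)}$ vertices. Since the hyperedges of the different layers are labelled and hence distinct, the index set $\vec H$ is the disjoint union of these blocks. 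The spectrum of $J$ is therefore the multiset union of the spectra of the $J_e$.

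Next I compute the spectrum of a single block. The matrix $\mathbf{1}\mathbf{1}^\top$ has eigenvalue $q^{(k)}$ on the span of $\mathbf 1$ and eigenvalue $0$ on $\mathbf 1^\perp$, which has dimension $q^{(k)}-1$. Subtracting the identity shows that $J_e$ has eigenvalue $q^{(k)}-1$ with multiplicity one, with eigenvector $\mathbf 1$. It also has eigenvalue $-1$ with multiplicity $q^{(k)}-1$, with eigenvectors the vectors summing to zero. Because $J_e$ is symmetric, these multiplicities are both algebraic and geometric.

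Summing over all $e\in H=\bigsqcup_k E_k$ gives the result. The eigenvalue $q^{(k)}-1$ arises once for each $e\in E_k$, so it has total multiplicity $|E_k|$. If several layers share the same $q^{(k)}$, these multiplicities add up, which is consistent with the statement read layer by layer. The eigenvalue $-1$ has multiplicity $\sum_k (q^{(k)}-1)|E_k|$.

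Finally, since every $q^{(k)}\ge 2$, we have $q^{(k)}-1\ge 1$, so no eigenvalue is zero and $J$ is invertible. There is no genuine obstacle in this argument. The only point requiring care is the bookkeeping of the disjointness of the index set across layers, which is guaranteed by the colour labels.
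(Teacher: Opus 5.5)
Your proposal is correct and follows essentially the same route as the paper's proof: decompose $J$ into the blocks $J_e=\mathbf 1\mathbf 1^\top-I_{q_e}$, one per hyperedge, read off the eigenvalues $q_e-1$ (multiplicity one) and $-1$ (multiplicity $q_e-1$), and sum over $e\in H$. You are also more explicit than the paper about two points: the colour labels keep the blocks of different layers disjoint, and multiplicities add when several layers share the same $q^{(k)}$.
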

The matrix {$J$} is useful in the study of the non-backtracking matrix $B$ in part due to the following formula, known as the \emph{parity-time} symmetry:
\begin{lemma}[parity-time symmetry]\label{lem:BPPB}
Let $D_w$ be a diagonal matrix with $(D_w)_{x\to e, x\to e}=w_e$.  For any $k\geq 0$,
  \[D_w B^kJ=J{B^\star }^kD_w.\]
\end{lemma}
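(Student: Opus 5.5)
We verify the identity entrywise, starting with the case $k=1$ and then inducting on $k$. The identity $D_w B^k J = J (B^\star)^k D_w$ says that the matrix $D_w B^k J$ is invariant under the transformation $M \mapsto J M^\star D_w^{-1}$ twisted by $D_w$. Note that $D_w$ commutes with $J$, since $J$ only connects oriented hyperedges $x\to e$, $y\to e$ with the same underlying hyperedge $e$, hence the same weight $w_e$. (If some $w^{(k)}=0$, $D_w$ need not be invertible, so the argument below avoids inverting it.) Also $J^\star = J$, because $J_{(x\to e),(y\to f)} = \ind_{e=f,x\neq y}$ is symmetric.

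\textbf{Base case $k=1$.} Compute both sides entrywise. By \eqref{eq:defB}, $B_{(x\to e),(y\to f)} = w_f \ind_{y\in e\setminus\{x\},\, f\neq e}$, where $w_f$ is the weight of the layer containing $f$. Then
\[
(D_w B J)_{(x\to e),(z\to g)} = w_e \sum_{y\to f} w_f \ind_{y\in e\setminus\{x\},\, f\neq e}\,\ind_{f=g,\, y\neq z} = w_e w_g \ind_{g\neq e}\,\#\{y \in (e\cap g)\setminus\{x,z\}\}.
\]
Similarly, $(J B^\star D_w)_{(x\to e),(z\to g)} = \sum_{y\to f} \ind_{f=e,\, y\neq x}\, B_{(z\to g),(y\to f)}\, w_g$. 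The factor $B_{(z\to g),(y\to e)}$ equals $w_e \ind_{y\in g\setminus\{z\},\, e\neq g}$, so the right-hand side is also $w_e w_g \ind_{g\neq e}\,\#\{y\in (e\cap g)\setminus\{x,z\}\}$. The two sides agree, and the computation is symmetric in the pairs $(x,e)$ and $(z,g)$ as expected. More conceptually, $B = (J_0 - I_{\text{same}})\,\cdot$ type factorization is not needed; the direct count suffices.

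\textbf{Induction step.} A cleaner route is to first establish the two-sided relation $D_w B = (JB^\star J^{-1}) D_w$, i.e. $D_w B J = J B^\star D_w$, which is exactly the base case. Assume $D_w B^{k}J = J (B^\star)^{k} D_w$. Then
\[
D_w B^{k+1} J = D_w B^{k} J \cdot J^{-1} B J.
\]
This requires $J^{-1}BJ$ and would force us to invert $D_w$ to move it, so we instead proceed differently. Write $D_w B^{k+1} J = (D_w B^k)(B J)$. This requires expressing $BJ$ in terms of the base identity, which carries a $D_w$ on the left. The obstacle is to handle $D_w$ without inverting it. To resolve it, work first with all $w^{(k)}\neq 0$. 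Then $D_w$ is invertible and commutes with $J$, and the base case gives $B = D_w^{-1} J B^\star D_w J^{-1}$. Hence
\[
B^k = D_w^{-1} J (B^\star)^k D_w J^{-1},
\]
since the intermediate factors $D_w J^{-1} D_w^{-1} J = I$ telescope, using that $J$ commutes with $D_w$. Multiplying on the left by $D_w$ and on the right by $J$ yields the claim. Both sides are polynomial in the weights $w^{(k)}$, so the identity extends to arbitrary weights, including zero, by continuity or by polynomial identity. Lemma~\ref{lem:J_eigenvalue} supplies the invertibility of $J$ used here.
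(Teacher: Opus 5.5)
Your proof is correct. The base case $k=1$ is the same entrywise count the paper does: both sides equal $w_e w_g\,\ind_{g\neq e}\,\#\{y\in(e\cap g)\setminus\{x,z\}\}$.

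You differ from the paper only in passing to general $k$. The paper never inverts $D_w$. It combines the base case with $JD_w=D_wJ$ and the invertibility of $J$ to get the single intertwining relation $(J^{-1}D_w)B=B^\star(J^{-1}D_w)$. Iterating gives $(J^{-1}D_w)B^k=(B^\star)^k J^{-1}D_w=(B^\star)^kD_wJ^{-1}$, and multiplying by $J$ on both sides finishes.

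You instead assume all $w^{(k)}\neq 0$, conjugate by $D_w^{-1}$, and then remove the assumption. For a fixed hypergraph, both sides have entries polynomial in $(w^{(1)},\dots,w^{(K)})$ and agree on a dense set. That extension is valid, so nothing is missing. But it is exactly the inversion of $D_w$ you said you wanted to avoid, and it is unnecessary. Your own first attempt works if you put $J^{-1}$ on the left instead of trying to use $J^{-1}BJ$: the inductive hypothesis is equivalent to $J^{-1}D_wB^k=(B^\star)^kJ^{-1}D_w$, and then $J^{-1}D_wB^{k+1}=(B^\star)^kJ^{-1}D_wB=(B^\star)^{k+1}J^{-1}D_w$.

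So the paper's route is shorter and handles zero weights with no limiting argument; your density step buys nothing extra here.

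Incidentally, since $D_w$ and $J$ are symmetric, the lemma says exactly that $D_wB^kJ$ is a symmetric matrix. That is the clean way to state what your opening sentence about an ``invariant transformation'' seems to be reaching for; as written, that sentence is not correct. You should also delete the false starts and the stray remark about a ``$(J_0-I_{\text{same}})$ factorization'' from the write-up.
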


 Define the \emph{start matrix}  {$S\in \mathbb R^{V\times \vec H}$} and the terminal matrix $T\in R^{V\times \vec H}$ such that
\begin{equation}\label{eq:def_S_T}
    S_{x, y\to e} = \ind_{x = y} , \qquad T_{x,y\to e}=\1_{x\in e,x\not=y}.
\end{equation} 
Define $\tilde \phi_i \in \R^n$ such that
\begin{equation}\label{eq:ndimlifting}
    \tilde \phi_i(x) = \phi_i(\sigma(x)).
\end{equation}
The corresponding lifted eigenvectors of $\chi_i\in \mathbb R^{\vec{H}}$ are defined {for $i \in [r]$} as
\begin{align}\label{eq:defchi}
  {\chi_i = S^\star \tilde \phi_i}, \quad \text{or equivalently} \quad \chi_i(x \to e) =  \phi_i(\sigma(x)).
\end{align}

\section{Main results}\label{sec:main_result}

\subsection{Spectrum of the non-backtracking matrix}
Our first result characterizes the outlier eigenvalues of $B$.
\begin{theorem}[Spectrum of $B$]\label{thm:main}
Let $B$ be the non-backtracking matrix of the non-uniform HSBM with weights $w^{(k)}, k\in [K]$. With probability $1-n^{-c}$, the following holds:
\begin{enumerate}
    \item (Outliers) For $i\in [r_0]$,
    \begin{align}
        \lambda_i(B)= \mu_i+O(n^{-c}). \notag
    \end{align}
    \item  (Bulk spectrum) For all $i>r_0$, 
    \begin{align}
        |\lambda_i(B)|\leq  \sqrt{\vartheta}+o(1). \notag
    \end{align}
\end{enumerate}
\end{theorem}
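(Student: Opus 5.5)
We follow the Bordenave--Lelarge--Massoulié and Stephan--Massoulié strategy for sparse non-backtracking operators. The idea is to exhibit, for a suitable power $\ell \asymp \kappa\log_{d}n$ (with $\kappa$ small), approximate left and right eigenvectors of $B^\ell$ in the $r_0$ informative directions. We then show that $B^\ell$ restricted to the orthogonal complement has norm at most $(\sqrt{\vartheta}+o(1))^\ell$, and conclude with a non-Hermitian perturbation lemma (Bauer--Fike type, as in \cite{bordenave2018nonbacktracking}, Proposition 8 there). Concretely, for $i\in[r_0]$ we define the candidate pseudo-eigenvectors
\[
u_i=\frac{B^{\ell}\chi_i}{\mu_i^{\ell}},\qquad v_i=\frac{(B^\star)^{\ell} D_w J\chi_i}{\mu_i^{\ell+1}},
\]
with the normalizations fixed by the first-moment computation below. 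The parity-time symmetry of Lemma~\ref{lem:BPPB}, together with the invertibility of $J$ from Lemma~\ref{lem:J_eigenvalue}, relates left and right structure, so that $(B^\star)^\ell$ only needs to be handled through $B^\ell$. The target statement is then the following: with $U=(u_i)$, $V=(v_i)$ and $\Sigma=\diag(\mu_i^\ell)$, we have $\|B^\ell-U\Sigma V^\star\|\le (\log n)^{C}\vartheta^{\ell/2}$, the Gram matrices $U^\star U$ and $V^\star V$ are well conditioned, and $V^\star U\approx I$ with error $n^{-c}$.

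First, we prove the local analysis. A ball of radius $2\ell$ around any oriented hyperedge is, with high probability, a hypertree with at most one cycle, and it couples with a multi-type non-uniform Galton--Watson hypertree. In this hypertree each vertex of type $i$ spawns, for every layer $k$, a $\mathrm{Poisson}$ number of hyperedges whose remaining $q^{(k)}-1$ vertex types are drawn according to $\mathbf P^{(k)}$. On this hypertree the quantity $\langle\chi_i, B^t\delta\rangle$ is a weighted sum over depth-$t$ descendants with weight the product of the $w^{(k)}$ along the path. Its conditional mean propagates by $\bQ=\sum_k w^{(k)}(q^{(k)}-1)\bQ^{(k)}$, giving growth $\mu_i^t$. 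Its variance propagates by $\bK$, giving growth $\vartheta^t$. Hence the martingale $\mu_i^{-t}\langle\chi_i,B^t\delta\rangle$ converges in $L^2$ exactly when $\tau_i<1$. This computation yields the norms and inner products of $u_i$ and $v_i$ through laws of large numbers over $\vec H$, via concentration of local functionals. It also uses Lemma~\ref{lem:deterministic_Q} and \eqref{eq:pi_orthogonal} to show that the Gram limits are nondegenerate.

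Second, we prove the norm bound on the remainder. We expand $B^\ell$ along non-backtracking paths, center each hyperedge indicator as $\ind_{e\in H}-p_e$, and write $B^{\ell}=\Delta^{(\ell)}+\sum_t (\text{rank-}r\text{ terms})+\sum_t R_t$ as in \cite{bordenave2018nonbacktracking,stephan2024sparse}. The key estimate is $\|\Delta^{(\ell)}\|\le(\log n)^C\vartheta^{\ell/2}$, which we obtain by the high-trace method. We bound $\E\|\Delta^{(\ell)}\|^{2m}\le \E\tr(\Delta\Delta^\star)^m$ by counting concatenations of $2m$ tangle-free paths. Each hyperedge of layer $k$ that is visited exactly twice contributes $(w^{(k)})^2(q^{(k)}-1)\bQ^{(k)}$-type weights, which yields $\vartheta$. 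The terms containing the mean matrix factor through $\chi_i$ and the projections, producing the low-rank part together with errors of order $\vartheta^{\ell/2}$ after using the tangle-free property. I expect this path-counting step to be the main obstacle. Signed weights prevent any monotonicity argument, and hyperedges of different sizes must be counted jointly, so the combinatorial bound must track each hyperedge's layer and the number of vertices it exposes. I would first reduce to $|w^{(k)}|$ in the counting, which is harmless for trace bounds because only absolute values enter.

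Finally, we combine the pieces. Since $|\mu_i|^\ell>\vartheta^{\ell/2}n^{c'}$ for $i\le r_0$, the perturbation lemma shows that $B^\ell$ has exactly $r_0$ eigenvalues within $O(n^{-c})\mu_i^\ell$ of $\mu_i^\ell$, while all its other eigenvalues are at most $(\log n)^C\vartheta^{\ell/2}$ in modulus. Taking $\ell$-th roots gives $\lambda_i(B)=\mu_i+O(n^{-c})$; since $\ell\to\infty$, the root also picks out the correct eigenvalue and not another $\ell$-th root of unity multiple. It further gives $|\lambda_i(B)|\le\sqrt{\vartheta}(\log n)^{C/\ell}=\sqrt{\vartheta}+o(1)$ for $i>r_0$. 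The last requirement is a union bound over the good events, which holds with probability $1-n^{-c}$.
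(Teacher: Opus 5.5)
Your overall architecture matches the paper: pseudo-eigenvectors of $B^\ell$, Galton--Watson coupling with the $\bQ$/$\bK$ martingale, the trace method for $\Delta^{(\ell)}$ with $|w^{(k)}|$, and a perturbation lemma for powers. The genuine gap is in the pseudo-eigenvectors themselves: $J$ sits on the wrong side.

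\textbf{The leaf vectors are wrong.} Since $\chi_i(x\to e)=\phi_i(\sigma(x))$, every path in $B^\ell\chi_i$ ends with the factor $\sum_{f\ni x_\ell,\,f\neq e_{\ell-1}}w_f$. Its hypertree mean is $\sum_k w^{(k)}d^{(k)}$, so the leaf vector of your $u_i$ is $(\sum_k w^{(k)}d^{(k)})\phi_i$. For $v_i$, Lemma~\ref{lem:BPPB} gives $(B^\star)^\ell D_wJ\chi_i=J^{-1}D_wB^\ell J^2\chi_i$, and $J^2=(q-2)J+(q-1)I$ on each hyperedge block. So the leaf vector of your $v_i$ is $(d_wI+\bQ_{w(q-2)})\phi_i$ rather than $\mu_i\phi_i$.

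\textbf{Both fail in cases the theorem covers.} First, take two graph layers with $\pi=(\tfrac12,\tfrac12)$, $d^{(1)}=d^{(2)}=3$, $\mu_2^{(1)}=-\mu_2^{(2)}=\tfrac32$, and the optimal weights $w^{(k)}=\mu_2^{(k)}/d^{(k)}=\pm\tfrac12$. Each layer is below KS, yet $\mu_1=\tfrac32$, $\vartheta=\tfrac32$ and $\tau_1=\tfrac23$, which is exactly the paper's headline regime. Here both of your leaf factors vanish, so $\|B^\ell\chi_1\|^2/(n\mu_1^{2\ell})=O(\tau_1^\ell)\to0$ and your $U,V$ degenerate. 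Second, when the uniformities differ and the $\bQ^{(k)}$ do not commute (which the paper allows), $(d_wI+\bQ_{w(q-2)})\phi_i$ is not an eigenvector of $\bQ$. Then $v_i$ picks up components of relative size $(\mu_j/\mu_i)^\ell$ along other outliers, which is polynomial in $n$, so neither bounded conditioning nor $V^\star U\approx I$ holds.

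\textbf{The placement is forced by the mean term.} The expansion's mean term is $\overline D=\frac1n\sum_i\mu_i(J\chi_i)(D_w\chi_i)^\star$. The bulk bound on $\mathrm{im}(V)^\perp$ therefore needs control of $\langle\chi_j,D_wB^sw\rangle=\langle(B^\star)^sD_w\chi_j,w\rangle$ (Lemma~\ref{lem:hypertree_process_4}), so $V$ must be spanned by the $(B^\star)^\ell D_w\chi_j$. The paper accordingly takes $u_i=B^\ell J\chi_i/(\sqrt n\mu_i^\ell)$ and $v_i=(B^\star)^\ell D_w\chi_i/(\sqrt n\mu_i^{\ell+1})$. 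Both leaf means are then $\bQ\phi_i=\mu_i\phi_i$. Moreover, $\langle u_i,v_j\rangle$ and $\langle v_i,B^\ell u_j\rangle$ reduce to the single functional $\langle\chi_j,D_wB^tJ\chi_i\rangle$ of Lemma~\ref{lem:hypertree_process_1}, with $t=2\ell$ and $t=3\ell$.

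\textbf{Your final step also has a hole.} Knowing $\lambda^\ell\approx\mu_i^\ell$ determines $\lambda$ only up to an $\ell$-th root of unity. Letting $\ell\to\infty$ makes these roots denser, so it cannot single out $\mu_i$. The paper removes the ambiguity by applying the perturbation lemma simultaneously to the coprime exponents $\ell$ and $\ell+1$ (Lemma~\ref{th:bauer_powers}). Alternatively, you could combine that lemma's eigenvector conclusion with $\|Bu_i-\mu_iu_i\|\ll\|u_i\|$. Your bulk bound $|\lambda_i(B)|\le((\log n)^C\vartheta^{\ell/2})^{1/\ell}=\sqrt\vartheta+o(1)$ involves only moduli and is unaffected.
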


Theorem~\ref{thm:main} shows that the outliers of $B$ are close to the top $r_0$ eigenvalues of $\mathbf{Q}$,  enabling parameter estimation above the Kesten--Stigum threshold.  However, $B$ has dimension $|\vec H|\times |\vec H|$, making its spectral computation potentially costly.  We therefore construct a reduced non-backtracking matrix of smaller dimension that shares the same nontrivial spectrum as $B$.
\subsection{A reduced non-backtracking matrix and the Bethe-Hessian matrix}\label{sec:reducedNB}

Let $A_k, D_k$ be the adjacency and degree matrices of $G^{(k)}$, respectively, where 
\begin{align}
    D_{k}(x,x)=\frac{1}{q^{(k)}-1}\sum_{y\in [n]} A_k(x,y) \notag
\end{align}
counts the number of hyperedges in $G^{(k)}$ containing $x$.

We define the  \textit{reduced non-backtracking matrix} $\tilde B$ of size $2Kn$ with blocks
\begin{align} \label{eq:tildeB}
\tilde B_{k\ell} = w_\ell \begin{pmatrix}
    0 & D_k -\delta_{k\ell}I \\
    - (q_\ell -1) \delta_{k\ell} & A_k - (q_\ell - 2)\delta_{k\ell}I
\end{pmatrix}. 
\end{align}
Each main block matrix $\tilde B_{kk}$ corresponds to the reduced non-backtracking operator for a $q^{(k)}$-uniform hypergraph considered in \cite{stephan2024sparse}, up to the scalar $w^{(k)}$.

With the relation above, we derive an Ihara-Bass formula for weighted non-uniform hypergraphs. 
\begin{proposition}[Ihara-Bass formula]\label{thm:IharaBass}
Let $G$ be a weighted non-uniform hypergraph  where $G^{(k)}=(V, E_k)$ is $q^{(k)}$-uniform with $|V|=n$  and
hyperedge set $E_k$ with $m_k=|E_k|$. Each hyperedge in $E_k$ is weighted by $w^{(k)}$. Then the following identity holds for any $\lambda \not\in \{w^{(k)},-w^{(k)}(q^{(k)}-1)\}$: 
    \begin{equation}\label{eq:ihara-bass-lambda}
\det(\lambda I - B)
=
\det(\lambda I - \tilde B)\,
\prod_{k=1}^K
(\lambda-w^{(k)})^{(q^{(k)}-1)m_k-n}\;
(\lambda+w^{(k)}(q^{(k)}-1))^{m_k-n}.
\end{equation}
\end{proposition}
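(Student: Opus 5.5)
Write $B$ in factorized form. Let $D_w$ be the diagonal weight matrix from Lemma~\ref{lem:BPPB}. The non-backtracking condition ``$y\in e\setminus\{x\}$, $f\neq e$'' can be expressed through $T$ and $S$. Indeed, $(T^\star S)_{(x\to e),(y\to f)}=\ind_{y\in e, y\neq x}$ records all pairs with $y \in e\setminus\{x\}$, whether or not $f=e$. The pairs with $f=e$ are exactly the entries of $J$. Hence
\[
B=(T^\star S - J)D_w .
\]
Next, decompose by layers. Write $S=\sum_k S_k$ and $T=\sum_k T_k$, where $S_k,T_k$ are supported on the oriented hyperedges of $G^{(k)}$. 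Since $D_w$ is constant $w^{(k)}$ on layer $k$, we have $S D_w=\sum_k w^{(k)} S_k$. The plan is then to compute $\det(\lambda I - B)$ in two steps:
\begin{enumerate}
\item factor out $\lambda I + J D_w$, which is block diagonal over hyperedges;
\item apply the Sylvester identity $\det(I-XY)=\det(I-YX)$ to the low-rank part $T^\star S D_w$.
\end{enumerate}
The low-rank part has rank at most $Kn$ if we stack the $S_k$ (or a rank-$2Kn$ lift, to match the size of $\tilde B$).

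\textbf{Step 1.} By Lemma~\ref{lem:J_eigenvalue}, on each hyperedge $e\in E_k$ the matrix $J$ restricted to the $q^{(k)}$ oriented copies of $e$ is $\mathbf{1}\mathbf{1}^\top - I$. Therefore $\lambda I + w^{(k)}J$ has eigenvalue $\lambda+w^{(k)}(q^{(k)}-1)$ once and $\lambda - w^{(k)}$ with multiplicity $q^{(k)}-1$. This gives
\[
\det(\lambda I + JD_w)=\prod_k (\lambda+w^{(k)}(q^{(k)}-1))^{m_k}(\lambda-w^{(k)})^{(q^{(k)}-1)m_k}.
\]
For $\lambda$ outside the excluded set the matrix is invertible, and its inverse is explicit on each block:
\[
(\lambda + wJ)^{-1} = \frac{1}{\lambda - w}\Bigl(I - \frac{w}{\lambda+w(q-1)}\mathbf 1\mathbf 1^\top\Bigr).
\]

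\textbf{Step 2.} Write $\lambda I - B = (\lambda I + J D_w)\bigl(I - (\lambda I+JD_w)^{-1}T^\star S D_w\bigr)$. Sylvester reduces the second factor to
\[
\det\Bigl(I_{Kn} - \bigl[w^{(\ell)} S_\ell (\lambda I + JD_w)^{-1} T_k^\star\bigr]_{k,\ell}\Bigr).
\]
Cross-layer terms $k\neq\ell$ vanish here, since the resolvent is block diagonal by hyperedge. The products $S_k(\cdot)T_k^\star$ are computable from the explicit inverse above, because $S_k S_k^\star$, $S_kT_k^\star$, $T_kT_k^\star$ and $S_k\mathbf 1\mathbf 1^\top T_k^\star$ are expressible in terms of $D_k$ and $A_k$. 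For instance $S_kT_k^\star=A_k$, $S_kS_k^\star = D_k$, and summing over the $q$ orientations of a hyperedge gives $A_k + (q^{(k)}-1)D_k$. The result is an $n\times n$ block determinant, rational in $\lambda$ with entries built from $A_k$, $D_k$, $I$.

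\textbf{Step 3 (main obstacle).} Identify this $Kn$-dimensional determinant, multiplied by appropriate powers of $(\lambda-w^{(k)})$ and $(\lambda+w^{(k)}(q^{(k)}-1))$, with $\det(\lambda I-\tilde B)$. I would do this by a Schur complement on the $2\times2$ block structure of $\lambda I-\tilde B$ in \eqref{eq:tildeB}. Eliminating the first coordinate of each pair uses the fact that the $(1,\cdot)$ blocks are $\lambda I$ in the diagonal position and $-w_\ell(D_k-\delta_{k\ell}I)$ elsewhere. Care is needed because the off-diagonal blocks $\tilde B_{k\ell}$ for $k\ne \ell$ are nonzero, namely $w_\ell\begin{pmatrix}0&D_k\\0&A_k\end{pmatrix}$. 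The cleanest route is probably to mimic the single-layer proof of \cite{stephan2024sparse}: produce explicit matrices realizing $\lambda I-B$ and $\lambda I-\tilde B$ as the two Schur complements of one common larger block matrix built from $S,T,J,D_w$. The power bookkeeping then reduces to $\det$ of diagonal blocks, giving exponents $(q^{(k)}-1)m_k-n$ and $m_k-n$; each layer contributes $n$ factors of each type to $\tilde B$'s size $2n$. Finally, the identity holds for all $\lambda$ outside the excluded set by polynomial continuity.
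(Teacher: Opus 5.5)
Your overall plan matches the paper's: factor out $\lambda I+JD_w$, apply Sylvester, and compare with a Schur complement of $\lambda I-\tilde B$. Your factorization $B=(T^\star S-J)D_w$, Step~1, and the blockwise inverse are all correct.

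\textbf{Step~2 is wrong.} The $Kn\times Kn$ matrix $[w^{(\ell)}S_\ell(\lambda I+JD_w)^{-1}T_k^\star]_{k,\ell}$ is block diagonal, as you observe. Its determinant therefore factors as $\prod_k\det\bigl(I_n-w^{(k)}S_k(\lambda I+w^{(k)}J_k)^{-1}T_k^\star\bigr)$. That is the Sylvester reduction of $\sum_k T_k^\star w^{(k)}S_k$, which is the non-backtracking operator of the \emph{disjoint union} of the layers: there a walk can never pass from a layer-$k$ hyperedge to a layer-$\ell$ one. By contrast, $T^\star SD_w=\sum_{k,\ell}T_k^\star w^{(\ell)}S_\ell$ contains exactly those cross-layer transitions, since $T_k^\star S_\ell\neq0$ for $k\neq\ell$.

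A concrete check: take $K=2$, $q^{(1)}=q^{(2)}=2$, $w\equiv1$, and one edge of each color between two vertices. Then $B$ is a permutation matrix with $\det(\lambda I-B)=(\lambda^2-1)^2$, while your Step~2 produces $\lambda^4=\det(\lambda I-B_1)\det(\lambda I-B_2)$.

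The correct reduction takes $X=(\lambda I+JD_w)^{-1}T^\star$ and $Y=SD_w$. It gives a single $n\times n$ determinant in which the layers are \emph{summed}: $\det(\lambda I-B)=\det(\lambda I+JD_w)\det H(\lambda)$, with $H(\lambda)=I_n-\sum_k w^{(k)}S_k(\lambda I+w^{(k)}J_k)^{-1}T_k^\star$, which simplifies to \eqref{eq:def_BH}. Separately, the product you need from your explicit inverse is $S_k\1\1^\top T_k^\star=(q^{(k)}-1)(A_k+D_k)$, with $\1\1^\top$ taken blockwise per hyperedge, not $A_k+(q^{(k)}-1)D_k$.

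\textbf{Step~3 is left open.} With your Step~2 it cannot succeed, because $\det(\lambda I-\tilde B)$ genuinely couples the layers. The missing idea is how the off-diagonal blocks of $\tilde B$ collapse.

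For $\lambda\neq0$, take the Schur complement of the $\lambda I$ blocks. This gives $\det(\lambda I-\tilde B)=\lambda^{Kn}\det\mathcal M$, where the $(k,\ell)$ block of $\mathcal M$ is $\frac{\Delta_k(\lambda)}{\lambda}\delta_{k\ell}I-w^{(\ell)}\bigl(A_k-\frac{w^{(k)}(q^{(k)}-1)}{\lambda}D_k\bigr)$ and $\Delta_k(\lambda)=(\lambda-w^{(k)})(\lambda+w^{(k)}(q^{(k)}-1))$.

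The coupling term is a $k$-dependent matrix times an $\ell$-dependent scalar. So after pulling out $\operatorname{diag}(\Delta_k(\lambda)/\lambda)$, $\mathcal M$ becomes $I_{Kn}-UV$. Here $U\in\R^{Kn\times n}$ stacks the matrices $A_k-\frac{w^{(k)}(q^{(k)}-1)}{\lambda}D_k$, and $V\in\R^{n\times Kn}$ has blocks $\frac{w^{(\ell)}\lambda}{\Delta_\ell(\lambda)}I$.

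A second Sylvester identity, $\det(I_{Kn}-UV)=\det(I_n-VU)$, yields $\det(\lambda I-\tilde B)=\prod_k\Delta_k(\lambda)^n\det H(\lambda)$. Dividing this into $\det(\lambda I-B)=\det(\lambda I+JD_w)\det H(\lambda)$ gives the exponents $(q^{(k)}-1)m_k-n$ and $m_k-n$. This is essentially the paper's argument, and no common larger block matrix is needed.
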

This shows $B$ and $\tilde B$ have the same eigenvalues up to deterministic eigenvalues $\{w^{(k)}, -w^{(k)}(q^{(k)}-1), 1\leq k\leq K\}$. Proposition~\ref {thm:IharaBass} generalizes the classical Ihara-Bass formula for graphs \cite{bass_iharaselberg_1992} and for uniform hypergraphs in \cite{stephan2024sparse}.  

\begin{remark}[Trivial eigenvalues of $B$] 
 Note that Theorem~\ref{thm:main} implies that if 
\(|w^{(k)}(q^{(k)}-1)| > \sqrt{\vartheta}\) for some \(k\), then 
\(-w^{(k)}(q^{(k)}-1)\) is not an eigenvalue of \(B\) with high probability. 
This is consistent with the Ihara--Bass identity~\eqref{eq:ihara-bass-lambda}, 
since the condition \(|w^{(k)}(q^{(k)}-1)| > \sqrt{\vartheta}\) implies 
\(d^{(k)} < q^{(k)} - 1\). 
In the non-uniform HSBM, we have 
$m_k = \frac{d^{(k)}_{\mathrm{avg}}\, n}{q^{(k)} - 1}$,
where the average degree satisfies \(d^{(k)}_{\mathrm{avg}} = d^{(k)} + o(1)\) with high probability. 
Hence \(m_k < n\), and in particular the exponent \(m_k - n\) in 
\eqref{eq:ihara-bass-lambda} is negative. Since from Theorem~\ref{thm:main}, the value 
\(-w^{(k)}(q^{(k)}-1)\) does not correspond to an eigenvalue of \(B\) (otherwise it will be an outlier outside the bulk), 
while it appears as an eigenvalue of \(\widetilde B\) with multiplicity exactly \(n - m_k\).
\end{remark}

The proof of Proposition~\ref{thm:IharaBass} indicates the following  \textit{Bethe-Hessian matrix}:
\begin{align}\label{eq:def_BH}
    H(\lambda):=I_n
-\sum_{k=1}^K \frac{w^{(k)}\lambda}{(\lambda-w^{(k)})\bigl(\lambda+w^{(k)}(q^{(k)}-1)\bigr)}A_k
+\sum_{k=1}^K \frac{{w^{(k)}}^2(q^{(k)}-1)}{(\lambda-w^{(k)})\bigl(\lambda+w^{(k)}(q^{(k)}-1)\bigr)}D_k.
\end{align}
 This definition generalizes the Bethe-Hessian matrices in graphs \cite{saade2014spectral,stephan2024community} and uniform hypergraphs \cite{stephan2024sparse}.

\subsection{Eigenvector relations}
Similar to \eqref{eq:def_S_T},  define the \emph{start} and \emph{terminal} matrices {$S_k,T_k\in \mathbb R^{V\times \vec H}$} such that
\begin{equation}\label{eq:def_S_T_k}
    S_k(x, y\to e) = \ind_{x = y} \ind_{e \in E_k} \quand T_k(x, y\to e) = \ind_{x\in e, x\neq y} \ind_{e \in E_k},
\end{equation} 
and the \emph{edge inversion} matrices $J_k(x \to e, y \to f) = \ind_{e = f} \ind_{x \neq y} \ind_{e \in E_k}$. 

The following lemma generalizes the eigenvector relation between the Bethe-Hessian matrix and the non-backtracking matrix \cite{dall2019revisiting,stephan2024community} to weighted non-uniform hypergraphs:

\begin{proposition}[Eigenvector relation between $B$ and $\tilde B$, and $H(\lambda)$]\label{prop:Bethe_hessian_relation}
    Let $v$ be an eigenvector of $B$ with associated eigenvalue $\lambda$ with $\lambda\not= w^{(k)}, -w^{(k)}(q^{(k)}-1)$ for all $k\in [K]$. Define
    \begin{align*}
        v_k^{\leftarrow} (x \to e) &= S_k J_k^{-1} v, \qquad
        v_k^{\rightarrow}(x \to e) = S_k v,
    \end{align*}
    where $J_k^{-1}$ denotes the inversion of $J_k$ on the $E_k$ block.  Then the following holds:   
 
    \begin{enumerate}
        \item Define $\vec v\in \R^{2Kn}$ such that \begin{align} \label{eq:def_tildev}
    \tilde v = \begin{pmatrix}
        v_1^{\leftarrow} \\
        v_1^{\rightarrow} \\
         \vdots \\
        v_K^{\leftarrow} \\
        v_K^{\rightarrow}
    \end{pmatrix}.
    \end{align} Then $\tilde v$ is an eigenvector of $\tilde B$, with eigenvalue $\lambda$.
        \item Define the aggregated vector $y\in \R^n$:
\begin{align}\label{eq:def_y}
    y=\sum_{k=1}^K w^{(k)} v_k^{\rightarrow}.
\end{align}Then $H(\lambda)y=0$.
    \end{enumerate}

\end{proposition}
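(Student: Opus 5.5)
Write $\lambda v = Bv$ coordinatewise and project the identity onto vertices using $S_k$, $T_k$ and $J_k$.

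\textbf{Rewriting $B$.} For $e\in E_k$, \eqref{eq:defB} gives
\[
(Bv)(x\to e)=\sum_{y\in e\setminus\{x\}}\,\sum_{\ell}w^{(\ell)}\sum_{f\ni y,\,f\in E_\ell,\,f\neq e}v(y\to f).
\]
\begin{itemize}
\item Set $u_\ell(y)=(S_\ell v)(y)=\sum_{f\ni y,\,f\in E_\ell}v(y\to f)$.
\item Then the inner sum over $f$ equals $\sum_\ell w^{(\ell)}u_\ell(y)-w^{(k)}v(y\to e)$.
\end{itemize}
This gives the operator identity
\[
B=J\Big(\textstyle\sum_\ell w^{(\ell)}S^\star_{(k)}S_\ell - D_w\Big),
\]
where $S^\star_{(k)}$ lifts a vertex function to all oriented edges. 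Equivalently,
\[
B=\sum_{\ell}w^{(\ell)}\,J S^\star S_\ell - JD_w .
\]

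\textbf{The eigen-equation.} From $\lambda v=J(S^\star y - D_w v)$ with $y=\sum_\ell w^{(\ell)}S_\ell v$, restrict to the $E_k$ block and multiply by $J_k^{-1}$ (available by Lemma~\ref{lem:J_eigenvalue}):
\[
\lambda J_k^{-1}v_{|E_k}=S_k^\star y - w^{(k)}v_{|E_k}.
\]
Apply $S_k$. Since $S_kS_k^\star=D_k$, this gives the first row
\[
\lambda v_k^{\leftarrow}=D_k y - w^{(k)}v_k^{\rightarrow}.
\]
For the second row, compute $J_k$ on the $E_k$ block: it acts as $S_k^\star$-type summation minus the identity, so $S_kJ_k=T_k - (\ldots)$, with $T_kS_k^\star=A_k$. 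Use the explicit inverse on each hyperedge block,
\[
J^{-1}=\tfrac{1}{q-1}\mathbf 1\mathbf 1^\top - I \quad\text{restricted appropriately},
\]
i.e.\ $J_e^{-1}=\frac{1}{q-1}\mathbf{1}\mathbf{1}^\top-I$ on the $q$ orientations of $e$ (check: $J_e=\mathbf 1\mathbf 1^\top-I$, and $(\mathbf 1\mathbf 1^\top-I)(\frac{1}{q-1}\mathbf 1\mathbf 1^\top-I)=\frac{q}{q-1}\mathbf 1\mathbf 1^\top-\mathbf 1\mathbf 1^\top-\frac1{q-1}\mathbf 1\mathbf 1^\top+I=I$). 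This yields $S_k J_k^{-1}=\frac{1}{q_k-1}(T_k+S_k)-S_k$-type relations.

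Applying $S_k$ and $T_k$ to $\lambda v=J(S^\star y-D_wv)$, with $T_kS_k^\star=A_k$ and $S_kJ_k=T_k$ on the block, gives
\[
\lambda v_k^{\rightarrow}=A_k y - (q_k-1)D_k\cdots.
\]

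\textbf{Matching $\tilde B$.} Collect the $2K$ vertex equations and compare with the blocks \eqref{eq:tildeB}. Substitute $y=\sum_\ell w_\ell v_\ell^{\rightarrow}$; the $\delta_{k\ell}$ correction terms in $\tilde B$ come from the $-w^{(k)}v_{|E_k}$ term (the $f\neq e$ exclusion) and from the $-I$ pieces of $J_k^{-1}$. The main obstacle is exactly this bookkeeping: getting the constants $-\delta_{k\ell}$, $-(q_\ell-1)\delta_{k\ell}$, $-(q_\ell-2)\delta_{k\ell}$ right. I would first verify them on a single uniform layer against \cite{stephan2024sparse}, and then note that cross-layer terms enter only through $y$, which explains the off-diagonal blocks $w_\ell(0,D_k;0,A_k)$. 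We also need $\tilde v\neq0$: if $\tilde v=0$ then $y=0$, so $\lambda J v=-JD_wv$ block-wise, forcing $\lambda=-w^{(k)}$ on a nonzero block. That contradicts $v\neq 0$; otherwise handle it via the $\lambda\neq w^{(k)}$ exclusion, possibly using the $J_k^{-1}$ form to rule out $\lambda=-w^{(k)}(q_k-1)$ and $\lambda=w^{(k)}$.

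\textbf{Part 2.} From the two vertex equations for layer $k$, eliminate $v_k^{\leftarrow}$. This expresses $v_k^{\rightarrow}$ as
\[
v_k^{\rightarrow}=\frac{\lambda A_k - w^{(k)}(q_k-1)D_k}{(\lambda-w^{(k)})(\lambda+w^{(k)}(q_k-1))}\,y,
\]
where the denominator is invertible by the hypothesis on $\lambda$. Multiplying by $w^{(k)}$ and summing over $k$ gives $y=\sum_k w^{(k)}v_k^{\rightarrow}$, i.e.\ $H(\lambda)y=0$ with $H$ as in \eqref{eq:def_BH}.
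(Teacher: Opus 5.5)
Your route is the paper's: write $B=J(S^\star S_w-D_w)=T^\star S_w-J_w$, then multiply $\lambda v=Bv$ by $S_kJ_k^{-1}$ and by $S_k$. Your first row $\lambda v_k^{\leftarrow}=D_ky-w^{(k)}v_k^{\rightarrow}$ is correct. The gap is the second row. Both parts depend on it, and you never derive it. Your displayed form $\lambda v_k^{\rightarrow}=A_ky-(q_k-1)D_k\cdots$ is not its correct shape, and $S_kJ_k=T_k$ holds exactly, not $T_k-(\ldots)$. Applying only $S_k$ on the $E_k$ block gives
\[
\lambda v_k^{\rightarrow}=S_kJ_kS_k^\star y-w^{(k)}S_kJ_kv=A_ky-w^{(k)}T_kv .
\]
The missing step is to express $T_kv$ through $\tilde v$. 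Your own inverse $J_e^{-1}=\frac{1}{q-1}\mathbf{1}\mathbf{1}^\top-I$ rearranges to $J_k=(q_k-1)J_k^{-1}+(q_k-2)I$ on the $E_k$ block. This is the quadratic relation $J_k^2=(q_k-2)J_k+(q_k-1)I$ that the paper uses. Hence $T_kv=S_kJ_kv=(q_k-1)v_k^{\leftarrow}+(q_k-2)v_k^{\rightarrow}$, and
\[
\lambda v_k^{\rightarrow}=A_ky-w^{(k)}(q_k-2)v_k^{\rightarrow}-w^{(k)}(q_k-1)v_k^{\leftarrow}.
\]
No $D_ky$ term appears here. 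With both rows in hand, the $k$-th block row of $\tilde B\tilde v$ read off from \eqref{eq:tildeB} is exactly these two right-hand sides, so no separate check against the uniform case is needed. Your Part 2 formula then follows by multiplying the second row by $\lambda$ and substituting the first, which gives $(\lambda-w^{(k)})(\lambda+w^{(k)}(q_k-1))v_k^{\rightarrow}=(\lambda A_k-w^{(k)}(q_k-1)D_k)y$. As written, your Part 2 rests on an identity you had not established.

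Your nonvanishing argument is also wrong as written, although you are right that it is needed; the paper's proof does not address it. If $\tilde v=0$ then $y=0$, and the eigen-equation becomes $\lambda v=-JD_wv$, not $\lambda Jv=-JD_wv$. Blockwise this reads $\lambda v|_{E_k}=-w^{(k)}J_kv|_{E_k}$. On any block where $v$ is nonzero, $-\lambda$ is then an eigenvalue of $w^{(k)}J_k$. By Lemma~\ref{lem:J_eigenvalue} those eigenvalues are $w^{(k)}(q^{(k)}-1)$ and $-w^{(k)}$, so $\lambda\in\{w^{(k)},-w^{(k)}(q^{(k)}-1)\}$, exactly the excluded values. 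Your version instead produces $\lambda=-w^{(k)}$, which the hypothesis does not exclude, so it yields no contradiction. The corrected argument gives $y\neq 0$, hence $\tilde v\neq 0$. It also shows the hypothesis on $\lambda$ is used twice: here, and to invert $\Delta_k(\lambda)$ in Part 2.
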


We will use the aggregated vectors $y_1,\dots,y_{r_0}$ obtained from top eigenvectors $\tilde v_1,\dots, \tilde v_{r_0}$ from $\tilde B$ as estimators of the eigenspace of $\bQ$.

\subsection{Eigenvector overlap}

With the reduced non-backtracking matrix characterization, the next theorem characterizes the overlap between $y_i$ and $\tilde \phi_i$ defined in \eqref{eq:ndimlifting}:
\begin{theorem}[Eigenvector overlaps]\label{th:main_reduced}
Let $\tilde B$ be the reduced non-backtracking matrix associated with the matrix $B$. Let $y_1, \dots, y_{r_0}$ be the aggregated vectors defined in \eqref{eq:def_y}, associated to the top $r_0$ eigenvectors of $\tilde B$. With probability $1 - n^{-c}$, the following holds:
\begin{enumerate}[(i)]
    \item (Overlap) for any $i \in [r_0]$, there exists an eigenvector $\phi'_i$ of $Q$ associated to $\mu_i$ such that
    \begin{align} 
    \frac{\langle y_i, \tilde \phi_i' \rangle}{ \|y_i\| \cdot \|\tilde \phi_i'\|} = \sqrt{\frac1{\gamma_i}} + O(n^{-c}),  \notag
    \end{align}
    where $\tilde \phi'$ is defined in~\eqref{eq:ndimlifting} and
    \begin{equation}\label{eq:def_gamma_i}
        \gamma_i = \frac{1 + \tau_i \phi_i^\top \Pi \bQ_{(q-2)w^2/\vartheta} \phi_i}{1 - \tau_i}.
    \end{equation} 
    \item (Orthogonal to  $\mathbf{1}$)
If $\mu_i \neq d_w$, then
\begin{align}\label{eq:near-orthogonoal}
\frac{\langle y_i, \1 \rangle}{ \sqrt n\|y_i\| } =O(n^{-c}).
\end{align} 
\end{enumerate}
\end{theorem}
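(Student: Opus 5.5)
Our plan is to deduce Theorem~\ref{th:main_reduced} from the non-backtracking eigenvector analysis behind Theorem~\ref{thm:main}, combined with the eigenvector correspondence of Proposition~\ref{prop:Bethe_hessian_relation}. Let $\tilde v_i$ be an eigenvector of $\tilde B$ with eigenvalue $\lambda_i$. By Proposition~\ref{thm:IharaBass}, and since $\mu_i$ stays away from the deterministic values $w^{(k)}, -w^{(k)}(q^{(k)}-1)$ whenever $\tau_i<1$, the number $\lambda_i$ is also an eigenvalue of $B$. Its eigenvector $v_i$ maps to $\tilde v_i$ under the map of Proposition~\ref{prop:Bethe_hessian_relation}, and this map is injective away from the trivial eigenvalues. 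Hence $y_i=\sum_k w^{(k)} S_k v_i$, and it suffices to understand $S v_i$ restricted to each layer.

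\textbf{Step 1: structure of $v_i$.} The perturbation argument for Theorem~\ref{thm:main} (Section~\ref{sec:proof_main}) gives pseudo-eigenvectors $u_i = B^{\ell}\chi_i/\mu_i^{\ell}$ with $\ell\asymp \log n$. Up to an error $O(n^{-c})$ in norm, $v_i$ lies in the span of $\{u_j: \mu_j=\mu_i\}$. Choosing $\phi_i'$ to be the corresponding combination of the $\phi_j$ in the $\mu_i$-eigenspace, we may write $v_i = u' + O(n^{-c})$ with $u'=B^\ell\chi'/\mu_i^\ell$.

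\textbf{Step 2: local tree computations.} We compute the quantities $\langle \tilde\phi_i', \sum_k w^{(k)}S_k u'\rangle$, $\|\sum_k w^{(k)}S_k u'\|^2$ and $\langle \mathbf 1, \sum_k w^{(k)} S_k u'\rangle$. Each is a sum over vertices of local functionals of the $\ell$-neighborhood. By the coupling with the non-uniform Galton--Watson hypertree (Section~\ref{sec:local_hypertree}) and the functional concentration of Section~\ref{sec:functional_hypertree}, each equals $n$ times a hypertree expectation plus $O(n^{1-c})$. On the tree, $\sum_k w^{(k)}S_k B^\ell\chi'(x)$ is $\sum_{e\ni x} w_e \sum_{y\in e\setminus x}(B^{\ell-1}\chi')(y\to \cdot)$, a martingale-type sum of $\mu_i^{\ell}\phi'(\sigma(x))$ times a mean-one martingale.
\begin{itemize}
\item The mean term gives $\langle y,\tilde\phi'\rangle \approx n\mu_i^{\ell+1}/\mu_i^{\ell}$, using the $\pi$-normalization~\eqref{eq:pi_orthogonal}.
\item The second moment picks up a geometric series in $\tau_i$. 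Its ratio is $\vartheta/\mu_i^2=\tau_i$, coming from the $\bK$-recursion, and it converges because $\tau_i<1$. This yields the factor $1/(1-\tau_i)$.
\item The first-generation correction accounts for siblings in the same hyperedge, which are correlated. These contribute a $(q^{(k)}-2)w^{(k)2}$-weighted term, producing $\tau_i\phi_i^\top\Pi \bQ_{(q-2)w^2/\vartheta}\phi_i$.
\end{itemize}
Dividing the squared inner product by the squared norm gives $1/\gamma_i$. The inner product with $\mathbf 1$ has tree mean $\mu_i\langle \phi_1,\phi'\rangle_\pi=0$ when $\mu_i\ne d_w$, since $\phi_1=\mathbf 1$ is then a different eigenvector of $\bQ$ and is therefore $\pi$-orthogonal. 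Its fluctuations are $O(n^{1/2+c'})$, which gives (ii).

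\textbf{Main obstacle.} The hard step is the second-moment computation in Step 2. Sibling correlations inside a single hyperedge and the mixed-sign weights $w^{(k)}$ (disassortative layers) prevent a plain Perron--Frobenius argument. The variance recursion must also be tracked with the non-commuting matrices $\bQ^{(k)}$, since these do not share an eigenbasis. We would first derive an exact recursion for $\mathbb E[(\text{martingale})^2\mid\sigma(\text{root})]$ on the hypertree, driven by $\bK$. We would then bound its solution by $\vartheta^t$ and sum the series, handling the $\ell$-truncation error via $\tau_i^\ell=n^{-c}$.

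The remaining pieces are routine. We must transfer the eigenvector error from $B$ to $y_i$ through $\|S_k\|\lesssim$ polylog. We must also control $\|y_i\|$ from below, which follows from the norm computation above.
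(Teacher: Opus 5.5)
Your proposal is correct and follows essentially the paper's route. You identify $y_i$ with $SD_w\xi_i$ for an eigenvector $\xi_i$ of $B$ via Proposition~\ref{prop:Bethe_hessian_relation}, replace $\xi_i$ by a normalized pseudo-eigenvector using Lemma~\ref{th:bauer_powers}, and evaluate $\langle SD_w u_i,\tilde\phi_j\rangle\approx\sqrt n\,\mu_i\delta_{ij}$ and $\|SD_w u_i\|^2\approx\mu_i^2\gamma_i$ through hypertree functionals (Lemma~\ref{lem:reduced_eigen_computations}), with (ii) coming from the $\pi$-orthogonality of the $d_w$- and $\mu_i$-eigenspaces of $\bQ$.

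Three small corrections. First, the pseudo-eigenvectors are $u_i=B^\ell J\chi_i/(\sqrt n\,\mu_i^\ell)$, and the $J$ matters: it makes $\sqrt n\,\mu_i^\ell\,SD_w u_i$ coincide with $f_{\phi_i,\ell+1}$ on tree-like neighbourhoods. With $B^\ell\chi_i$ instead, the tree mean becomes $\bigl(\sum_k w^{(k)}d^{(k)}\bigr)\mu_i^{\ell}\phi_i$ rather than $\mu_i^{\ell+1}\phi_i$, which degenerates when $\sum_k w^{(k)}d^{(k)}=0$ (possible with mixed-sign weights). Second, the second-moment step you call the main obstacle is exact and short. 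After pairing with $\pi$, which is a common left eigenvector of $\bK$ and of every $\bQ^{(k)}$ by Assumption~\ref{assumption:degree}, the geometric series in $\tau_i$ sums in closed form, so neither the signs of $w$ nor the non-commutativity of the $\bQ^{(k)}$ plays any role. Third, for $K\ge 2$ the condition $\tau_i<1$ alone does not keep $\mu_i$ away from $w^{(k)}$ and $-w^{(k)}(q^{(k)}-1)$. That non-degeneracy, together with discarding those trivial eigenvalues of $\tilde B$, is an implicit assumption, in the paper as well.
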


\begin{figure}[ht!]
    \centering
    \includegraphics[width=0.45\linewidth]{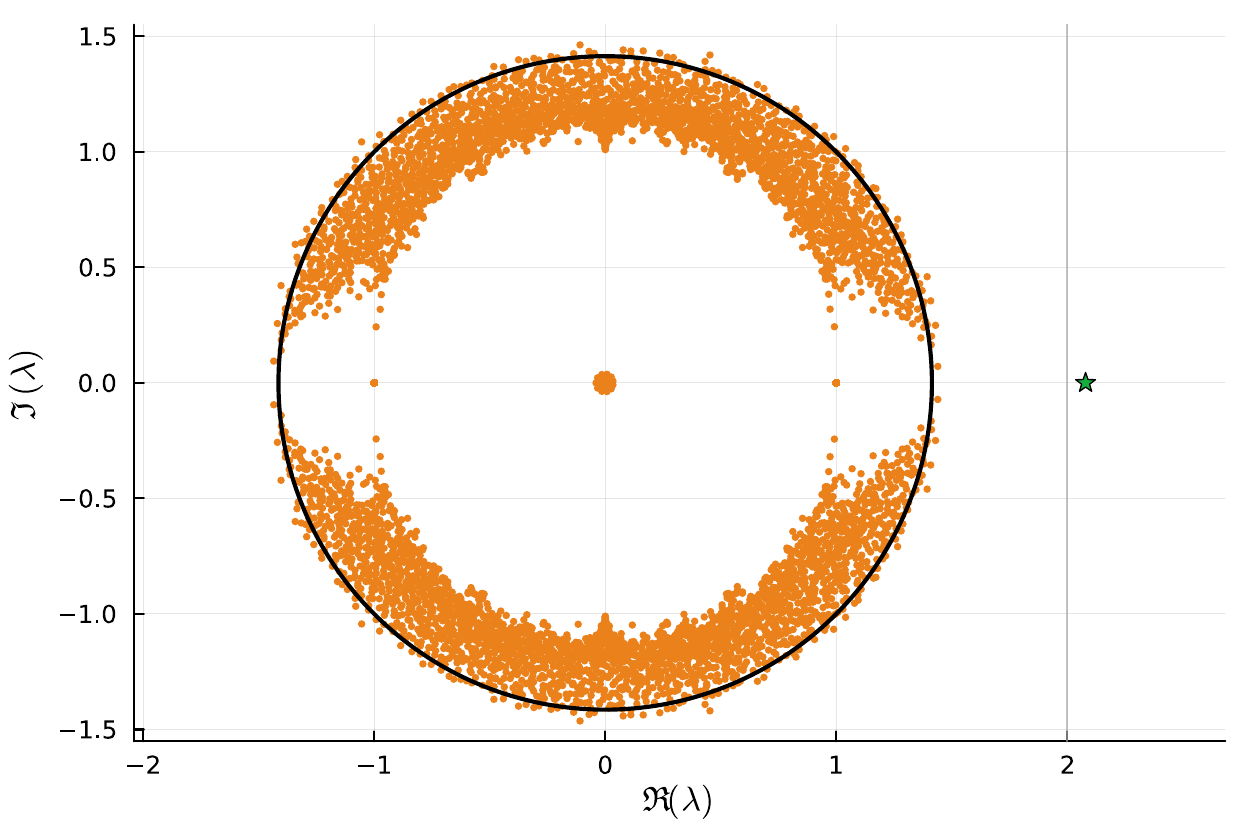} \hspace{1em}
        \includegraphics[width=0.45\linewidth]{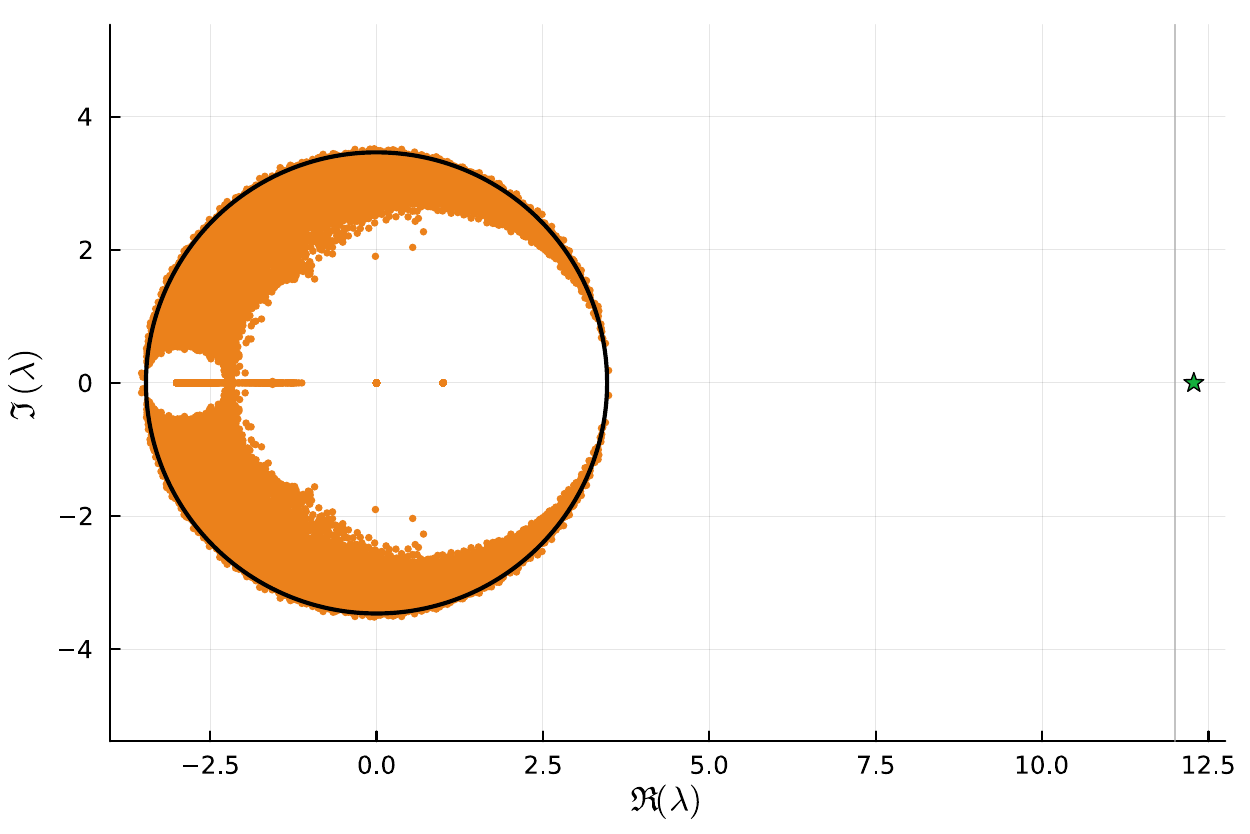}
        
        \vspace{1em}
        \includegraphics[width=0.45\linewidth]{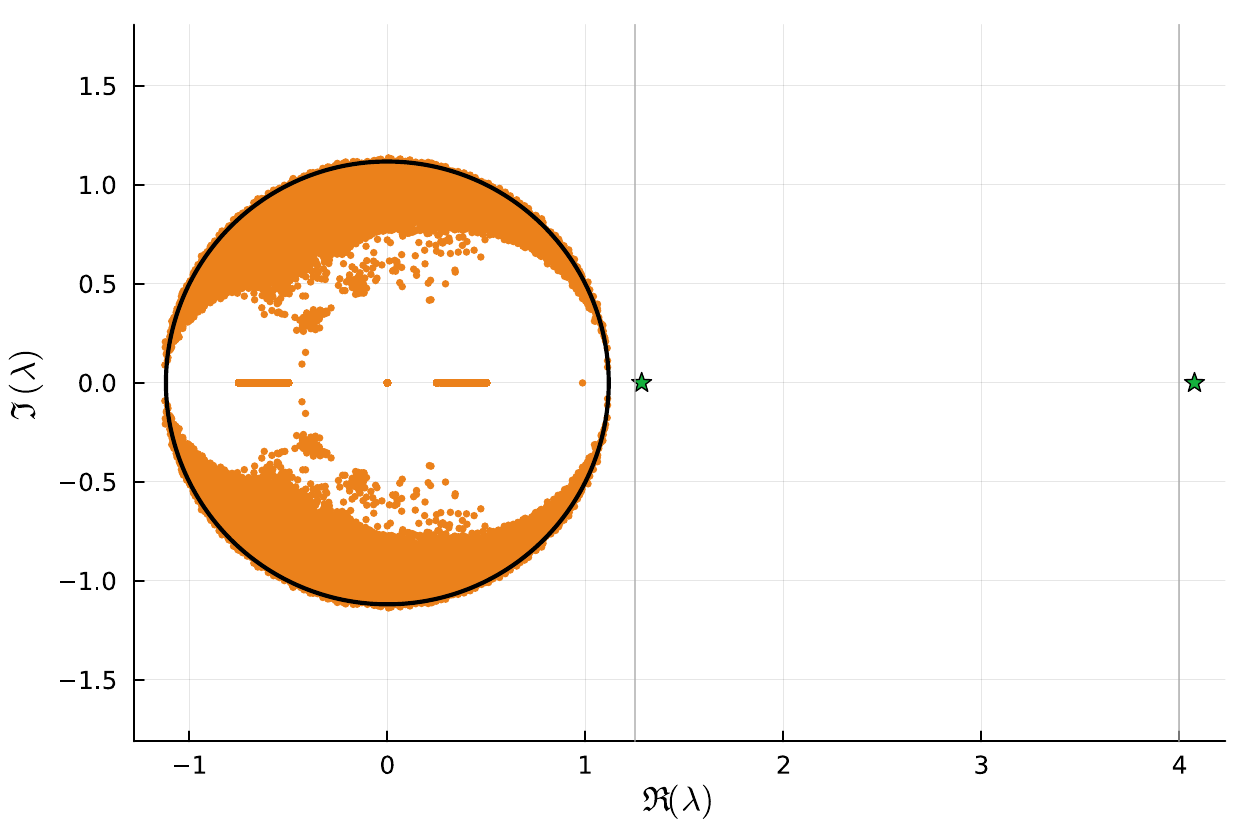} \hspace{1em}
        \includegraphics[width=0.45\linewidth]{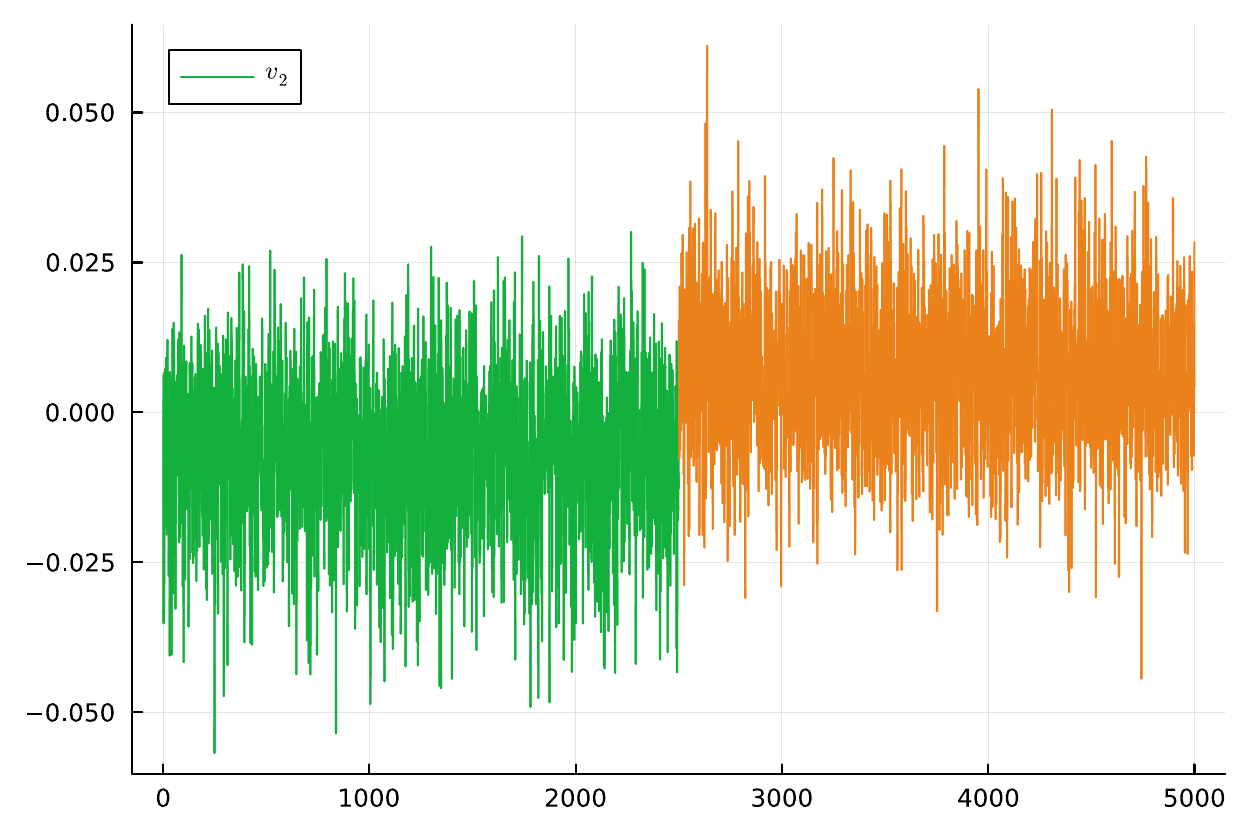}
    \caption{Top: Spectrum of the non-backtracking matrix of two HSBMs with $n = 5000, r=2$. Left has parameters ($q^{(1)}=2, d^{(1)} = 2, \mu_2^{(1)} = 1$), right has parameters ($q^{(2)}=4, d^{(2)} = 4, \mu_2^{(2)} = 1$). Both are below the KS threshold, and hence there is only one uninformative outlier in their spectrum. \newline
    Bottom: Spectrum of the combined non-backtracking matrix, using the optimal values from Example~\ref{example:weight_2}. A new outlier emerges from the bulk, whose aggregated eigenvector (pictured right) is correlated with the community structure. Assigning vertices to communities depending on the sign of $y_2$ yields a 70.5\% accuracy.}
    \label{fig:simulation}
\end{figure}
We present a simulation illustrating the effectiveness of the weighted spectral method in Figure~\ref{fig:simulation}.

\subsection{A weak reconstruction algorithm}

Given an estimate $\hat\sigma$ of the community assignment $\sigma$, the \emph{overlap} between the two vectors is defined as
\begin{align} \label{eq:ov}
\operatorname{ov}(\sigma, \hat\sigma) = \max_{\mathfrak{p} \in \mathfrak{S}_r} \frac1n \sum_{x=1}^n \mathbf{1}\{\hat\sigma(x) = \mathfrak{p} \circ \sigma(x)\}, 
\end{align}
where the maximum is over all permutations of $[r]$. 

Theorem~\ref{th:main_reduced} implies the following guarantee for a weak reconstruction algorithm for $r$ communities of equal size, stated in Algorithm~\ref{alg:provable_algorithm}. Note that Algorithm~\ref{alg:provable_algorithm} outputs only 2 communities, but we can show the ratio of correct label assignment is strictly larger than $1/r$.

\begin{algorithm}
\caption{Provably efficient reconstruction algorithm when $\pi_i=\frac{1}{r}, i\in [r]$}\label{alg:provable_algorithm}

\begin{algorithmic}[1]
\Require uniform hypergraphs $G^{(k)}, k\in [K]$, thresholding parameter $T$, weights $w^{(k)}, k\in [K]$.
\State Form the $2Kn\times 2Kn$ matrix $\tilde B$ as in \eqref{eq:tildeB}.
\State Compute the top two eigenvectors $\tilde v_1, \tilde v_2$ of $\tilde B$, and form two vectors $\tilde y_1,\tilde y_2$ from $\tilde v_1, \tilde v_2$  using \eqref{eq:def_y}.  
\State If  $\frac{|\langle \tilde y_1,\1\rangle|}{\sqrt{n} \|\tilde y_1 \|} > \frac{1}{\log n}$, take $u=\tilde y_2$. Otherwise, take $u=\tilde y_1$. Normalize $u$ such that $\norm{u}^2 = n$.
 
\State Partition the vertex set $[n]$ in two sets $(V^+, V^-)$ randomly, such that
\[ \Pb{x \in V^+ \given u} = \frac{1}{2} + \frac1{2T} u(x) \ind_{|u(x)| \leq T}. \]
\State Assign vertices in $V^+$ to the community $1$ and vertices in $V^-$ to the community $2$.
\end{algorithmic}
\end{algorithm}

\begin{theorem}[Weak reconstruction algorithm]\label{thm:weakreconstruction}  Under Assumption \ref{assumption:degree}  and the extra condition that the communities are balanced, i.e., $\pi_i = 1/r$ for all $i\in [r]$, there exists a deterministic threshold $T=2\sqrt{2}r\gamma^{1/2}$  such that for some constants $c>0$, with probability at least $1-n^{-c}$,
the estimator $\hat\sigma$ output by Algorithm \ref{alg:provable_algorithm} satisfies
\[ \mathrm{ov}(\sigma, \hat\sigma) \geq  \frac 1r + \frac1{8r\gamma} + O(n^{-c}), \]
where $\gamma:=\begin{cases}
    \gamma_2 & \text{if } \phi_1=\1\\
    \gamma_1 & \text{if } \phi_1\not=\1
\end{cases}$, and $\gamma_{1},\gamma_2$ are defined in \eqref{eq:def_gamma_i}.
\end{theorem}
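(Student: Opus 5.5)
Our plan is to derive Theorem~\ref{thm:weakreconstruction} from Theorem~\ref{th:main_reduced}, following the standard rounding argument for non-backtracking spectral methods (as in \cite{stephan2024sparse,bordenave2018nonbacktracking}).

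\emph{Step 1: choice of $u$.} We first show that step 3 of Algorithm~\ref{alg:provable_algorithm} selects an eigenvector aligned with an informative direction. If $\phi_1=\1$ (so $\mu_1=d_w$), then $\tilde\phi_1'$ is proportional to $\1$. Part (i) of Theorem~\ref{th:main_reduced} then gives $|\langle\tilde y_1,\1\rangle|/(\sqrt n\|\tilde y_1\|)=\gamma_1^{-1/2}+O(n^{-c})$, which is a constant and hence exceeds $1/\log n$. The algorithm therefore takes $u=\tilde y_2$, and $\mu_2\ne d_w$ since $\mu_2$ is a distinct eigenvalue. (If it coincides with $d_w$ through a multiplicity issue, we choose $\phi_2'$ orthogonal to $\1$ in $\langle\cdot,\cdot\rangle_\pi$.) If $\phi_1\neq \1$, then $\mu_1\neq d_w$, and part (ii) gives an overlap with $\1$ of order $O(n^{-c})<1/\log n$, so $u=\tilde y_1$. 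In both cases $u$ satisfies two properties, where $\phi=\phi'_i$ is the chosen direction and $\gamma$ is as in the statement:
\[
\frac{\langle u,\tilde\phi\rangle}{\sqrt n\,\|\tilde\phi\|}=\gamma^{-1/2}+O(n^{-c}),\qquad \langle u,\1\rangle=O(n^{1-c}),
\]
with $\|\tilde\phi\|^2=n\sum_j\pi_j\phi(j)^2=n$ by \eqref{eq:pi_orthogonal}. Also $\sum_j\pi_j\phi(j)=\langle\phi,\phi_1\rangle_\pi=0$.

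\emph{Step 2: expected overlap of the randomized rounding.} Condition on $u$. Write $u(x)=\bar u_{\sigma(x)}+\text{fluctuation}$, with community averages $\bar u_j=(n\pi_j)^{-1}\sum_{\sigma(x)=j}u(x)$. Let $c_j=\frac{1}{n\pi_j}\sum_{\sigma(x)=j}u(x)\ind_{|u(x)|\le T}$ be the truncated averages. The expected fraction of community $j$ placed in $V^+$ is $\frac12+\frac{c_j}{2T}$. Truncation costs little: by Cauchy--Schwarz and Markov, $\frac1n\sum_x|u(x)|\ind_{|u(x)|>T}\le \frac1n\|u\|^2/T=1/T$. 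Hence $\sum_j\pi_j|c_j-\bar u_j|\le 1/T$. Now, using the best labelling of $V^+,V^-$ by two communities $a,b$, the expected overlap is at least
\[
\frac{\pi_a}{2}+\frac{\pi_b}{2}+\frac{\pi_a c_a-\pi_b c_b}{2T}.
\]
With $\pi_j=1/r$, this equals $\frac1r+\frac{c_a-c_b}{2rT}$. Since $\sum_j\bar u_j/r\approx0$ and $\frac1r\sum_j\bar u_j\phi(j)=\gamma^{-1/2}+o(1)$ with $\sum_j\phi(j)^2/r=1$, we get $\max_j\bar u_j-\min_j\bar u_j\ge$ a constant multiple of $\gamma^{-1/2}$. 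A crude bound is $\max_{a,b}(\bar u_a-\bar u_b)\ge \gamma^{-1/2}/\sqrt{r}$ or similar. After subtracting the truncation error $\le r/T$, plugging in $T=2\sqrt2 r\gamma^{1/2}$ should produce the constant $\frac1{8r\gamma}$. Tracking these constants, and in particular the factor $r$ from the truncation error, is the fiddly part we expect to be the main obstacle: we would first try the bound $\max_{a,b}(c_a-c_b)\ge \sum_j\pi_j|c_j|\ge \gamma^{-1/2}/\max|\phi|-\ldots$, and fall back on variance bounds via $\sum_j\pi_j\bar u_j^2\le1$ if needed.

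\emph{Step 3: concentration.} Conditional on $u$, the assignments are independent Bernoulli variables. Hoeffding's inequality then gives that, for the fixed pair $(a,b)$, the realized overlap is within $n^{-1/3}$ of its expectation with probability $1-e^{-cn^{1/3}}$. Combining this with the probability $1-n^{-c}$ event of Theorem~\ref{th:main_reduced} yields the claimed bound.
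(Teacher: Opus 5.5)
Your Step 2 takes a different, more elementary route than the paper. The paper passes to the local-weak-limit variables $X_i$ (Lemma 31 of \cite{stephan2024non}) to identify the limits of the truncated community means, and then defers the final computation to \cite{stephan2024sparse}. You instead work at finite $n$, using only the conclusions of Theorem~\ref{th:main_reduced}, the normalization $\|u\|^2=n$, and the Markov bound $\sum_j \pi_j|c_j-\bar u_j|\le 1/T$. This avoids the distributional-limit machinery and continuity points entirely. The reduction itself is sound: the formula $\frac1r+\frac{c_a-c_b}{2rT}$ for the conditional expected overlap holds, and so do your Steps 1 and 3, modulo the same implicit assumptions as the paper ($d_w$ a simple eigenvalue of $\bQ$ and $r_0\ge 2$).

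The genuine gap is the step you defer: the constant does not come out. With your crude bound the gain is at least $\frac{\gamma^{-1/2}/\sqrt r-r/T}{2rT}$. For $T=2\sqrt2\,r\gamma^{1/2}$ this is negative once $r>8$, and for $r=2$ it equals $\frac1{64\gamma}$, a factor $4$ short of $\frac1{16\gamma}$.

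Sharpening fixes only the binary case. Since $\sum_j\pi_j\phi(j)=0$, you may write $\gamma^{-1/2}+O(n^{-c})=\sum_j\pi_j(\bar u_j-\kappa)\phi(j)$, with $\kappa$ the midrange of $(\bar u_j)_j$. Using $\sum_j\pi_j|\phi(j)|\le 1$, this gives $\max_j\bar u_j-\min_j\bar u_j\ge 2\gamma^{-1/2}$. Subtracting the truncation error $r/T$, the gain is at least $\frac{2-1/(2\sqrt2)}{4\sqrt2\,r^2\gamma}$. For $r=2$ this is about $0.073/\gamma\ge\frac1{16\gamma}$, so your argument does prove the statement when $r=2$.

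For larger $r$ the denominator $2rT\asymp r^2\gamma^{1/2}$ caps the gain at order $1/(r^2\gamma)$, and this is not a bookkeeping artifact. Take any $u$ with $\|u\|^2=n$. Cauchy--Schwarz inside each community gives $|c_j|\le\sqrt{s_j}$, where $s_j=\frac rn\sum_{\sigma(x)=j}u(x)^2$ and $\sum_j s_j=r$. Hence $c_a-c_b\le\sqrt{2r}$, and the conditional expected gain is at most $\frac{1}{4r^{3/2}\gamma^{1/2}}$. This is smaller than $\frac1{8r\gamma}$ whenever $\gamma<r/4$, for example a single graph layer with $r\ge5$ and small $\tau_2$, where $\gamma=(1-\tau_2)^{-1}$. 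By your own Step 3, the realized overlap concentrates at this level.

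So no amount of constant tracking yields $\frac1r+\frac1{8r\gamma}$ with the prescribed $T$ for general $r$; the paper itself only defers this computation to \cite{stephan2024sparse}. What your method honestly gives is $\mathrm{ov}(\sigma,\hat\sigma)\ge\frac1r+\frac1{2r^2\gamma}+O(n^{-c})$, taking $T=r\gamma^{1/2}$, which still suffices for weak recovery. It gives the stated form only for $r=2$.
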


\subsection{The choice for the  weights}
For different settings, the weights $w^{(k)}, k \in [K]$, can be chosen in various ways.  We illustrate this flexibility through the following examples. 
With appropriate choices of the weights, Algorithm~\ref{alg:provable_algorithm} attains two thresholds conjectured in \cite{chodrow2023nonbacktracking}.

\begin{example}[Unweighted case]\label{example:weight_1}
Setting $w^{(k)} = 1$ for all $k \in [K]$ yields the unweighted non-backtracking matrix $B$. 
In the assortative binary setting, the algorithmic threshold for spectral methods based on $B$ was conjectured in \cite[Conjecture 4.3]{chodrow2023nonbacktracking}. 
Combining Theorems~\ref{thm:main}, \ref{th:main_reduced}, and \ref{thm:weakreconstruction},  the unweighted non-backtracking spectral algorithm achieves weak recovery if and only if
\begin{align}
    \frac{\left(\sum_{k=1}^K (q^{(k)} - 1)\mu_2^{(k)}\right)^2}
    {\sum_{k=1}^K (q^{(k)} - 1)\,d^{(k)}} > 1,  \qquad \text{where} \qquad \mu_2^{(k)}:=\lambda_2(\mathbf{Q}^{(k)}). \notag
\end{align}
thereby confirming \cite[Conjecture 4.3]{chodrow2023nonbacktracking}. In particular, this spectral method works without the knowledge of model parameters.
\end{example}

\begin{example}[Optimal values of $w^{(k)}$ when $r=2$] \label{example:weight_2}
When $r=2$ and $\pi_1=\pi_2=1/2$, since all $\bQ^{(k)}$ share the same second eigenvector $(1,-1)^\top$, we have \[\tau_2^{-1}= \frac{\left( \sum_{k}w^{(k)}(q^{(k)}-1) \mu_2^{(k)}\right)^2}{\sum_{k} \left(w^{(k)}\right)^2 (q^{(k)}-1)d^{(k)}}.\] 
Then by the Cauchy-Schwarz inequality,
    \begin{align}
        \tau_2^{-1}\leq \sum_{k} \frac{(q^{(k)}-1)|\mu_2^{(k)}|^2}{d^{(k)}} :=  (\tau_2^\star)^{-1}, \notag
    \end{align}
    and the optimal choice of $w^{(k)}$   is given by ${w^{(k)}}^\star =\frac{\mu_2^{(k)}}{d^{(k)}}$. The condition \begin{align}
       (\tau_2^\star)^{-1}= \sum_{k=1}^K   \frac{(q^{(k)}-1)|\mu_2^{(k)}|^2}{d^{(k)}}>1 \notag
    \end{align} is  the detection threshold  in \cite[Conjecture 6.3]{chodrow2023nonbacktracking} for non-uniform HSBMs with 2 blocks. Our results show that with the optimal hyperedge reweighting ${w^{(k)}}^\star $, a spectral method based on the weighted non-backtracking operator is able to achieve this bound.
\end{example}

 To achieve a nontrivial estimate of the eigenspaces of $\bQ$ in general, the best weight vector $w$ is given by maximizing the  SNR associated with the nontrivial eigenspaces. Namely we choose
 \begin{align*}
    w^\star &= \arg \max_{w} \frac{\lambda_1(\bQ-\sum_{k=1}^K w^{(k)}(q^{(k)}-1)d^{(k)}\phi_1 (\Pi \phi_1)^\star ) }{\lambda_1(\bK)} \\
    &= \arg \max_{w} \frac{1}{\lambda_1(\bK)}\left(\max_{\|v\|_2=1, \langle v, \phi_1\rangle =0} \langle v, \bQ v\rangle_{\pi}\right)^2 .
 \end{align*}
When all $\bQ^{(k)}$ share the same eigenspaces, the expression of $w^\star $ is more explicit.
\begin{example}[All $\bQ^{(k)}$ have shared eigenvectors]
Assume $\bQ^{(k)}$ has the same eigenvectors $\phi_i, 1\leq i\leq r$ with associated eigenvalues $\mu_i^{(k)}$ for all $k\in [K]$. Here $\mu_i^{(k)}$ are not necessarily ordered identically for all $k$ except $\mu_1^{(k)}=d^{(k)}$ which is always the largest eigenvalue of $\bQ^{(k)}$.

Then all of the  eigenvalues of $\bQ$ are given by 
   $\sum_{k=1}^K w^{(k)}(q^{(k)}-1)\mu_i^{(k)}, 1\leq i\leq r$,
with one of the eigenvalues given by $\sum_{k=1}^K w^{(k)}(q^{(k)}-1)d^{(k)}$ associated with eigenvector $\mathbf{1}$. Then the optimal weight for weak recovery is given by
\begin{align*}
    w^\star = \arg \max_{w} \max_{i\not=1}\frac{\left( \sum_{k}w^{(k)}(q^{(k)}-1) \mu_i^{(k)}\right)^2}{\sum_{k} \left(w^{(k)}\right)^2 (q^{(k)}-1)d^{(k)}}.
 \end{align*}
\end{example}

\begin{example}[An example when $\bQ^{(k)}$ do not share the same eigenvectors]
\label{example:weight_4}
Consider the case \(K=2\), \(r=3\), \(q^{(1)}=q^{(2)}=2\), and
$
\pi=\left(\tfrac13,\tfrac13,\tfrac13\right)$.
Define

\[
\bQ^{(1)}=
\begin{pmatrix}
\frac76 & \frac16 & \frac23\\[1mm]
\frac16 & \frac76 & \frac23\\[1mm]
\frac23 & \frac23 & \frac23
\end{pmatrix},
\qquad
\bQ^{(2)}=
\begin{pmatrix}
\frac{16}{15} & \frac{7}{15} & \frac{7}{15}\\[1mm]
\frac{7}{15} & \frac{23}{30} & \frac{23}{30}\\[1mm]
\frac{7}{15} & \frac{23}{30} & \frac{23}{30}
\end{pmatrix}.
\]
We have 
 \(d^{(1)}=d^{(2)}=2\) and \(\bQ^{(1)}\) and \(\bQ^{(2)}\) are not simultaneously diagonalizable.
The weighted signal matrix and
variance parameters are
\[
\bQ(w)=w_1\bQ^{(1)}+w_2\bQ^{(2)},
\qquad
\vartheta(w)=2\bigl(w_1^2+w_2^2\bigr).
\]
The two nontrivial eigenvalues of \(\bQ(w)\) are
$
\lambda_\pm(w)
=
\frac{w_1+\frac35 w_2
\pm
\sqrt{\left(w_1-\frac35 w_2\right)^2+\frac95\,w_1w_2}}{2}$.
Hence, the signal-to-noise ratio is
$\mathrm{SNR}(w)
=
\frac{\lambda_+(w)^2}{2(w_1^2+w_2^2)}$.
An optimal weight can be solved numerically $w^*\approx (1,0.51)$.
\end{example}

\subsection{Discussion}\label{sec:discuss}

We present two additional remarks on our main results and their connections to related work.

\paragraph{Detection threshold in the binary non-uniform HSBM}
In the non-uniform HSBM setting, \cite{chodrow2023nonbacktracking} conjectured that, in the binary partition case (\(r=2\)), a spectral method can achieve a Kesten--Stigum-type detection threshold determined by the combined signal-to-noise ratios of all hypergraph layers. Our main results confirm \cite[Conjecture 6.3]{chodrow2023nonbacktracking} and further develop a more efficient spectral approach that applies in a broader setting. More precisely, their method is based on a reduced belief-propagation Jacobian matrix of size \((2Krn)\times(2Krn)\), whereas our approach relies on a reduced non-backtracking operator of size \((2Kn)\times(2Kn)\). Both their method and ours require knowledge of the model parameters.  

For $q$-uniform HSBMs, recent works \cite{gu2023weak,gu2024community} establish that the Kesten–Stigum threshold, conjectured in \cite{angelini2015spectral}, is tight in the two-block case when $q \le 6$ and the degree $d$ is sufficiently large, while it is not tight for certain parameter regimes when $q \ge 5$. We expect that a suitable generalization of their techniques can be used to characterize the tightness and non-tightness of the KS threshold in the non-uniform HSBM setting.

\paragraph{Bethe--Hessian method for non-uniform hypergraphs}
The new Ihara--Bass formula in Proposition~\ref{thm:IharaBass} yields the Bethe--Hessian matrix~\eqref{eq:def_BH}. A similar analysis to that in \cite{stephan2024community} can be applied to \eqref{eq:def_BH} to rigorously justify a spectral method based on \(H(\lambda)\) above the Kesten--Stigum threshold, both for estimating the number of communities and for achieving weak recovery when the signal-to-noise ratio is sufficiently large.

A recent work  \cite{li2026higher} proposed a different Bethe--Hessian matrix, corresponding to the special case \(w^{(k)}=1\) for all \(k \in [K]\) in \eqref{eq:def_BH}. Their empirical results suggest that, with weights \(w^{(k)}=1\), the Bethe--Hessian method is weaker than belief propagation. We conjecture that, with the optimal weights in Example~\ref{example:weight_2}, the Bethe--Hessian method can achieve the same detection threshold as belief propagation.

\paragraph*{Notation} 
We use $\mathrm{polylog}(n)$ to denote $\log^c(n)$, where $c > 0$ is an absolute constant, with the choice of $c$ possibly changing for different appearances of $\mathrm{polylog}(n)$. We say that $a \lesssim b$ if $a$ is bounded from above by $b$ times some absolute constant. We use $A\circ B$ to denote the Hadamard product of two matrices.

\section{Proof of Theorems~\ref{thm:main} and~\ref{th:main_reduced}} \label{sec:proof_main}

\subsection{Spectral structure of $B$}

The proof of Theorems~\ref{thm:main} and~\ref{th:main_reduced} depends on the construction of pseudo-eigenvectors of $B$, as well as a bulk control. Define
\begin{equation}\label{eq:def_R}
    R_{g} = (q_{\max} -1) \sum_{k=1}^K \|P^{(k)}\|_\infty, \quad R_{w} = \frac{\|w\|_\infty}{\sqrt{\vartheta}} \quad \text{and} \quad R = R_g^2 \cdot R_w,
\end{equation}
where $q_{\max} = \max_k q^{(k)}$, which controls both the growth rate of $G$ and its interaction with $w$, and is scale-invariant in $w$. We shall study the matrix $B^\ell$, where
\begin{equation}\label{eq:def_ell}
    \ell = \kappa \log_R(n) \quad \text{with} \quad \kappa < \frac1{12}.
\end{equation}
The lifted eigenvectors $\chi_i \in \R^{\vec{H}}$ are defined for $i \in [r]$ as
\begin{equation}\label{eq:def_chi}
    \chi_i = S^\star \tilde\phi_i, \quad \text{or equivalently} \quad \chi_i(x \to e) = \phi_i(\sigma(x)).
\end{equation}
For $i \in [r_0]$, we define the pseudo left and right eigenvectors  
\begin{equation}\label{eq:def_u_v}
    u_i = \frac{B^\ell J \chi_i}{\sqrt{n}\,\mu_i^{\ell}} \quad \text{and} \quad v_i = \frac{(B^\star)^\ell D_w \chi_i}{\sqrt{n}\,\mu_i^{\ell+1}}.
\end{equation}
They are collected in the matrices $U \in \R^{m \times r_0}$ and $V \in \R^{m \times r_0}$. 

We need a deterministic equivalent to the covariance matrices of $U$ and $V$: for $t \geq 0$, we define the covariance matrices 
\begin{align}
 \bC_{ij}^{(t)} &= \frac{1 - \tau_{ij}^{t+1}}{1 - \tau_{ij}} \delta_{ij} + \tau_{ij} \cdot \frac{1 - \tau_{ij}^{t}}{1 - \tau_{ij}} \phi_i^\star \Pi \bQ_{(q-2)w^{2}/\vartheta} \, \phi_j \label{eq:def_C_l} \\
 [\bC_u^{(t)}]_{ij} &= d\cdot \bC_{ij}^{(t)} +\phi_i^\star \Pi \bQ_{q-2} \phi_j \label{eq:def_C_u} \\
 [\bC_v^{(t)}]_{ij} &= d_{w^2/((q-1)\vartheta)} \cdot \bC_{ij}^{(t)} + \phi_i^\star \Pi \bQ_{w^2(q-2)/((q-1)\vartheta)} \phi_j  \label{eq:def_C_v}
\end{align}
where
\begin{equation}
    \tau_{ij}=\frac{\vartheta}{\mu_i\mu_j}, \quad \text{so that} \quad \tau_{ii} = \tau_i,
\end{equation} 
and the weighted matrices $\bQ_a$ and degrees $d_a$ were defined in eq.~\eqref{eq:def_Q_weighted}, \eqref{eq:def_d_weighted}, respectively. 

Finally, the pseudo-eigenvalues of $B^\ell$ are the $\mu_i^\ell$, that are grouped in the diagonal matrix
\begin{equation}
    \Sigma = \diag(\mu_1, \dots, \mu_{r_0}). \notag
\end{equation}
Let $P_{\img(U)^\bot}$and $P_{\img(V)^\bot}$ be the projection onto the orthogonal complement of the linear space spanned by the column vectors of $U,V$, respectively.
We are now ready to state our main intermediary result:
\begin{proposition} \label{prop_main}
There exists a  constant $c > 0$ such that, with probability at least $1 - O(n^{-c})$, the following inequalities hold:
 \begin{align}
     \norm{U^\star U - \bC_u^{(\ell)}} &\lesssim n^{-1/4}, \label{eq:norm_bound1}\\
     \norm{V^\star V - \bC_v^{(\ell)}} &\lesssim n^{-1/4}, \label{eq:norm_bound2}\\
     \norm{U^\star V - I_{r_0}} &\lesssim   n^{-1/4} ,\label{eq:norm_bound3}\\
     \norm{ V^\star  B^\ell U  - \Sigma^\ell} &\lesssim n^{-1/4} \mu_1^\ell, \label{eq:norm_bound4}\\
     \norm{B^\ell P_{\img(V)^\bot}} &\lesssim \log^{c}(n)\vartheta^{\ell/2},\label{eq:norm_bound5}\\
     \norm{P_{\img(U)^\bot}B^\ell} &\lesssim \log^{c}(n)\vartheta^{\ell/2}\label{eq:norm_bound6}\\ 
     \norm{B^\ell} &\lesssim \log^{c}(n)R^{\ell}\vartheta^{\ell/2} \label{eq:norm_bound7}.
 \end{align}
\end{proposition}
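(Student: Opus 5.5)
Proposition~\ref{prop_main} will be proved along the Bordenave--Lelarge--Massouli\'e scheme, adapted in \cite{stephan2024sparse} to uniform hypergraphs; the new ingredients here are the weights $w^{(k)}$ and the mixing of hyperedge sizes. The proof has two independent halves. The first half covers \eqref{eq:norm_bound1}--\eqref{eq:norm_bound4}, which concern inner products of the pseudo-eigenvectors. The second half covers \eqref{eq:norm_bound5}--\eqref{eq:norm_bound7}, which are operator-norm bounds.

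\textbf{Inner-product bounds \eqref{eq:norm_bound1}--\eqref{eq:norm_bound4}.} Because $\ell=\kappa\log_R n$ with $\kappa<1/12$, the $\ell$-neighbourhood of a typical vertex contains $n^{o(1)}$ vertices. With probability $1-n^{-c}$, all but $n^{1-c}$ of these neighbourhoods are hypertrees, and the remaining ones contain at most one cycle. The plan is to couple each such neighbourhood with a multi-type, multi-size Galton--Watson hypertree. In this hypertree, a vertex of type $i$ has $\Poi$-distributed numbers of children hyperedges of each color $k$, with the remaining types of each hyperedge drawn according to $p^{(k)}_{i,\underline j}\prod\pi$.

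Each entry of $U^\star U$, $V^\star V$, $U^\star V$ and $V^\star B^\ell U$ is $\frac1n\sum_x f(G,x)_\ell$ for a local functional $f$. For instance, $(B^\ell J\chi_i)(x\to e)$ is a weighted sum of $\phi_i(\sigma(\cdot))$ over the vertices at depth $\ell$ below $x\to e$. On the hypertree, the process
\[
M_t = \mu_i^{-t}\sum_{\text{depth } t}(\text{product of weights})\,\phi_i(\sigma(\cdot))
\]
is a martingale, because the mean offspring operator is exactly $\bQ=\bQ_w$. Its second moment obeys a recursion in which each generation contributes a factor $\vartheta/\mu_i\mu_j=\tau_{ij}$, driven by $\bK=\bQ_{w^2}$. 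Summing this geometric recursion gives the expression $\bC^{(t)}$ in \eqref{eq:def_C_l}.

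The correction terms involving $\bQ_{q-2}$ and $\bQ_{(q-2)w^2/\vartheta}$ arise from the siblings inside the root hyperedge. Applying $J$ sums over the other $q^{(k)}-2$ vertices of $e$, and these sibling subtrees are correlated with the root through the shared hyperedge. I will compute these first- and second-moment identities exactly on the tree. To transfer them to $G$, I will use a concentration bound for local functionals: an Efron--Stein or bounded-differences argument applied to functionals of polynomial growth, with variance $O(n^{-1/2}\mathrm{polylog})$ once tangled neighbourhoods are handled. This gives the rate $n^{-1/4}$.

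For \eqref{eq:norm_bound3} and \eqref{eq:norm_bound4}, I will use the parity-time symmetry of Lemma~\ref{lem:BPPB}, $D_wB^kJ=J{B^\star}^kD_w$. It rewrites $\langle v_j,B^\ell u_i\rangle$ as the correlation between a depth-$2\ell$ martingale and the root. The martingale property then gives $\delta_{ij}\mu_i^{\ell}$; the additional $\mu_i$ in the normalisation of $v_i$ is exactly what the extra weight factor produces.

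\textbf{Norm bounds \eqref{eq:norm_bound5}--\eqref{eq:norm_bound7}.} I will use the high-trace method on a tangle-free decomposition. On the event that $G$ is $\ell$-tangle-free, $B^\ell$ can be written as
\[
B^\ell = \Delta^{(\ell)}+\tfrac1n\textstyle\sum_t \Delta^{(t-1)}K^{(\ell-t)}\cdots-\sum_t R^{(\ell)}_t .
\]
Here $\Delta^{(t)}$ is the centered non-backtracking path matrix, and the middle term is a rank-$r$ product through $\sum_i\chi_i\chi_i^\star$-type matrices built from the expected hypergraph. The bound \eqref{eq:norm_bound7} comes from a cruder path count. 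Projecting onto $\img(V)^\perp$ removes the low-rank part up to errors controlled by \eqref{eq:norm_bound2} and \eqref{eq:norm_bound3}.

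For each remaining term I will bound $\E\|\Delta^{(t)}\|^{2m}\le\E\tr[(\Delta\Delta^\star)^m]$ with $m\asymp\log n/\log\log n$. Closed walks will be encoded as sequences of non-backtracking hyperpaths. Each new hyperedge of color $k$ contributes a weight $(w^{(k)})^2$ and a factor $(q^{(k)}-1)d^{(k)}/n$, which yields growth rate $\vartheta$ per step. This is why the normalisation $R_w=\|w\|_\infty/\sqrt\vartheta$ enters.

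\textbf{Main obstacle.} I expect the combinatorial path counting to be the hardest step. There are two difficulties. The first is counting excess hyperedges of different sizes uniformly in $K$. The second is that signed weights prevent pure monotone bounds; the plan is to bound by $|w|$ throughout and to use centering only on first-visit hyperedges. Everything is otherwise a careful generalisation of \cite{stephan2024sparse}.
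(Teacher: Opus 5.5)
Your plan for \eqref{eq:norm_bound1}--\eqref{eq:norm_bound4} is essentially the paper's: local functionals, coupling with the multi-type hypertree, the martingale second-moment recursion driven by $\bK$, and Efron--Stein concentration. Your treatment of \eqref{eq:norm_bound7} by a cruder path count also matches.

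The gap is in \eqref{eq:norm_bound5}, and likewise in its mirror \eqref{eq:norm_bound6}. In the tangle-free expansion, the low-rank part is not a single rank-$r$ matrix with co-range $\img(V)$. It is $\sum_{t=1}^{\ell-1}\Delta^{(t-1)}\overline{D}B^{(\ell-t-1)}$ with $\overline{D}=\frac1n\sum_j\mu_j(J\chi_j)\chi_j^\star D_w$. Applied to a unit $z\perp\img(V)$, it gives $\frac1n\sum_{j,t}\mu_j\,\Delta^{(t-1)}J\chi_j\,\langle (B^\star)^{\ell-t-1}D_w\chi_j,z\rangle$. The co-range vectors sit at depths $s=\ell-t-1<\ell$, but $z$ is only orthogonal to the depth-$\ell$ vectors $(B^\star)^\ell D_w\chi_j$ with $j\le r_0$. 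The bounds \eqref{eq:norm_bound2}--\eqref{eq:norm_bound3} are Gram statements at depth $\ell$ alone, so they say nothing about $\langle (B^\star)^sD_w\chi_j,z\rangle$. The trivial bounds $\|(B^\star)^sD_w\chi_j\|\lesssim\sqrt n|\mu_j|^{s+1}$ and $\|\Delta^{(t-1)}J\chi_j\|\lesssim\log^c(n)\sqrt n\,\vartheta^{(t-1)/2}$ leave a contribution of order $|\mu_j|^{\ell}$ at $t=1$. That is polynomially larger than $\log^c(n)\vartheta^{\ell/2}$. So your step ``projecting onto $\img(V)^\perp$ removes the low-rank part up to errors controlled by \eqref{eq:norm_bound2}--\eqref{eq:norm_bound3}'' does not go through.

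The missing ingredient is an approximate-eigenvector estimate at every intermediate depth: $\|B^{s+1}J\chi_i-\mu_iB^sJ\chi_i\|^2\lesssim n\vartheta^{s+1}$ for all $s\le\ell$ (Lemma~\ref{lem:hypertree_process_3}). This is a separate local-functional computation whose tree mean is the martingale increment $\bK^s y^{(\phi_i,\phi_i)}$. You then transport it to the $B^\star$ side via the parity-time symmetry $D_wB^sJ=J(B^\star)^sD_w$ together with $\|J^{-1}\|\le 1$. Finally, telescope from depth $\ell$ down to depth $s$ using $z\perp(B^\star)^\ell D_w\chi_i$. The resulting series $\sum_k\tau_i^{k/2}$ converges because $\tau_i<1$, which gives $|\langle\chi_i,D_wB^sz\rangle|\lesssim\sqrt n\,\vartheta^{s/2}$ (Lemma~\ref{lem:hypertree_process_4}).

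For $j>r_0$ the projection removes nothing at all. Those terms are harmless only because $\mu_j^2\le\vartheta$ gives $\|(B^\star)^sD_w\chi_j\|\lesssim\sqrt n\,\vartheta^{s/2}$ directly.

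You also place the parity-time symmetry in the wrong step. For \eqref{eq:norm_bound3}--\eqref{eq:norm_bound4} plain adjointness suffices: $\langle B^\ell J\chi_i,(B^\star)^\ell D_w\chi_j\rangle=\langle D_wB^{2\ell}J\chi_i,\chi_j\rangle$, and \eqref{eq:norm_bound4} reaches depth $3\ell$ (not $2\ell$). It is this depth that forces $3\ell$-tangle-freeness and hence $\kappa<1/12$; neighbourhoods are of size $n^{O(\kappa)}$, not $n^{o(1)}$. The place where the parity-time symmetry is indispensable is the telescoping step above.
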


The proof of Proposition~\ref{prop_main} takes up most of the rest of the paper and is done fully in Section~\ref{sec:proof_prop_main}. 

We first begin by showing how this proposition implies our main theorems. 
Informally, if the left (resp. right) eigenvectors of $B$ were the $v_i$ (resp. $u_i$), with associated eigenvalue $\mu_i$, then $B$ would satisfy exactly \eqref{eq:norm_bound3}-\eqref{eq:norm_bound7}. To use a perturbative analysis, we define $B' = U \Sigma^\ell V^\star$ and show that, simultaneously:
\begin{enumerate}[(i)]
    \item $B'$ approximately has the $\mu_i^\ell$ as eigenvalues, with associated eigenvectors $u_i, v_i$;
    \item $\|B' - B^\ell\| \lesssim \log(n)^c \, \vartheta^{\ell/2}$.
\end{enumerate}
Since $\ell \propto \log(n)$ and $\mu_i > \sqrt{\vartheta}$, the ``signal'' coming from $B'$ is larger than the distance from $B'$ to $B^\ell$, allowing the use of perturbative arguments.

Formally, we use the following result from \cite{stephan2020non}:

\begin{lemma}[Theorem  9 from \cite{stephan2020non}]\label{th:bauer_powers}
Let \( \Sigma = \diag(\theta_1, \dots, \theta_r) \) with
\[ |\theta_1| \geq \cdots \geq |\theta_r|, \]
$A \in \dR^{m \times m}$ and \( U, U', V, V' \in \dR^{m \times r} \). We set
\[ {M} = U\Sigma^\ell V^\star  \quand {M'} = U'\Sigma^{\ell'}{(V')}^\star , \]
for two coprime integers \( \ell \leq \ell' \).
  Assume the following holds for both $(\ell, U, V)$ and $(\ell', U', V')$:
  \begin{enumerate}
    \item the matrices \( U, V \) are well-conditioned:\label{item:bauer_power_cond_ii}
          \begin{itemize}
            \item they all are of rank \( r \),
            \item for some \( \alpha \geq 1 \), for \( X \) in \( \{U, V \} \),
                  \[ \norm{X^\star X} \leq \alpha \quand \norm{{(X^\star X)}^{-1}} \leq \beta, \]
            \item for some small \( \delta < 1 \),
                  \[ \norm{U^\star V - I_r} \leq \delta\]
          \end{itemize}
    \item there exist \( \eps > 0 \) such that\label{item:bauer_power_cond_iii}
          \[ \norm{A^\ell - {M}} \leq \eps\]
    \item if we let\label{item:bauer_power_cond_iv}
          \begin{equation}\label{eq:def_sigma0}
               \sigma_0 := 84 r^3\alpha^{9/2}( \eps + 5r \alpha^3 \delta |\theta_1|^\ell),
          \end{equation}
          then
          \begin{equation}\label{eq:power_perturbation_bound}
            \sigma_0 < \ell|\theta_r|^\ell.
          \end{equation}
  \end{enumerate}
 Let
  \[ \sigma := \frac{\sigma_0}{\ell |\theta_r|^\ell}. \]
  Then, the \( r \) largest eigenvalues of \( A \) are close to the \( \theta_i \) in the following sense:  for \( i\in [r] \),
  \[ \left| \lambda_{i} - \theta_{i} \right| \leq 4r\sigma, \]
  and all other eigenvalues of \( A \) have modulus less that \( \sigma_0^{1/\ell} \).
  
  Additionally, for $i \in [r_0]$, let $\xi$ be a unit eigenvector associated to \( \lambda_{\pi(i)} \). Let $\mathrm{Vect}(\Set{u_j \given \theta_j = \theta_i})$ be the vector space spanned by the pseudo-eigenvectors $u_j$ such that $\theta_j=\theta_i$. Then there exists a unit vector  $\zeta \in \mathrm{Vect}(\Set{u_j \given \theta_j = \theta_i})$  such that
  \begin{equation}\label{eq:bauer_eigenvector_bound}
      \norm{\xi - \zeta} \leq \frac{3\sigma}{\delta_i - \sigma},
  \end{equation}
  where $\delta_i:=\min_{\theta_j\neq\theta_i}|\theta_j-\theta_i|$  is the smallest gap between distinct eigenvalues. 
\end{lemma}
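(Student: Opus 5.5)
The plan is a non-normal perturbation argument in three stages. First I describe the spectrum of the model matrix $M = U\Sigma^\ell V^\star$ exactly. Then I transfer it to $A^\ell$ via a Bauer--Fike/Rouché argument. Finally I extract $\ell$-th roots, using the second coprime exponent $\ell'$ to remove the root-of-unity ambiguity. Throughout, the same argument is run in parallel for $(\ell,U,V)$ and $(\ell',U',V')$.

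\textbf{Step 1 (spectrum of $M$).} The nonzero eigenvalues of $M$ coincide with those of the $r\times r$ matrix $\Sigma^\ell V^\star U$. Since $\|U^\star V - I_r\|\le\delta$, this matrix is a perturbation of $\Sigma^\ell$ of size at most $|\theta_1|^\ell\delta$. To get a diagonalizable reference, I would write $M = \tilde U \Sigma^\ell \tilde V^\star$ with $\tilde V := V (U^\star V)^{-1}$, so that $\tilde V^\star U = I_r$ and $\|M - U\Sigma^\ell\tilde V^\star\| \lesssim \alpha\delta|\theta_1|^\ell$. The matrix $M_0 := U\Sigma^\ell \tilde V^\star$ is then exactly diagonalizable on $\img(U)$: its eigenvalues are $\theta_i^\ell$ with right eigenvectors $u_i$, and $0$ on $\ker \tilde V^\star$. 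The bounds $\|X^\star X\|\le\alpha$ and $\|(X^\star X)^{-1}\|\le\beta$ control the condition number of the associated eigenbasis. This is where the factors $\alpha^{9/2}$ and $5r\alpha^3\delta|\theta_1|^\ell$ in $\sigma_0$ come from.

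\textbf{Step 2 (eigenvalues of $A^\ell$).} We have $\|A^\ell - M_0\|\le \eps + O(r\alpha^3\delta|\theta_1|^\ell)$. I would bound the resolvent $\|(z - M_0)^{-1}\|$ on circles of radius $\asymp\sigma_0$ around each $\theta_i^\ell$ and outside a disc of radius $\sigma_0$ around $0$. A resolvent/Neumann-series argument then shows that every eigenvalue of $A^\ell$ lies in one of these discs. Applying Rouché's theorem, or equivalently a continuity argument along $M_0 + t(A^\ell - M_0)$ for $t\in[0,1]$, gives the correct count in each cluster. Concretely, there are exactly $r$ eigenvalues $\nu_i$ with $|\nu_i - \theta_i^\ell|\le\sigma_0$, and all the others have modulus at most $\sigma_0$. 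The latter gives the bulk claim $|\lambda|\le\sigma_0^{1/\ell}$.

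\textbf{Step 3 (taking roots) — the main obstacle.} From $|\lambda^\ell - \theta_i^\ell|\le\sigma_0 < \ell|\theta_r|^\ell$ we only get $\lambda \approx \omega\theta_i$ for some $\ell$-th root of unity $\omega$. To fix $\omega$, I would run Step 2 with $\ell'$ as well. Each eigenvalue $\lambda$ of $A$ gives both $\lambda^\ell\approx\theta_i^\ell$ and $\lambda^{\ell'}\approx\theta_j^{\ell'}$. Matching the clusters by counting, together with $\gcd(\ell,\ell')=1$, forces $\omega=1$ and $j=i$. Once $\omega=1$, the mean value estimate $|\lambda^\ell-\theta^\ell|\ge \ell\min(|\lambda|,|\theta|)^{\ell-1}|\lambda-\theta|$ yields $|\lambda_i-\theta_i|\le 4r\sigma$, where the factor $r$ accounts for clusters of coinciding $\theta$'s.

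\textbf{Step 4 (eigenvectors).} An eigenvector $\xi$ of $A$ for $\lambda_i$ is also an eigenvector of $A^\ell$ for $\lambda_i^\ell$. I would compare the Riesz projector $\frac{1}{2\pi i}\oint (z-A^\ell)^{-1}dz$ with that of $M_0$, taking the contour to separate the cluster $\{\theta_j^\ell : \theta_j=\theta_i\}$ from the rest. The second resolvent identity bounds the difference of projectors by $\sigma/(\delta_i-\sigma)$ up to constants. Projecting $\xi$ onto $\mathrm{Vect}(\{u_j : \theta_j=\theta_i\})$ and normalizing then gives \eqref{eq:bauer_eigenvector_bound}.

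The delicate part is Step 3: carefully tracking multiplicities across the two exponents and ensuring the contours for $\ell$ and $\ell'$ do not capture spurious rotated eigenvalues.
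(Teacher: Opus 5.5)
The paper gives no proof of this lemma; it is imported from \cite{stephan2020non}. Your architecture is the standard one: the exactly diagonalizable reference $U\Sigma^\ell\tilde V^\star$, Bauer--Fike plus continuity for $A^\ell$, a second coprime exponent to remove the root-of-unity ambiguity, and spectral projectors for the eigenvectors. However, Step~3, which you rightly single out, has a genuine gap, and it cannot be closed under the hypotheses as stated.

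The bound $|\lambda^\ell-\theta^\ell|\le\sigma_0$ determines $\arg(\lambda/\theta)$ only modulo $2\pi/\ell$, and only up to an error of order $\sigma_0/(\ell|\theta|^\ell)\le\sigma$. Likewise, the $\ell'$ bound determines it modulo $2\pi/\ell'$ up to order $\sigma'$. Coprimality says that $1$ is the only \emph{exact} common $\ell$-th and $\ell'$-th root of unity. But for $\ell'=\ell+1$ the roots $e^{2\pi i/\ell}$ and $e^{2\pi i/\ell'}$ are only about $2\pi/(\ell\ell')$ apart. So you can force $\omega=1$ only when $\sigma+\sigma'$ is small compared with $1/(\ell\ell')$. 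Counting multiplicities does not help, because the wrong configuration has exactly the right counts.

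The hypothesis $\sigma_0<\ell|\theta_r|^\ell$ only gives $\sigma<1$, and then the conclusion fails. Take $r=m=2$, $\theta_1=\theta_2=1$, $U=V=U'=V'=I_2$, $\delta=0$, $\ell'=\ell+1$, and let $A$ be the rotation by $\phi=\pi(1/\ell+1/\ell')$. Then $A^\ell$ and $A^{\ell'}$ are the rotations by $-\pi/\ell'$ and $\pi/\ell$, so $\eps\le\pi/\ell'$ and $\eps'\le\pi/\ell$. All hypotheses hold for large $\ell$, and $4r\sigma=O(1/\ell^2)$. Yet the eigenvalues $e^{\pm i\phi}$ of $A$ are at distance about $2\pi/\ell$ from $1$.

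Two further steps also do not follow from the stated hypotheses.

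First, in Step~2, having exactly $r$ eigenvalues of $A^\ell$ near the $\theta_i^\ell$ and all others in the disc of radius $\sigma_0$ around $0$ requires these discs to be disjoint. That needs roughly $\sigma_0<|\theta_r|^\ell/2$, which $\sigma_0<\ell|\theta_r|^\ell$ does not give.

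Second, your mean-value step yields $|\lambda_i-\theta_i|\lesssim\sigma_0/(\ell|\theta_i|^{\ell-1})\le\sigma|\theta_r|$, not $4r\sigma$. The inequality as written is not scale invariant. For instance, take $r=m=1$, $A=(\theta(1+s))$ with $\theta\ge 1000$ and $s>0$ so small that $s\,\ell'\le 1/10$. All hypotheses hold, but $|\lambda_1-\theta_1|=\theta s>4\sigma$.

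What your outline does prove is a corrected version. For example, assume $\ell'\sigma_0\le c|\theta_r|^\ell$ and $\ell\sigma_0'\le c|\theta_r|^{\ell'}$ for a small absolute $c$, and aim for the conclusion $|\lambda_i-\theta_i|\lesssim r\sigma|\theta_r|$. With the $\theta_k$ real, as in the application, the angle argument above then pins $\lambda$ near the real axis. Using whichever of $\ell,\ell'$ is odd, it pins $\lambda$ near some $\theta_k$, and Steps~2--3 go through. This corrected version is all the paper actually uses, since there $\theta_i=\mu_i=O(1)$ and $\sigma\le n^{-c}\ll 1/\ell^2$.

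Relatedly, in Step~4, suppose $\ell$ is even and $-\theta_i$ is also among the $\theta_j$. Then the clusters at $\theta_i^\ell$ and $(-\theta_i)^\ell$ coincide, so no contour separates them. Run the projector argument with the odd exponent instead.
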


\subsection{Proof of Theorem~\ref{thm:main}}
    We check the assumptions of Theorem~\ref{th:bauer_powers}. We define $\ell' = \ell + 1$, and $U', V'$ accordingly. Proposition~\ref{prop_main} also applies to $\ell'$, and hence there exists an event $\mathcal{E}$ with probability at least $1 - O(n^{-c})$ such that \eqref{eq:norm_bound1}-\eqref{eq:norm_bound7} hold for $\ell$ and $\ell'$.

    \textbf{Condition~\ref{item:bauer_power_cond_ii}} In view of \eqref{eq:norm_bound1}-\eqref{eq:norm_bound3}, it suffices to lower and upper bound the eigenvalues of $\bC_u^{(\ell)}, \bC_v^{(\ell)}$. For simplicity, we bound $\bC^{(\ell)}$, since the bounds on $\bC_u, \bC_v$ are similar. This is done through the following lemma, whose proof is deferred to Appendix~\ref{sec:proof_lem_cov_bound}:
    \begin{lemma}\label{lem:cov_bound}
        For any $t \geq 0$, the matrices $\bC^{(t)}, \bC_u^{(t)}, \bC^{(t)}_v$ satisfy
        \begin{align*}
            I_{r_0} \preceq \bC^{(t)} &\preceq \left(1 + \frac{\tau_{r_0}(q_{\max}-1)}{1 - \tau_{r_0}}\right) \cdot I_{r_0} \\
            d_{1/(q-1)} \cdot I_{r_0} &\preceq \bC^{(t)}_u \preceq \left[d \left(1 + \frac{\tau_{r_0}(q_{\max}-1)}{1 - \tau_{r_0}}\right) + d_{q-2} \right] I_{r_0} \\
\vartheta^{-1} d_{w^2/(q-1)^2} \cdot  I_{r_0}&\preceq \bC^{(t)}_v
\preceq \vartheta^{-1} \left[ d_{w^2/(q-1)}  \left(1 + \frac{\tau_{r_0}(q_{\max}-1)}{1 - \tau_{r_0}}\right) + d_{w^2(q-2)/(q-1)} \right] \cdot I_{r_0} 
        \end{align*} 
    \end{lemma}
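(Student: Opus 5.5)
The plan is to write $\bC^{(t)}$ as a finite sum of congruences of positive semidefinite matrices by the diagonal matrix $\Sigma^{-s}$. Then I will bound the one non-diagonal ingredient, the matrices $\phi_i^\star \Pi \bQ_a \phi_j$, from above and below using only Perron--Frobenius and Lemma~\ref{lem:deterministic_Q}.

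\textbf{Step 1: series expansion.} Since $\tau_{ij}^s = \vartheta^s/(\mu_i\mu_j)^s$, we have $\frac{1-\tau^{t+1}}{1-\tau}=\sum_{s=0}^t \tau^s$ and $\tau\frac{1-\tau^t}{1-\tau}=\sum_{s=1}^t\tau^s$. Set $M_{ij}=\phi_i^\star\Pi\bQ_{(q-2)w^2/\vartheta}\phi_j$, restricted to $i,j\in[r_0]$. Then
\[ \bC^{(t)} = I_{r_0} + \sum_{s=1}^t \vartheta^s\, \Sigma^{-s}(I_{r_0}+M)\Sigma^{-s}. \]
It therefore suffices to show $0\preceq I+M\preceq (q_{\max}-1)I$. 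The lower bound $\bC^{(t)}\succeq I$ then follows because each congruence term is PSD. For the upper bound, $\|\Sigma^{-s}X\Sigma^{-s}\|\le \|X\|\mu_{r_0}^{-2s}$ for PSD $X$, so summing the geometric series in $\tau_{r_0}=\vartheta/\mu_{r_0}^2<1$ gives $\bC^{(t)}\preceq \bigl(1+(q_{\max}-1)\tau_{r_0}/(1-\tau_{r_0})\bigr)I$.

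\textbf{Step 2: quadratic-form bounds for $\bQ^{(k)}$.} This step gives the key two-sided inequality. Because $\Pi\bQ^{(k)}=\Pi\bD^{(k)}\Pi$ is symmetric and the $\phi_i$ are $\pi$-orthonormal by \eqref{eq:pi_orthogonal}, the matrix $(\phi_i^\star\Pi\bQ^{(k)}\phi_j)_{i,j\le r_0}$ is a compression of $\Pi^{1/2}\bD^{(k)}\Pi^{1/2}$ onto an orthonormal family $\psi_1,\dots,\psi_{r_0}$. Its eigenvalues are therefore sandwiched by those of $\bQ^{(k)}$. These lie in $[-d^{(k)}/(q^{(k)}-1),\,d^{(k)}]$: the upper end is Perron--Frobenius (see \eqref{eq:perron_eigenvalue_Q}) and the lower end is Lemma~\ref{lem:deterministic_Q}. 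Hence for any nonnegative weights $a$,
\[ -\Bigl(\sum_k a^{(k)}d^{(k)}\Bigr) I \preceq \bigl(\phi_i^\star\Pi\bQ_a\phi_j\bigr)_{ij} \preceq d_a I. \]
Apply this with $a=(q-2)w^2/\vartheta\ge0$, using $q^{(k)}\ge2$. It gives $M\preceq d_{(q-2)w^2}/\vartheta\, I\preceq (q_{\max}-2)I$, since $\vartheta=d_{w^2}$. It also gives $M\succeq -\vartheta^{-1}\sum_k (q^{(k)}-2)(w^{(k)})^2d^{(k)}\,I$. Hence $I+M\succeq \vartheta^{-1}\sum_k (w^{(k)})^2d^{(k)}\,I\succeq0$, which closes Step 1.

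\textbf{Step 3: $\bC_u^{(t)}$ and $\bC_v^{(t)}$.} For $\bC_u^{(t)}=d\,\bC^{(t)}+(\phi_i^\star\Pi\bQ_{q-2}\phi_j)$, combine $d\bC^{(t)}\succeq d\,I=\sum_k(q^{(k)}-1)d^{(k)}I$ with the Step 2 lower bound $-\sum_k(q^{(k)}-2)d^{(k)}I$. The difference is $\sum_k d^{(k)}=d_{1/(q-1)}$, which is the claimed lower bound. The upper bound is the Step 1 bound times $d$, plus $d_{q-2}$.

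$\bC_v^{(t)}$ is identical with weights $w^2/((q-1)\vartheta)$ and $w^2(q-2)/((q-1)\vartheta)$. The lower bound is $\vartheta^{-1}\sum_k (w^{(k)})^2\bigl(d^{(k)}-\tfrac{q^{(k)}-2}{q^{(k)}-1}d^{(k)}\bigr)=\vartheta^{-1}d_{w^2/(q-1)^2}$. The upper bound is $\vartheta^{-1}\bigl[d_{w^2/(q-1)}(\cdots)+d_{w^2(q-2)/(q-1)}\bigr]$.

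\textbf{Main obstacle.} The only delicate point is the lower bound when some layer is disassortative. There $M$ and $\bQ_{q-2}$ need not be PSD, so positivity of $I+M$ must come from Lemma~\ref{lem:deterministic_Q}. The weights $(q-2)$ versus $(q-1)$ must be matched carefully so that the negative part is strictly dominated.
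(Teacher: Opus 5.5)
Your proof is correct and follows the paper's argument. The paper also writes $\bC^{(t)} = I + \sum_{s\ge1}\tilde\bC^{(s)}$ with $\tau_{ij}=\eta_i\eta_j$, expresses each term's quadratic form as $\vartheta^{-1}\beta_s^\star(\vartheta I + \Pi^{1/2}\bQ_{(q-2)w^2}\Pi^{-1/2})\beta_s$, and bounds it with Lemma~\ref{lem:deterministic_Q} plus Perron--Frobenius exactly as you do, including the same $(q-2)$ versus $(q-1)$ cancellation. Your congruence by $\Sigma^{-s}$ and compression of $\Pi^{1/2}\bD^{(k)}\Pi^{1/2}$ onto the orthonormal $\psi_i$ is a cleaner packaging of the same computation, and it avoids the paper's loose statement of \eqref{eq:app:bounds_Q_weighted} as a semidefinite inequality for the non-symmetric $\bQ_a$.
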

    In particular, using the Cauchy-Schwarz inequality and $d \geq 1$, we have
    \[ \alpha \lesssim 1 \quand \delta \lesssim n^{-1/4}. \]

    \textbf{Condition~\ref{item:bauer_power_cond_iii}} We have
     \begin{align*}
         \| B^\ell - M \| &= \|(P_{\mathrm{im}(U)} + P_{\mathrm{im}(U)^\bot})B^\ell(P_{\mathrm{im}(V)} + P_{\mathrm{im}(V)^\bot}) - M\| \\
         &\leq \|P_{\mathrm{im}(U)}B^\ell P_{\mathrm{im}(V)} - M \| + \|P_{\mathrm{im}(U)^\bot}B^\ell\| + \|B^\ell P_{\mathrm{im}(V)^\bot}\| 
     \end{align*}
     Since $U, V$ have full rank $r_0$, the expression for $P_{\mathrm{im}(U)}$ and $P_{\mathrm{im}(V)}$ are
     \[ P_{\mathrm{im}(U)} = U(U^\star U)^{-1}U^\star \quad \text{and} \quad P_{\mathrm{im}(V)} = V(V^\star V)^{-1}V^\star. \]
     Hence, if we define
     \[ \tilde U = V(V^\star V)^{-1}  \quad \text{and} \quad \tilde V = U (U^\star U)^{-1} ,\]
     then
     \[ \|P_{\mathrm{im}(U)}B^\ell P_{\mathrm{im}(V)} - M \| \leq \|U\| \|\tilde V^\star B^\ell \tilde U - \Sigma^\ell\| \|V\|. \]
     On the other hand,
     \begin{align*}
         U &= P_{\mathrm{im}(V)} U + P_{\mathrm{im}(V)^\bot} U \\
         &= \tilde U + E_1 + P_{\mathrm{im}(V)^\bot} U
     \end{align*}
     where
     \[\|E_1\| = \| V (V^\star V)^{-1} (V^\star U - I_{r_0})\| \leq \sqrt{\alpha} \beta \delta. \]
     Similarly, we can write
     \[ V = \tilde V + E_2 + P_{\mathrm{im}(U)^\bot} V \quad \text{with} \quad \|E_2\| \leq \sqrt{\alpha}\beta\delta .\]
     Therefore,
     \begin{align*}
         \|\tilde V^\star B^\ell \tilde U - \Sigma^\ell\| &\leq \|V^\star B^\ell U - \Sigma^\ell \| + \| V^\star B^\ell U - \tilde V^\star B^\ell \tilde U \| \\
         &\leq \|V^\star B^\ell U - \Sigma^\ell \| + \|V\| \cdot \left( \|B^\ell \| \cdot \|E_1\| + \|B^\ell P_{\mathrm{im}(V)^\bot} \| \cdot \|U\| \right) \\
         &\phantom{\leq \|V^\star B^\ell U - \Sigma^\ell\|\ } + \|\tilde U \| \cdot \left( \|B^\ell \| \cdot \|E_2\| + \|B^\ell P_{\mathrm{im}(U)^\bot} \| \cdot \|V\| \right)  \\
     \end{align*}
     Combining the above inequalities with \eqref{eq:norm_bound1}-\eqref{eq:norm_bound7} yields
     \[ \|B^\ell - M\| \lesssim \log(n)^c \, \vartheta^{\ell/2} + n^{-1/4} \mu_1^\ell. \]

     \textbf{Condition~\ref{item:bauer_power_cond_iv}} The parameter $\sigma_0$ satisfies
     \[ \sigma_0 \lesssim \log(n)^c \vartheta^{\ell/2} + \mu_1^\ell n^{-1/4} \lesssim \log(n)^c \vartheta^{\ell/2}\]
     for our choice of $\ell$, since
     \[ \vartheta^{\ell/2} \geq \left(\frac{R_g \|w\|_\infty}{R}\right)^\ell \geq  |\mu_1|^\ell n^{-1/4}.\]
     As a result,
     \[ \sigma_0 \leq \ell \mu_{r_0}^\ell\] 
     is equivalent to $\tau_{r_0}^{\ell/2} \lesssim \log(n)^{-(c-1)}$. Since $\ell = \kappa\log_R(n) = \kappa' \log(n)$, we have $\tau_{r_0}^\ell \lesssim n^{-c'}$, and the condition is satisfied for large enough $n$.

     \textbf{Conclusion}

     Since the conditions of Lemma~\ref{th:bauer_powers} are satisfied, its conclusion holds: if we let
     \[ \sigma = \frac{\sigma_0}{\ell \mu_{r_0}^\ell} \lesssim \log(n)^c \tau_{r_0}^\ell\]
     \begin{itemize}
         \item for $i \in [r_0]$, we have
         \[ |\lambda_i(B) - \mu_i| \leq 4r\sigma \lesssim \log(n)^c n^{-c'};  \]
         \item for $i > r_0$,
         \[ |\lambda_i(B)| \leq \sigma_0^{1/\ell}\leq  \sqrt\vartheta+o(1).\]
     \end{itemize}
     This finishes the proof of Theorem~\ref{thm:main}.

\subsection{Proof of Theorem~\ref{th:main_reduced}}

Let $y_1, \dots, y_{r_0}$ be the vectors mentioned in Theorem~\ref{th:main_reduced}. From Proposition~\ref{prop:Bethe_hessian_relation}, we have $y_i = S D_w \xi_i$, where $\xi_i$ is the $i$-th eigenvector of $B$. The second consequence of Lemma~\ref{th:bauer_powers} then implies that there exists a vector $\tilde u_i \in \operatorname{span}(\{u_j : \mu_j = \mu_i\})$ such that 
\[ \left\|\xi_i - \frac{\tilde u_i}{\|\tilde u_i\|}\right\| \lesssim n^{-c}. \] 
For simplicity, we take $\tilde u_i = u_i$, and we let $\tilde y_i = S D_w u_i$. From Lemma~\ref{lem:growth-exploration}, we have $\|S\| \lesssim \log(n)$ with probability at least $1 - n^{-1}$, and hence
\begin{align}\label{eq:approx_y_SDu}
\| \tilde y_i - S D_w u_i \| \lesssim \|w\|_\infty n^{-c'},  \quad \text{where} \quad \tilde y_i = \|u_i\| y_i. 
\end{align}

Using the same methods as in the proof of Proposition~\ref{prop_main}, we show the following Lemma in Appendix~\ref{appendix:proof}:
\begin{lemma}\label{lem:reduced_eigen_computations}
  Let $u_i$ be the vector defined in \eqref{eq:def_u_v}.  For any $i, j \in [r_0]$, we have
    \begin{align}
        \langle SD_w u_i, \tilde \phi_j \rangle &= \sqrt{n} \mu_i \delta_{ij} + O(n^{-c}), \notag\\
        \| S D_w u_i \|^2  &= \mu_i^2 \gamma_i  + O(n^{-c}).\notag
    \end{align}
\end{lemma}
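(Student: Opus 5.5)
We will compute both quantities by unfolding $S D_w u_i = \frac{1}{\sqrt n \mu_i^\ell} S D_w B^\ell J \chi_i$ as a sum over non-backtracking paths of length $\ell$ started at a vertex $x$. The approach mirrors the proof of Proposition~\ref{prop_main}. First we replace each local functional of the $\ell$-neighbourhood of $x$ by its analogue on the non-uniform Galton--Watson hypertree; this is the local coupling of Section~\ref{sec:local_hypertree}, valid for $\ell=\kappa\log_R n$. Concretely, $(S D_w B^\ell J\chi_i)(x)=\sum_{k}\sum_{e\ni x, e\in E_k} w^{(k)} (B^\ell J\chi_i)(x\to e)$. On the hypertree, this equals a weighted sum over the generation-$(\ell+1)$ descendants $y$ of $x$ of $\prod(\text{weights along the path})\,\phi_i(\sigma(y))$. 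Denote this quantity $Y_{\ell+1}(x)$.

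\textbf{Martingale step.} For the multi-type branching process with mean operator $\bQ=\bQ_w$, the process $\mu_i^{-t}Y_t$ is a martingale, with conditional mean $\mu_i^{t}\phi_i(\sigma(x))$ at generation $t$. This holds because $\phi_i$ is an eigenvector of $\bD\Pi$, and each child hyperedge in layer $k$ contributes $w^{(k)}(q^{(k)}-1)\bQ^{(k)}$. Hence $\mathbb E[(SD_wu_i)(x)\mid\sigma(x)]\approx \mu_i\phi_i(\sigma(x))/\sqrt n$. The extra factor $\mu_i$ appears because the definition of $u_i$ divides by $\mu_i^\ell$ while $\ell+1$ generations are summed. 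Summing against $\tilde\phi_j$ and using the $\pi$-orthogonality \eqref{eq:pi_orthogonal} gives $\langle SD_wu_i,\tilde\phi_j\rangle\approx \sqrt n\,\mu_i\delta_{ij}$.

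\textbf{Concentration.} To pass from expectation to the empirical sum, we need the variance of the sum over $x$ to be $o(n)$ after normalization. We obtain this from the standard argument that neighbourhoods of two uniformly chosen vertices are disjoint trees with probability $1-O(n^{-c})$, as in the proofs of \eqref{eq:norm_bound1}--\eqref{eq:norm_bound3}. We also use the polynomial bounds on $\|B^\ell\|$ from \eqref{eq:norm_bound7} to control the contribution of the rare tangled neighbourhoods.

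\textbf{Norm computation.} For $\|SD_wu_i\|^2$ we compute the second moment $\mathbb E[Y_{\ell+1}(x)^2]/\mu_i^{2\ell}$ on the hypertree. We decompose by the first generation. The root has a random number of hyperedges from each layer $k$. Each hyperedge $e$ contributes $w^{(k)}\sum_{y\in e\setminus x} Z(y)$, where the $Z(y)$ are i.i.d.\ given types, and distinct hyperedges are independent. Expanding the square then produces three kinds of terms:
\begin{itemize}
\item a diagonal term in the same child, giving the variance recursion with growth factor $\bK$, hence $\vartheta$;
\item a cross term between two children in the same hyperedge, which produces the $\bQ_{(q-2)w^2}$ contribution;
\item cross terms across different hyperedges, which produce the mean squared, i.e.\ $\mu_i^2$.
\end{itemize}
Iterating this recursion over $\ell$ generations, the variance of the martingale accumulates a geometric series in $\tau_i=\vartheta/\mu_i^2$. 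The limit is $\mu_i^2\bigl(1+\tau_i\phi_i^\top\Pi\bQ_{(q-2)w^2/\vartheta}\phi_i\bigr)/(1-\tau_i)=\mu_i^2\gamma_i$, and the truncation error $\tau_i^{\ell}=O(n^{-c})$ is absorbed. This is the analogue of $\bC^{(\ell)}_{ii}$ in \eqref{eq:def_C_l}, with the $w$-weighting at the root giving $\gamma_i$ rather than $[\bC_u]_{ii}$.

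\textbf{Main obstacle.} We expect the hardest step to be getting these constants exactly right, in particular the bookkeeping of the root weight $w^{(k)}$ versus the $J$ at the end. It is also delicate to justify that concentration holds for the squared functional, since its fourth moments grow like $R^{4\ell}$. For the latter we would first try the tail bounds on neighbourhood sizes of Lemma~\ref{lem:growth-exploration} combined with $\kappa<1/12$.
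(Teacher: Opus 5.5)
Your route is the paper's: write both quantities as vertex sums of an $(\ell+1)$-local functional, evaluate its Galton--Watson transform on the hypertree, and concentrate the vertex sum.

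\textbf{First identity.} The paper does not build a new functional. It uses $S^\star\tilde\phi_j=\chi_j$, so $\langle SD_wB^\ell J\chi_i,\tilde\phi_j\rangle=\langle D_wB^\ell J\chi_i,\chi_j\rangle$, and cites Lemma~\ref{lem:hypertree_process_1} with $t=\ell$. Your functional $\tilde\phi_j(x)\,(SD_wB^\ell J\chi_i)(x)$ is exactly the one used there. Read this identity in the relative form $\sqrt n\,(\mu_i\delta_{ij}+O(n^{-c}))$. After dividing by $\sqrt n\,\mu_i^\ell$, Lemma~\ref{lem:hypertree_process_1} leaves an error of order $\log^{5/2}(n)R^\ell$. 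In any case, a sum of $n$ weakly dependent $O(1)$ terms divided by $\sqrt n$ has order-one fluctuations. The relative form is all that is used later, where one divides by $\|\tilde\phi_j\|=\sqrt n$.

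\textbf{Norm identity.} The paper takes the squared path-sum functional, which on a tree equals $f_{\phi_i,\ell+1}^2$. Proposition~\ref{prop:gw_transform:computations} then gives $\langle\pi,\overline{f_{\phi_i,\ell+1}^2}\rangle=\mu_i^{2(\ell+1)}(\gamma_i+O(\tau_i^{\ell}))$. Your first-generation recursion is the equivalent top-down computation of Lemma~\ref{lem:gw_functionals:one_step_result}. Your constants are right, including the root weight, which is why $\gamma_i$ appears rather than $[\bC_u^{(\ell)}]_{ii}$.

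\textbf{Concentration.} This is where you diverge, and one ingredient you name would not do what you want.
\begin{itemize}
\item The bound \eqref{eq:norm_bound7} is global. It gives $\|SD_wB^\ell J\chi_i\|\lesssim\log^c(n)\,\|w\|_\infty R^\ell\vartheta^{\ell/2}\sqrt n$, which need not be small compared with $\sqrt n\,|\mu_i|^{\ell+1}$. An operator-norm bound also gives no better control on the coordinates of the few vertices with cyclic neighbourhoods than on the whole vector.
\item The paper needs no such step. On the event that $G$ is $\ell$-tangle-free (Lemma~\ref{lem:tangle-free}), $\sum_x f(G,x)=\|SD_wB^\ell J\chi_i\|^2$ holds exactly. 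Every neighbourhood has at most one cycle, so $|f(g,o)|\lesssim\|w\|_\infty^{2\ell+2}|(g,o)_{\ell+1}|^2$. Cyclic neighbourhoods are absorbed by the total-variation coupling inside Lemma~\ref{lem:spatial-average-2}.
\item Concentration then follows from Proposition~\ref{prop:concentration:gw_transform} (Efron--Stein, Lemma~\ref{lem:spatial-average-1}) with $\beta=2$. It uses only this polynomial domination and the neighbourhood-size moments of Lemma~\ref{lem:growth-exploration}. After dividing by $n\mu_i^{2\ell}$, the error is $O(\log^{7/2}(n)R^{2\ell}n^{-1/2})=O(n^{2\kappa-1/2}\log^{7/2}n)$. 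So the fourth-moment growth you flagged is harmless.
\end{itemize}

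Your two-vertex second-moment argument can also be made to work; Chebyshev already gives probability $1-n^{-c}$ for small $\kappa$. However, it needs a joint coupling of two neighbourhoods with two independent hypertrees, which the paper does not prove. It is also not how \eqref{eq:norm_bound1}--\eqref{eq:norm_bound3} are obtained there.
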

When $i \leq r_0$, we have $\mu_i \gtrsim \|w\|_\infty$, and using the classic inequality $\left\|\frac{x}{\|x\|} - \frac{y}{\|y\|}\right\| \leq 2 \frac{\|x-y\|}{\|y\|}$, when $n$ is large enough, 
\[ \left\| \frac{y_i}{\|y_i\|} -  \frac{SD_w u_i}{\|SD_wu_i\|}\right\| \lesssim n^{-c}. \]
On the other hand,
\[ \| \tilde \phi_i \|^2 = \sum_{x \in [n]} \phi_i(\sigma(x))^2 = \sum_{j \in [r]}  (n \pi_i) \phi_i(j)^2 = n, \]
and hence
\[ \frac{\langle y_i, \tilde \phi_i \rangle}{\|y_i\| \|\tilde \phi_i\|} = \frac{1}{\sqrt{\gamma_i}} + O(n^{-c}), \]
as required. 

On the other hand, if $\mu_i \neq \mu_j$, then from Lemma~\ref{lem:reduced_eigen_computations}, $\tilde \phi_i$ is asymptotically orthogonal to \[\operatorname{span}(\{S D_w u_\ell : \mu_\ell = \mu_j\}),\] and hence to $SD_w\tilde u_j$. This proves the second part of the theorem.

\section{Matrix decomposition and operator norm bounds}\label{sec:matrix_decomp}
We define the hypergraph non-backtracking walk, which can be seen as a vertex-hyperedge sequence starting from a vertex and ending at a hyperedge.
	\begin{definition}[Non-backtracking walk]\label{def:nb_path}
	A \textit{non-backtracking walk} of length $\ell$ in $G=(V,H)$ is a walk $\gamma=(x_0,e_0,x_1, e_1,\cdots, x_{\ell},e_{\ell})$ such that 
	\begin{enumerate}
	    \item $x_0\in e_0$, $x_j\in (e_{j}\setminus \{x_{j-1}\})\cap e_{j-1}$ for $1\leq j\leq \ell$,
	    \item  $e_j\not=e_{j+1}$ for $0\leq j\leq \ell-1$.
	\end{enumerate}
\end{definition}
	$\gamma$ can be seen as a walk $(\gamma_0, \gamma_1,\dots,\gamma_{\ell})$ of length $\ell$ on the space of $\vec{H}$ with $\gamma_j=(x_j,e_j)\in \vec{H}, 0\leq j\leq \ell$.

For a non-uniform hypergraph $G=(V,H)$, let $H(V)$ and $\vec{H}(V)$ be the set of all hyperedges and oriented hyperedges, respectively, in the complete hypergraph indexed by $V$. Define the \textit{incidence matrix} of $G$ in $\mathbb R^{q\times |H(V)|}$ as 
\begin{align*}
    \mathcal A_{x\to e}=\begin{cases}
     1  & \text{if }   e\in H,\\
     0 & \otherwise
    \end{cases}
\end{align*}
for any $x\in e, e\in H(V)$. Here $w_e $ is the weight assigned to $e$. In our HSBM  $G$,   $\mathcal A_{x\to e}$ are independent for different hyperedges $e$. 
We can also write $B$ as 
\begin{align}\label{eq:BtoA}
    B_{(x\to e), (y\to f)}
    &=w_f \mathcal  A_{x\to e}  \mathcal A_{y\to f} \mathbf{1}\{x\in e, f\not=e, y\not=x\},
\end{align}

Define $\Gamma_{(x\to e), (y\to f)}^k$ to be the set of non-backtracking walks of length $k$ denoted by $\gamma=(\gamma_0,\dots,\gamma_{k})$ with $\gamma_0=(x\to e)$ and $\gamma_{k}=(y\to f)$ in $\vec{H}(V)$. According to \eqref{eq:BtoA}, we have for $k\geq 1$,
\begin{align}
    B^k_{(x\to e), (y\to f)}=\sum_{\gamma \in \Gamma_{(x\to e), (y\to f)}^{k}} \mathcal A_{\gamma_{0}}\prod_{s=1}^{k}\mathcal A_{\gamma_s}w{_{\gamma_s}},\notag
\end{align}
where $w_{\gamma_s}=w_e$ if $\gamma_s=(x\to e)$.

\begin{definition}[Tangle-freeness]\label{def:tangle_free} A hypergraph spanned by $\gamma$ is given by all vertices and hyperedges from  $\gamma$. 
We say $\gamma$ is a \textit{tangle-free} path if the hypergraph spanned by $\gamma$ contains at most one cycle. Otherwise, we call $\gamma$ a tangled path. A hypergraph $G$   is called  $\ell$-tangle-free  if  for any $x\in [n]$, there is at most one cycle in $(G,x)_{\ell}$.
\end{definition}

Let $F_{(x\to e), (y\to f)}^k\subseteq \Gamma_{(x\to e), (y\to f)}^k$ be the subset of all tangle-free paths. 
If $G$ is $\ell$-tangle free, then for all $1\leq k\leq \ell$, we must have $B^k=B^{(k)}$, with  
\begin{align}
    B^{(k)}_{(x\to e), (y\to f)}=\sum_{\gamma \in F_{(x\to e), (y\to f)}^{k}}\mathcal A_{\gamma_{0}}\prod_{s=1}^{k}\mathcal A_{\gamma_s}w_{_{\gamma_s}}. \notag
\end{align}
For any oriented hyperedge $(x\to e)\in \vec{H}(V)$ , define the centered random variable
\[\underline{\mathcal A}_{x\to e}=\mathcal A_{x\to e}-\frac{ p_{\underline\sigma(e)}}{\binom{n}{q_e-1}}, \]
where $q_e$ is the size of $e$.
We then define $\Delta^{(k)}$, a centered version of $B^{(k)}$, as 
\begin{align}
  \Delta^{(k)}_{(x\to e), (y\to f)}=  \sum_{\gamma \in F_{(x\to e), (y\to f)}^{k}}  \underline {\mathcal A}_{\gamma_0}\prod_{s=1}^{k}\underline {\mathcal A}_{\gamma_s}w_{_{\gamma_s}}.\notag
\end{align}
From the telescoping sum formula
\begin{align}
    \prod_{s=0}^{\ell} a_s=\prod_{s=0}^{\ell} b_s+\sum_{t=0}^{\ell} \prod_{s=0}^{t-1}b_s(a_t-b_t)\prod_{s=t+1}^{\ell} a_s,\notag
\end{align}
we can decompose $B^{(\ell)}$ as 
\begin{align}\label{eq:Bexpansion}
    B^{(\ell)}_{(x\to e), (y\to f)}&=\Delta^{(\ell)}_{(x\to e), (y\to f)} + \sum_{\gamma \in F_{(x\to e), (y\to f)}^{\ell}} \frac{ p_{\underline\sigma(\gamma_0)}}{\binom{n}{q_{\gamma_0}-1}}\prod_{s=1}^{\ell} \mathcal A_{\gamma_s}w_{\gamma_s} \\
    &+ \sum_{\gamma \in F_{(x\to e), (y\to f)}^{\ell}}\sum_{t=1}^{\ell}  \underline{\mathcal A}_{\gamma_0}\prod_{s=1}^{t-1}\underline{\mathcal A}_{\gamma_s}w_{\gamma_s}\frac{w_{\gamma_t} p_{\underline\sigma(\gamma_t)}}{\binom{n}{q_{\gamma_t}-1}}\prod_{s=t+1}^{\ell} \mathcal A_{\gamma_s}w_{\gamma_s}.\notag
\end{align}
Define the set $F_{t,(x\to e), (y\to f)}^{\ell}\subseteq \Gamma_{(x\to e), (y\to f)}^{\ell}$ is the set of non-backtracking tangled paths $\gamma=(\gamma_0,\dots,\gamma_{\ell})$ such that:
\begin{itemize}
\item if $1 \leq t \leq \ell -1$, both $(\gamma_0,\dots,\gamma_{t-1})$ and $(\gamma_{t+1},\dots,\gamma_{\ell})$ are tangle-free,
\item if $t = 0$ (resp. $t = \ell$),  $(\gamma_1,\dots,\gamma_{\ell})$ (resp. $(\gamma_0, \dots, \gamma_{\ell-1})$) is tangle-free.
\end{itemize}
For $1\leq t\leq \ell$, define $R_t^{(\ell)}$ as
\begin{align}
    (R_t^{(\ell)})_{(x\to e), (y\to f)}&=\sum_{\gamma \in F_{t,(x\to e), (y\to f)}^{\ell}}\underline {\mathcal A}_{\gamma_0}\prod_{s=1}^{t-1}\underline {\mathcal A}_{\gamma_s} w_{\gamma_s}\frac{w_{\gamma_t} p_{\underline\sigma(\gamma_t)}}{\binom{n}{q_{\gamma_t}-1}}\prod_{s=t+1}^{\ell}  \mathcal A_{\gamma_s}w_{\gamma_s}, \notag\\
    (R_0^{(\ell)})_{(x\to e), (y\to f)}&=\sum_{\gamma \in F_{0,(x\to e), (y\to f)}^{\ell}}\frac{ p_{\underline\sigma(\gamma_t)}}{\binom{n}{q_{\gamma_t}-1}}\prod_{s=1}^{\ell}  \mathcal A_{\gamma_s}w_{\gamma_s}, \notag
\end{align}

 For $(x\to e), (y\to f) \in \vec{H}$, define
\begin{align}\label{eq:defK}
    \mathsf{K}_{(x\to e), (y\to f)}&= \frac{p_{\underline{\sigma}(e)}w_f}{\binom{n}{q_e-1}}~\mathbf{1}((x,e)\to (y,f)) ,\\
    \mathsf{K}^{(1)}_{(x\to e), (y\to f)}&= \frac{p_{\underline{\sigma}(f)}w_f}{\binom{n}{q_f-1}}~\mathbf{1}((x,e)\to (y,f)) \label{eq:defK1}
\end{align} 
where $(x,e)\to (y,f)$ represents the non-backtracking property $y\in e, f\not=e$.  We also define  $ \mathsf{K}^{(2)}$ as  
\begin{align}\label{eq:defK2}
\mathsf{K}^{(2)}_{(x\to e), (y\to f)} =\sum_{(x, e) \to (z, g) \to (y, f)} \frac{p_{\underline \sigma(g)} w_gw_f}{\binom{n}{q_g-1}} ,
\end{align}
where the sum is over all $(z,g)\in \vec{H}$ such that $(x,e)\to(z,g)\to(y,f)$ is a non-backtracking walk of length $2$. 
By adding and subtracting $  R_t^{(\ell)}$, to the $t$-th term of the summand in \eqref{eq:Bexpansion} for $0\leq t\leq \ell$, we then have  the following expansion for $B^{(\ell)}$. 
\begin{lemma} \label{lem:expansionBl}
For any $\ell\geq 1$, $B^{(\ell)}$ can be expanded as 
 \begin{align}
   & \Delta^{(\ell)}+\mathsf{K}B^{(\ell-1)}+\sum_{t=1}^{\ell-1} \Delta^{(t-1)}\mathsf{K}^{(2)} B^{(\ell-t-1)}+ \Delta^{(\ell-1)}\mathsf{K}^{(1)} - \sum_{t=0}^{\ell} R_t^{(\ell)}. \label{eq:expansionBl}
\end{align} 
\end{lemma}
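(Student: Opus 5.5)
This is a purely algebraic identity, and I will prove it entrywise, starting from the telescoping decomposition \eqref{eq:Bexpansion}. That decomposition writes each entry of $B^{(\ell)}$ as a sum over tangle-free paths $\gamma\in F^\ell_{(x\to e),(y\to f)}$ of three kinds of products: the fully centered term, which is $\Delta^{(\ell)}$ by definition; the $t=0$ term, with $\mathcal A_{\gamma_0}$ replaced by its mean; and the $1\le t\le \ell$ terms, with the $t$-th factor replaced by its mean $w_{\gamma_t}p_{\underline\sigma(\gamma_t)}/\binom{n}{q_{\gamma_t}-1}$. The key observation is that once a mean factor is inserted at position $t$, the walk splits into a prefix $(\gamma_0,\dots,\gamma_{t-1})$, a deterministic middle weight, and a suffix $(\gamma_{t+1},\dots,\gamma_\ell)$. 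Summing over the middle step then factors as a matrix product, provided we drop the global tangle-free constraint and only require the prefix and suffix to be tangle-free separately.

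\textbf{Step 1.} I will show that for each $t$, summing over all non-backtracking walks whose prefix and suffix are separately tangle-free produces exactly the stated matrix product.
\begin{itemize}
\item For $t=0$, the mean factor is $p_{\underline\sigma(e)}/\binom{n}{q_e-1}$ at $\gamma_0=(x\to e)$. The step $\gamma_0\to\gamma_1$ carries the weight $w_{\gamma_1}$. The remaining factor $\prod_{s\ge 1}$ over tangle-free walks of length $\ell-1$ gives $B^{(\ell-1)}$ entrywise. The first step with its weight $w_f$ and mean is precisely $\mathsf K$ from \eqref{eq:defK}, so this term is $\mathsf K B^{(\ell-1)}$.
\item For $t=\ell$, the prefix of length $\ell-1$ gives $\Delta^{(\ell-1)}$, and the last transition carries the mean of $\gamma_\ell$ times $w_{\gamma_\ell}$, which is $\mathsf K^{(1)}$ from \eqref{eq:defK1}. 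This term is $\Delta^{(\ell-1)}\mathsf K^{(1)}$.
\item For $1\le t\le\ell-1$, the prefix $(\gamma_0,\dots,\gamma_{t-1})$ gives $\Delta^{(t-1)}$, with centered factors and weights $w_{\gamma_s}$ for $s\ge1$. The two transitions $\gamma_{t-1}\to\gamma_t\to\gamma_{t+1}$ together with the mean of $\gamma_t$ and the weights $w_{\gamma_t}w_{\gamma_{t+1}}$ form $\mathsf K^{(2)}$ from \eqref{eq:defK2}. The suffix starting at $\gamma_{t+1}$ has length $\ell-t-1$ and, with the weight $w_{\gamma_{t+1}}$ already absorbed, gives $B^{(\ell-t-1)}$.
\end{itemize}

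\textbf{Step 2.} The difference between these factorized sums and the terms in \eqref{eq:Bexpansion} consists exactly of walks that are globally tangled but have tangle-free prefix and suffix. This is the set $F^\ell_t$, and the summand on it is by definition $R^{(\ell)}_t$. Subtracting $\sum_t R^{(\ell)}_t$ therefore restores the constraint and gives \eqref{eq:expansionBl}.

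The main obstacle is the index bookkeeping. First, each weight $w_{\gamma_s}$ must be attributed to exactly one factor; in particular, the first factor of $B^{(k)}$ carries $\mathcal A_{\gamma_0}$ without a weight, so the weight of the junction step has to be absorbed into $\mathsf K$, $\mathsf K^{(1)}$ or $\mathsf K^{(2)}$. Second, the unweighted $\mathcal A_{\gamma_0}$ factor of the suffix matrix must be reconciled with the unweighted leading factor of $B^{(k)}$. I will fix the convention that $B^{(k)}$ and $\Delta^{(k)}$ include their leading factor $\mathcal A_{\gamma_0}$ or $\underline{\mathcal A}_{\gamma_0}$ and weights only from step $1$ onward, and then check each of the three cases against \eqref{eq:defK}–\eqref{eq:defK2} directly. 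Because $\mathcal A^2=\mathcal A$ on the junction index, the leading factor of the suffix coincides with the original $\mathcal A_{\gamma_{t+1}}$. A smaller detail is that $B^{(0)}$ and $\Delta^{(0)}$ are diagonal matrices with entries $\mathcal A_{x\to e}$ or $\underline{\mathcal A}_{x\to e}$. With that convention the boundary cases $t=1$ and $t=\ell-1$ fit the general formula.
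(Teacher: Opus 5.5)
Your proposal is correct and follows the same route as the paper. Both start from the telescoping expansion \eqref{eq:Bexpansion} and relax the global tangle-free constraint to separate tangle-freeness of prefix and suffix, so that the $t=0$, $1\le t\le \ell-1$, and $t=\ell$ terms factor as $\mathsf{K}B^{(\ell-1)}$, $\Delta^{(t-1)}\mathsf{K}^{(2)}B^{(\ell-t-1)}$, and $\Delta^{(\ell-1)}\mathsf{K}^{(1)}$; the overcounted tangled walks, which are exactly $R_t^{(\ell)}$, are then subtracted. You make explicit the weight bookkeeping the paper leaves implicit, though the appeal to $\mathcal A^2=\mathcal A$ is unnecessary: in each factorization the junction variable $\mathcal A_{\gamma_{t+1}}$ appears only once, as the leading factor of the suffix matrix, and its weight comes from $\mathsf{K}$ or $\mathsf{K}^{(2)}$.
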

\begin{proof}
    By adding $R_0^{\ell}$ and $R_{\ell}^{\ell}$ to the matrix version of \eqref{eq:Bexpansion}, when $t=0$ and $t=\ell$, we are summing over all non-backtracking walks with the last $\ell-1$ step tangle-free and the first $\ell-1$ step tangle-free, respectively.  This gives the $\mathsf{K}B^{(\ell-1)}$ and $\Delta^{(\ell-1)}\mathsf{K}^{(1)}$ terms in \eqref{eq:expansionBl}. The term $\sum_{t=1}^{\ell-1} \Delta^{(t-1)}\mathsf{K}^{(2)} B^{(\ell-t-1)}$ follows from the definition of $\mathsf{K}^{(2)}$ and the expansion of the matrix product. 
\end{proof}

Recall the definition of $\chi_i$ from \eqref{eq:defchi}.
Define 
\begin{equation}\label{eq:def:barD}
    \overline D := \frac{1}{n} \sum_{i=1}^r \mu_i (J\chi_i)\chi_i^\star D_w.
\end{equation}
Accordingly, we define
\begin{align}\label{eq:def:LL}
    L=\mathsf{K}^{(2)}-\overline {D}
\end{align}
and for $1\leq t\leq \ell-1$,
\begin{align}\label{eq:def_Sk_l}
    S_t^{(\ell)}=\Delta^{(t-1)}LB^{(\ell-t-1)}.
\end{align}
Note that 
\begin{align*}
    \overline{D}_{(x\to e),(y\to f)}&=\frac{1}{n} \sum_{i=1}^r \mu_i \sum_{z\in e, z\not=x} \chi_i(z\to e) \chi_i(y\to f)w_f\\
    &=\frac{1}{n} \sum_{i=1}^r \mu_i \phi_i(\sigma (z))\phi_i(\sigma(y)) w_f\\
    &=\frac{1}{n}w_f \sum_{z\in e,z\not=x} \bD_{\sigma(z)\sigma(y)}\\
    &=\frac{1}{n}\sum_{z\in e}\sum_{k=1}^K(q^{(k)}-1) \bD^{(k)}_{\sigma(z)\sigma(y)}w^{(k)}w_f.
\end{align*}
Therefore we expect $\overline{D}$ is a good approximation of $\mathsf{K}^{(2)}$ defined in \eqref{eq:defK2}.

\begin{lemma}\label{lem:Blwnorm}
 For any unit vector $w\in \mathbb C^{\vec H}$,
  \begin{align}
      \|B^{(\ell)} w\|\leq &\| \Delta^{(\ell)}\| +\|\mathsf{K}B^{(\ell-1)}\|+ \frac{1}{n}\sum_{j=1}^r  \sum_{t=1}^{\ell-1} \mu_j\| \Delta^{(t-1)}J \chi_j\|  |\langle   \chi_j, D_w B^{(\ell-t-1)}w\rangle | \notag \\
      &+\sum_{t=1}^{\ell-1} \| S_t^{(\ell)}\| +\|\Delta^{(\ell-1)}\mathsf{K}^{(1)}\| + \sum_{t=0}^{\ell} \|R_t^{(\ell)}\|. \notag 
  \end{align}
\end{lemma}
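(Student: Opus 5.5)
The plan is to start from the expansion of Lemma~\ref{lem:expansionBl} and apply it to the unit vector $w$. The triangle inequality then gives
\[
\|B^{(\ell)}w\| \le \|\Delta^{(\ell)}\| + \|\mathsf{K}B^{(\ell-1)}\| + \Big\|\sum_{t=1}^{\ell-1}\Delta^{(t-1)}\mathsf{K}^{(2)}B^{(\ell-t-1)}w\Big\| + \|\Delta^{(\ell-1)}\mathsf{K}^{(1)}\| + \sum_{t=0}^{\ell}\|R_t^{(\ell)}\|.
\]
Here we use $\|Mw\|\le\|M\|$ for every term except the middle one, which we keep in vector form.

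For the middle sum, we insert the decomposition $\mathsf{K}^{(2)} = L + \overline D$ from \eqref{eq:def:LL}. The $L$-part is exactly $S_t^{(\ell)}w$ by \eqref{eq:def_Sk_l}, so it is bounded by $\sum_{t}\|S_t^{(\ell)}\|$.

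For the $\overline D$-part we use the rank-$r$ form \eqref{eq:def:barD}:
\[
\Delta^{(t-1)}\overline D B^{(\ell-t-1)}w = \frac1n\sum_{j=1}^r \mu_j\, \Delta^{(t-1)}J\chi_j\, \big(\chi_j^\star D_w B^{(\ell-t-1)}w\big).
\]
Each summand is a vector times a scalar, so its norm is $\frac1n|\mu_j|\,\|\Delta^{(t-1)}J\chi_j\|\,|\langle \chi_j, D_wB^{(\ell-t-1)}w\rangle|$.

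Summing over $j$ and $t$ with the triangle inequality gives the third term of the claim. The claim writes $\mu_j$ rather than $|\mu_j|$; we read it with absolute values, or equivalently note that the bound is intended up to this sign. The inner product is bilinear without conjugation for real $\chi_j$, and for complex $w$ the modulus is unaffected.

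The only care needed is bookkeeping. We must check that the index ranges of $t$ match those in Lemma~\ref{lem:expansionBl}, and that the $R_t^{(\ell)}$ enter with a sign, which the triangle inequality absorbs. There is no genuine obstacle: the statement is a deterministic consequence of the algebraic expansion, and the probabilistic work of bounding each term is deferred to later lemmas.
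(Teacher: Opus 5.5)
Your proposal is correct and follows essentially the same route as the paper: you apply the triangle inequality to the expansion of Lemma~\ref{lem:expansionBl} and split $\mathsf{K}^{(2)} = L + \overline D$ to produce $S_t^{(\ell)}w$. You then use the rank-$r$ form of $\overline D$ to pull out the scalars $\langle \chi_j, D_w B^{(\ell-t-1)}w\rangle$. Your remark that $\mu_j$ should appear as $|\mu_j|$ is a valid correction, since $\mu_j$ can be negative under the signed weighting, and the paper's proof passes over this point.
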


\begin{proof}
Fix $\ell\ge 1$ and let $w\in \mathbb{C}^{\vec H}$ with $\|w\|=1$.
By Lemma~\ref{lem:expansionBl},  
\begin{align*}
\|B^{(\ell)}w\|
\le\;&
\|\Delta^{(\ell)}\|
+\|\mathsf{K}B^{(\ell-1)}\|
+\sum_{t=1}^{\ell-1}\|\Delta^{(t-1)}\mathsf{K}^{(2)}B^{(\ell-t-1)}w\|
+\|\Delta^{(\ell-1)}\mathsf{K}^{(1)}\|
+\sum_{t=0}^{\ell}\|R_t^{(\ell)}\|.
\end{align*}
For each $t\in\{1,\dots,\ell-1\}$,
\[
\Delta^{(t-1)}\mathsf{K}^{(2)}B^{(\ell-t-1)}w
=
S_t^{(\ell)}w + \Delta^{(t-1)}\overline DB^{(\ell-t-1)}w,
\]
and  $\|S_t^{(\ell)}w\|\le \|S_t^{(\ell)}\|$. Note that for any vector $u$,
\[
\overline Du=\frac{1}{n}\sum_{j=1}^r \mu_j (J\chi_j)\langle \chi_j,D_w u\rangle.
\]
Applying this with $u=B^{(\ell-t-1)}w$ gives
\[
\|\Delta^{(t-1)}\overline DB^{(\ell-t-1)}w\|
\le
\frac{1}{n}\sum_{j=1}^r \mu_j\,\|\Delta^{(t-1)}J\chi_j\|\;
\big|\langle \chi_j, D_w B^{(\ell-t-1)}w\rangle\big|.
\]

Collecting all the above bounds yields the claimed inequality.
\end{proof}

 {It remains to bound all terms present in Lemma \ref{lem:Blwnorm}. For most of them, this is done in the following proposition:}
 \begin{proposition}\label{prop:trace}
 Let $\chi$ be any vector among $\chi_1,\dots, \chi_r\in \mathbb C^{\vec{H}(V)}$.
 For every constant $c_1 > 0$ there exists constant $c_2 > 0$ such that the following is true:
  For sufficiently large $n$, with  probability at least $1-n^{-c_1}$ the following norm bounds hold for all 
  $1\leq k\leq \ell$:  where $\ell=\kappa \log_{R}(n)$ with $0<\kappa \leq 1/6$:
  \begin{align}
      \| \Delta^{(k)}\| &\le \log^c(n) \vartheta^{k/2}, \label{eq:Delta}\\ 
      \| \Delta^{(k)}J \chi\| &\le \log^c(n)n^{1/2}\vartheta^{k/2},\label{eq:Delta_chi}  \\
      \|R_{k}^{(\ell)}\| &\le \log^c(n)n^{-1}R^{\ell}\vartheta^{\ell/2} , \label{eq:Rk}\\
      \| \mathsf{K}B^{(\ell-1)}\| &\le  \log^c(n)\left(\sum_{k = 1}^K \binom{n}{q^{(k)}-1}^{-1/2}\right)R^{\ell}\vartheta^{\ell/2} , \label{eq:KBl}\\
      \|B^\ell\| &\le \log^c(n)R^\ell\vartheta^{\ell/2} , \label{eq:Bl} \\
      \| \Delta^{(\ell-1)} \mathsf{K}^{(1)}\| &\le  \log^c(n)\left(\sum_{k = 1}^K \binom{n}{q^{(k)}-1}^{-1/2}\right) R^{\ell}\vartheta^{\ell/2}, \label{eq:DeltaK}\\
      \|S_{k}^{(\ell)}\|&\le  \log^c(n)n^{-1/2}R^{\ell}\vartheta^{\ell/2}.\label{eq:Sk}
  \end{align}
\end{proposition}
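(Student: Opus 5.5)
The plan is to follow the high-trace method of Bordenave--Lelarge--Massoulié, as adapted to uniform hypergraphs in \cite{stephan2024sparse}, and to treat the heterogeneous weights $w^{(k)}$ and sizes $q^{(k)}$ through per-hyperedge bookkeeping. For each matrix $M$ in the list we bound $\dE\|M\|^{2m} \le \dE \tr\big[(MM^\star)^m\big]$ with $m \asymp \log n/\log\log n$. Expanding the trace gives a sum over concatenations of $2m$ tangle-free non-backtracking walks $\gamma^1,\dots,\gamma^{2m}$ of length $k$ (or $\ell$). Consecutive walks share an endpoint, and the product of weights is $\prod_s w_{\gamma_s}$. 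Markov's inequality at level $n^{c_1}$ then yields the bounds with the $\log^c(n)$ loss, and a union bound over $k\le \ell$ costs only a polylog factor.

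\textbf{Step 1: the centered bounds \eqref{eq:Delta} and \eqref{eq:Delta_chi}.} Since the $\underline{\mathcal A}_e$ are independent and centered, every hyperedge of the union hypergraph $G_\gamma$ must be visited at least twice. For such an edge, $\dE|\underline{\mathcal A}_e|^j \le p_{\underline\sigma(e)}/\binom{n}{q_e-1}$. We group paths by isomorphism class, recording the number of vertices $v$, the number of hyperedges of each layer $e_k$, and the excess $g = \sum_k (q^{(k)}-1)e_k - v + 1$ (the hypergraph Euler characteristic).
\begin{itemize}
\item The number of labelings is at most $n^v$, and the probability cost is $\prod_k \binom{n}{q^{(k)}-1}^{-e_k}$.
\item Choosing the $q^{(k)}-2$ extra vertices of each $k$-hyperedge, and summing over their types with weight $\pi$, produces exactly the factors $(q^{(k)}-1)\bD^{(k)}$. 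Each doubly visited hyperedge carries $(w^{(k)})^2$.
\item The dominant tree-like term (with $g = 0$) is therefore a walk on a tree where each step contributes $\sum_k (w^{(k)})^2 (q^{(k)}-1) d^{(k)} = \vartheta$. A walk of length $2mk$ with each edge doubled gives $\vartheta^{mk}$, and taking the $2m$-th root yields $\vartheta^{k/2}$.
\item Tangle-freeness limits the number of cycles per walk to one, so shapes with excess $g$ are counted as in \cite{bordenave2018nonbacktracking}. There are at most $(c\,mk)^{c m g}$ of them, each carrying an extra $n^{-g}$. This sum converges because $mk \ll n^{\varepsilon}$.
\item The factor $\|w\|_\infty/\sqrt{\vartheta}$, which enters through $R_w$, arises from hyperedges visited more than twice. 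Here $|w|^j \le \|w\|_\infty^{j-2} w^2$, and with $k\le \ell=\kappa\log_R n$ the resulting $R^{\ell}$ losses stay below $n^{1/12}$.
\end{itemize}
For \eqref{eq:Delta_chi} the same argument applies with one endpoint summed against the bounded vector $J\chi$; the free endpoint gives the extra $\sqrt n$.

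\textbf{Step 2: the remaining terms.} For $R_k^{(\ell)}$, the path is tangled, so its two tangle-free halves together span at least two cycles, which saves a factor $n^{-1}$. The uncentered factors $\mathcal A$ are bounded in moments using $\dE\mathcal A_e \le p/\binom{n}{q-1}$, and each step costs at most $R_g\|w\|_\infty$, which is where $R^\ell\vartheta^{\ell/2}$ comes from. This gives \eqref{eq:Rk}. The bound \eqref{eq:Bl} follows from the same path counting applied to uncentered tangle-free walks, using $B^\ell=B^{(\ell)}$ on the tangle-free event, which holds with probability $1-n^{-c}$ by the local analysis in Lemma~\ref{lem:growth-exploration}.

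For \eqref{eq:KBl} and \eqref{eq:DeltaK}, $\mathsf K$ and $\mathsf K^{(1)}$ have entries of order $\binom{n}{q-1}^{-1}$ and are supported on non-backtracking pairs. Writing $\|\mathsf K B^{(\ell-1)}\| \le \|\mathsf K\|\,\|B^{(\ell-1)}\|$ with $\|\mathsf K\|\lesssim \sum_k \binom{n}{q^{(k)}-1}^{-1/2}$ (by a Schur test on the roughly $\binom{n}{q-1}$ nonzero entries per row and column) gives the claim. For $S_k^{(\ell)}=\Delta^{(t-1)}LB^{(\ell-t-1)}$, the key observation is that $L=\mathsf K^{(2)}-\overline D$ has entries of size $O(n^{-2})$: the computation of $\overline D$ displayed just before the statement shows it matches $\mathsf K^{(2)}$ up to collisions of the intermediate vertex $z$. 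Plugging this into the trace expansion gives the gain $n^{-1/2}$ in \eqref{eq:Sk}.

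\textbf{Main obstacle.} The hardest step is the combinatorial path counting in Step 1 with signed, unequal weights and mixed hyperedge sizes. One must check that the leading-order contribution is exactly $\vartheta$ per step, not something larger such as $\max_k |w^{(k)}|(q^{(k)}-1)\|P^{(k)}\|$. One must also check that negative $w^{(k)}$ cause no trouble, which holds because doubly visited edges always carry $(w^{(k)})^2\ge 0$. Finally, the excess edges in the counting of \cite{stephan2024sparse} must be charged to $R$ per step rather than per edge. I would first redo the uniform-hypergraph counting lemma with a type- and layer-decorated tree, and verify the $g=0$ term before handling the cycle contributions.
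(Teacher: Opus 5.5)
Your overall route (high-trace method with $m\asymp\log n/\log\log n$, shape counting, the two-cycle saving for $R_k^{(\ell)}$, uncentered counting for $B^\ell$) is the paper's. However, two steps fail as written.

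\textbf{Step 1 does not give \eqref{eq:Delta} and \eqref{eq:Delta_chi}.} You charge a hyperedge visited $s>2$ times via $|w|^s\le\|w\|_\infty^{s-2}w^2$ and accept an $R^\ell$ loss. But \eqref{eq:Delta} and \eqref{eq:Delta_chi} allow only a $\log^c(n)$ loss, and the shape counting does not absorb it. With $g=0$, all $2m$ walks may follow the same path, so $\sum_i(s_i-2)$ can be of order $mk$ while such shapes carry no penalty. A factor $n^{\kappa}$ in $\|\Delta^{(\ell)}\|$ would ruin \eqref{eq:norm_bound5} and turn the condition $\sigma_0<\ell\mu_{r_0}^\ell$ into $R^2\tau_{r_0}<1$.

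The missing step is the paper's \eqref{eq:Delta-bound-9}: $\sum_j(q^{(j)}-1)d^{(j)}|w^{(j)}|^{s}\le\vartheta^{s/2}$ for all $s\ge 2$, so revisits cost nothing. Your own inequality gives this at once when $\|w\|_\infty^2\le\vartheta$, since then $\|w\|_\infty^{s-2}\le\vartheta^{(s-2)/2}$. That condition holds for $w=\mathbf 1$ by \eqref{eq:avg_deg_bound}. It also holds for the optimal weights of Example~\ref{example:weight_2} above threshold, where $|w^{(k)}|\le 1<\vartheta$.

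This condition is exactly what \eqref{eq:Delta-bound-9} requires. The weighted $\ell_p$-monotonicity invoked there fails, e.g., for $(q^{(j)}-1)d^{(j)}=0.9$, $j=1,2$, $w=(1,0)$. The condition also cannot be dropped. If $\|w\|_\infty^2>\vartheta$, an isolated cycle of hyperedges from a maximal-weight layer (present with probability bounded below) gives $B$ an eigenvalue of modulus $\|w\|_\infty>\sqrt{\vartheta}$. So your loss is genuine in that regime, and \eqref{eq:Delta} must be read under $\|w\|_\infty^2\le\vartheta$.

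\textbf{In Step 2, the bound $\|\mathsf K\|\lesssim\sum_k\binom{n}{q^{(k)}-1}^{-1/2}$ is false.} On $\mathbb R^{\vec H(V)}$, where Lemma~\ref{lem:expansionBl} lives, each row and column of $\mathsf K$ has $\asymp\binom{n}{q-1}$ nonzero entries of size $\asymp\binom{n}{q-1}^{-1}$. Your Schur test therefore gives only $O(1)$, and this is sharp: for uniform $q$ and constant $\mathbf P,w$, $\mathsf K$ is a multiple of a nonnegative matrix with equal row and column sums. The same holds for $\mathsf K^{(1)}$, so submultiplicativity loses the entire gain.

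For $\mathsf KB^{(\ell-1)}$ this can be repaired. The rows of $B^{(\ell-1)}$ vanish off $\vec H$, so only the columns of $\mathsf K$ indexed by present hyperedges matter. The Schur test on that restriction, using the polylogarithmic maximal degree from Lemma~\ref{lem:growth-exploration}, gives $\mathrm{polylog}(n)\sum_k\binom{n}{q^{(k)}-1}^{-1/2}$.

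For $\Delta^{(\ell-1)}\mathsf K^{(1)}$ no such sparsity exists, since $\Delta^{(\ell-1)}$ involves all potential hyperedges. You need, as the paper does, the trace method on the product itself. The $2m$ entries of $\mathsf K^{(1)}$ contribute $\binom{n}{q-1}^{-2m}$. The at most $m$ distinct free ``bridge'' hyperedges, which need not be present, add at most $\binom{n}{q-1}^{m}$ labelings. This leaves $\binom{n}{q-1}^{-m}$ in $\mathbb E\|\cdot\|^{2m}$.

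Similarly, $L$ is not entrywise $O(n^{-2})$. For $y\in e\setminus\{x\}$, $\overline D$ contains the term $z=y$ that $\mathsf K^{(2)}$ excludes, and this term is of order $1/n$. The paper uses only $O(R/n)$ entries together with the same bridge-type trace argument to obtain \eqref{eq:Sk}.
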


\subsection{Proof of \eqref{eq:Delta}}

Let $\vec{e}_1,\dots,\vec {e}_{2m}$ be oriented hyperedges. 
With the convention that $\vec e_{2m+1}=\vec e_{1}$, we have the following trace expansion bound:
\begin{align}
    \|\Delta^{(k)}\|^{2m}&\leq \tr \left(\Delta^{(k)} {\Delta^{(k)}}^\star \right)^m  \notag \\
    &=\sum_{\vec{e}_1,\dots,\vec{e}_{2m}}\prod_{i=1}^m \Delta^{(k)}_{\vec{e}_{2i-1},\vec{e}_{2i}} \Delta^{(k)}_{\vec{e}_{2i+1},\vec{e}_{2i}} =\sum_{\gamma\in W_{k,m}}\prod_{i=1}^{2m}\prod_{s=0}^k w_{\gamma_{i,s}}\underline{\mathcal A}_{\gamma_{i,s}}, \label{eq:DA}
\end{align}
where $W_{k,m}$ is the set of sequence of paths $(\gamma_1,\dots,\gamma_{2m})$ such that 
$ \gamma_i=(\gamma_{i,0},\dots,\gamma_{i,k})$ is a non-backtracking tangle-free walk of length $k$ for all $i=1,\dots,2m$, and for all $1\leq i\leq m$,  
\begin{align}\label{eq:requivalent}
    \gamma_{2i-1,k}=\gamma_{2i,k}, \quad \gamma_{2i,0}=\gamma_{2i+1,0},
\end{align}
with the convention that $\gamma_{2m+1}=\gamma_1$. Taking the expectation yields,
\begin{align}\label{eq:EDelta}
    \mathbb E\|\Delta^{(k)}\|^{2m} \leq  \sum_{\gamma\in W_{k,m}'}\mathbb E\prod_{i=1}^{2m}\prod_{s=0}^k w_{\gamma_{i,s}}\underline{\mathcal A}_{\gamma_{i,s}},
\end{align}
where $W_{k,m}'$ is the subset of $W_{k,m}$ such that each distinct hyperedge is visited at least twice, and the rest of the terms in \eqref{eq:DA} are zero after taking the expectation. At this point we will further assume that all $w_{\gamma_{i,s}}$ are non-negative. This is because the moments of the entries of $\underline{\cA}$ are non-negative and this implies that the sign of the summand appearing in the right hand side of \eqref{eq:EDelta} can be made positive by replacing all appearances of $w_{\gamma_{i,s}}$ with its absolute value.

In order to upper bound the right-hand side of \eqref{eq:EDelta} we will partition $W_{k,m}$ according to an equivalence relation and then compute the summation for each piece of the partition. To define the equivalence relation we first define the \textit{factor graph representation} of $\gamma$. Given a walk on the hypergraph, $\gamma = \vec e_1\vec e_2\cdots$, which has $h$ distinct oriented hyperedge and $v$ many distinct non-interior vertices, we define the simple graph $G(\gamma) = (V(\gamma),H(\gamma),E(\gamma))$ where $V(\gamma) = \{1,3,\ldots,2v-1\}, H(\gamma) = \{2,4,\ldots,2h\}$. Furthermore, letting $f$ be the unique bijection from the hyperedges and non-interior vertices visited by $\gamma$ to $V(\gamma) \cup H(\gamma)$ in the order in which the hyperedges and non-interior vertices are first visited (i.e. $f^{-1}(2k)$ is the $k$'th distinct hyperedge appearing in the walk and $f^{-1}(2k-1)$ is the $k$'th distinct non-interior vertex appearing in the walk) $2i-1$ and $2j$ are adjacent in $G(\gamma)$ if $x = f^{-1}(2i-1), e = f^{-1}(2j)$ and 
\begin{itemize}
\item $(x,e)$ is an oriented hyperedge 
\item there exists $y$ and $f$ such that $(y,e),(x,f)$ is a step in the walk.
\end{itemize}
We say that $[\gamma] := f(\gamma)$ is the canonical representation of $\gamma$.
We let $\mathcal{W}_{k,m}(v,e,h)$ denote the set of factor graph representations of walks from $W_{k,m}$ that have $v$ vertices, $h$ hyperedges, and $e$ edges. 

\begin{lemma}\label{Lem:encoding-1}

\begin{equation}\label{lem:encoding-1}
    |W_{k,m}(v,e,h)| \le (4k^3m^2)^{2m(e-(v+h)+3)}.
\end{equation}

\end{lemma}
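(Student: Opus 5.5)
We will prove the bound by an explicit encoding argument in the style of \cite{bordenave2018nonbacktracking,stephan2024sparse}. Fix a canonical representative $\gamma=(\gamma_1,\dots,\gamma_{2m})$ of factor graph type with $v$ vertices, $h$ hyperedges and $e$ edges. We will show that $\gamma$ is determined, up to the canonical relabeling, by a short list of data, and then count the possible lists. The factor graph $G(\gamma)$ is connected, since consecutive walks share endpoints by \eqref{eq:requivalent}. It has $v+h$ nodes and $e$ edges, so its cycle excess is $g:=e-(v+h)+1\ge 0$. The target exponent is $2m(g+2)$.

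\textbf{Step 1 (marking steps).} We traverse $\gamma_1,\gamma_2,\dots,\gamma_{2m}$ in order. Each walk step $(x_s,e_s)\to(x_{s+1},e_{s+1})$ corresponds to moving along at most two edges of the factor graph, namely $x_s$ to $e_s$ to $x_{s+1}$, with a final edge to $e_{s+1}$. We classify each step as follows.
\begin{itemize}
\item A step is a \emph{tree step} if it discovers a new node; its target is then forced to be the next unused label.
\item A step is a \emph{first-time cycling step} if it uses a new edge between already-seen nodes.
\item A step is an \emph{important step} if it traverses only old edges but is not forced by the non-backtracking and tangle-free structure.
\end{itemize}
The number of first-time cycling steps (edges not in a spanning tree) is exactly $g$. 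Each such step is encoded by its endpoint label ($\le v+h\le 2k m$ choices, since each walk has length $k$) and its position. We use the crude bound $\le (4k^3m^2)$ per mark.

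\textbf{Step 2 (tangle-freeness controls the important steps).} This is the key step. Within a single walk $\gamma_i$, tangle-freeness means the spanned hypergraph has at most one cycle. A non-backtracking walk on a subgraph with at most one cycle is then determined, between consecutive marks, by following the unique non-backtracking continuation along the tree. The only freedom is the choice of when to leave the cycle and the number of windings around it. So per walk we add at most $O(1)$ extra marks, each recorded with a position, a target node, and the length of the cycle traversal. This gives at most $(4k^3m^2)$ choices each, and contributes the additive $+2$ (per walk) in the exponent $2m(g+2)$.

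\textbf{Step 3 (assembly).} We also need the starting point of each $\gamma_i$. By \eqref{eq:requivalent}, consecutive walks share an endpoint, so this costs only a bounded factor, which we absorb into a mark. Multiplying over the $2m$ walks, each carrying at most $g+2$ marks of cost $\le 4k^3m^2$, gives
\[
|\mathcal W_{k,m}(v,e,h)|\le (4k^3m^2)^{2m(g+2)}=(4k^3m^2)^{2m(e-(v+h)+3)}.
\]

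The main obstacle is Step 2 in the hypergraph setting. A vertex-hyperedge step may choose among the $q_e-1$ other vertices of a hyperedge, which is a choice not visible in a plain graph walk. I would handle this by working entirely on the bipartite factor graph $G(\gamma)$. There, a hypergraph non-backtracking step becomes a factor-graph path that is non-backtracking at hyperedge nodes. I would check that tangle-freeness of $\gamma_i$ implies that its factor-graph image has at most one cycle. The argument for graphs then applies verbatim, with the number of walk steps replaced by at most $2k$ factor-graph steps. This substitution is what produces the $k^3$, rather than $k^2$, in the base.
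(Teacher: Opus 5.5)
Your overall scheme matches the paper's: encode each $[\gamma_i]$ on the factor graph against a spanning tree, charge at most $4k^3m^2$ per mark, allow $g+2$ marks per walk with $g=e-(v+h)+1$, and multiply over the $2m$ walks. The exponent you write down is the right one. However, it does not follow from your Steps~1--2, because Step~2 is false as stated.

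The $g$ first-time cycling steps of Step~1 are a \emph{global} count over all $2m$ walks. Step~2 then asserts that tangle-freeness leaves each walk only $O(1)$ further free choices (where to leave the cycle, how many windings). This overlooks a walk crossing non-tree edges that were created by \emph{earlier} walks. At the branch point the walk could equally have stayed on the tree, so each such crossing must be recorded, and a walk whose span is a tree can make up to $g$ of them.

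For example, let the factor graph be a chain of $g$ cycles, each made of two parallel branches of equal length, and let $\gamma_1,\gamma_2$ between them discover both branches of every cycle. Then every later walk between the common endpoints is a simple path (hence tangle-free) that chooses freely among $2^g$ routes. This gives at least of order $2^{g(2m-2)}$ representatives. A total of $g+O(m)$ marks, which is what Steps~1--2 literally give, would yield only $(4k^3m^2)^{g+O(m)}$, which is smaller once $g$ and $m$ are large compared with $\log(km)$. Your per-walk budget of $g+2$ comes out correct only because you charge every walk for all $g$ first-time cycling steps. That overcharge happens to cover the steps you missed.

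The missing ingredient, which is exactly the paper's argument, is to count per walk \emph{all} crossings of non-tree edges, old or new.
\begin{itemize}
\item The factor-graph image of a hypergraph non-backtracking walk is non-backtracking at vertex nodes and at hyperedge nodes. So if $\gamma_i$ spans a tree, it is a simple path and crosses each of the $g$ non-tree edges at most once.
\item Between two consecutive crossings, the walk follows the unique tree path to the node where it starts discovering new nodes. This cannot be obtained from ``the unique non-backtracking continuation'', which is not unique at branch points.
\item Each section is therefore encoded by the triple $(v_{i,j},\,u_{i,j}-u_{i,j-1},\,w_{i,j})$ at cost at most $vhk\le 4k^3m^2$. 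This gives at most $g$ sections per acyclic walk.
\item In the unicyclic case, the paper splits $\gamma_i$ at the edge closing its cycle and pays one extra edge label and one time parameter. Summing the two cases gives $(vhk)^{e-(v+h)+3}=(vhk)^{g+2}$ per walk.
\end{itemize}
Note also that the $k^3$ in the base comes from two node labels bounded by $2km$ and one length parameter bounded by $k$ per section. It does not come from doubling walk lengths in the factor graph.
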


\begin{proof}
Recall that $\gamma = (\gamma_1,\ldots,\gamma_{2m})$ where each $\gamma_i$ is a hypergraph non-backtracking walk of length $k$. Since $\gamma_i$ is a non-backtracking walk, the span of $f(\gamma_i)$ has at most 1 cycle. We begin by giving an initial encoding/decomposition of each $[\gamma_i]$. Note that the first part of $[\gamma_1]$ is the path $v_{1,1},v_{1,1}+1,\ldots,u_{1,1}$, where $v_{1,1} := 1$ and $u_{1,1} \ge 1$. We mark all the edges on this path as \textit{tree edges}. Then, because $[\gamma_1]$ is non-backtracking, the next step takes a non-tree edge from $u_{1,1}$ to a previously visited vertex $w_{1,1}$. This constitutes the first section of the walk. The second section of the walk starts at $w_{1,1}$ and follows the unique path from $w_{1,1}$ to $v_{1,2}$ consisting solely of previously marked tree edges. The walk then follows the unique increasing path $v_{1,2},u_{1,1}+1,\ldots,u_{1,2}$ and all edges traversed along the path are marked as tree edges. The second section concludes with a non-tree edge from $u_{1,2}$ to $w_{1,2}$. Note that in the second section, we can have $v_{1,2} = w_{1,1}$ (the section uses no tree edges) or $u_{1,2} = v_{1,2}$ (the section visits no new vertices). The $i$th section of the non-backtracking walk is defined analogously to that of the second section, with the vertices $v_{1,i},u_{1,i},w_{1,i}$ defined accordingly. This allows us to encode the walk $[\gamma_1]$ as 
\[
[\gamma_1] \cong (v_{1,1},u_{1,1},w_{1,1}),(v_{1,2},u_{1,2}-u_{1,1},w_{1,2}),\ldots, (v_{1,t_1},u_{1,t_1}-u_{1,t_1-1},w_{1,t_1}),
\]
for some $t_1 \ge 1$.
Indeed, first note that the $u's$ can be recovered from $u_{1,1}$ and the difference of $u$'s. Therefore the representation gives us each step from $u_{1,i}$ to $w_{1,i}$ for every $i$. In addition, because the paths from $v_{1,i}$ to $u_{1,i}$ are always of the form $v_{1,i},u_{1,i-1}+1,\ldots,u_{1,i}$ they can be recovered from the $u$'s and $v$'s. Furthermore tree edges are exactly the edges on the paths from $v_{1,i}$'s to $u_{1,i}$, for each $i$, meaning we can recover all of the tree edges and recover the tree paths from $w_{1,i}$ to $v_{1,i+1}$ for every $i$. The remaining $\gamma_i$ can be encoded exactly in the same way as $\gamma_1$, although we crucially retain the marked tree edges from the previous $\gamma_i$'s.  
\par
For a section of a walk, we note that there are at most $vhk$ choices. This is because there are at most $k$ choices for the $u$ and $vh$ choices for the $v$ and $w$ (every edge is incident to a vertex in $V(\gamma)$ and a vertex in $H(\gamma)$).  
\par 
We now bound the number of possible encodings for representing $[\gamma_i]$. Note that there are $v+h-1$ tree edges (since every vertex in $V(\gamma) \cup H(\gamma)$ is visited at least once) and therefore $e-(v+h)+1$ non-tree edges. If $\gamma_i$ has no cycles, then it traverses at most $e-(v+h)+1$ non-tree edges. Otherwise, it traverses some non-tree edge at least twice, implying $\gamma_i$ has a cycle. Therefore the encoding of $\gamma_i$ has at most $e-(v+h)+1$ sections. Therefore there are at most $(vhk)^{e-(v+h)+1}$ encodings for $[\gamma_i]$.
Suppose then that $\gamma_i$ has 1 cycle. Then the edge set of $[\gamma_i]$ is spanned by $P_1eP_2$, where $P_1$ is the first portion of the walk that is strictly a path, $e$ is the unique edge such that $P_1e$ has a cycle and $P_2$ is the last portion of the walk that is strictly a path. Since $P_1$ and $P_2$ are edge-disjoint, the encodings of $P_1$ and $P_2$ must together have at most $e-(v+h)+1$ sections. Now, given the encoding of $P_1$ and $P_2$, $[\gamma_i]$ can be recovered given $e$ and $\tau$, where $\tau+1$ is the number steps between the end of $P_1$ and the beginning of $P_2$ in the walk. Since there are $k$ ways to choose $\tau$ and $vh$ ways to specify the edge, the number of encodings of $\gamma_i$ is at most $(vhk)^{e-(v+h)+2}$.
Since $vhk \ge 2$ the combined number of encodings is at most $(vhk)^{e-(v+h)+3}$. Since $v,h \le 2km$ the conclusion follows. 
\end{proof}
We now bound \eqref{eq:EDelta}.
To do this, we will need to record some technical facts. 
\begin{enumerate}
\item 
Recall that $n_i$ is the number of vertices in community $i$ with $\pi_i = n_i/n$. Because all the model parameters are at most $n^{o(1)}$ we have that

\[
\binom{n}{q^{(j)} - 1} = \frac{n^{q^{(j)}-1}}{(q^{(j)} - 1)!}\left(1 + o\left(n^{-1+o(1)}\right)\right),~~\binom{n_i}{q^{(j)} - 1} = \pi_i^{q^{(j)}-1}\binom{n}{q_i}\left(1 + o\left(n^{-1+o(1)}\right)\right),
\]
for all $i \in [r], j \in [K]$. 
\item 
A Bernoulli random variable $X$ with mean $p$ satisfies 
\[
\dE(X-p)^k \le p(1-p) \le p, 
\]
for all $k \ge 2$. 
\end{enumerate}
\par
We note that explicitly giving a procedure to specify $\gamma$ 
for any particular $\gamma$ in such a way so as to make bounding the right hand side of \eqref{eq:DA} reasonable is rather tedious. Here we will carefully write down the order of choices by which we specify $\underline{\mathcal{A}}_ {\gamma_{i,s}}$.
\begin{enumerate}
    \item Pick the number of vertices $v$ in your factor graph representation. This number satisfies $1 \le v \le 1 + 2mk \le \mathrm{polylog}(n)$.
    \item Pick the number of hyperedges $h$ in your factor graph representation. This number satisfies $1 \le h \le 1 + 2mk \le \mathrm{polylog}(n)$.
    \item Pick the number of edges $e$ in your factor graph representation. This number satisfies $e \ge v+h$ (the walk starts and ends at the same hyperedge so the factor graph representation must be connected and contain a cycle).
    \item Pick a canonical representative $[\gamma] \in \mathcal{W}_{k,m}(v,e,h)$. Implicitly this determines a set of non-negative integers $s_1,\ldots s_h$ where $s_i$ is the number of times $\vec{e_i}$ is visited in the walk. The numbers satisfy $s_1 + \cdots + s_h = 2m(k+1) \le \mathrm{polylog}(n)$. 
    \item Pick non-negative integers $t_1,\ldots t_h$, where $t_i$ counts the number of interior vertices in $\vec{e_i}$ that will also be in some $\vec{e_j}$ with $j < i$. Clearly $t_1 = 0$. The choices also satisfy $t_1 + \cdots + t_h \ge e - h - (v-1)$. This is because each hyperedge has one non-interior vertex and each vertex in the factor graph is first seen as an interior vertex in some hyperedge, save the first vertex (it first appears as a non-interior vertex). Discarding the corresponding edges from the factor graph, the remaining edges count the number of times a vertex appears as a non-interior vertex for a hyperedge but not for the first time.
    \item Pick the uniformity of $\vec{e_1},\ldots, \vec{e_h}$, which we denote as $q_1,\ldots , q_h.$ This should not be confused with $q^{(1)},\cdots,q^{(K)}$, which are the uniformities of the HSBMs in our model. The choice of $q_i$ automatically determines a choice of weights $w_1,\cdots,w_h$ (which should not be confused with $w^{(1)},\cdots,w^{(K)})$ and average degrees $d_1,\cdots,d_k$ (which should not be confused $d^{(1)},\cdots,d^{(k)}$ nor with the definition in \eqref{eq:def_d_weighted}).
    \item For each $\vec{e_i}$ assign the number of vertices of each community type that will appear in it. The assignments must also be consist with the choice of $[\gamma]$ and $t_i$'s. We will denote the assignment for $\vec{e_i}$ as $\sigma_i$.
    \item For each $\vec{e_i}$ pick the vertices that define it, making sure that the choice of vertices agrees with the community assignments specified by $\sigma_i$.
\end{enumerate}
This yields the following bound on \eqref{eq:DA}

\begin{equation}\label{eq:Delta-bound-1}
\sum_{\gamma \in W'_{k,m}} \dE \prod_{i = 1}^{2m}\prod_{s=0}^k w_{\gamma_{i,s}}\underline{\mathcal{A}}_{\gamma_{i,s}} \le \sum_v\sum_h\sum_e\sum_{[\gamma]}\sum_t\sum_{t_i}\sum_{q_i}\underbrace{\sum_{\sigma}\prod_{i = 1}^h\left(\frac{p_{\sigma_i}}{\binom{n}{q_i-1}}w_i^{s_i}\right)}_{=: y},
\end{equation}
where explicit ranges on the indexing variables have been omitted and the range of such a variable possibly depending on the value of variables introduced earlier in the iterated summation. 
We now analyze $y$ (with all indices appearing prior to it fixed). By picking the vertices appearing in each $\vec{e_i}$ in order we can write 

\begin{equation}\label{eq:Delta-bound-2}
y = \underbrace{\sum_{\sigma_1} \frac{p_{\sigma_1}}{\binom{n}{q_1-1}}}_{=: y_1}w_{1}^{s_1}\prod_{i = 2}^h \underbrace{\sum_{\sigma_i} \frac{p_{\sigma_i}}{\binom{n}{q_i-1}}w_{i}^{s_i}}_{=: y_i}
\end{equation}

Expanding $y_i$ we have 

\begin{equation}\label{eq:Delta-bound-3}
y_i \lesssim (q_{\max}h)^{t_i}\sum_{\ell_1+\cdots+\ell_r = q_{i-1-t_i}} \left(\prod_{j = 1}^r \binom{n_j}{\ell_j}\right)\frac{p_{\sigma_i}}{\binom{n}{q_i-1}}w_i^{s_i},
\end{equation}

where we've used the fact that there at most $q_{\max}h$ chosen vertices from previous $\vec{e_j}$ and that, after choosing said vertices, the proportion of the remaining vertices in the $i$th community pool is $\pi_i\left(1+o\left(n^{-1+o(1)}\right)\right)$ for every $i$. We do some further estimating:
\begin{equation}\label{eq:Delta-bound-4}
\begin{split}
    \prod_{j = 1}^r \binom{n_j}{\ell_j}\frac{p_{\sigma_i}}{\binom{n}{q_{i}-1}}w_{i}^{s_i}x 
    &\lesssim \frac{n^{q_{i}-1-t_i}}{\ell_1!\cdots\ell_r!} \cdot \frac{(q_{i}-1)!}{n^{q_{i}-1}} \cdot w_{i}^{s_i} \cdot p_{\sigma_i}\cdot \prod_{j=1}^r\pi_{j}^{\ell_j} \\
    &= \frac{(q_{i}-1-t_i)!}{\ell_1!\cdots\ell_r!} \cdot \frac{(q_{i}-1)!}{(q_{i}-1-t_i)!} \cdot \frac{n^{q_{i}-1-t_i}}{n^{q_{i}-1}} \cdot w_{i}^{s_i} \cdot p_{\sigma_i}\cdot \prod_{j=1}^r\pi_{j}^{\ell_j} \\
    &\lesssim \binom{q_{i}-1-t_i}{\ell_1,\cdots,\ell_r} \cdot \left(\frac{q_{\max}}{n}\right)^{t_i} \cdot w_{i}^{s_i} \cdot p_{\sigma_i}\cdot \prod_{j=1}^r\pi_{j}^{\ell_j}
\end{split}
\end{equation}
We now case on $t_i$. If $t_i = 0$ then we can apply the average degree assumption to our estimate to get

\begin{equation}\label{eq:Delta-bound-5}
y_i \lesssim w_{i}^{s_i}\sum_{\underline{j} \in[r]^{q_{i}-1}} p_{{i},j} \prod_{j \in 
\underline{j}}\pi_j = (q_i-1)d_iw_{i}^{s_i},
\end{equation}

where in the $i = 1$ case we get an additional factor of $n$.
If $t_i \ge 1$ we can bound $p_{\sigma_i}$ by $p_{\max} := \max_k \|P^{(k)}\|_\infty$ to get

\begin{equation}\label{eq:Delta-bound-6}
y_i \lesssim w_{i}^{s_i}p_{\max}\left(\frac{q_{\max}^2h} {n}\right)^{t_i} \lesssim (q_i-1)d_iw_{i}^{s_i} \left(\frac{q^2_{\max}h p_{\max}}{nd_i(q_i-1)}\right)^{t_i},
\end{equation}
where from these estimates we conclude that 

\begin{equation}\label{eq:Delta-bound-7}
y \lesssim n\cdot \left(\frac{q_{\max}^2h p_{\max}}{n d_{\max}}\right)^t\prod_{i=1}^h (q_{i}-1)d_{i}
w_{i}^{s_i},
\end{equation}

where $d_{\max} := \max_k d^{(k)}$.
If we now sum over all choices of uniformities $q_{i}$ we get 

\begin{equation}\label{eq:Delta-bound-8}
\sum_{q_{i}}y \lesssim n \cdot \left(\frac{q_{\max}^2h p_{\max}}{n d_{\max}(q_{\max}-1)}\right)^t \cdot \prod_{i = 1}^h\left(\sum_{k=1}^K(q^{(j)} - 1)d^{(j)}(w^{(j)})^{s_i}\right).
\end{equation}

Next we note that, for any valid choice of $s_i$, we have

\begin{align}\label{eq:Delta-bound-9}
\prod_{i=1}^h\left(\sum_{j = 1}^K (q^{(j)}-1)d^{(j)}(w^{(j)})^{s_i}\right) &\le \left(\sum_{j=1}^K(q^{(j)}-1)d^{(j)}\left(w^{(j)}\right)^2\right)^{\frac{1}{2}(s_1+\cdots+s_h)}\\
&\le \left(\sum_{j=1}^K(q^{(j)}-1)d^{(j)}\left(w^{(j)}\right)^2\right)^{km}.\notag
\end{align}

The first inequality follows from the assumptions that $\sum_j (q^{(j)}-1)d^{(j)} > 1$, that $s_i \ge 2$ for all $i$ and the fact that the weighted $\ell_p$ norm, with weights summing to at least 1, is monotone decreasing in $p$ for $p \ge 1$. Here the weights are the $(q^{(k)}-1)d^{(k)}$ factors and the vector is $\left(w^{(1)},\cdots,w^{(K)}\right)$. We note that the right hand side does not depend on the choice of $[\gamma],t$ or the $t_i$'s. Also note that 

\begin{equation}\label{eq:Delta-bound-10}
\sum_t \sum_{t_i} \left(\frac{q_{\max}^2h p_{\max}}{n d_{\max}(q_{\max}-1)}\right)^t \lesssim \left(\frac{q_{\max}^3h^2 p_{\max}}{n d_{\max}(q_{\max}-1)}\right)^{e-(v+h)},
\end{equation}
where we use the fact that there are at most $\binom{t+h-1}{t-1} \le (q_{\max} h)^t$ ways to pick the $t_i$. From these observations we conclude the following intermediate estimate of \eqref{eq:DA}

\begin{align}\label{eq:Delta-bound-11}
&\sum_{\gamma \in W'_{k,m}}\dE\prod_{i=1}^{2m}\prod_{s = 0}^k w_{\gamma_{i,s}}\underline{\mathcal{A}}_{\gamma_{i,s}} \\
&\lesssim n \cdot \left(\sum_{j=1}^K(q^{(j)}-1)d^{(j)}\left(w^{(j)}\right)^2\right)^{km} \cdot \sum_v\sum_h\sum_e \underbrace{|\mathcal{W}_{k,m}(v,h,e)|\left(\frac{q_{\max}^3h^2 p_{\max}}{n d_{\max}(q_{\max}-1)}\right)^{e-(v+h)}}_{=:z}. \notag
\end{align}

This leaves us with bounding the remaining summation. First note that, from Lemma \ref{Lem:encoding-1} we have 

\begin{align}\label{eq:Delta-bound-12}
z &\le (4k^3m^2)^{2m(e-(v+h)) + 6m}\left(\frac{q_{\max}^2h p_{\max}}{n d_{\max}(q_{\max}-1)}\right)^{e-(v+h)} \\
&= \left(\frac{(4k^3m^2)^{2m}q_{\max}^2h p_{\max}}{n d_{\max}(q_{\max}-1)}\right)^{e-(v+h)}(4k^3m^2)^{6m}. \notag
\end{align}

Since $h,k = {\rm polylog}(n),q_{\max} = n^{o(1)}$ we may pick \[m = c\log(n)/\log \log(n),\] with $c$ a sufficiently small absolute positive constant, such that $\left(\frac{(4k^3m^2)^{2m}q_{\max}^2h p_{\max}}{n d_{\max}(q_{\max}-1)}\right) \le 1/2$. Then 

\begin{equation}\label{eq:Delta-bound-13}
\sum_e z \le 2(4k^3m^2)^{6m}.
\end{equation}

Since this estimate is independent of $v$ and $h$ we conclude the following estimate on \eqref{eq:DA}:

\begin{equation}\label{eq:Delta-bound-14}
\begin{split}
\sum_{\gamma \in W'_{k,m}} \dE \prod_{i = 1}^{2m}\prod_{s = 0}^k w_{\gamma_{i,s}}\underline{\mathcal{A}}_{\gamma_{i,s}} 
&\le  n \cdot 4m^2k^2 \cdot 2(4k^3m^2)^{6m} \cdot \left(\sum_{j=1}^K(q^{(j)}-1)d^{(j)}\left(w^{(j)}\right)^2\right)^{km} \\
&\le n(\mathrm{polylog}(n))^m \left(\sum_{j=1}^K(q^{(j)}-1)d^{(j)}\left(w^{(j)}\right)^2\right)^{km},
\end{split}
\end{equation}

where we used the fact that $k,m = \mathrm{polylog}(n)$. 
Since $\sum_{\gamma \in W'_{k,m}} \dE \prod_{i = 1}^{2m}\prod_{s = 0}^k \underline{\mathcal{A}}_{\gamma_{i,s}}$ is an upper bound on $\dE \|\Delta^{(k)}\|^{2m}$ 
we conclude that 

\begin{equation}\label{eq:Delta-bound-15}
\dE \|\Delta^{(k)}\|^{2m} 
\le n ({\rm polylog}(n))^m \cdot \left(\sum_{j=1}^K(q^{(j)}-1)d^{(j)}\left(w^{(j)}\right)^2\right)^{km}.
\end{equation}
The resulting upper tail estimate on $\|\Delta^{(k)}\|$ follows from Markov's inequality and the estimate for $\dE\|\Delta^{(k)}\|^{2m}$.
\subsection{Proof of \eqref{eq:Delta_chi}}
From the definition of $\chi$ in \eqref{eq:defchi} and 
the fact that $\phi_i$ is a normalized eigenvector, we have $\|J_{\chi_i}\|_\infty \le (q_{\max} - 1)$ for any $i \in [r]$. We write 

\begin{equation}\label{eq:Deltachi-equivalent}
\|\Delta^{(k)}J_{\chi}\|^2 = \sum_{\vec{e},\vec{f},\vec{g}}\Delta^{(k)}_{\vec{e},\vec{f}}\Delta^{(k)}_{\vec{e},\vec{g}}(J_\chi)(\vec{f})(J_\chi)(\vec{g}). 
\end{equation}

Therefore 

\begin{equation}\label{eq:Deltachi-bound-1}
\dE\|\Delta^{(k)}J_{\chi}\|^{2m} \le (q_{\max}-1)^{2m}\sum_{\gamma \in W''_{k,m}} \dE \prod_{i = 1}^{2m}\prod_{s = 0}^k w_{\gamma_{i,s}}\underline{\mathcal{A}}_{\gamma_i,s},
\end{equation}

where $W''_{k,m}$ is the set of paths $\gamma = (\gamma_1,\ldots,\gamma_{2m})$ 
such that $\gamma_i = (\gamma_{i,0},\ldots,\gamma_{i,k})$ is a non-backtracking tangle-free walk and $\gamma_{2i-1,0} = \gamma_{2i,0}$ and every distinct hyperedge is visited at least twice. 
Note that the bound that we have on the number of canonical representatives of walks from $W'_{k,m}$ also applies to $W''_{k,m}$. Therefore we can run the same argument used for bounding $\sum_{\gamma \in W'_{k,m}} \prod_{i = 1}^{2m}\prod_{i =0}^k \underline{\mathcal{A}}_{\gamma_i,s}$ to bound $\sum_{\gamma \in W''_{k,m}} \prod_{i = 1}^{2m}\prod_{i = 0}^k \underline{\mathcal{A}}_{\gamma_i,s}$. 
We now record the two differences between $W''_{k,m}$ and $W_{k,m}$ that appear in the analysis and how they are accounted for.

\begin{itemize}
    \item The graph spanned by $\gamma \in W''_{k,m}$ potentially has up to $m$ distinct components and no component needs to have a cycle. In particular we can only bound $e$ from below by $e \ge (v+h)-\ell$, where $\ell$ is the number of components in the factor graph, and this is best possible (in contrast to representatives from $\mathcal{W}_{k,m}$ where $e \ge v+h$). On the other hand, if the factor graph has $\ell$ components with $v_i,e_i,h_i$ being the number of vertices, edges, and hyper edges in the $i$th component, then $t \ge \sum_i e_i - (v_i-1) - h_i = e - (v+h) + \ell$. Therefore the exponent of $e-(v+h)$ becomes $e-(v+h)+\ell$ in \eqref{eq:Delta-bound-10}, \eqref{eq:Delta-bound-11} and in \eqref{eq:Delta-bound-12}. Therefore, even with the lower bound of $e \ge (v+h)-\ell$, \eqref{eq:Delta-bound-13} still holds.
    \item When enumerating all walks in $W''_{k,m}$ via the factor factor graph representative partitioning and picking the vertices of each hyperedge, there are potentially $m$ hyperedges where all of its vertices need to be picked. This is in contrast to walks in $W'_{k,m}$ where there is exactly one such vertex. Consequently there are $m$ instances between \eqref{eq:Delta-bound-5} and \eqref{eq:Delta-bound-6} where an additional factor of $n$ should appear. The result is that the last inequalitty in \eqref{eq:Delta-bound-15} should have an additional factor of $n^{m-1}$.  
\end{itemize} 

Altogether this implies that 

\begin{equation}\label{eq:Deltachi-bound-2}
\dE\|\Delta^{(k)}J_{\chi}\|^{2m} \le n^{m}\cdot {\rm polylog}(n)\cdot \left(\sum_{j = 1}^K \left(w^{(j)}\right)^2(q^{(j)}-1)d^{(j)}\right)^{km}.
\end{equation}
We conclude the desired tail bound on $\|\Delta^{(k)}J_{\chi}\|$ from Markov's inequality and the bound on $\dE\|\Delta^{(k)}J_{\chi}\|^{2m}$.

\subsection{Proof of \eqref{eq:Rk}}
We write $R_{k}^{(\ell)} = \sum_{j = 1}^K R_{k,j}^{(\ell)}$, where $(R_{k,j}^{(\ell)})_{(x \to e),(y \to f)}$ is the sum of the summands appearing in $(R_{k}^{(\ell)})_{(x \to e),(y \to f)}$ for which the corresponding path $\gamma \in F^{\ell}_{k,(x \to e),(y \to f)}$ has $\gamma_{k}$ as a hyperedge of size $q^{(j)}$. Note that this determines the weight and (unnormalized) edge probability of $\gamma_k$.
For $0\leq k \leq \ell$ we have
\begin{equation}\label{eq:R_k-equivalent}
\begin{split}
\|R_{k,j}^{(\ell)}\|^{2m} &\le \tr\left(R_{k,j}^{(\ell)} R_{k,j}^{(\ell)^\star }\right)^m \\&= \sum_{\gamma \in T'_{\ell,m,k}}\left(\prod_{i=1}^{2m}\prod_{s = 0}^{k-1}w_{\gamma_{i,s}}\underline{\mathcal{A}}_{\gamma_{i,s}}\right)\frac{w_{\gamma_{i,k}}p_{\gamma_{i,k}}}{\binom{n}{q^{(j)}-1}}\left(\prod_{s = k+1}^{\ell} w_{\gamma_{i,s}}\mathcal{A}_{\gamma_{i,s}}\right) \\
&\le \left(\frac{p_{\max}}{\binom{n}{q^{(j)}-1}}\right)^{2m} \sum_{\gamma \in T'_{\ell,m,k}}\prod_{i=1}^{2m}\prod_{s = 0}^{k-1}\left(w_{\gamma_{i,s}}\underline{\mathcal{A}}_{\gamma_{i,s}}\right)w_{\gamma_{i,k}}\left(\prod_{s = k+1}^{\ell} w_{\gamma_{i,s}}\mathcal{A}_{\gamma_{i,s}}\right)
\end{split}
\end{equation}
where $T'_{\ell,k,m}$ is the set of sequences such that $\gamma^1_i = (\gamma_{i,1},\ldots,\gamma_{i,k-1}),\gamma^1_i = (\gamma_{i,k+1},\ldots,\gamma_{i,\ell})$ are non-backtracking tangle-free walks and $\gamma_i = (\gamma_i^1,\gamma_{i,k},\gamma_i^2)$ is non-backtracking tangled and every distinct hyperedge visited in $\gamma$ appears at least twice if it appears at least once in some $\gamma_i^1$. 
In addition they satisfy the following boundary condition: For $1 \leq i \leq m$ we have $\gamma_{2i,0} = \gamma_{2i+1,0},\gamma_{2i,\ell} = \gamma_{2i-1,\ell}$, with the convention that $\gamma_{2m+1} = \gamma_1$. 
We let $\mathcal{T}_{\ell,k,m}(v,h,e)$ denote the number of factor graph representations for walks in $\mathcal{T}_{\ell,k,m}$ with $v$ vertices, $h$ hyperedges, and $e$ vertices.

\begin{lemma}\label{lem:encoding-2}
\begin{equation}\label{eq:encoding-2}
    |\mathcal{T}_{\ell,k,m}(v,e,h)| \le (8\ell^3m^2)^{2m(e-(v+h))+4}.
\end{equation}
\end{lemma}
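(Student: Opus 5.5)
The plan is to run the same section encoding as in the proof of Lemma~\ref{Lem:encoding-1}. The difference is that the bookkeeping is organised so that the per-walk overhead is charged to the \emph{excess} $g:=e-(v+h)$ and not paid separately by each walk. The two structural facts I would exploit are the following. First, in $T'_{\ell,m,k}$ the splitting index $k$ is \emph{fixed in advance} and not chosen by the walk. Second, every $\gamma_i$ is tangled while its two halves $\gamma_i^1=(\gamma_{i,0},\dots,\gamma_{i,k-1})$ and $\gamma_i^2=(\gamma_{i,k+1},\dots,\gamma_{i,\ell})$ are tangle-free. The second fact forces the factor graph $G(\gamma)$ to contain at least two independent cycles, so $e-(v+h)+1\ge 2$, i.e.\ $g\ge 1$. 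This lower bound is what lets a bounded per-walk cost be absorbed into the factor $(8\ell^3m^2)^{2mg}$.

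\textbf{Step 1 (sections).} I would encode each $[\gamma_i]$ as in Lemma~\ref{Lem:encoding-1}, retaining the tree edges already marked by $\gamma_1,\dots,\gamma_{i-1}$. Each section is a triple $(v_{i,j},u_{i,j}-u_{i,j-1},w_{i,j})$, and there are at most $vh\ell\le 8\ell^3m^2$ choices per section, since $v,h\le 2\ell m$. There are at most $g+1$ non-tree edges in total. Because each half of $\gamma_i$ is tangle-free, a half traverses each non-tree edge at most once, except for the traversals around its unique cycle. As in the one-cycle case of Lemma~\ref{Lem:encoding-1}, that cycle is recovered from one extra edge together with a loop length.

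\textbf{Step 2 (removing the per-walk overhead).} This is where I expect the main difficulty. I would use two sources of savings.

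\begin{enumerate}[(a)]
\item The time $k$ of the tangling step is known, so locating the middle step $\gamma_{i,k}$ costs nothing. The pair $(\gamma_{i,k-1},\gamma_{i,k})$ is forced to be a legal non-backtracking step, so specifying it costs at most $vh$ choices.
\item The boundary conditions $\gamma_{2i,0}=\gamma_{2i+1,0}$ and $\gamma_{2i,\ell}=\gamma_{2i-1,\ell}$ fix the start of each walk. They also pin its end to an already-known oriented hyperedge. As a result, the last section of each walk, together with the cycle-closing data of its second half, is determined by the terminal data once the preceding sections are known.
\end{enumerate}

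With these savings, the number of free sections in $\gamma_i$ should be at most $g$, and at most $g+1$ for $\gamma_1$, which opens the first tree. Then walk $i$ contributes at most $(8\ell^3m^2)^{g}$ choices to the product over the $2m$ walks. The remaining global data are: the very first vertex/hyperedge pair, the first tree-opening section, and the extra information needed for the first walk, where no earlier marking exists. These should cost at most $(8\ell^3m^2)^{4}$.

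\textbf{Step 3 (assembly).} Multiplying over $i=1,\dots,2m$ gives
\[
|\mathcal T_{\ell,k,m}(v,e,h)|\le (8\ell^3m^2)^{2mg}\cdot(8\ell^3m^2)^{4}=(8\ell^3m^2)^{2m(e-(v+h))+4}.
\]
The step I would check most carefully is the claim in Step 2 that the cycle data of each tangle-free half is already paid for. If this fails, I would fall back on $g\ge1$: I would show that the two-cycle structure forced by tangledness means any walk whose halves each close a cycle must use two distinct non-tree edges, and charge one of the $g$ sections of that walk to its cycle.
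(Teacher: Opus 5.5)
Your Step~2 carries the entire bound, and it is asserted (``should be'') rather than derived. The mechanisms you invoke do not give it.

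First, the boundary conditions $\gamma_{2i,0}=\gamma_{2i+1,0}$ and $\gamma_{2i,\ell}=\gamma_{2i-1,\ell}$ do not fix both ends of each walk. If you encode in the order $\gamma_1,\gamma_2,\dots$, then $\gamma_{2i}$ has a known end but a free start, $\gamma_{2i+1}$ has a known start but a free end, and only $\gamma_{2m}$ is pinned at both ends. Reversing the even walks turns the family into a closed circuit, but then every end is free. So there is no terminal data that determines the last section of each walk.

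Second, even with both ends pinned, a walk cannot be encoded by $g$ sections priced at $vh\ell$. The cyclomatic number of a subgraph is at most the number of non-tree edges it contains, so a tangled $\gamma_i$ must cross at least two distinct non-tree edges. In the section encoding each such crossing opens a section; for a half that winds, it instead requires a cycle datum (closing edge plus winding length). Neither the fixed index $k$ nor a fixed endpoint tells you which non-tree edges are crossed, in what order, or how long each half winds. Hence for $g=1$ every walk needs at least two priced units against a budget of one.

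Even a single section, together with your own price $vh$ for the middle step, gives $vh\ell\cdot vh$. This exceeds $8\ell^3m^2$ as soon as $v,h$ are of order $\ell m$. Your fallback does not help: when $g=1$ there is one section to charge and at least two crossings. The global overhead $(8\ell^3m^2)^4$ is likewise not justified.

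For comparison, the paper makes no attempt to absorb the per-walk overhead into $g$. It encodes each $\gamma_i$ by the Lemma~\ref{Lem:encoding-1} encodings of $\gamma_i^1$ and $\gamma_i^2$, together with the single factor-graph edge representing $\gamma_{i,k}$. This costs roughly $(16\ell^3m^2)^{2(e-(v+h)+4)}$ per walk, and the paper multiplies over the $2m$ walks. What that argument actually gives is an exponent of the form $4m(e-(v+h)+4)$, with an additive $O(m)$ term instead of the displayed $+4$.

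This weaker form is all that the trace bound for $R_k^{(\ell)}$ uses. The $O(m)$ part only costs a factor $(\mathrm{polylog}\,n)^{O(m)}$, while the $g$-dependent part is beaten by the $n^{-1}$ gain per unit of excess.

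To reach the literal exponent you would need a genuinely finer count than sections priced at $vh\ell$. For example, you could charge each of the $g+1$ non-tree-edge discoveries once globally, and price every later crossing of an already-discovered non-tree edge at only $2(g+1)$ choices. You would then have to check that the resulting per-walk count fits under $(8\ell^3m^2)^{g}$ using $g\ge 1$. None of this is in the proposal, so as written the claimed exponent is not established.
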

\begin{proof}
From our proof of Lemma \ref{lem:encoding-1} we can encode, in sequential order, the representatives for a collection of non-backtracking walks that are tangle-free. In particular the number of ways to encode any one of the walks in the collection is at most $(4k^3m^2)^{(e-(v+h)+3)}$. Letting $\rho([\gamma])$ denote the encoding of the representative of $\gamma$ in the factor graph we have
\begin{equation}\label{eq:encoding-rep-2}
\rho([\gamma]) \cong (\rho([\gamma_1^1]),\rho([\gamma_{1,k}]),\rho([\gamma_1^2])),(\rho([\gamma_2^1]),\rho([\gamma_{2,k}]),\rho([\gamma_2^2])),\cdots,(\rho([\gamma_{2m}^1]),\rho([\gamma_{2m,k}]),\rho([\gamma_{2m}^2])),
\end{equation}
where $\rho([\gamma_{i,k}])$, is the encoding of $\gamma_{i,k}$ in the factor graph representation. Indeed this allows us to recover $[\gamma]$ by decoding each $(\rho[\gamma_i^1],\rho([\gamma_{i,k}]),\rho[\gamma_i^2])$ in succession. Thus it suffices to bound the number of ways write the right hand side of \eqref{eq:encoding-rep-2}. 
To that end note that the representative of $\gamma_{i,k}$ in the factor graph is merely an edge (the vertex in $V(\gamma)$ specifies the non-interior vertex and the vertex in $H(\gamma)$ specifies the hyperedge). Therefore, since each $\gamma_i^1$ and $\gamma_i^2$ has length at most $\ell$, the number of choices for $(\rho([\gamma_i^1]),\rho([\gamma_{i,k}]),\rho([\gamma_i^2]))$ is at most 
$(4\ell^3(2m)^2)^{2(e-(v+h)+3)}\cdot(vh) \le (16\ell^3m^2)^{2(e-(v+h)+4)}$. Therefore the total number of ways to choose $\rho([\gamma])$ is at most $(16\ell^3m^2)^{4m(e-(v+h)+4)} \le (3\ell m)^{12(e-(v+h)+4)}$. The bound on $|\mathcal{T}_{k,\ell,m}|$ is immediate.
\end{proof}

We say that a hyperedge in the factor graph is a ``bridge" if the corresponding hyperedge in $\gamma$ is $\gamma_{i,k}$ for some $i$. We can again run the argument used for bounding $\sum_{\gamma \in W'_{k,m}} \prod_{i = 1}^{2m}\prod_{i =0}^k w_{\gamma_{i,s}}\underline{\mathcal{A}}_{\gamma_i,s}$ to bound $\sum_{\gamma \in T'_{\ell,m,k}}\prod_{i=1}^{2m}\left(\prod_{s = 0}^{k-1}w_{\gamma_{i,s}}\underline{\mathcal{A}}_{\gamma_{i,s}}\right)w_{\gamma_{i,k}}\left(\prod_{s = k+1}^{\ell} w_{\gamma_{i,s}}\mathcal{A}_{\gamma_{i,s}}\right)$. We record the differences between $T'_{\ell,m,k}$ and $W'_{k,m}$ appearing in the analysis and how they are accounted for. 
\begin{itemize}
    \item For a given $[\gamma]$ let $x$ be the number of hyperedges in $H(\gamma)$ that strictly appear in the walk as a bridge (i.e. the hyperedge's pre-image in $\gamma$ is a subset of the $\gamma_{i,k}$). For such hyperedges the corresponding $y_i$ from \eqref{eq:Delta-bound-1} does not have a factor of $p_{\sigma_i} \cdot \binom{n}{q_{i}-1}^{-1}$.
    \item In estimating the $y_i$ in \eqref{eq:Delta-bound-5} and \eqref{eq:Delta-bound-6} we can assume worst case estimates of $\max_j p_{i,j}(q_{i}-1)$ and $\max_j p_{i,j}(q_{i}-1)(q^2_{\max}h/n)^{t_i}$ respectively. Because of this and the previous point we can replace the estimate of \eqref{eq:Delta-bound-8} with
    \begin{equation}\label{eq:Deltachi-bound-3}
        \sum_{q_{i}}y \lesssim n \cdot \binom{n}{q_i - 1}^x \cdot \left(\frac{q^2_{\max}h}{n}\right)^t \cdot \prod_{i = 1}^h\left(\sum_{j=1}^K (q^{(j)}-1)(w^{(j)})^{s_i}\|\mathbf P^{(j)}\|_\infty\right).
    \end{equation}
    \item Via an identical argument to \eqref{eq:Delta-bound-9} we have
    \begin{equation}\label{eq:Deltachi-bound-4}
        \prod_{i = 1}^h\left(\sum_{j=1}^K (q^{(j)}-1)(w^{(j)}))^{s_i}\|\mathbf P^{(j)}\|_\infty\right) \le \left(\sum_{j = 1}^K (q^{(j)} - 1)|w^{(j)}|\|\mathbf P^{(j)}\|_\infty\right)^{2km} \le (R_g \|w\|_\infty)^{2km}.
    \end{equation}
    \item One can lower bound $e$ by $e \ge (v+h) + x$. To see this note that after deleting all hyperedges from the factor graph that strictly appear as bridges, the connected components in the remaining factor graph each have a cycle. This is immediate from the assumption on the $\gamma_i$ ($\gamma_i^1$ and $\gamma_i^2$ are tangle-free but $\gamma_i$ is tangled). Since adding back the hyper edges to the factor graph creates $x$ more faces the bound on $e$ follows. As a corollary $t \ge x$.
    \item Combining \eqref{eq:Deltachi-bound-3},\eqref{eq:Deltachi-bound-4} and the new lower bound on $e$ we can follow the rest of the argument for bounding $\|\Delta^{(k)}\|^{2m}$ and deduce the following analogue of \eqref{eq:Delta-bound-15}:
    \begin{equation}\label{eq:Deltachi-bound-5}
    \begin{split}
       & \sum_{\gamma \in T'_{\ell,k,m}} \prod_{i=1}^{2m}\left(\prod_{s=0}^{k-1}w_{\gamma_{i,s}}\underline{\mathcal{A}}_{\gamma_{i,s}}\right)w_{\gamma_{i,k}}\left(\prod_{s = k+1}^{\ell}w_{\gamma_{i,s}}\mathcal{A}_{\gamma_{i,s}}\right)\\
        &\le \binom{n}{q^{(j)} - 1}^x\cdot \left(\frac{{\rm poly} \log n}{n}\right)^x \cdot ({\rm poly}\log(n))^m \cdot (R_g \|w\|_\infty)^{2km} \\
        &\le \binom{n}{q^{(j)} - 1}^{2m}\cdot \left(\frac{{\rm poly} \log n}{n}\right)^{2m} \cdot ({\rm poly}\log(n))^m \cdot (R_g \|w\|_\infty)^{2km}.
        \end{split}
    \end{equation}
\end{itemize}
Therefore $\dE \|R_{k,j}^{\ell}\|^{2m} \le \left(\frac{{\rm poly}\log(n)}{n}\right)^{2m} (R_g \|w\|_\infty)^{2km}$. Since $\dE \|R_k^{\ell}\|^{2m} \le K ^{2m-1}\sum_j 
\dE \|R_{k,j}^{(\ell)}\|^{2m}$ and $R_g \|w\|_\infty \le R \vartheta^{1/2}$, the desired concentration for $\|R_k^{\ell}\|$ then follows from Markov's inequality and the bounds on $\dE\|R_{k,i}\|^{2m}$.

\subsection{Proof of \eqref{eq:KBl}, \eqref{eq:Bl}, and \eqref{eq:DeltaK}}
We first provide the proof of \eqref{eq:KBl}. Note that we can write $\mathsf{K} = \sum_{j = 1}^K \mathsf{K}_j$ where $(\mathsf{K}_j)_{(x \to e, y \to f)} = \mathsf{K}_{(x \to e, y \to f)}$ if $e$ is a hyperedge of size $q^{(j)}$ and is 0 otherwise. In particular we have $\mathsf{K}B^{(k-1)} = \sum_{j = 1}^K \mathsf{K}_jB^{(k-1)}$. Now we analyze the operator norm of an individual $\mathsf{K}_jB^{(k-1)}$. 
We have 
\begin{equation}\label{eq:KBl-bound}
\begin{split}
   & \dE\|\mathsf{K}_jB^{(k-1)}\|^{2m} \\
    &\le \dE\tr(\mathsf{K}_jB^{k-1}(\mathsf{K}_jB^{k-1})^\star )^m \\
    &\le \dE\sum_{\gamma \in W_{k,m,j}}\prod_{i = 1}^m \left((K_j)_{(\gamma_{2i-1,0},\gamma_{2i-1,1})} \left(\prod_{s = 1}^k w_{\gamma_{2i-1,s}}\mathcal{A}_{\gamma_{2i-1,s}}\right) \left(\prod_{s=0}^{k-1} w_{\gamma_{2i,s}}\mathcal{A}_{\gamma_{2i,s}}\right)
    (K_j^\star )_{(\gamma_{2i,k-1},\gamma_{2i,k})}\right) \\
    &\le \frac{(w^{(k)})^{2m}p^{2m}_{\max}}{\binom{n}{q_j - 1}^{2m}} \sum_{\gamma \in W_{k,m,j}}\dE\prod_{i = 1}^m\left(\prod_{s = 1}^kw_{\gamma_{2i-1,s}}\mathcal{A}_{\gamma_{2i-1,s}}\right)\left(\prod_{s = 0}^{k-1}w_{\gamma_{2i,s}}\mathcal{A}_{\gamma_{2i,s}}\right),
\end{split}
\end{equation}
where $W_{k,m,j} \subset W_{k,m}$ consists of the subset of paths for which $\gamma_{2i-1,0}$ and $\gamma_{2i,k+1}$ are hyperedges of size $q^{(j)}$ for all $1 \leq i \leq m$. We can then run the argument used for bounding the corresponding sum in the proof of \eqref{eq:Rk}. The primary difference is that the ``bridge edges" are now the hyperedges at $\gamma_{2i-1,0}$ and $\gamma_{2i,k}$, $x \le m$, and, since we only know that $\gamma$ is a closed walk, $e \ge (v+h)$. In particular we get that 
\begin{equation}
\begin{split}
    &\sum_{\gamma \in W_{k,m,j}} \dE\prod_{i = 1}^m\left(\prod_{s=1}^kw_{\gamma_{2i-1,s}}\mathcal{A}_{\gamma_{2i-1,s}}\right)\left(\prod_{s=1}^{k-1}w_{\gamma_{2i,s}}\mathcal{A}_{\gamma_{2i,s}}\right) \\
    &\le \binom{n}{q^{(k)}-1}^x \cdot \left({\rm poly} \log(n)\right)^m \cdot (R_g\|w\|_\infty)^{2km} \notag \\
    &\le \binom{n}{q^{(k)}-1}^{m} \cdot \left({\rm poly} \log(n)\right)^m \cdot (R_g\|w\|_\infty)^{2km}.
\end{split}
\end{equation}
Therefore $\dE\|\mathsf{K}_jB^{(k-1)}\|^{2m} \le  \binom{n}{q^{(k)}-1}^{-m} \left(\frac{{\rm poly} \log (n)}{n}\right)^{2m}(R_g\|w\|_\infty)^{2km}$. Since \[\dE\|\mathsf{K}B^{(k-1)}\|^{2m} \le K^{2m-1}\sum_j \dE\|\mathsf{K}_jB^{(k-1)}\|^{2m},\] the desired bound for $\|\mathsf{K}B^{k-1}\|$ follows from Markov's inequality and the bound on $\dE\|\mathsf{K}_jB^{(k-1)}\|$. 

$\|\Delta^{(k-1)}\mathsf{K}^{(1)}\|$ can be bounded in exactly the same way as $\|\mathsf{K}B^{(k-1)}\|$, albeit with a slightly different trace expansion that admits the same analysis, with the same concentration. Lastly $\|B^k\|$ can be bounded in same way as $\|\mathsf{K}B^{k-1}\|$ except the factor of $\binom{n}{q^{(k)}-1}^{2m}$ appearing in \eqref{eq:KBl} is no longer present. The resulting bound on $\|B^k\|$ is then \eqref{eq:Bl}.
\subsection{Proof of \eqref{eq:Sk}}
In this subsection we will use $p_{\sigma(g)},w_g,q_g$ to denote the (unnormalized) edge probability, weight, and uniformity of a hyperedge $g$.
Recall 
\begin{align}
    \mathsf{K}^{(2)}_{(x\to e), (y\to f)} &=\sum_{(x, e) \to (z, g) \to (y, f)} \frac{p_{\underline \sigma(g)} w_gw_f}{\binom{n}{q_g-1}}=\sum_{(x, e) \to (z, g) \to (y, f)} \frac{q_g-1}{n-q_g+2}\frac{p_{\underline \sigma(g)} w_gw_f}{\binom{n}{q_g-2}} \notag\\
 \overline{D}_{(x\to e),(y\to f)}
    &=\sum_{z\in e}\sum_{k=1}^K\frac{q^{(k)}-1}{n} \bD^{(k)}_{\sigma(z)\sigma(y)}w^{(k)}w_f.    \notag
\end{align}
We can decompose
\begin{align}
    L= \mathsf{K}^{(2)}-\overline{D}=L^{(1)}+L^{(2)}, \notag
\end{align}
where 
\begin{align}
    L^{(1)}_{(x\to e), (y\to f)}=\sum_{(x, e) \to (z, g) \to (y, f)} \left(\frac{q_g-1}{n-q_g+2}-\frac{q_g-1}{n}\right)\frac{p_{\underline \sigma(g)} w_gw_f}{\binom{n}{q_g-2}} \notag
\end{align}
is the approximation error for the $\frac{q_g-1}{n-q_g+2}$ factor in $\mathsf{K}^{(2)}$,
and 
\begin{align}
    L^{(2)}_{(x\to e), (y\to f)}=\sum_{(x, e) \to (z, g) \to (y, f)} \frac{q_g-1}{n}\frac{p_{\underline \sigma(g)} w_gw_f}{\binom{n}{q_g-2}}-\sum_{z\in e}\sum_{k=1}^K\frac{q^{(k)}-1}{n} \bD^{(k)}_{\sigma(z)\sigma(y)}w^{(k)}w_f, \notag
\end{align}
which the  nonzero term is over  $(z,g)$ such that (i) $z=x$ or $z=y$  (ii) $g=e$ or $g=f$.
Each entry in $L^{(1)}$ and $L^{(2)}$ is of order $O\left( \frac{R}{n}\right)$. Apply the same analysis of \eqref{eq:KBl} and \eqref{eq:DeltaK} to $ S_t^{(\ell)}=\Delta^{(t-1)}LB^{(\ell-t-1)}$ gives the desired bound.

\section{Local non-uniform hypertrees}\label{sec:local_hypertree}

\subsection{Local analysis}\label{sec:local}

The goal of this section is to compare the local neighbourhoods of $G$ to those of a Galton-Watson hypertree defined below.

\begin{definition}[non-uniform Galton-Watson hypertree] 
We define the non-uniform Galton-Watson hypetree as follows:

\begin{itemize}
    \item Start from a root $\rho$ with a given spin $\sigma(\rho)$;
    \item For each mode $k\in [K]$, generate $m_k \sim {\rm Poi}(d^{(k)})$ $q^{(k)}$-uniform hyperedges (with $K$ distinct colors) intersecting only at $\rho$. 
    \item For a mode-$k$ hyperedge we assume a fixed ordering of the $q^{(k)}-1$ associated children $v_1,\ldots, v_{q^{(k)} -1}$, we denote the ordered children by $v = (v_1,\ldots,v_{q_{i-1}})$. Then, we assign a type to each $v_i$ randomly such that 
    \[
    \dP(\underline{\sigma}(v) = \underline{j}) = \frac{1}{d^{(k)}} \cdot p^{(k)}_{\sigma(\rho),\underline{j}} \cdot \prod_{\ell \in \underline{j}} \pi_\ell.
    \]
    \item Repeat the process for each child of $\rho$, treating it as the root of an i.i.d. non-uniform Galton-Watson tree. 
\end{itemize}
We denote the corresponding hypertree as $(T, \rho)$.
\end{definition}

\begin{lemma}\label{lem:growth-poisson-tree}
Let $S_t = |(T, \rho)_t|$. There exist constants $c_1,c_2 > 0$ such that for all $s \ge 0$,

\[
\dP(\exists t \ge 1, S_t > sR_g^t) < c_2e^{-c_1s}.
\]
\end{lemma}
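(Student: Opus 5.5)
We will prove the lemma by dominating the hypertree by a single-type Galton–Watson process on vertices and then applying a martingale and exponential-moment argument.

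\textbf{Step 1: domination.} Each vertex of $(T,\rho)$ produces, for each mode $k$, $\mathrm{Poi}(d^{(k)})$ hyperedges, each contributing $q^{(k)}-1$ children. The offspring count $Z$ of a vertex therefore satisfies
\[
Z \stackrel{d}{=} \sum_k (q^{(k)}-1)\,\mathrm{Poi}(d^{(k)}),
\]
whatever the type of the vertex. This holds because, by Assumption~\ref{assumption:degree}, the number of mode-$k$ hyperedges is $\mathrm{Poi}(d^{(k)})$ independently of the spin. Hence the generation sizes $Z_t$ (the vertices at depth exactly $t$) form a single-type Galton–Watson process with mean
\[
m = \sum_k (q^{(k)}-1)d^{(k)} \le R_g .
\]
To get this bound, note that $d^{(k)}\le \|P^{(k)}\|_\infty$ since the $\pi_\ell$ sum to one, and $R_g = (q_{\max}-1)\sum_k\|P^{(k)}\|_\infty$. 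The offspring law has exponential moments of all orders, because it is a finite sum of Poisson variables. If needed we can further dominate $Z$ by a law with mean exactly $R_g$, for instance by adding independent Poisson mass, so that $m=R_g>1$.

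\textbf{Step 2: martingale and exponential tail.} The process $M_t = Z_t/m^t$ is a nonnegative martingale. We will show that $W^* := \sup_t M_t$ has an exponential tail, $\dP(W^*>s)\le c_2e^{-c_1 s}$. The plan is to bound the moment generating function uniformly in $t$. Let $\varphi(\theta)=\dE e^{\theta Z}$ and consider $f_t(\theta) = \dE e^{\theta M_t}$, which satisfies the recursion $f_{t+1}(\theta) = \varphi\big(\log f_t(\theta/m)\big)$ (branching at the root). Using $\varphi(\theta)\le e^{m\theta + C\theta^2}$ for $\theta\le 1$, an induction gives $f_t(\theta)\le \exp\big(\theta + C'\theta^2\sum_{j} m^{-j}\big)\le e^{\theta + C''\theta^2}$ for $\theta$ below a small constant $\theta_0$; this step requires $m>1$, which Step 1 arranges. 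Then $e^{\theta_0 M_t}$ is a submartingale, and Doob's maximal inequality yields
\[
\dP(\sup_{t\le N} M_t > s) \le e^{-\theta_0 s}\,\sup_t f_t(\theta_0),
\]
uniformly in $N$.

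\textbf{Step 3: from generations to balls.} Since $S_t = \sum_{j\le t} Z_j \le \sum_{j\le t} W^* R_g^j \le W^* R_g^t\,\frac{R_g}{R_g-1}$, the claimed bound follows after rescaling $s$ by a constant. Here the $t\ge1$ condition and $R_g>1$ (which holds since $R_g\ge \sum_k(q^{(k)}-1)d^{(k)}>1$ by \eqref{eq:avg_deg_bound}) ensure the geometric sum is controlled.

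The main obstacle is Step 2: making the uniform-in-$t$ exponential moment bound rigorous. The iteration of $\varphi$ must not blow up, which requires careful choice of $\theta_0$ relative to the quadratic constant $C$ and $m$. If this proves delicate, we will fall back on the standard argument of \cite{bordenave2018nonbacktracking} for Poisson Galton–Watson trees, whose proof only uses exponential moments of the offspring law, and check that it transfers verbatim.
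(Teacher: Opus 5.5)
Your proof is correct, but it takes a different route from the paper.

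The paper embeds $(T,\rho)$ in a \emph{uniform} Galton--Watson hypertree $\tilde T$ of uniformity $q_{\max}$ and Poisson degree $d^{(1)}+\dots+d^{(K)}$. It does this by splitting each vertex's $\mathrm{Poi}(\sum_k d^{(k)})$ hyperedges multinomially into modes and choosing a $q^{(k)}$-subset inside each $q_{\max}$-hyperedge, so that $(T,\rho)_t\subset(\tilde T,\rho)_t$ for all $t$. It then quotes Lemma~9 of \cite{stephan2024sparse} for $\tilde T$ as a black box.

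You instead note that a vertex's offspring count does not depend on its type. Hence the generation sizes form exactly a single-type Galton--Watson process with offspring law $\sum_k (q^{(k)}-1)\,\mathrm{Poi}(d^{(k)})$, and you prove the exponential tail of $\sup_t Z_t/m^t$ from scratch.

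\textbf{What each route buys.} Your reduction is simpler and makes the multi-type and hyperedge structure irrelevant. It also uses the exact growth rate $m=\sum_k (q^{(k)}-1)d^{(k)}$ rather than the cruder $(q_{\max}-1)\sum_k d^{(k)}$ of the paper's dominating tree; both are at most $R_g$. The paper's coupling, in exchange, gives containment of whole neighbourhoods rather than just of their sizes, and lets it recycle the uniform-hypergraph lemma verbatim.

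\textbf{Step 2 closes.} The step you flag as the main obstacle does go through, so no fallback is needed. Write $\log f_t(\theta)\le\theta+A_t\theta^2$ with $A_0=0$. For $0\le\theta\le\theta_0$ the recursion gives $A_{t+1}\le A_t/m+C\,(1/m+A_t\theta_0/m^2)^2$. Set $A=4C/(m(m-1))$ and choose $\theta_0$ with $\theta_0A\le m$ and $2\theta_0/m\le 1$; the latter keeps the argument of $\varphi$ in $[0,1]$. Induction then yields $A_t\le A$ for all $t$.

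The only cost is that your constants degrade as $m\downarrow 1$, and also through the factor $R_g/(R_g-1)$ in Step~3. This is harmless, since the model parameters are fixed and $m>1$ by \eqref{eq:avg_deg_bound}.
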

Moreover, there exists a universal constant $c_3 > 0$ such that for every $p \ge 1$,

\[
\dE\left[\max_{t \ge 0} \left(\frac{S_t}{R_g^t}\right)^p\right] \le (c_3p)^{p}.
\]

We also have 
\[
(\dE|(T,\rho)_t|^p)^{1/p} \le 2c_3pR_g^t.
\]
\begin{proof}
We first note that a sum of independent Poisson random variables $X_1,\ldots X_K$ with means $\lambda_1,\ldots,\lambda_K$ is distributed as a Poisson random variable $Y$ with mean $\lambda_1 + \cdots + \lambda_K$. In particular one can construct a coupling of $(X_1,\cdots,X_K,Y)$ where $X_1 + \cdots X_K = Y$ by first sampling $Y$ according to $\Poi(\lambda_1+\cdots+\lambda_K)$ and then sampling $(X_1,\cdots,X_K)$ according to the multinomial distribution with $n = Y$ and $p_i = \lambda_i/(\lambda_1 + \cdots + \lambda_K)$. Furthermore, one can couple a non-uniform Galton-Watson tree $T$ with degrees $(d^{(1)},\ldots,d^{(k)})$ and uniformities $(q^{(1)},\ldots,q^{(k)})$ to a uniform Galton-Watson $\tilde{T}$ tree with degree $(d^{(1)}+\ldots+d^{(k)})$ and uniformity $q_{\max}$, both with common root $\rho$, such that $(T,\rho)_\ell \subset (\tilde{T},\rho)_\ell$ for all $\ell \ge 1$ as follows.
\begin{enumerate}
    \item Sample $\tilde{T}$ with $\rho$ as its root.
    \item Starting from $\rho$ construct $T$ as follows: 
        \begin{enumerate}
            \item When expanding $T$ from the vertex $v$ let $Y_v$ be the number of hyperedges in $\tilde{T}$ rooted at $v$. 
            \item 
            Sample $(X_{v,1} ,\cdots, X_{v,K})$ according to the multinomial distribution with $n = Y_v$ and $p_k = d^{(k)}/(d^{(1)}+\cdots +d^{(k)}).$ Let $\cH_v$ denote the set of hyperedges in $\tilde{T}$ with $v$ as their root. Partition $\cH_v = \cH_{v,1} \cup \cdots \cup_{\cH_{v,K}}$ where $\cH_{v,k}$ has $X_k$ many hyperedges. 
            \item 
            For each hyperedge $\tilde{f}$ in $\cH_{v,k}$ take $f$ to be a $q^{(k)}$-uniform hyperedge containing $v$ that is contained in the vertex set of $\tilde{f}$ and add $f$ to $T$. The underlying type assigned to each new vertex $v$ in $f$ with respect to $T$ is done according to the definition of the non-uniform Galton-Watson tree.  
        \end{enumerate}     
\end{enumerate}
Because the edges in $\cH_{v,k}$ are $q_{\max}$-uniform, (c) can always be done and by the previously mentioned coupling the number of mode-$k$ edges in $T_1$ with some common root $v$ is distributed as $\Poi(d^{(k)})$. Therefore $T$ is distributed as a non-uniform Galton-Watson tree with degrees $(d^{(1)},\ldots,d^{(K)})$ and uniformities $(q^{(1)},\ldots,q^{(K)})$ and $T \subset \tilde{T}$. Letting $\tilde{S}_t$ denote the number of vertices in the $t$-th layer of $\tilde{T}$ we have that $\tilde{S}_t \ge S_t$ for all $t \ge 1$. Lemma 1 then immediately follows from Lemma 9 of \cite{stephan2024sparse} applied to a uniform Galton-Watson tree with uniformity $q_{\max}$ and degree $(d^{(1)} + \cdots + d^{(k)})$.
\end{proof}

\subsection{Growth property for random hypergraphs}
\begin{definition}[Exploration process] Denote $\cS_t(v)$ the set of vertices at distance $t$ from $v$. We consider the exploration process of the neighborhood of $v$ which starts with $\cA_0 = \{v\}$ and at stage $t \ge 0$, if $\cA_t$ is not empty, take a vertex in $\cA_t$ at minimal distance from $v$, denoted by $v_t$, reveal its neighborhood $\cN_{t+1}$ in $[n]\setminus \cA_t$, and update $\cA_{t+1} = (\cA_t \cup \cN_{t+1})\setminus \{v_t\}$. Denote $\cF_t$ the filtration generated by $(\cA_0,\ldots,\cA_t).$ The set of discovered vertices at time $t$ is denoted by $\mathcal{D}_t = \cup_{0 \le s \le t} \cA_s$.

\end{definition}

\begin{lemma}\label{lem:growth-exploration}
    Let $S_t(v) = |\cS_t(v)|$. There exists $c_1,c_2 > 0$ such that for all $s \ge 0$ and for any $v \in [n]$,
    \[
    \dP(\exists t \ge 0, S_t(v) > sR_g^t) \le c_2\exp(-c_1s).
    \]
Consequently, for any $p \ge 1$, there exists $c_3 > 0$ such that
\begin{align*}
    \dE \max_{t \ge 0}\left(\frac{S_t(v)}{R_g^t}\right)^p &\le (c_3p)^p, \\
    \dE \max_{v \in [n], t \ge 0} \left(\frac{S_t(v)}{R_g^t}\right)^p &\le (c_3\log n)^p + (c_3p)^p, \\
    \dE[|(G,x)_t|^p]^{1/p} &\le 2c_3pR_g^t, \\
    \dE \left[\max_{x \in [n]}|(G,x)_t|^p\right]^{1/p} &\le 2c_3(\log n + p)R_g^t.
\end{align*}
\end{lemma}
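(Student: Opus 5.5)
We prove Lemma~\ref{lem:growth-exploration} by stochastic domination. We compare the exploration process on $G$ with a Galton--Watson hypertree whose offspring distribution is slightly inflated, and then reuse the tail bounds of Lemma~\ref{lem:growth-poisson-tree}.

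\textbf{Step 1: one-step domination.} Fix $v$ and run the exploration process. At stage $t$ we reveal the hyperedges containing $v_t$ that are not yet discovered. For each mode $k$, the number of such mode-$k$ hyperedges is stochastically dominated by a binomial
\[
\Bin\Big(\tbinom{n}{q^{(k)}-1}, \|\mathbf P^{(k)}\|_\infty\tbinom{n}{q^{(k)}-1}^{-1}\Big).
\]
This binomial is in turn dominated by $\Poi\big(\|\mathbf P^{(k)}\|_\infty(1+o(1))\big)$, say $\Poi(2\|\mathbf P^{(k)}\|_\infty)$. The domination holds conditionally on $\cF_t$, because the hyperedges are independent and revealing earlier ones only removes candidates. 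Each revealed hyperedge contributes at most $q^{(k)}-1$ new vertices. So conditionally on $\cF_t$, the number of new vertices $|\cN_{t+1}|$ is dominated by $\sum_k (q^{(k)}-1)\,\Poi(2\|\mathbf P^{(k)}\|_\infty)$, whose mean is at most $2R_g$.

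\textbf{Step 2: coupling with a branching process.} Iterating Step 1 along the breadth-first exploration gives a coupling in which every sphere is dominated: $S_t(v)\le \tilde S_t$ for all $t$. Here $\tilde S_t$ is the generation-$t$ size of a hypertree in which each vertex spawns independent $\Poi(2\|\mathbf P^{(k)}\|_\infty)$ hyperedges of size $q^{(k)}$. As in the proof of Lemma~\ref{lem:growth-poisson-tree}, this hypertree embeds into a uniform Galton--Watson hypertree with uniformity $q_{\max}$. Lemma~9 of \cite{stephan2024sparse} then gives $\dP(\exists t, \tilde S_t > sR_g^t)\le c_2e^{-c_1 s}$. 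The factor $2$ is absorbed in the following way: either we redefine the constants, noting that the normalisation in Lemma~9 uses the true mean growth, or we note $R_g$ is defined with $\|\mathbf P\|_\infty$ times $(q_{\max}-1)\sum_k$, which already upper bounds the mean growth up to $1+o(1)$. We should take care here that the growth rate in Lemma~9 matches $R_g^t$ and not $(2R_g)^t$. The fix is to use the binomial-to-Poisson domination with parameter $\|\mathbf P^{(k)}\|_\infty(1+O(\log^2 n/n))$, which costs only a constant over $t\le \log n$ generations.

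\textbf{Step 3: moments.} The moment bounds follow by integrating the tail.
\begin{itemize}
\item $\dE\max_t (S_t/R_g^t)^p=\int_0^\infty p s^{p-1}\dP(\max_t S_t/R_g^t>s)\,ds\le (c_3p)^p$.
\item For the maximum over $v$, take a union bound over the $n$ vertices. The tail becomes $\min(1, nc_2e^{-c_1 s})$, and splitting the integral at $s=c\log n$ gives $(c_3\log n)^p+(c_3p)^p$.
\item For the ball sizes, $|(G,x)_t|=\sum_{s\le t}S_s(x)\le \max_s(S_s/R_g^s)\sum_{s\le t}R_g^s\le 2R_g^t\max_s(S_s/R_g^s)$, using $R_g\ge 2$ (or adjusting the constant). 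This yields the last two bounds.
\end{itemize}

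The main obstacle is Step 2: making the conditional domination rigorous and uniform over all $t$ while keeping the exact growth rate $R_g$. This requires controlling the binomial-to-Poisson comparison so that the accumulated constant stays bounded.
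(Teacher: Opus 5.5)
Your proof is correct and follows the same overall scheme as the paper's. You dominate each explored vertex's offspring by Poisson-type variables with total mean growth at most $R_g$, and embed into a $q_{\max}$-uniform Galton--Watson hypertree to use the growth bound of \cite{stephan2024sparse}. You then integrate the tail for the moment bounds. For the ball sizes, $|(G,x)_t|\le \frac{R_g}{R_g-1}R_g^t\max_s S_s(x)/R_g^s$ suffices. You do not need $R_g\ge 2$, because $R_g\ge\sum_k(q^{(k)}-1)d^{(k)}>1$.

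The real difference is in the step you flag as the main obstacle. The paper never couples to a Poisson variable. It only uses $(1-p+pe^{\lambda})^N\le\exp\bigl(Np(e^{\lambda}-1)\bigr)$, so each vertex's offspring has moment generating function bounded by that of $(q_{\max}-1)$ times a Poisson variable with \emph{uninflated} mean. It then reruns the Chernoff-type argument behind the growth lemma of \cite{stephan2024sparse}, which needs only these MGF bounds.

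Your route needs genuine stochastic domination, and there some inflation is unavoidable. $\Bin(N,p)$ is not dominated by $\Poi(Np)$, since $1-(1-p)^N>1-e^{-Np}$. The clean tool is $\mathrm{Ber}(p)\preceq\Poi(-\log(1-p))$, which gives the parameter $\|\mathbf P^{(k)}\|_\infty(1+O(n^{-1}))$. The factor $(1+O(n^{-1}))^t$ is then bounded for $t<n$, and for $t\ge n$ the sphere $\cS_t(v)$ is empty. With your cruder $1+O(\log^2 n/n)$, you would instead use $S_t(v)\le n$ to discard every $t$ with $sR_g^t\ge n$, and absorb small $s$ into $c_2$. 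This is the justification your remark about $t\le\log n$ needs. As you noticed, redefining constants alone cannot absorb a factor $2^t$.

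So your version uses the tree lemma as a black box, at the cost of this bookkeeping. The paper's MGF comparison avoids the bookkeeping but has to reopen that lemma's proof.
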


\begin{proof}
    For the first statement, consider the exploration process. Given $\cF_t$, the number of neighbors of $v_t$ in $[n]\setminus \mathcal{D}_t$ is stochastically dominated by $(q_{\max}-1)\sum_{k = 1}^KV_k$, where 
    \[
    V_k \sim \Bin\left(\binom{n}{q^{(k)} - 1}, \frac{\|\bQ^{(k)}\|_\infty}{\binom{n}{q^{(k)} - 1}}\right).
    \]
    Since a random variable with distribution $\Bin(n,p)$ has a moment generating function that satisfies $M(\lambda) = ((1-p) + pe^{\lambda})^n \le \exp(np(e^\lambda -1))$ we have that
    \[
    \dE\left[\exp\left(\lambda \sum_{i = 1}^n(q_{\max}-1)V_k\right)\right] \le \exp\left((e^{(q_{\max}-1)\lambda} - 1)\sum_{i = 1}^n\|\bQ^{(k)}\|_\infty\right) = M((q_{\max}-1)\lambda),
    \]
    where $M$ is the moment generating function of a Poisson random variable with mean equal to $\left(\sum_{i = 1}^n \|Q^{(i)}\|\right)^{-1}$. Therefore the number of vertices in $\cS_t(v)\setminus \cS_{t-1}(v)$ is stochastically dominated by a sum of $S_{t-1}(v)$ independent non-negative random variables, each with moment generating function bounded by the moment generating function of a Poisson random variable with mean $\left(\sum_{i = 1}^n \|Q^{(i)}\|\right)^{-1}$. Therefore we may apply the analogous strategy from Lemma of \cite{stephan2024sparse} for bounding $S_t(v)$ in a $q_{\max}$-uniform Galton-Watson tree with degree $(q_{\max}-1)\sum_{i = 1}^K \|Q^{(i)}\|_\infty$ to conclude the first statement. The remaining statements follow from the tail estimate on $S_t(v)$, just as in Lemma 10 of \cite{stephan2024sparse}. 
\end{proof}
In the remaining part of this section, we take $\ell = \kappa \log_R n$, for some constant $\kappa$. From here on we will use $G$ to denote the non-uniform random hypergraph with signal matrices $\bQ^{(1)},\ldots,\bQ^{(K)}$.

\begin{lemma}[tangle-freeness]\label{lem:tangle-free} The following is true:
\begin{enumerate}[(i)]
    \item $G$ is $\ell$-tangle-free with probability at least $1 - \tilde O(n^{4\kappa-1})$.
    \item The probability that a given vertex has a cycle in its $\ell$-neighborhood is   $O(\log^2(n) R_g^{2\ell} /n) $.
\end{enumerate}

\begin{proof}
    We use the same argument as in the proof of Lemma 11 from \cite{stephan2024sparse}. Let $\tau$ be the time where all vertices in $(G,x)_\ell$  have been revealed. In the exploration process we only discover $q^{(k)}$-type hyperedges that add $q^{(k)}-1$ vertices at each step. Given $\cF_\ell$, the number of undiscovered $q^{(k)}$-type hyperedges that contain at least 2 vertices in $(G,x)_\ell$ is stochastically dominated by $\Bin\left(m, \frac{\|\bQ^{(k)}\|_\infty}{\binom{n}{q^{(k)} - 1}}\right)$. with $m = |(G,v)_\ell|^2 \cdot \binom{n}{q^{(k)} - 2}$. Since one of these edges must be contained in $G$ in order for $G(v,\ell)$ to not be a tree, we have by Markov's inequality
    
    \begin{align*}
    \dP [(G,x)_\ell \text{ is a not a hypertree}] 
        &\le \dE[\text{bad edges in } (G,x)_\ell] \\
        &\le \dE|(G,v)_\ell|^2\sum_{k = 1}^K \frac{\|\bQ^{(k)}\|_\infty}{\binom{n}{q^{(k)} - 1}}\binom{n}{q^{(k)} - 2} \\
        &\lesssim R_g^{2\ell}\log^2(n)\sum_{k = 1}^K \frac{\|\bQ^{(k)}\|_\infty(q^{(k)}-1)}{n-q^{(k)}} \\
        &\le O(\log^2(n) R_g^{2\ell} /n) .
    \end{align*}
    For the third inequality we used the estimate on $\dE \max_{x \in [n]}|(G,x)_\ell|^2$ from Lemma~\ref{lem:growth-exploration}.
    This proves the second claim. For the first claim we note that there are two ways for at least 2 cycles to be in the $\ell$ neighborhood: (1) There are at least two undiscovered hyperedges, (2) There is a hyperedge containing at least 3 vertices in $(G,v)_{\ell}$. Since the probability that a $\Bin(n,p)$ binomial random variable is at least 2 is at most $n^2p^2$ we have by Markov's inequality that (1) occurs with probability at most 
    
    \[
    \dE|(G,x)_\ell|^4\sum_{k = 1}^K\binom{n}{q^{(k)}-2}^2 \frac{\|\bQ^{(k)}\|_\infty^2}{\binom{n}{q^{(k)} - 1}^2} \le \frac{CR^{4\ell}\log^4(n)q^2\sum_{k = 1}^K\|\bQ^{(k)}\|_\infty^2}{n^2} \le \tilde{O}(n^{4\kappa - 2}).
    \]

    Similarly, (2) occurs with probability at most 
    \[
    \dE|(G,x)_\ell|^3\sum_{k = 1}^K \binom{n}{q^{(k)} - 3}\frac{\|\bQ^{(k)}\|_\infty}{\binom{n}{q^{(k)} - 1}}  \le \frac{CR^{3\ell}\log^{3\ell}(n)q^2\sum_{i = 1}^K\|\bQ^{(k)}\|_\infty^3}{n^2} \le \tilde{O}(n^{3\kappa - 2}).
    \]
    Taking a union bound over all $x \in [n]$ ends the proof. 
\end{proof}

\end{lemma}
\subsection{Coupling between random hypertrees and hypergraphs}
\begin{proposition}\label{prop:coupling}
For every $x \in V$ and $\ell > 0$, we have
\[
d_{\Var}(\cL(G,x)_\ell),\cL(T,x)_\ell)) \lesssim \frac{\log(n) R_g^{2\ell}}{n}.
\]
\begin{proof}
    Let $E_\ell$ be the event under which $|(G,x)_\ell| \le  c\log(n)R^\ell$. By lemma \ref{lem:growth-exploration} and taking $c$ large we have that 
    
    \[
    \dP[E_\ell] \ge 1 - \frac{R_g^\ell}{n}.
    \]

As done in \cite{stephan2024sparse} we note that the coupling between $(G,x)_\ell$ and $(T,x)_\ell$ fails at time $t$ if either 
\begin{enumerate}
\item The revealed hyperedges of $G$ up to time $t+1$ do not form a hypertree.
\item For some $\underline{j} \in [r]^{q-1}$ and some $1 \leq k \leq K$, the coupling between the mode-$k$ edges of type $(u,\underline{j})$ adjacent to $v_t$ fails.
\end{enumerate}
The first case occurs with probability at most $\frac{\log(n)^2 R^{2\ell}}{n}$ by the second conclusion of Lemma \ref{lem:tangle-free}. For the second case we analyze a similar coupling to the one in \cite{stephan2024sparse} with an additional parameter governing the mode of the edge. Specifically we write 
\[
\underline{\tau}^{(k)} = (\tau^{(k)}_1,\cdots,\tau^{(k)}_r) \text{ with } \tau^{(k)}_1 + \cdots + \tau^{(k)}_r = q^{(k)} - 1
\]
and define $\dP_{\underline{\tau}^{(k)}},\dQ_{\underline{\tau}^{(k)}}$ to be the distributions for the number of edges of uniformity $q^{(k)}$ with $\tau^{(k)}_i$ vertices of type $i$ for all $i$, adjacent to $v_t$ in $G (\text{resp. }T)$. From Poisson thinning and the definition of $G$ and $T$, we have that 

\[
    \dP_{\underline{\tau}^{(k)}} = \Bin\left(\prod_{i = 1}^r \binom{n_i(t)}{\tau_i^{(k)}} \cdot \frac{p_{u,\underline{j}}^{(k)}}{\binom{n}{q^{(k)}-1}}\right)
\]
    
\[
    \dQ_{\underline{\tau}^{(k)}} = \Poi\left(p_{u,\underline{j}} \cdot \binom{q^{(k)} - 1}{\tau_1^{(k)},\ldots,\tau_r^{(k)}}\prod_{i = 1}^r \pi_i^{\tau_i^{(k)}}\right)
\]
where $n_k(t)$ denotes the number of vertices of type $k$ in $[n]\setminus D_t$ and $\underline{j}$ is any $q^{(k)}-1$-tuple of [r] for which $i$ appears $\tau_i^{(k)}$ times. By lemma 12 of \cite{stephan2024sparse} we have the bound 
\[
d_{\Var}(\dP_{\underline{\tau}^{(k)}},\dQ_{\underline{\tau}^{(k)}}) \le \frac{c(q^{(k)}-1)! r \|\bQ^{(k)}\|_\infty R_g^\ell}{n} \lesssim \frac{R_g^\ell}n.
\]

Since there at most $\binom{r+q^{(k)}-2}{q^{(k)}-1} \le \frac{(r+q^{(k)}-2)^{q^{(k)}-1}}{(q^{(k)}-1)!}$ possible choices for $\underline{\tau}^{(k)}$ and the coupling must hold for all $1 \leq k \leq K$ and $t \le c_2\log(n)R_g^{\ell}$ we conclude that 
\begin{align*}
    d_{\Var}(\cL((G,x)_\ell),\cL((T,x)_\ell)) &\lesssim \sum_{k = 1}^K\frac{(r+q^{(k)}-2)^{q^{(k)}-1}}{(q^{(k)}-1)!} \cdot \frac{R_g^\ell}n \cdot \log(n) R_g^\ell \\
    &\lesssim \frac{\log(n)R_g^{2\ell}}{n},
\end{align*}
which ends the proof.
\end{proof}
\end{proposition}
\section{Functionals on non-uniform Galton-Watson hypertrees}\label{sec:functional_hypertree}
The goal of this section is to study specific functionals on random hypertrees. Given a tree $T$ with root $\rho$ and a vector $\eta \in \R^r$ and $t \ge 0$ we define 
\begin{equation}\label{eq:def_f_tree}
    f_{\xi,t}(T,\rho) = \sum_{x_t \in \partial(T,\rho)_t} w(\rho \to x_t)\xi(\sigma(x_t)),
\end{equation}

where $\xi(i)$ denotes the $i$th entry of $\xi$ and $w(\rho \to x_t) := \prod_{j = 1}^t w(e_j)$, where $e_1e_2\cdots e_t$ is the unique path in $T$ connecting $\rho$ to $x_t$. Note that $w(e) = w_i$ if $e$ is a $q_i$ uniform edge. 

For convenience we define the 3-dimensional tensor $\bQ^{(3 k)}$ according to
\[
\bQ^{(3, k)}_{ij\ell} = \pi_j\pi_\ell\sum_{\underline{m} \in [r]^{q^{(k)}-3}}p_{ij\ell\underline{m}}^{(k)}\left(\prod_{m \in \underline{m}}\pi_{m}\right)
\]
We will also need weighted versions of the $\bQ^{(3, k)}$; for a vector $a \in \R^k$ we let
\begin{equation}
    \bQ^{3}_a = \sum_{k=1}^K a^{(k)} (q^{(k)} - 1)(q^{(k)}-2) \bQ^{(3, k)} \quad \text{and} \quad \bQ^{(3)} = \bQ_{w\circ w}^{(3)}.\notag
\end{equation}

\subsection{Martingale approach}
Let $(T,\rho)$ be a non-uniform Galtson-Watson hypertree and let $\mathcal{F} = (\mathcal{F}_t)_{t \ge 0}$ be the filtration adapted to the random variables $(T,\rho)_t$. We define $\dE_t$ as the expectation conditioned on the sub-algebra $\cF_t$.
\begin{proposition}\label{prop:martingale}
    Let $(\mu,\phi)$ be an eigenpair of $\bQ$. Define the random process 
    \[
    Z_t := \mu^{-t}f_{\phi,t}(T,\rho).
    \]
The process $(Z_t)_{t \ge 0}$ is an $\mathcal{F}$-martingale, with common expectation $\phi_{\sigma(\rho)}$.
\end{proposition}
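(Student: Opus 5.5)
The plan is the usual one-step conditional expectation computation on the branching structure. First, $Z_0 = f_{\phi,0}(T,\rho) = \phi(\sigma(\rho))$, since the only vertex at depth $0$ is $\rho$ and the empty path has weight $1$. So it suffices to show $\dE_t[Z_{t+1}] = Z_t$, i.e.
\[ \dE_t\big[f_{\phi,t+1}(T,\rho)\big] = \mu\, f_{\phi,t}(T,\rho). \]
Integrability is immediate from Lemma~\ref{lem:growth-poisson-tree}: $|f_{\phi,t}| \le \|w\|_\infty^t \|\phi\|_\infty S_t$, and $S_t$ has all moments.

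For the recursion, I would split the depth-$(t+1)$ vertices according to their parent $x_t \in \partial(T,\rho)_t$ and the hyperedge $e$ joining them:
\[ f_{\phi,t+1}(T,\rho) = \sum_{x_t \in \partial (T,\rho)_t} w(\rho\to x_t) \sum_{e \ni x_t \text{ child edge}} w(e) \sum_{y \in e\setminus\{x_t\}} \phi(\sigma(y)). \]
Conditionally on $\cF_t$, the offspring of each $x_t$ is independent and depends only on $\sigma(x_t)$. For each mode $k$, there are $\Poi(d^{(k)})$ child edges, each of weight $w^{(k)}$, and the types of the $q^{(k)}-1$ children of an edge have joint law $\frac{1}{d^{(k)}} p^{(k)}_{\sigma(x_t),\underline j}\prod_{\ell\in\underline j}\pi_\ell$.

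The computation then goes as follows. Taking the expectation of the number of edges times the inner sum, the factor $d^{(k)}$ cancels. By the symmetry of $\mathbf P^{(k)}$, each of the $q^{(k)}-1$ children has the same marginal, so
\[ \dE\Big[\sum_{y} \phi(\sigma(y))\Big] = (q^{(k)}-1)\sum_{j}\Big(\sum_{\underline m\in[r]^{q^{(k)}-2}} p^{(k)}_{\sigma(x_t) j\underline m}\prod_{m \in \underline m}\pi_m\Big)\pi_j\,\phi(j) = (q^{(k)}-1)(\bD^{(k)}\Pi\phi)(\sigma(x_t)), \]
using \eqref{eq:defD2} and \eqref{eq:Q=DPi}. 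Summing over $k$ with weights $w^{(k)}$ gives $(\bQ\phi)(\sigma(x_t)) = \mu\,\phi(\sigma(x_t))$. Plugging this back in yields $\dE_t f_{\phi,t+1} = \mu f_{\phi,t}$, and dividing by $\mu^{t+1}$ gives the martingale property. This needs $\mu\neq 0$, which is implicit in the definition of $Z_t$.

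The main step to handle carefully is this marginal identity: the conditional law of a single child's type must reproduce exactly $\bD^{(k)}_{ij}\pi_j/d^{(k)}$. This uses the permutation symmetry of $p^{(k)}$ and the fact that the count of edges (Poisson) is independent of the edge types, so Wald's identity applies. Everything else is bookkeeping.
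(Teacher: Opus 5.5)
Your proposal is correct and follows the paper's proof essentially verbatim. The paper isolates your one-step computation as Lemma~\ref{lem:functional-idenities}: Wald's identity for the $\mathrm{Poi}(d^{(k)})$ number of mode-$k$ child edges, then the permutation symmetry of $\mathbf P^{(k)}$ to get $\frac{q^{(k)}-1}{d^{(k)}}[\bQ^{(k)}\xi](i)$, summed with weights $w^{(k)}$ into $[\bQ\xi](i)$. It then applies this with $\xi=\phi$ and $\bQ\phi=\mu\phi$, while your remarks on integrability via Lemma~\ref{lem:growth-poisson-tree} and on $\mu\neq 0$ fill in points the paper leaves implicit.
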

The proof will require the following lemma: 
\begin{lemma}\label{lem:functional-idenities}
    Let $x_t \in \partial(T,\rho)_t$ and $\xi \in \R^n$. Then 
    \[
    \dE_{t}\left[\sum_{x_{t+1}:x_t \to x_{t+1}} w(\rho \to x_{t+1}) \xi(\sigma(x_{t+1})) \right] = w(\rho \to x_t) [\bQ \xi](\sigma(x_t)).
    \]
    \[
    \dE_{t}\left[\sum_{x_{t+1}:x_t \to x_{t+1}} w(\rho \to x_{t+1})^2 \xi(\sigma(x_{t+1})) \right] = w(\rho \to x_t)^2 [\bK \xi](\sigma(x_t)).
    \]
\end{lemma}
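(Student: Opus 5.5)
We condition on $\mathcal F_t$ and look only at the offspring of $x_t$. By construction of the non-uniform Galton--Watson hypertree, the hyperedges hanging below $x_t$ are independent of $\mathcal F_t$ given $\sigma(x_t)$. For each mode $k$, their number is $\Poi(d^{(k)})$. Each such hyperedge carries $q^{(k)}-1$ ordered children, whose type vector $\underline j$ has law $\frac{1}{d^{(k)}} p^{(k)}_{\sigma(x_t),\underline j}\prod_{\ell\in\underline j}\pi_\ell$.

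The first step is to factor the weight. For a child $x_{t+1}$ reached through a mode-$k$ hyperedge, $w(\rho\to x_{t+1}) = w(\rho\to x_t)\,w^{(k)}$, and the prefactor $w(\rho\to x_t)$ is $\mathcal F_t$-measurable. So it suffices to compute
\[
\dE_t\Big[\sum_{k}\sum_{e \text{ mode }k}\sum_{y\in e\setminus\{x_t\}} w^{(k)}\xi(\sigma(y))\Big],
\]
together with its analogue with $(w^{(k)})^2$ in place of $w^{(k)}$.

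For fixed $k$, apply Wald's identity (or linearity plus independence of the count from the types). The expected contribution is
\[
w^{(k)}\, d^{(k)}\cdot \frac{1}{d^{(k)}}\sum_{\underline j\in[r]^{q^{(k)}-1}} p^{(k)}_{\sigma(x_t),\underline j}\prod_{\ell\in\underline j}\pi_\ell \sum_{s=1}^{q^{(k)}-1}\xi(j_s).
\]
By the symmetry of $\mathbf P^{(k)}$, each of the $q^{(k)}-1$ positions $s$ gives the same value. Isolating position $s$ as the index $j$ and summing out the other $q^{(k)}-2$ indices produces $\sum_j \bD^{(k)}_{\sigma(x_t) j}\pi_j\xi(j) = [\bQ^{(k)}\xi](\sigma(x_t))$, using the definitions \eqref{eq:defD2} and \eqref{eq:Q=DPi}. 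The contribution of mode $k$ is therefore $w^{(k)}(q^{(k)}-1)[\bQ^{(k)}\xi](\sigma(x_t))$.

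Summing over $k$ gives $[\bQ\xi](\sigma(x_t))$ by the definition of $\bQ=\bQ_w$. For the second identity the same computation with $(w^{(k)})^2$ gives $\bQ_{w^2}=\bK$.

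The only point needing care is the marginalization: checking that contracting the symmetric tensor against $\pi$ in all but two slots gives exactly $\bD^{(k)}$, and that the ordering of children introduces no extra combinatorial factor. This holds because the children are ordered and the type law is stated directly on ordered tuples. Everything else is bookkeeping.
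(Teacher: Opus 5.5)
Your proposal is correct and follows essentially the same route as the paper's proof. You factor out $w(\rho\to x_t)$ and $w^{(k)}$ (or their squares), then apply Wald's identity to the $\Poi(d^{(k)})$ mode-$k$ hyperedges. The permutation symmetry of $\mathbf P^{(k)}$ reduces the per-hyperedge expectation to $\frac{q^{(k)}-1}{d^{(k)}}[\bQ^{(k)}\xi](\sigma(x_t))$, and summing over $k$ gives $\bQ_w=\bQ$ and $\bQ_{w^2}=\bK$.
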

\begin{proof}
The proof of each identity is analogous to the proof of Lemma 14 from \cite{stephan2024sparse}. We prove the first one. If $e$ is an edge in $T$ with $x_t$ as its parent we define 
\[
X_e = \sum_{y \in e, y \not \in x_t}w(x_t \to x_{t+1})\xi(\sigma(x_{t+1})).
\]
We take $E_k$ to be an enumeration of the edges of mode $k$ in $T$ with $x_t$ as a parent, with $1 \leq k \leq K$. Then we have 
\begin{align*}
    \dE_{t}\left[\sum_{x_{t+1}:x_t \to x_{t+1}} w(\rho \to x_{t+1}) \xi(\sigma(x_{t+1})) \right]
    &= w(\rho \to x_t)\dE_t\left[\sum_{k=1}^K \sum_{e \in E_k }X_e\right] \\
    &= w(\rho \to x_t)\sum_{k=1}^K w^{(k)} \dE_t\left[\sum_{e \in E_k }X_e\right] \\
    &= w(\rho \to x_t)\sum_{k=1}^K w^{(k)} \dE_t\left[\sum_{e \in E_k }X_e\right]. 
\end{align*}
Since $\sum_{e \in E_k} X_e$ is a sum of i.i.d random variables, with the number of such random variables being distributed as a Poisson random variable with mean $d^{(k)}$, we have by Lemma 13 of \cite{stephan2024sparse} that \[\dE_t\left[\sum_{e \in E_k} X_e\right] = d^{(k)} \dE_t X_e.\] 

For the remaining expectation we have 
\begin{align*}
    \dE_t X_e 
    &= \dE_t\left[\sum_{y \in e, y \not \in x_t} \xi(\sigma(x_{t+1}))\right] \\
    &= \sum_{\underline{j} \in [r]^{q^{(k)} - 1}}\frac{1}{d^{(k)}}p_{i\underline{j}}^{(k)}\left(\prod_{\ell \in \underline{j}}\pi_\ell\right) \sum_{\ell \in \underline{j}} \xi_\ell \\
    &= \sum_{k = 1}^{q^{(k)}-1}\sum_{\underline{j} \in [r]^{q^{(k)} - 1}}\frac{1}{d^{(k)}}p_{i\underline{j}}^{(k)}\left(\prod_{\ell \in \underline{j}}\pi_\ell\right)  \xi_{j_k} \\
    &= \frac{(q^{(k)}-1)}{d^{(k)}}\sum_{\underline{j} \in [r]^{q^{(k)} - 1}}p_{i\underline{j}}^{(k)}\left(\prod_{\ell \in \underline{j}}\pi_\ell\right)  \xi_{j_1} \\
    &= \frac{(q^{(k)}-1)}{d^{(k)}}\left(\sum_{j \in [r]} \sum_{\underline{k} \in [r]^{q^{(k)} - 2}}p_{ij\underline{k}}^{(k)}\left(\prod_{\ell \in \underline{k}}\pi_\ell\right)  \pi_j\xi_{j}\right) \\
    &= \frac{(q^{(k)}-1)}{d^{(k)}}\sum_{j \in [r]}\bQ_{ij}^{(k)}\xi_j = \frac{(q^{(k)}-1)}{d^{(k)}}[\bQ^{(k)}\xi](i).
\end{align*}
For the fourth inequality we used the fact that $p_{i\underline{j}}^{(k)}$ and $\prod_{\ell \in \underline{j}} \pi_\ell$ are invariant under permutations of $\underline{j}$ and the fact that the transposition swapping $j_k$ with $j_1$ is a bijection from $[r]^{q^{(k)}-1}$ to $[r]^{q^{(k)}-1}$. The last inequality follows from the fact that $\bQ_{ij}^{(k)} = \bD_{ij}^{(k)}\pi_j$. Therefore 
\[
w(\rho \to x_t)\sum_{k = 1}^K w^{(k)} \dE_t \left[\sum_{e \in E_u} X_e\right] = w(\rho \to x_t)\sum_{k = 1}^K w^{(k)}(q^{(k)}-1)[\bQ^{(k)}\xi](i) = w(\rho \to x_t)[\bQ \xi](i),
\]
proving the first identity. The proof of the second identity is analogous.
\end{proof}
We now prove Proposition~\ref{prop:martingale}:  
\begin{proof}[Proof of Proposition~\ref{prop:martingale}]
    Clearly $\dE_0[Z_0] = \phi(\sigma(\rho))$ so it remains to show that $\dE_t[Z_{t+1}] = Z_t$. Using the first identity of Lemma \ref{lem:functional-idenities} we have that 
    \begin{align*}
    \dE_t[Z_{t+1}] &= \mu^{-(t+1)}\sum_{x_t \in \partial(T,\rho)_t }w(\rho \to x_t)\dE_t \left[\sum_{x_{t+1}:x_t \to x_{t+1}}w(x_t \to x_{t+1})\phi(\sigma(x_{t+1}))\right] \\
    &= \mu^{-(t+1)}\sum_{x \in \partial(T,\rho)_t} w(\rho \to x_t)[\bQ\phi](\sigma(x_t)) \\
     &= \mu^{-t}\sum_{x \in \partial(T,\rho)_t} w(\rho \to x_t)\phi(\sigma(x_t))= Z_t,
    \end{align*}
where for the second equality we used Lemma 14 and for  the fourth equality we used the fact that $(\phi,\mu)$ are an eigen pair for $\bQ$.
\end{proof}
\begin{proposition}\label{prop:eigenpair_identities}
Let $(\mu,\phi)$ and $(\mu',\phi')$ be two eigen pairs of $\bQ$, and let $(Z_t)_{t \ge 0},(Z'_t)_{t \ge 0}$ be associated martingales from Proposition \ref{prop:martingale}. We define the vector 
\[
y^{(\phi,\phi')} = \bK(\phi \circ \phi') + \bQ^{(3)} \times_1 \phi \times_2 \phi',
\]
where $\phi\circ \phi'$ is the Hadamard (entrywise) product of two vectors.

Then
\[
\dE[(Z_{t+1} -Z_t)(Z'_{t+1} - Z'_t)] = (\mu\mu')^{-(t+1)} [\bK^t y^{(\phi,\phi')}](\sigma(\rho)).
\]
As a result, the martingale $(Z_t)_{t \ge 0}$ converges in $\cL^2$. whenever $\mu^2 > \vartheta$. Furthermore 
\[
\dE[Z_tZ_t'] = (\phi \circ \phi')(\sigma(\rho)) + \sum_{s = 0}^{t-1}(\mu\mu')^{-(s+1)}[\bK^sy^{(\phi,\phi')}](\sigma(\rho)).
\]
\end{proposition}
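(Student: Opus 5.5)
We compute the martingale increment covariance by conditioning on $\cF_t$ and exploiting the independence of offspring in the hypertree. Write
\[
Z_{t+1}-Z_t=\mu^{-(t+1)}\sum_{x_t\in\partial(T,\rho)_t} w(\rho\to x_t)\,\Big(\sum_{x_{t+1}:x_t\to x_{t+1}} w(x_t\to x_{t+1})\phi(\sigma(x_{t+1})) - [\bQ\phi](\sigma(x_t))\Big),
\]
and call the bracket $\Xi_{x_t}$; similarly $\Xi'_{x_t}$ for $\phi'$. By Lemma~\ref{lem:functional-idenities}, $\dE_t\Xi_{x_t}=0$. Conditional on $\cF_t$, the offspring families of distinct $x_t$ are independent, so the cross terms $x_t\neq x_t'$ vanish and
\[
\dE_t[(Z_{t+1}-Z_t)(Z'_{t+1}-Z'_t)]=(\mu\mu')^{-(t+1)}\sum_{x_t\in\partial(T,\rho)_t} w(\rho\to x_t)^2\,\dE_t[\Xi_{x_t}\Xi'_{x_t}] .
\]

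The central step is the one-generation computation $\dE_t[\Xi_{x_t}\Xi'_{x_t}]=y^{(\phi,\phi')}(\sigma(x_t))$. For a vertex of type $i$, the offspring is a superposition over modes $k$ of independent compound Poisson sums $\sum_{e\in E_k} w^{(k)}X_e$, where $X_e=\sum_{y\in e\setminus\{x_t\}}\phi(\sigma(y))$ and $E_k$ has $\Poi(d^{(k)})$ elements. For compound Poisson sums the covariance equals $d^{(k)}\dE[X_eX'_e]$ (Lemma 13 of \cite{stephan2024sparse}), and the different modes are independent, so
\[
\dE_t[\Xi\Xi']=\sum_k (w^{(k)})^2 d^{(k)}\dE[X_eX_e'].
\]
Expand $\dE[X_eX'_e]$ over pairs of children. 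The diagonal pairs $y=y'$ contribute $\frac{q^{(k)}-1}{d^{(k)}}[\bQ^{(k)}(\phi\circ\phi')](i)$, by the same symmetrization as in Lemma~\ref{lem:functional-idenities}. The off-diagonal ordered pairs contribute $\frac{(q^{(k)}-1)(q^{(k)}-2)}{d^{(k)}}\sum_{j,\ell}\bQ^{(3,k)}_{ij\ell}\phi_j\phi'_\ell$, by permutation invariance of $p^{(k)}$. Multiplying by $(w^{(k)})^2d^{(k)}$ and summing over $k$ gives $[\bK(\phi\circ\phi')](i)+[\bQ^{(3)}\times_1\phi\times_2\phi'](i)$, which is $y^{(\phi,\phi')}(i)$. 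This bookkeeping, in particular matching the index convention of $\times_1,\times_2$ with the type of the parent, is where I expect most care is needed, though it is routine.

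Taking the full expectation then requires $\dE\big[\sum_{x_t\in\partial(T,\rho)_t} w(\rho\to x_t)^2 g(\sigma(x_t))\big]=[\bK^t g](\sigma(\rho))$ for any $g$. This follows by induction on $t$ from the second identity of Lemma~\ref{lem:functional-idenities} and the tower property, and gives the claimed formula for the increment covariance.

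For $\cL^2$ convergence, take $\phi'=\phi$. Since $\bK$ is nonnegative with Perron eigenvalue $\vartheta$ and $y^{(\phi,\phi)}$ is a fixed vector, we have $|[\bK^t y](\sigma(\rho))|\lesssim \vartheta^t$, using that $\bK$ is similar to a symmetric matrix via $\Pi^{1/2}$. The increments are orthogonal in $\cL^2$, so
\[
\sum_t \dE(Z_{t+1}-Z_t)^2\lesssim\sum_t \mu^{-2}(\vartheta/\mu^2)^t<\infty
\]
when $\mu^2>\vartheta$. Finally, $\dE[Z_tZ'_t]=\dE[Z_0Z'_0]+\sum_{s<t}\dE[(Z_{s+1}-Z_s)(Z'_{s+1}-Z'_s)]$, because cross terms between different increments, and between $Z_0$ and any increment, vanish by the martingale property. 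Since $Z_0=\phi(\sigma(\rho))$ and $Z'_0=\phi'(\sigma(\rho))$ are deterministic given the root, $\dE[Z_0Z'_0]=(\phi\circ\phi')(\sigma(\rho))$, which yields the last formula.
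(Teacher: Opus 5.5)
Your proposal is correct and follows the paper's proof step for step. Both arguments run through the same chain:
\begin{enumerate}
\item condition on $\cF_t$, so that cross terms between distinct generation-$t$ vertices vanish;
\item apply the compound-Poisson covariance identity (Lemma 13 of \cite{stephan2024sparse}) mode by mode;
\item split $X_eX'_e$ into diagonal and off-diagonal child pairs, which yields $y^{(\phi,\phi')}$;
\item iterate the second identity of Lemma~\ref{lem:functional-idenities} to produce $\bK^t$;
\item use orthogonality of increments for both the $\cL^2$ convergence and the formula for $\dE[Z_tZ'_t]$.
\end{enumerate}
Your handling of the last iteration, applying the $\bK$-identity $t$ times directly to $\sum_{x_t} w(\rho\to x_t)^2\, y^{(\phi,\phi')}(\sigma(x_t))$, is slightly cleaner than the paper's write-up, which introduces an index shift to generation $t+1$ at that step.
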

\begin{proof}
We write $\Delta_t := Z_{t+1}-Z_t, \Delta'_t  =(Z'_{t+1}-Z'_t)$. Given $x_t \in \partial(T,\rho)_t$ we write 
\[
A_{x_t} = \sum_{x_{t+1}:x_t \to x_{t+1}} w(\rho \to x_{t+1})\phi(\sigma(x_{t+1})), 
A'_{x_t} = \sum_{x_{t+1}:x_t \to x_{t+1}} w(\rho \to x_{t+1})\phi'(\sigma(x_{t+1})).
\]
We first compute $\dE_t[\Delta'_t\Delta_t]$. We have that 
\[
\dE_t[\Delta'_t\Delta_t] = \sum_{x_t,x'_t \in \partial(T,\rho)_t} E(x_t,x_t'),
\]
where 
$(\mu\mu')^{t+1}E(x_t,x_t')$ is equal to
\[
\dE_t\left[(A_{x_t} - \mu w(\rho \to x_t)\phi(\sigma(x_t))(A'_{x'_t} - \mu'w(\rho \to x_t')\phi(\sigma'(x'_t))\right],
\]
When $x_t \neq x_t'$ by independence of the non-uniform GW tree  the conditional expectation of products becomes a product of conditional expectations. By the first identity of Lemma 14 we have $\dE_t(A_{x_t}) = \mu w(\phi \to x_t)\phi(\sigma(x_t))$. These two facts imply that $E(x,x_t') = {\rm Cov}_t(A_{x_t},A_{x'_t}) = 0$ unless $x_t = x_t'$. Suppose then that $x_t = x'_t$. Note that we may write $A_{x_t} = \sum_{k = 1}^K\sum_{e \in E_k}B_{x_t,e}$, where $B_{x_t,e}$ is the restriction of the sum $A_{x_t}$ to those terms corresponding to vertices adjacent to $x_t$ via $e$, with $E_k$ being an enumeration of all edges in $T$ of mode $k$ with $x_t$ as a parent. Furthermore note that $B_{x_t,e}$ and $B'_{x_t,e'}$ are independent whenever $e \neq e'$, hence ${\rm Cov}_t(B_{x_t,e},B'_{x_t,e'}) = 0$. Therefore 
\begin{align*}
(\mu\mu')^{t+1}E(x_t,x_t) &= {\rm \Cov}_t(A_{x_t},A'_{x_t}) \\
&= \sum_{u,u' = 1}^U\sum_{e \in E_u,e' \in E'_u} {\rm Cov }_t(B_{x_t,e},B'_{x_t,e'}) \\
&= \sum_{k = 1}^K\sum_{e \in E_u}{\rm Cov}_t(B_{x_t,e},B'_{x_t,e}) \\
&= \sum_{k = 1}^K d^{(k)} \dE_t(X_{e_k}X'_{e_k}),
\end{align*}
where $e_k$ is a random $q^{(k)}$ uniform hyper edge containing $x_t$ and  \[X_{e} = \sum_{y \in e, y \neq x_t} w(\rho \to x_t)w^{(k)}\phi(\sigma(y)).\]
Note that the last equality follows from \cite[Lemma 13]{stephan2024sparse}. Next note that 
\begin{align*}
X_{e_k}X'_{e_k} = &\sum_{e_k \ni y \neq x_t} w(\rho \to x_t)^2\left(w^{(k)}\right)^2\phi(\sigma(y))\phi'(\sigma(y)) \\
&+ \sum_{x_t \neq y \in e_u \ni y' \neq x_t} w(\rho \to x_t)^2\left(w^{(k)}\right)^2\phi(\sigma(y))\phi'(\sigma(y')).
\end{align*}
By Lemma 13 of \cite{stephan2024sparse} and the second identity of Lemma \ref{lem:functional-idenities} we have 
\begin{align*}
&\dE_t \left[\sum_{k = 1}^K d^{(k)}\sum_{e_u \ni y \neq x_t} w(\rho \to x_t)^2\left(w^{(k)}\right)^2\phi(\sigma(y))\phi'(\sigma(y))\right] \\
&= \dE_t\left[\sum_{x_{t+1}:x_t \to x_{t+1}}w(\rho \to x_{t+1})^2(\phi \circ \phi')(\sigma(x_{t+1})))\right] \\
&= w(\rho \to x_t)^2[\bK(\phi \circ \phi')](\sigma(x_t)).
\end{align*}
Next by repeating our analysis of $X_e$ from Lemma \ref{lem:functional-idenities} to $\dE_t\left[\sum_{x_t \neq y \in e_u \ni y' \neq x_t}\phi(\sigma(y))\phi'(\sigma(y'))\right]$ we get
\begin{align*}
    &d^{(k)} \dE_t\left[w(\rho \to x_t)^2\left(w^{(k)}\right)^2\sum_{x_t \neq y \in e_u \ni y' \neq x_t}\phi(\sigma(y))\phi'(\sigma(y'))\right] \\
    &= w(\rho \to x_t)^2\left(w^{(k)}\right)^2(q^{(k)}-1)(q^{(k)} - 2)\sum_{j,k \in [r]}\sum_{\bar{\ell} \in [r]^{q^{(k)} - 3}}p_{ijk,\underline{l}}\left(\prod_{m \in \underline{\ell}} \pi_m\right) \pi_j\pi_k\phi_j\phi'_k \\
    &= w(\rho \to x_t)^2\left(w^{(k)}\right)^2
    (q^{(k)} - 1)(q^{(k)} - 2)[Q^{(3, k)} \times_1 \phi \times_2 \phi'](i)
\end{align*}
All together this gives 
\begin{align*}
  \frac{(\mu\mu')^{t+1}E(x_t,x_t)}{(w_\rho \to x_t)^2} &= [\bK(\phi \circ \phi')](\sigma(x_t)) + \sum_{k = 1}^K \left(w^{(k)}\right)^2(q^{(k)}-1)(q^{(k)}-2)[Q^{(3, k)} \times_1 \phi \times_2 \phi'](\sigma(x_t)) \\
  &= y^{(\phi,\phi')}(\sigma(x_t)).  
\end{align*}

Applying the second identity of Lemma \ref{lem:functional-idenities} with $\xi =  y^{(\phi,\phi')}$ and summing over all vertices in $\partial(T,\rho)_t$ gives
\begin{align*}
(\mu \mu')^{t+1}\dE_t[\Delta_t\Delta'_t] &= \dE_t\left[\sum_{x_{t+1} \in \partial(T,\rho)_{t+1}} w(\rho \to x_{t+1})^2y^{(\phi,\phi')}\sigma(x_{t+1})\right]\\
&= \sum_{x_t \in \partial(T,\rho)_t} w(\rho \to x_t)^2[\bK y^{(\phi,\phi')}](\sigma(x_{t}))
\end{align*}
Since $\dE[\Delta_t \Delta_t'] = \dE_0[\dE_1[\cdots \dE_t[\Delta_t\Delta'_t] \cdots]]$ we may iterate the above procedure and conclude that 
$\dE[\Delta_t \Delta_t'] = (\mu\mu')^{-(t+1)}\bK^t y^{(\phi,\phi')}(\sigma(x_t))$.
\par
For the convergence part, since both $Z_t$ and $Z_t'$ are $\mathcal{F}$-martingales, if $t \le t'$, 
\[
\dE[Z_tZ_t'] = Z_0Z_0' + \sum_{s = 0}^{t-1} \dE[\Delta_t\Delta_t'] = (\phi \circ \phi')(\sigma(\rho)) + \sum_{s = 0}^{t-1}(\mu\mu')^{-s}\bK^s y^{(\phi,\phi')}.
\]
By Doob's second martingale convergence theorem, $(Z_t)_{t \ge 0}$ converges in $\cL^2$ whenever $\dE[Z_t^2]$ is uniformly bounded. From the definition of $\dE[Z_t^2]$ it follows that this happens whenever $\mu^2 > {\rm rad}(\bK) = \vartheta$.
\end{proof}
\subsection{A top-down approach: Galton-Watson transforms}
The definition of a non-uniform Galton-Watson hypertree is such that the law of a Galton-Watson tree depends only on $\sigma(\rho)$. Therefore for a functional $f : \mathcal{G_*} \to \R$ we can define its Galton-Watson transofrm $\overline{f} : [r] \to \R$ via 
\begin{equation}\label{eq:def_gw_transform}
    \overline{f}(k) = \dE_{\sigma(\rho) = k}[f(T,\rho)].
\end{equation}

\begin{proposition}\label{prop:gw_transform:computations}
Let $(\mu,\phi),(\mu',\phi')$ be eigenpairs of $\bQ$ and let $f_{\xi,t}$ be the functional defined in \eqref{eq:def_f_tree}. Define $F_{\phi,t} = (f_{\phi,{t+1}}-\mu f_{\phi,t})^2$ Then 
\begin{align*}
    \overline{f_{\phi,t}} &= \mu^t \phi, \\
    \overline{f_{\phi,t}f_{\phi',t}} &= (\mu\mu')^t\left(\phi \circ \phi' + \sum_{s = 0}^{t-1} \frac{\mathbf{K}^s y^{(\phi,\phi')}}{(\mu\mu')^{s+1}}\right), \\
    \overline{F_{\phi,t}} &= \mathbf{K}^t y^{(\phi,\phi)}.
\end{align*}
\end{proposition}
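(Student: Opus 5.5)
The plan is to read off all three identities from the martingale computations in Propositions~\ref{prop:martingale} and~\ref{prop:eigenpair_identities}. Since the law of $(T,\rho)$ depends only on $\sigma(\rho)$, each Galton--Watson transform $\overline{f}(k)$ is just the corresponding expectation computed under $\sigma(\rho)=k$. In the notation of Proposition~\ref{prop:martingale}, the functional satisfies $f_{\phi,t}(T,\rho)=\mu^t Z_t$.

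\textbf{First identity.} Proposition~\ref{prop:martingale} says $(Z_t)$ is a martingale with mean $\phi(\sigma(\rho))$. Hence $\dE_{\sigma(\rho)=k} f_{\phi,t} = \mu^t \phi(k)$, which gives $\overline{f_{\phi,t}}=\mu^t\phi$. Alternatively, one can iterate the first identity of Lemma~\ref{lem:functional-idenities} $t$ times and use $\bQ\phi=\mu\phi$.

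\textbf{Second identity.} Write $f_{\phi,t}f_{\phi',t}=(\mu\mu')^t Z_tZ'_t$. Then apply the formula for $\dE[Z_tZ'_t]$ from Proposition~\ref{prop:eigenpair_identities}, conditioned on $\sigma(\rho)=k$. This yields exactly
\[
(\mu\mu')^t\Big(\phi\circ\phi' + \sum_{s=0}^{t-1}(\mu\mu')^{-(s+1)}\bK^s y^{(\phi,\phi')}\Big)(k).
\]
One point needs checking: the displayed formula in that proposition has a typo in its exponent, $(\mu\mu')^{-s}$ versus $(\mu\mu')^{-(s+1)}$. The correct exponent $-(s+1)$ comes from summing the increment covariances $\dE[\Delta_s\Delta'_s]=(\mu\mu')^{-(s+1)}\bK^s y^{(\phi,\phi')}$. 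The orthogonality of martingale increments, $\dE[\Delta_s Z'_u]=0$ for $u\le s$, gives $\dE[Z_tZ'_t]=Z_0Z'_0+\sum_{s<t}\dE[\Delta_s\Delta'_s]$.

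\textbf{Third identity.} Note that $f_{\phi,t+1}-\mu f_{\phi,t}=\mu^{t+1}(Z_{t+1}-Z_t)=\mu^{t+1}\Delta_t$. Therefore $\overline{F_{\phi,t}}=\mu^{2(t+1)}\dE[\Delta_t^2]$. Proposition~\ref{prop:eigenpair_identities} with $\phi'=\phi$ gives $\dE[\Delta_t^2]=\mu^{-2(t+1)}[\bK^t y^{(\phi,\phi)}](\sigma(\rho))$, so the powers of $\mu$ cancel and the claim follows.

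\textbf{Main obstacle.} The only delicate part is the covariance computation inside Proposition~\ref{prop:eigenpair_identities}, which we are allowed to assume. It requires care on two points: that only the diagonal terms $x_t=x'_t$ and $e=e'$ survive, and that the per-edge second moment splits into the $\bK(\phi\circ\phi')$ term and the $\bQ^{(3)}$ term. Beyond that, the work is bookkeeping of the powers of $\mu$.
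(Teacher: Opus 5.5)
Your proposal is correct and follows the paper's proof essentially verbatim. The paper also reads off the three identities from the martingale $Z_t=\mu^{-t}f_{\phi,t}$ of Proposition~\ref{prop:martingale} and from the $\dE[Z_tZ'_t]$ and increment-covariance formulas of Proposition~\ref{prop:eigenpair_identities}, using $F_{\phi,t}=\mu^{2(t+1)}(Z_{t+1}-Z_t)^2$. Your exponent check is also right: the statement of Proposition~\ref{prop:eigenpair_identities} has the correct $(\mu\mu')^{-(s+1)}$, and only the final display of its proof drops the $+1$.
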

\begin{proof}
    From Proposition \ref{prop:martingale} we have that $Z_t = \mu^{-t}f_{\phi,t}$ is martingale. Therefore 
    \[
    \overline{f_{\phi,t}} = \dE_{\sigma(\rho) = k}[f_{\phi,t}] = \dE_{\sigma(\rho) = k}[\mu^t Z_t] = \mu^t \dE_{\sigma(\rho) = k} [Z_0] = \mu^t\phi(k). 
    \]
    The identity for $\overline{f_{\phi,t}f_{\phi',t}}$ is the last conclusion of Proposition \ref{prop:eigenpair_identities}. The identity for $F_{\phi,t}$ follows from the identity of covariances of martingale differences from Proposition \ref{prop:eigenpair_identities}, since here $F_{\phi,t} = (f_{\phi,t+1} - \mu f_{\phi,t})^2 = (\mu^{t+1})^2(Z_{t+1}-Z_t)^2$.
\end{proof}

The results of Proposition~\ref{prop:gw_transform:computations} can also be recovered using a ``top-down'' approach, outlined below. For a vector $a \in \R^k$, we define the following operations on functionals:
\begin{align}
    \partial_a f(T, \rho) &= \sum_{e \ni \rho} a(e)\sum_{\substack{\rho_1 \in e \setminus \{\rho\}  }} f(T_1, \rho_1) \label{eq:gw_functionals:def_one_step} \\
    [f_1, f_2]_a(T, \rho) &= \sum_{e \ni \rho} a(e) \sum_{\substack{\rho_1, \rho_2 \in e \setminus \{\rho\}}} f_1(T_1, \rho_1) f_2(T_2, \rho_2). \label{eq:gw_functionals:def_one_step_correlation}
\end{align}
where $a(e) = a^{(k)}$ whenever $e \in E_k$ and $T_i$ is the subtree rooted at $\rho_i$. If no weight $a$ is given, we take $a = \mathbf{1}$.

The proof of Lemma~\ref{lem:functional-idenities} and Proposition~\ref{prop:eigenpair_identities} for $t = 0$ directly imply the following result:
\begin{lemma}\label{lem:gw_functionals:one_step_result}
    Let $f_1, f_2$ be functionals on hypertrees. For any weight vectors $a_1, a_2$, the following identities hold:
    \begin{align*}
        \overline{\partial_{a_1} f_1} &= \bQ_{a_1} \overline{f_1} \\
        \overline{[f_1, f_2]_{a_1}} &= \bQ_{a_1} (\overline{f_1 \cdot f_2}) + \bQ^{(3)}_{a_1} \times_1 \overline{f_1} \times_2 \overline{f_2} \\
        \overline{\partial_{a_1} f_1 \cdot \partial_{a_2}f_2} &= \bQ_{a_1} \overline{f_1} \cdot \bQ_{a_2} \overline{f_2} + \overline{[f_1, f_2]_{a_1 \circ a_2}}.
    \end{align*}
\end{lemma}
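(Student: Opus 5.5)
The plan is to decompose each functional according to the hyperedges hanging off the root and to use the fact that, in the non-uniform Galton--Watson hypertree, these hyperedges form independent Poisson families. Condition on $\sigma(\rho)=i$. For each mode $k$, the mode-$k$ hyperedges at $\rho$ form a Poisson($d^{(k)}$) collection of i.i.d.\ marked hyperedges. A hyperedge $e$ carries children $\rho_1,\dots,\rho_{q^{(k)}-1}$ whose type vector has law $\frac{1}{d^{(k)}}p^{(k)}_{i,\underline j}\prod_{\ell\in\underline j}\pi_\ell$. Conditionally on these types, the subtrees $(T_j,\rho_j)$ are independent copies of the hypertree rooted at type $\sigma(\rho_j)$, so $\dE[f(T_j,\rho_j)\mid \sigma(\rho_j)]=\overline f(\sigma(\rho_j))$. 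The contribution of $e$ to $\partial_{a}f$ is $Y_e=a^{(k)}\sum_{\rho_1\in e\setminus\{\rho\}}f(T_1,\rho_1)$. The first identity then follows from Wald's identity for Poisson sums, i.e.\ \cite[Lemma 13]{stephan2024sparse}, which gives $\dE\sum_{e\in E_k}Y_e=d^{(k)}\dE Y_e$. We then repeat the symmetrization computation in Lemma~\ref{lem:functional-idenities}: by permutation invariance of $p^{(k)}$ the sum over the $q^{(k)}-1$ children produces the factor $(q^{(k)}-1)$. This yields $a^{(k)}(q^{(k)}-1)[\bQ^{(k)}\overline f](i)$, and summing over $k$ gives $\bQ_{a}\overline f$.

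For the second identity, we split the double sum over $\rho_1,\rho_2\in e\setminus\{\rho\}$ into the diagonal part $\rho_1=\rho_2$ and the off-diagonal part $\rho_1\ne\rho_2$. The diagonal part has the form $\partial_{a}(f_1f_2)$, so the first identity gives $\bQ_a\overline{f_1f_2}$. In the off-diagonal part, $T_1$ and $T_2$ are conditionally independent given the types, so the expectation factorizes as $\overline{f_1}(\sigma(\rho_1))\overline{f_2}(\sigma(\rho_2))$. Summing over ordered pairs of distinct children, there are $(q^{(k)}-1)(q^{(k)}-2)$ of them, and symmetrizing as in the proof of Proposition~\ref{prop:eigenpair_identities} gives $\sum_{j,\ell}\pi_j\pi_\ell\sum_{\underline m}p^{(k)}_{ij\ell\underline m}\prod\pi_m\,\overline{f_1}(j)\overline{f_2}(\ell)$. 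This is exactly $[\bQ^{(3,k)}\times_1\overline{f_1}\times_2\overline{f_2}](i)$. Weighting by $a^{(k)}$ and summing over $k$ gives $\bQ^{(3)}_{a}\times_1\overline{f_1}\times_2\overline{f_2}$.

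For the third identity, we write $\partial_{a_1}f_1\cdot\partial_{a_2}f_2=\sum_{e,e'}Y^{(1)}_eY^{(2)}_{e'}$. We then use the second-moment formula for a Poisson point process, which the covariance computation in Proposition~\ref{prop:eigenpair_identities} already relies on: for a Poisson process with intensity measure $\nu$,
\[
\dE\sum_{e\ne e'}g(e)h(e')=\int g\,d\nu\int h\,d\nu .
\]
Hyperedges of different modes come from independent processes, and this formula extends to them. Hence the off-diagonal part ($e\neq e'$) has expectation $\dE\partial_{a_1}f_1\cdot\dE\partial_{a_2}f_2=\bQ_{a_1}\overline{f_1}\cdot\bQ_{a_2}\overline{f_2}$, taken entrywise in $i$. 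The diagonal part $e=e'$ equals $\sum_e a_1(e)a_2(e)\sum_{\rho_1,\rho_2}f_1f_2$, which is exactly $[f_1,f_2]_{a_1\circ a_2}$; we take its Galton--Watson transform without evaluating it further.

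The main obstacle is justifying the integrability needed for Wald's identity and the second-moment formula, together with the exact Poisson bookkeeping, especially when the weights $a$ have mixed signs. We plan to handle this by assuming that $f_1,f_2$ have finite second moments under the hypertree law. For the functionals actually used, these moments are controlled by Lemma~\ref{lem:growth-poisson-tree}. All identities can then be checked first for nonnegative parts and extended by linearity.
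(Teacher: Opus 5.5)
Your proposal is correct and follows the same route as the paper. The paper only states that the lemma is the $t=0$ case of two earlier computations: the Poisson--Wald identity with symmetrization over the children from Lemma~\ref{lem:functional-idenities}, and the diagonal/off-diagonal split within a hyperedge together with the Poisson covariance identity \cite[Lemma 13]{stephan2024sparse} from Proposition~\ref{prop:eigenpair_identities}. That covariance identity is equivalent to your second factorial moment formula, and your finite-second-moment assumption is the implicit hypothesis needed for the Galton--Watson transforms to be well defined.
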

We leave it to the reader to check that these relations, along with the recurrence $f_{\phi, t+1} = \partial_{w} f_{\phi, t}$, imply Proposition~\ref{prop:gw_transform:computations}. However, the ``bottom-up'' approach is necessary to ensure that $(Z_t)_{t \geq 0}$ is a convergent martingale w.r.t $\cF$.

\subsection{Spatial averaging of hypergraph functions}\label{sec:hypergraph_functional}
We say a function $f$ from $\mathcal{G}_*$  to $\R$ is $t$-local if $f(G,o)$ is only a function of $(G,o)_t$. We first provide a moment inequality for hypergraph functions the Efron-Stein inequality. 
\begin{lemma}\label{lem:spatial-average-1}
    Let $f,\psi : \mathcal{G}_* \to \R$ be two $t$-local functions such that $|f(g,o)| \le \psi(g,o)$ for all $(g,o) \in \mathcal{G}_*$ and $\psi$ is a non-decreasing function with respect to hyperedge inclusion. Then there exists a universal constant $c > 0$ such that for all $p \ge 2$,
    \[
    \left(\dE\left[\sum_{o \in [n]} f(G,o) - \dE(f(G,o))\right]^p\right)^{1/p} \le c p^{3/2} \sqrt{n}\, R_g^t\left(\dE\left[\max_{o \in [n]} \psi(G,o)^{2p}\right]\right)^{\frac1{2p}}
    \]
\end{lemma}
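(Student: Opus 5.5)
We will use the Efron--Stein inequality in its $L^p$ form, following Lemma 16 of \cite{stephan2024sparse} and \cite{bordenave2018nonbacktracking}. The aim is to bound how much $Y := \sum_{o\in[n]} f(G,o)$ moves when one vertex's hyperedges are resampled. We write $G$ as a function of independent inputs indexed by vertices: for $x\in[n]$, let $X_x$ be the set of hyperedges (over all layers $k\in[K]$) whose smallest vertex is $x$. The $X_x$ are independent, and $G$ is determined by $(X_1,\dots,X_n)$. Let $G^{(x)}$ be the hypergraph with $X_x$ replaced by an independent copy $X_x'$, and let $G_x$ be the hypergraph with all hyperedges of $X_x$ and $X'_x$ removed. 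Then $G_x\subseteq G$ and $G_x\subseteq G^{(x)}$.

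The $L^p$ Efron--Stein inequality (Boucheron--Bousquet--Lugosi--Massart) gives
\[ \big\|Y-\dE Y\big\|_p \le c\sqrt{p}\,\Big\|\sum_{x}(Y-Y_x)^2\Big\|_{p/2}^{1/2}, \]
where $Y_x$ is any $(X_y)_{y\ne x}$-measurable proxy. We take $Y_x=\sum_o f(G_x,o)$. By $t$-locality, $f(G,o)\neq f(G_x,o)$ only when $(G,o)_t$ contains a hyperedge of $X_x$, which forces $o\in (G,x)_t$ up to one extra hyperedge. Moreover both values are bounded by $\psi(G,o)$, since $\psi$ is monotone under hyperedge inclusion. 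Hence
\[ |Y-Y_x|\le 2\,|(G,x)_{t+1}|\cdot\max_{o}\psi(G,o). \]

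Summing over $x$ gives $\sum_x (Y-Y_x)^2\le 4\max_o\psi^2\cdot\sum_x |(G,x)_{t+1}|^2$. Hölder's inequality (in the form $\|AB\|_{p/2}\le\|A\|_p\|B\|_p$) splits the two factors. For the neighbourhood sizes, Lemma~\ref{lem:growth-exploration} gives $\|\sum_x|(G,x)_{t+1}|^2\|_p\le n\,\max_x\||(G,x)_{t+1}|^2\|_p\lesssim n\,p^2R_g^{2t}$, with the constant $R_g$ absorbed. Assembling,
\[ \|Y-\dE Y\|_p\lesssim \sqrt p\,\big(n\,p^2R_g^{2t}\big)^{1/2}\,\|\max_o\psi\|_{2p}= p^{3/2}\sqrt n\,R_g^t\,\dE\big[\max_o\psi^{2p}\big]^{1/(2p)}, \]
which is the stated bound.

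The main obstacle is the locality step: making rigorous that changing $X_x$ affects only roots $o$ within distance $t+1$ of $x$ in $G$ (or in $G^{(x)}$). This needs the intermediate graph $G_x$, so that monotonicity of $\psi$ applies on both sides and the affected roots are counted in $G$ itself. The same argument covers hyperedges of every uniformity $q^{(k)}$, since a hyperedge of any layer containing $x$ only enlarges $(G,x)_1$. The $p$-dependence also needs care: I would use the moment bound $\dE[|(G,x)_t|^{2p}]^{1/(2p)}\lesssim pR_g^t$ from Lemma~\ref{lem:growth-exploration}, rather than the $\log n$ maximal bound, to avoid an extra $\log n$ factor.
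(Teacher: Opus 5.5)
Your proposal is correct and follows essentially the same route as the paper: the $L^p$ Efron--Stein inequality over vertex-indexed independent blocks of hyperedges, the proxy $Y_x$ obtained by deleting the hyperedges attached to $x$, the bound $|Y-Y_x|\le 2|(G,x)_t|\max_o\psi(G,o)$ from locality and monotonicity of $\psi$, and the pointwise moment bound of Lemma~\ref{lem:growth-exploration}. The differences are cosmetic: you index blocks by the smallest rather than the largest vertex, and the independent copy $X'_x$ is unnecessary. You use radius $t+1$ instead of $t$, which only costs a constant factor $R_g$. You use H\"older plus Minkowski where the paper uses the power-mean inequality plus Cauchy--Schwarz.
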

\begin{proof}
    For every $x \in [n]$ we define $H_x$ to be the collection of undirected hyper edges of $H$ such that the largest vertex contained in every $e$ in $H_x$ is $x$. Clearly $H = \cup_{x \in [n]} H_x$, the vector $(H_1,\cdots, H_n)$ has independent entries and there is a measurable function $F$ such that 
    \[
    Y := \sum_{o \in [n]} f(G,o) = F(H_1,\ldots,H_n).
    \]
    Next we define $G_x$ to be the subgraph of $G$ whose hyperedge set is $H \setminus H_x$ and define 
    \[
    Y_x = \sum_{o \in [n]} f(G_x,o) = F(H_1,\ldots, H_{x-1},\emptyset, H_{x+1},\ldots,H_n).
    \]
    Clearly $Y_x$ is $H \setminus H_x$ measurable. By the Efron-Stein inequality (see Theorem 15.6 of \cite{boucheron2013concentration}), for all $p \ge 2$,
    \[
    (\dE|Y - \dE[Y]|^p) \le (c\sqrt{p})^p\dE\left[\left(\sum_{x \in [n]} (Y-Y_x)^2\right)^{p/2}\right].
    \]
    Since $f$ is $t$-local $|f(G,o) - f(G_x,o)|$ is only non-zero when $o$ is at most distance $t$ from $x$. Therefore 
    \[
    |Y-Y_x| \le \sum_{o \in (G,x)_t} |f(G,o) - f(G_x,o)|\le \sum_{o \in (G,x)_t} (\psi(G,o) + \psi(G_x,o)) \le 2|(G,x)_t|\max_{o \in [n]}\psi(G,o).
    \]
    where the last inequality uses monotonicity of $\psi$ with respect to hyperedge inclusion. Since $x \mapsto x^{p/2}$ is convex in $x$, since $p/2 \ge 1$, we have that $\left(\sum_i |x_i|^2\right)^{p/2} \le n^{p/2 - 1}\sum_{i} |x_i|^{p}$. Therefore 
    \begin{align*}
    \dE\left[\left(\sum_{x \in [n]} (Y-Y_x)^2\right)^{p/2}\right] &\le n^{p/2 - 1}2^p \dE\left[\sum_{x \in [n]} |(G,x)_t|^p \cdot \max_{o \in [n]} |\psi(G,o)^p|\right] \\
    &\le n^{p/2} 2^p\left(\max_{x \in [n]}\dE\left[|(G,x)_t|^{2p}\right]\right)^{1/2}\left(\dE\left[\max_{o \in [n]} |\psi(G,o)|^{2p}\right]\right)^{1/2}
    \end{align*}
    Applying the bounds from Lemma \ref{lem:growth-exploration} for $\dE|(G,x)_t|^{2p}$ and $\dE \max_{o \in [n]}|(G,o)_t|^{2p}$, gives the desired bound. 
\end{proof}
\begin{lemma}\label{lem:spatial-average-2}
    Let $t \geq 0$ and $f : \mathcal{G}_* \to \R$ be a $t$-local function. 
    Then for all $x \in [n]$
    \[
    |\dE f(G,x) - \dE f(T,x)| \lesssim \frac{R_g^{t}\log(n)}{\sqrt{n}}\left(\sqrt{\dE|f(G,x)|^2} \lor \sqrt{\dE|f(T,x)|^2}\right)
    \]
\end{lemma}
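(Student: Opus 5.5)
We will derive Lemma~\ref{lem:spatial-average-2} from the total variation coupling of Proposition~\ref{prop:coupling}, combined with Cauchy--Schwarz on the event where the coupling fails. Since $f$ is $t$-local, $f(G,x)$ depends only on $(G,x)_t$ and $f(T,x)$ only on $(T,x)_t$.

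By the maximal coupling characterization of total variation distance, we can construct $(G,x)_t$ and $(T,x)_t$ on a common probability space such that
\[
\dP\bigl((G,x)_t \neq (T,x)_t\bigr) = d_{\Var}\bigl(\cL((G,x)_t),\cL((T,x)_t)\bigr) =: \eps_t .
\]
Let $\mathcal{E}$ be the event $\{(G,x)_t \neq (T,x)_t\}$. On $\mathcal{E}^c$ the two functional values coincide, hence
\[
|\dE f(G,x) - \dE f(T,x)| \le \dE\bigl[(|f(G,x)| + |f(T,x)|)\ind_{\mathcal{E}}\bigr].
\]
Applying Cauchy--Schwarz to each term gives
\[
|\dE f(G,x) - \dE f(T,x)| \le \sqrt{\eps_t}\left(\sqrt{\dE|f(G,x)|^2} + \sqrt{\dE|f(T,x)|^2}\right) \le 2\sqrt{\eps_t}\left(\sqrt{\dE|f(G,x)|^2}\lor\sqrt{\dE|f(T,x)|^2}\right).
\]
The marginals are unchanged by the coupling, so these second moments are exactly the ones appearing in the statement.

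It then remains to insert Proposition~\ref{prop:coupling}, which gives $\eps_t \lesssim \log(n)R_g^{2t}/n$. This yields $\sqrt{\eps_t}\lesssim \sqrt{\log n}\,R_g^t/\sqrt n$, which is at most $\log(n)R_g^t/\sqrt n$ as claimed.

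The only point needing care is that Proposition~\ref{prop:coupling} is stated for a fixed radius $\ell$. Its proof applies verbatim for any radius $t>0$, and at $t=0$ both neighbourhoods reduce to the root with the same spin, so $\eps_0=0$. If a $\log^2 n$ factor from the tangle-free step (Lemma~\ref{lem:tangle-free}(ii)) propagates into $\eps_t$, its square root is still a single $\log n$, so the stated bound is unaffected.
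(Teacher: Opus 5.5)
Your proof is correct and follows the paper's argument: couple $(G,x)_t$ with $(T,x)_t$ via Proposition~\ref{prop:coupling}, use $t$-locality to see that the two values agree off the failure event, and apply Cauchy--Schwarz on that event (your handling of the possible $\log^2 n$ factor also matches what the paper uses). The only cosmetic difference is that you apply Cauchy--Schwarz to $|f(G,x)|$ and $|f(T,x)|$ separately rather than to their difference, which sidesteps the paper's slightly loose step $\dE|f(G,x)-f(T,x)|^2 \le \dE|f(G,x)|^2 + \dE|f(T,x)|^2$; that step is missing a factor $2$, which is harmless inside $\lesssim$.
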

\begin{proof}
    Recall, from Proposition \ref{prop:coupling}, that there exists a coupling between $G$ and $T$ for which 
    \[
    d_{\Var}(\cL(G,x)_t, \cL(T,x)_t) \lesssim \frac{\log(n)^2 R_g^{2t}}{n}.
    \]
    Let $\mathcal{E}_t$ denote the event that, with respect to this coupling, that $(G,x)_t = (T,x)_t$. Since $f$ is $t$-local the difference is 0 when $\mathcal{E}_t$ occurs. Therefore 
    \begin{align*}
        |\dE f(G,x) - \dE f(T,x)| 
        &\le \dE |f(G,x) - f(T,x)| \\
        &\le \dE |(f(G,x) - f(T,x))\textbf{1}_{\mathcal{E}_t^c}| \\
        &\le \sqrt{  \dP(\mathcal{E}_t^c)\cdot \dE |(f(G,x) - f(T,x))|^2} \\
        &\le \sqrt{  \dP(\mathcal{E}_t^c)\cdot (\dE |f(G,x)|^2 + |f(T,x))|^2)} \\
        &\lesssim \sqrt{\frac{\log(n)^2 R_g^{2t}}{n}}\left(\sqrt{\dE|f(G,x)|^2} \lor \sqrt{\dE|f(T,x)|^2}\right) 
    \end{align*}
\end{proof}
\begin{lemma}\label{lem:spatial-average-3}
Let $t > 0$ and $f : \mathcal{G}_* \to \R$ be a $t$-local function such that $f(g,o) \le \alpha|(g,o)_\ell|^\beta$ for some $\alpha,\beta$ not depending on $g$.  Then, for all $s \ge 2$, with probability $1 - n^{-s}$,
\[
\left|\sum_{x \in [n]} f(G,x) - \dE \left(\sum_{x \in [n]} f(T,x)\right)\right| \le 
(sc)^{\beta + 3/2} \alpha  (\log n)^{3/2+\beta}  R_g^{t(1+\beta)} \sqrt{n},
\]     
where $c > 0$ is a universal constant.
\end{lemma}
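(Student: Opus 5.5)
We split the deviation into a fluctuation term and a bias term:
\[
\Big|\sum_{x} f(G,x) - \dE\sum_x f(T,x)\Big| \le \Big|\sum_x f(G,x) - \dE\sum_x f(G,x)\Big| + \sum_x \big|\dE f(G,x) - \dE f(T,x)\big| .
\]
We control the first term with Lemma~\ref{lem:spatial-average-1} and a Markov bound, and the second term with Lemma~\ref{lem:spatial-average-2}. Here $\ell$ in the hypothesis is read as $t$, so $f$ is dominated by $\alpha|(g,o)_t|^\beta$.

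\textbf{Fluctuation term.} Take $\psi(g,o) = \alpha|(g,o)_t|^\beta$. It is $t$-local, non-decreasing under hyperedge inclusion, and dominates $|f|$; if $f$ can be negative, we assume $|f|\le \psi$. Lemma~\ref{lem:spatial-average-1} then gives, for $p\ge 2$,
\[
\Big(\dE\Big|\sum_x f(G,x)-\dE\sum_x f(G,x)\Big|^p\Big)^{1/p}\le c p^{3/2}\sqrt n\, R_g^t\,\alpha \Big(\dE\max_o |(G,o)_t|^{2p\beta}\Big)^{1/(2p)} .
\]
The last factor is bounded by the fourth estimate of Lemma~\ref{lem:growth-exploration}, used with exponent $2p\beta$, which gives at most $(2c_3(\log n+2p\beta))^\beta R_g^{t\beta}$. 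We choose $p = s\log n$. Markov's inequality then gives $\dP(|\cdot|>e\,\|\cdot\|_p)\le e^{-p}= n^{-s}$. On the complement the deviation is at most
\[
C (s\log n)^{3/2}(s\log n)^\beta \alpha R_g^{t(1+\beta)}\sqrt n ,
\]
which is the claimed form with $c$ absorbing the constants, using $\log n + 2p\beta\lesssim s\beta\log n$.

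\textbf{Bias term.} For each $x$, Lemma~\ref{lem:spatial-average-2} gives
\[
|\dE f(G,x)-\dE f(T,x)|\lesssim \frac{R_g^t\log n}{\sqrt n}\,\alpha\Big(\sqrt{\dE|(G,x)_t|^{2\beta}}\vee\sqrt{\dE|(T,x)_t|^{2\beta}}\Big) .
\]
The moment bounds in Lemma~\ref{lem:growth-poisson-tree} and Lemma~\ref{lem:growth-exploration} give $(\dE|(\cdot,x)_t|^{2\beta})^{1/2}\le (4c_3\beta)^\beta R_g^{t\beta}$. Summing over the $n$ vertices yields $\lesssim \alpha(c\beta)^\beta\log n\, R_g^{t(1+\beta)}\sqrt n$. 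This is dominated by the fluctuation bound since $s\ge 2$.

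\textbf{Main obstacle.} The bookkeeping is delicate in the powers of $p$ and $\log n$ coming from $\dE\max_o\psi^{2p}$. The maximum over $o$ costs a $\log n$ that must combine with the $p^{3/2}$ to give exactly $(s\log n)^{\beta+3/2}$. One also has to check that the $(c_3p)^p$-type growth in Lemma~\ref{lem:growth-exploration} is genuinely polynomial in $p$ per unit exponent, so that choosing $p\asymp s\log n$ still gives probability $n^{-s}$ without extra factors. A secondary point is that Lemma~\ref{lem:spatial-average-1} needs $|f|\le\psi$ rather than a one-sided bound on $f$; we will assume the hypothesis is meant in absolute value.
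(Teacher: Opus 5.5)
Your proposal is correct and follows the paper's proof step for step. It uses the same split into a fluctuation term and a bias term, bounds the fluctuation via Lemma~\ref{lem:spatial-average-1} with $\psi=\alpha|(g,o)_t|^\beta$, the maximal moment bound of Lemma~\ref{lem:growth-exploration}, and Markov's inequality with $a=e$, $p=s\log n$, and handles the bias via Lemma~\ref{lem:spatial-average-2}. The paper also implicitly reads $\ell$ as $t$ and the domination as $|f|\le\psi$. Your bookkeeping ($\log n+2p\beta$ instead of the paper's $\log n+p$) is slightly more careful; the resulting $\beta$-dependence in the constant is harmless because $\beta$ is fixed in every application.
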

\begin{proof}
    To begin we note that 
    \begin{align*}
    &\left|\sum_{x \in [n]} f(G,x) - \dE \left(\sum_{x \in [n]} f(T,x)\right)\right| \\
    &\le
    \left|\sum_{x \in [n]} f(G,x) - \dE \left(\sum_{x \in [n]} f(G,x)\right)\right| + 
    \sum_{x \in [n]} \left|\dE f(G,x) - \dE f(T,x)\right|.
    \end{align*}
    For the second summation we may apply Lemma \ref{lem:spatial-average-2} and the assumption $f(g,o) \le \alpha|(g,o)|_t^\beta$ to get
    \begin{align*}
        \sum_{x \in [n]} \dE|f(G,x) - f(T,x)|
        &\lesssim R_g^{t}\log(n) \sqrt{n}\cdot \alpha \cdot \left(\sqrt{\dE|(G,x)_t|^{2\beta}} \lor \sqrt{\dE|(T,x)_t|^{2\beta}}\right) \\
        &= R_g^{t}\log(n) \sqrt{n}\cdot \alpha  ((2cp)R_g^{t})^\beta \\
        &\lesssim \alpha (cp)^\beta\log(n)R_g^{(1 + \beta)t} \sqrt{n}.
    \end{align*}
    For the first summation we may apply Lemma \ref{lem:spatial-average-1}, with $\phi(g,o) := \alpha|(g,o)_t|^\beta$ and the upper bound on $\dE \max_{x \in [n]} |(G,x)_t|^p$ from Lemma \ref{lem:growth-exploration} to get
    \begin{align*}
        \left(\dE\left|\sum_{x \in [n]} f(G,x) - \dE \left(\sum_{x \in [n]} f(G,x)\right)\right|^p\right)^{1/p} &\lesssim cp^{3/2} \sqrt{n} R_g^t \cdot \left(\dE \left[ \max_{x \in [n]} \left(\alpha|(G,x)_t|^\beta\right)^{2p} \right]\right)^{1/(2p)}, \\
        & \le \alpha c p^{3/2} \left(\dE\left[\max_{x \in [n]} |(G,x)_t|^{2p\beta}\right]\right)^{1/(2p)} \\
        & \le \alpha cp^{3/2} \sqrt{n} R_g^t (2c)^{\beta}R^{2p\beta t/(2p)}(\log n + p)^\beta
    \end{align*}

    For any random variable $X$ and any $a > 0$, we have by Markov's inequality
    \[ \dP(X > a \dE[X^p]^{1/p}) < a^{-p}. \]
    
Taking $a = e$ and $p = s\log n$ we have, with probability at least $1 - n^{-s}$, that 
\[
\left|\sum_{x \in [n]} f(G,x) - \dE \left(\sum_{x \in [n]} f(G,x)\right)\right| \le 
\alpha c^\beta e (s \log n)^{3/2 + \beta} \sqrt{n} R_g^{t(1 + \beta)}  \lesssim \alpha \log(n)^{3/2 + \beta} \sqrt{n} R_g^{t(1 + \beta)}.
\]
Combining the two bounds via triangle inequality we may conclude the desired upper bound. 
\end{proof}

We can also rephrase the above lemma in terms of the Galton-Watson transform defined in \eqref{eq:def_gw_transform}:

\begin{proposition}\label{prop:concentration:gw_transform}
    Let $f$ be a  $t$-local function $f(g,o) \le \alpha|(g,o)_\ell|^\beta$ for some $\alpha,\beta$ not depending on $g$. Then with probability at least $1 - n^{-2}$,
    \[
    \left|\sum_{x \in [n]} f(G,x) - n\sum_{i \in [r]} \pi_i \bar{f}(i)\right| \lesssim 
\alpha  (\log n)^{3/2+\beta}  R_g^{t(1+\beta)} \sqrt{n}.
    \]
\end{proposition}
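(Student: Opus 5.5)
This is a direct corollary of Lemma~\ref{lem:spatial-average-3}. The only additional ingredient is identifying $\dE\bigl(\sum_{x\in[n]} f(T,x)\bigr)$ with $n\sum_{i\in[r]}\pi_i\bar f(i)$.

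\emph{Step 1: apply the lemma.} Apply Lemma~\ref{lem:spatial-average-3} with $s=2$. With probability at least $1-n^{-2}$,
\[
\Bigl|\sum_{x\in[n]} f(G,x)-\dE\Bigl(\sum_{x\in[n]} f(T,x)\Bigr)\Bigr|\le (2c)^{\beta+3/2}\alpha(\log n)^{3/2+\beta}R_g^{t(1+\beta)}\sqrt n .
\]
The constant $(2c)^{\beta+3/2}$ is absorbed into $\lesssim$, since $\beta$ is treated as a fixed constant.

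\emph{Step 2: rewrite the expectation.} In the expression $\sum_{x} f(T,x)$, each vertex $x$ is the root of a non-uniform Galton--Watson hypertree with root spin $\sigma(\rho)=\sigma(x)$. By the Galton--Watson transform \eqref{eq:def_gw_transform}, $\dE f(T,x)=\bar f(\sigma(x))$, because the law of the tree depends only on the root type. Summing over $x$ and grouping vertices by type, where there are exactly $n\pi_i$ vertices of type $i$ by the definition of $\pi_i$, gives
\[
\sum_x \dE f(T,x)=\sum_{i\in[r]} n\pi_i\,\bar f(i).
\]
This identity is exact, with no error term, so the claimed bound follows immediately from Step 1.

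The only point needing care is the one-sided hypothesis $f\le\alpha|(g,o)_\ell|^\beta$. Lemma~\ref{lem:spatial-average-3} uses a bound on $|f|$ through Lemma~\ref{lem:spatial-average-1}, so I would read the hypothesis as $|f|\le\alpha|(g,o)_t|^\beta$, exactly as in the lemma. If only a one-sided bound were genuinely available, I would split $f=f^+-f^-$ and control each part separately. I expect the intended reading is the absolute-value bound, with locality radius $t$.
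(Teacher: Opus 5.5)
Your proposal matches the paper's proof: apply Lemma~\ref{lem:spatial-average-3} with $s=2$, and use the exact identity $\sum_{x}\dE f(T,x)=n\sum_{i}\pi_i\bar f(i)$, which follows from the definition of $\bar f$ and of $\pi_i$ as empirical proportions. Reading the hypothesis as $|f|\le\alpha|(g,o)_t|^\beta$ is correct, and every application in the paper supplies such a bound; however, your $f^+-f^-$ fallback would not rescue a genuinely one-sided bound, since nothing would then control $f^-$.
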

\begin{proof}
    From the definition of $\bar{f}$ and $\pi$ we have
    \[
    \frac{1}{n}\sum_{x \in [n]} \dE[f(T,x)] = \sum_{i \in [r]} \pi_i \bar{f}(i).
    \]
The proposition is then an immediate consequence of Lemma \ref{lem:spatial-average-3} with $s=2$.
\end{proof}

\subsection{Translating hypertree processes to $B$}\label{sec:hypertree_B}

We now leverage the bounds from the previous section to obtain structural results on $B$. We denote by $\Gamma^t_{(x \to e)}$ (resp. $\Gamma^t_x$) the set of all non-backtracking paths (as in Def.~\ref{def:nb_path}) in $G$ starting at $(x \to e)$ (resp. $x$). For $\gamma = (x_0, e_0, \dots, e_t, x_t, e_{t})$, we define its weight $w(\gamma)$ as
\[ w(\gamma) = \prod_{i=0}^t w(e_i). \]

\begin{lemma}\label{lem:hypertree_process_1}
    Let $\ell \leq \kappa\log_R(n)$. For any $i, j \in [r]$ and $t \leq 3\ell$, with probability $1 - O(n^{12\kappa-1})$,
    \[ \left|\langle \chi_j, D_w B^t J \chi_i \rangle - n \mu_j^{t+1} \delta_{ij}\right| \lesssim \log(n)^{5/2} \|w\|^{t+1}_\infty R_g^{2t+2} \sqrt{n}. \]
\end{lemma}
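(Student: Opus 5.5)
We will write $\langle \chi_j, D_w B^t J\chi_i\rangle$ as a sum of local functionals over the vertices, then apply the spatial-averaging result Proposition~\ref{prop:concentration:gw_transform} together with the Galton--Watson computations of Proposition~\ref{prop:gw_transform:computations}.

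\textbf{Step 1: rewrite as a vertex sum of local functionals.} Expanding $D_w B^t J$ along non-backtracking walks gives
\[
\langle \chi_j, D_w B^t J \chi_i\rangle=\sum_{x\in[n]} \phi_j(\sigma(x)) \sum_{e\ni x} w(e)\, [B^tJ\chi_i](x\to e).
\]
Here $[B^tJ\chi_i](x\to e)$ is the sum, over non-backtracking walks of length $t$ starting at $(x\to e)$ and then one further step inside the final hyperedge, of the product of weights of the hyperedges entered, times $\phi_i$ of the endpoint. Hence the inner expression defines a $(t+1)$-local function
\[
f(G,x)=\phi_j(\sigma(x))\sum_{\gamma} w(\gamma)\,\phi_i(\sigma(x_{t+1})),
\]
where $\gamma$ ranges over non-backtracking paths from $x$ of length $t+1$. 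When $(G,x)_{t+1}$ is a hypertree, $f(G,x)=\phi_j(\sigma(x))\, f_{\phi_i,t+1}(G,x)$, with $f_{\phi_i,t+1}$ the functional in \eqref{eq:def_f_tree}. The bound
\[
|f(g,o)|\le \|\phi_i\|_\infty\|\phi_j\|_\infty \|w\|_\infty^{t+1}\,|(g,o)_{t+1}|
\]
holds in general, up to the tangled case addressed in Step 3. Here $\|\phi\|_\infty\lesssim 1$ since $\pi_k\ge c$. So we may take $\alpha\lesssim \|w\|_\infty^{t+1}$ and $\beta=1$, but the $\beta$ exponent needs care.

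\textbf{Step 2: concentration.} We apply Proposition~\ref{prop:concentration:gw_transform} with locality $t+1$, $\alpha\lesssim\|w\|_\infty^{t+1}$ and $\beta=1$. This gives the error
\[
\|w\|_\infty^{t+1}(\log n)^{5/2}R_g^{2(t+1)}\sqrt n,
\]
which is exactly the claimed error. The stated probability $1-O(n^{12\kappa-1})$ presumably comes from intersecting with the tangle-free event of Lemma~\ref{lem:tangle-free} at radius $3\ell+1$, which has failure probability $\tilde O(n^{4\cdot 3\kappa-1})$. On the tree the Galton--Watson transform is computed by Proposition~\ref{prop:gw_transform:computations}:
\[
\overline{f}(k)=\phi_j(k)\,\mu_i^{t+1}\phi_i(k).
\]
Therefore
\[
n\sum_k \pi_k\overline f(k)=n\mu_i^{t+1}\langle\phi_j,\phi_i\rangle_\pi=n\mu_i^{t+1}\delta_{ij}
\]
by \eqref{eq:pi_orthogonal}, which matches $n\mu_j^{t+1}\delta_{ij}$.

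\textbf{Step 3 (main obstacle): the tangled case.} On a tangle-free graph, $(G,x)_{t+1}$ may still contain one cycle, so walks can be counted with multiplicity. Then $f(G,x)$ no longer equals the tree functional, and the count of paths is at most about twice $|(G,x)_{t+1}|$. This is harmless for the domination bound: we take $\psi=2\alpha|(g,o)_{t+1}|$, which is still monotone. The real issue is that Proposition~\ref{prop:concentration:gw_transform} compares $\sum_x f(G,x)$ with the tree expectation through Lemma~\ref{lem:spatial-average-2}, and that comparison already absorbs the local non-tree discrepancy via the coupling. So the cycle issue is handled by the total-variation coupling of Proposition~\ref{prop:coupling}, and no separate argument is needed beyond checking the domination hypothesis.

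I expect the main care to be in verifying that the constants from Lemma~\ref{lem:growth-exploration}, namely moments of $|(G,x)_{t+1}|$ at $t\le 3\ell$, keep the bound polylogarithmic. This should hold because $p=s\log n$ enters only polylogarithmically.
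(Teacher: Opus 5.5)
Your route is the paper's: write $\langle \chi_j, D_w B^t J\chi_i\rangle$ as $\sum_x f(G,x)$ for a $(t+1)$-local functional, apply Proposition~\ref{prop:concentration:gw_transform} with $\alpha=\|w\|_\infty^{t+1}$ and $\beta=1$, and evaluate $\bar f(k)=\phi_j(k)\mu_i^{t+1}\phi_i(k)$ via Proposition~\ref{prop:gw_transform:computations} and \eqref{eq:pi_orthogonal}. Your count of $t+1$ hyperedge weights, including the factor $w_{e_0}$ from $D_w$, is correct.

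The gap is in Step~3. You keep $f$ untruncated and argue that the coupling of Proposition~\ref{prop:coupling} absorbs the tangled case, so only the domination hypothesis needs checking. But that hypothesis, $|f(g,o)|\le\alpha|(g,o)_{t+1}|^\beta$, is a deterministic bound that Proposition~\ref{prop:concentration:gw_transform} needs for every rooted hypergraph. Lemma~\ref{lem:spatial-average-1} applies it to the Efron--Stein differences $|f(G,o)-f(G_x,o)|$ for arbitrary realizations of $G$ and $G_x$. Lemma~\ref{lem:spatial-average-3} uses it to bound $\mathbb{E}|f(G,x)|^2$ over the whole probability space, not only on the tangle-free event.

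Your ``at most two walks per endpoint'' count holds only when $(g,o)_{t+1}$ has at most one cycle. If the ball contains two short cycles through $o$, the number of non-backtracking walks of length $t+1$ grows exponentially in $t$ while $|(g,o)_{t+1}|$ stays bounded. So no bound $\alpha|(g,o)_{t+1}|$ with $\alpha\approx\|w\|_\infty^{t+1}$ is available. You also cannot simply condition on global tangle-freeness, since that destroys the independence of the $H_x$ used in Efron--Stein.

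The missing step is to multiply $f$ by $\mathbf{1}\{(g,o)_{t+1}\text{ is tangle-free}\}$, as the paper does. This truncation:
\begin{itemize}
\item keeps $f$ $(t+1)$-local;
\item gives $|f(g,o)|\le 2\|\phi_i\|_\infty\|\phi_j\|_\infty\|w\|_\infty^{t+1}|(g,o)_{t+1}|$ for all $(g,o)$, so the monotone $\psi$ genuinely dominates $|f|$;
\item leaves the value on the hypertree $(T,\rho)$ unchanged;
\item agrees with your untruncated sum at every vertex whenever $G$ is $(3\ell+1)$-tangle-free.
\end{itemize}
Intersecting with that event (Lemma~\ref{lem:tangle-free}) is what produces the $1-O(n^{12\kappa-1})$ probability. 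With this truncation the rest of your argument goes through.
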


\begin{proof}
    Define the functional
    \begin{equation}
        f(g, o) = \mathbf{1}_{(g, o)_t \text{ is tangle-free}}\, \phi_j(\sigma(o)) \sum_{\gamma \in \Gamma^t_o} w(\gamma) \sum_{x_{t+1} \in e_t \setminus \{x_t\}} \phi_i(\sigma(x_{t+1})).\notag
    \end{equation}
    The functional $f$ is $(t+1)$-local, and whenever $(g, o)_{t+1}$ is tangle-free, there are at most two non-backtracking paths of length $t+1$ between $o$ and any vertex in $(g,o)_{t+1}$. As a result, we have
    \[ |f(g, o)| \leq 2\|w\|^{t+1}_\infty |(g, o)|_{t+1}. \]
    By definition of $B$, whenever $G$ is $3\ell$-tangle-free, which happens with probability at least $1 - O(n^{12\kappa-1})$, we have
    \[ \langle \chi_j, D_w B^t J \chi_i \rangle = \sum_{x \in [n]} f(G, x). \]
    On the other hand, when $(T, \rho)$ is a hypertree, we have
    \[ f(T, \rho) = \phi_j(\sigma(\rho)) f_{\phi_i, t}(T, \rho), \]
    where $f_{\xi, t}$ was defined in \eqref{eq:def_f_tree}. From Proposition~\ref{prop:concentration:gw_transform}, with $\beta = 1$ and $\alpha = \|w\|_\infty^{t+1}$ we have with probability at least $1 - n^{-2}$,
    \[ \left| \sum_{x\in[n]} f(G, x) - n\sum_{i \in [r]} \pi_i \bar f(i) \right| \lesssim  \log(n)^{5/2} \|w\|^{t+1} R_g^{2t+2}. \]
    It remains to compute the quantity $ \sum_{i \in [r]} \pi_i \bar f(i) = \langle \pi, \bar f \rangle$. From Proposition~\ref{prop:gw_transform:computations} we have
    \[ \bar f_{a} = \phi_j(a) \mu_i^{t+1} \phi_i(a),  \]
    hence
    \begin{align*} 
        \sum_{a \in [r]} \pi_a \bar f(a)& = \mu_i^{t+1} \sum_{a\in [r]} \pi_a \phi_i(a) \phi_j(a) \\
        &= \mu_i^{t+1} \langle \phi_i, \phi_j \rangle_\pi \\
        &= \mu_i^{t+1} \delta_{ij},
    \end{align*}
    having used \eqref{eq:pi_orthogonal} in the last step. This finishes the proof.
\end{proof}

\begin{lemma}\label{lem:hypertree_process_2}
    Let $\ell \leq \kappa \log_R(n)$. For any $i, j \in [r]$, with probability at least $1 - \tilde O(n^{4\kappa-1})$,
    \begin{align}
       & \left|\langle B^t J \chi_i, B^t J \chi_j \rangle - n (\mu_i \mu_j)^t\, \left[\bC_{u}^{(t)}\right]_{ij} \right| \lesssim \log(n)^{7/2} \|w\|_\infty^{2t} R_g^{3t} \sqrt{n} \label{eq:graph_correlation_u} \\
       & \left|\langle (B^\star)^t D_w \chi_i, (B^\star)^t D_w \chi_j \rangle -  n(\mu_i \mu_j)^{t+1} \left[\bC_{v}^{(t)}\right]_{ij} \right| \lesssim \log(n)^{7/2} \|w\|_\infty^{2(t+1)} R_g^{3t+3} \sqrt{n} \label{eq:graph_correlation_v},
    \end{align}
    where $\bC_{u}^{(t)}, \bC_{v}^{(t)}$ are defined in \eqref{eq:def_C_u}, \eqref{eq:def_C_v}, respectively.

    In particular, if $\kappa < 1/4$, we have
    \begin{align*}
        \| B^t J \chi_i \| &\lesssim  \sqrt{n} \max\left(|\mu_i|, \sqrt{\vartheta} \right)^{t} \\
        \| (B^\star)^t D_w \chi_i \| &\lesssim \sqrt{n} \max\left(|\mu_i|, \sqrt{\vartheta} \right)^{t+1}
    \end{align*}
\end{lemma}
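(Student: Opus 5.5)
Following the proof of Lemma~\ref{lem:hypertree_process_1}, I will write each inner product as a spatial average of a local functional, apply Proposition~\ref{prop:concentration:gw_transform}, and compute the Galton--Watson transform using Proposition~\ref{prop:gw_transform:computations} and Lemma~\ref{lem:gw_functionals:one_step_result}.

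\textbf{Reduction to a local functional for \eqref{eq:graph_correlation_u}.} Since $J\chi_i(y\to f)=\sum_{z\in f\setminus\{y\}}\phi_i(\sigma(z))$, we have $(B^tJ\chi_i)(x\to e)=\sum w(\gamma)\sum_{z}\phi_i(\sigma(z))$. The outer sum is over non-backtracking paths of length $t$ starting at $(x\to e)$; the inner sum is over the children $z$ of the last hyperedge. Grouping over the hyperedges $e\ni x$, this becomes
\[ \langle B^tJ\chi_i,B^tJ\chi_j\rangle=\sum_{x}\sum_{e\ni x} g_i(x,e)\,g_j(x,e), \]
where $g_i(x,e)$ is the weighted sum over descendants at depth $t+1$ reached through the branch $e$ of the tree explored from $x$. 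I define $f(G,x)$ as this quantity, multiplied by the tangle-free indicator of $(G,x)_{t+1}$; it is $(t+1)$-local. On the event that $G$ is tangle-free at scale $3\ell$ (Lemma~\ref{lem:tangle-free}), the identity holds exactly. Each $g_i$ is bounded by $2\|w\|_\infty^{t}|(G,x)_{t+1}|$, so $|f|\lesssim\|w\|_\infty^{2t}|(G,x)_{t+1}|^2$. Proposition~\ref{prop:concentration:gw_transform} with $\beta=2$ then gives the error $\log(n)^{7/2}\|w\|_\infty^{2t}R_g^{3t}\sqrt n$, up to adjusting the factor $R_g^{3t}$ versus $R_g^{3(t+1)}$, which is absorbed in constants.

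\textbf{Computing the transform.} On the tree, the branch through $e$ at the root is the following: for each child $\rho_1\in e\setminus\{\rho\}$, a weight-$w$ path continuing for $t$ more generations. Note that the first hyperedge $e$ is \emph{not} weighted here, since $B$ puts weights on the target hyperedge. So $\sum_{e}g_ig_j$ is $\big[f_{\phi_i,t},f_{\phi_j,t}\big]_{\mathbf 1}$, modulo shifting where the weight enters. Lemma~\ref{lem:gw_functionals:one_step_result} gives
\[ \overline{[f_{\phi_i,t},f_{\phi_j,t}]}=\bQ_{\mathbf 1}\,\overline{f_{\phi_i,t}f_{\phi_j,t}}+\bQ^{(3)}_{\mathbf 1}\times_1\mu_i^t\phi_i\times_2\mu_j^t\phi_j. \]
I insert $\overline{f_{\phi_i,t}f_{\phi_j,t}}$ from Proposition~\ref{prop:gw_transform:computations} and then pair with $\pi$. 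I will use that $\bQ_{\mathbf 1}$ has $\mathbf 1$ as left eigenvector in the $\pi$-inner product, with eigenvalue $d$, and that $\langle\pi,\bK^s y\rangle=\vartheta^s\langle\pi,y\rangle$. I will also use $\langle\pi, y^{(\phi_i,\phi_j)}\rangle=\vartheta\delta_{ij}+\phi_i^\star\Pi\bQ_{(q-2)w^2}\phi_j$, and $\langle \pi,\bQ^{(3)}_{\mathbf 1}\times_1\phi_i\times_2\phi_j\rangle=\phi_i^\star\Pi\bQ_{q-2}\phi_j$. The $\bQ^{(3)}$ identity is the symmetry/contraction of the tensor $\bQ^{(3,k)}$, a short check. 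Summing the geometric series in $\tau_{ij}$ should reproduce $d\,\bC^{(t)}_{ij}+\phi_i^\star\Pi\bQ_{q-2}\phi_j$ exactly. The main bookkeeping obstacle is matching the index conventions: the powers $t$ versus $t+1$ in $\bC^{(t)}$, and where $(q-1)$ factors and weights appear. I would check this first in the graph case $K=1$, $q=2$.

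\textbf{Right-hand side \eqref{eq:graph_correlation_v} and norms.} For $(B^\star)^tD_w\chi_i$, I use parity-time symmetry (Lemma~\ref{lem:BPPB}), which gives $D_w B^t J=J(B^\star)^tD_w$, so $(B^\star)^tD_w\chi_i=J^{-1}D_wB^tJ\chi_i$. However, the cleaner route is again direct: $(B^\star)^tD_w\chi_i(y\to f)$ is a weighted sum over paths ending at $(y\to f)$, i.e.\ descending from $y$ away from $f$. In the tree this becomes a sum over the other hyperedges at the root. Here the weight $w_f$ appears squared and is divided by the branch multiplicity, which produces the factors $d_{w^2/((q-1)\vartheta)}$ and $\bQ_{w^2(q-2)/((q-1)\vartheta)}$. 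The same concentration and transform computation applies, with $\alpha=\|w\|_\infty^{2(t+1)}$. Finally, the norm bounds follow by taking $i=j$: $n(\mu_i^2)^t[\bC_u^{(t)}]_{ii}\lesssim n\max(\mu_i^2,\vartheta)^t$, because $\mu_i^{2t}\tau_i^{t}=\vartheta^t$. The error term is $o(n\vartheta^t)$ when $\kappa<1/4$, since $\|w\|_\infty^{2t}R_g^{3t}\le R^{2t}\vartheta^{t}\le n^{2\kappa}\vartheta^t$.
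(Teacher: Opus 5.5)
Your treatment of \eqref{eq:graph_correlation_u} is correct and is the paper's argument: on trees the functional is $[f_{\phi_i,t},f_{\phi_j,t}]_{\mathbf{1}}$, and the pairings you list give exactly $(\mu_i\mu_j)^t[\bC_u^{(t)}]_{ij}$. The gap is in \eqref{eq:graph_correlation_v}. There you assert, without computing, that the transform ``produces the factors $d_{w^2/((q-1)\vartheta)}$ and $\bQ_{w^2(q-2)/((q-1)\vartheta)}$''. Carrying the computation out shows that it does not.

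Use your own parity-time identity together with $J_e^{-1}=\frac{1}{q_e-1}\mathbf{1}\mathbf{1}^\top-I$, or simply reverse the paths. This gives $[(B^\star)^tD_w\chi_i](x\to e)=w_e\,F_i(x;\neg e)$. Here $F_i(x;\neg e)$ is the $w$-weighted sum of $\phi_i(\sigma(\cdot))$ over depth-$t$ non-backtracking descendants of $x$ whose first hyperedge is not $e$. This entry depends on the single vertex $x$, so
\[
\langle (B^\star)^tD_w\chi_i,(B^\star)^tD_w\chi_j\rangle=\sum_{(x\to e)} w_e^2\,F_i(x;\neg e)\,F_j(x;\neg e).
\]
Re-rooting at $o\in e\setminus\{x\}$ with the factor $1/(q_e-1)$ gives, on trees, $\partial_{w^2/(q-1)}\bigl(f_{\phi_i,t}f_{\phi_j,t}\bigr)$, not the bracket $[f_{\phi_i,t},f_{\phi_j,t}]_{w^2/(q-1)}$. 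There are no pairs $\rho_1\neq\rho_2$, hence no $\bQ^{(3)}$ contribution. Lemma~\ref{lem:gw_functionals:one_step_result} and Proposition~\ref{prop:gw_transform:computations} then give the centering $n(\mu_i\mu_j)^t\,d_{w^2/(q-1)}\,\bC^{(t)}_{ij}$. In general this is not within $O(\sqrt n\,\mathrm{polylog}(n))$ of $n(\mu_i\mu_j)^{t+1}[\bC_v^{(t)}]_{ij}$.

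The graph check you proposed settles it. Take $K=1$, $q=2$, $w=1$, $t=0$, $i=j=1$. Then $\|D_w\chi_1\|^2=|\vec H|\approx nd$, whereas $n\mu_1^2[\bC_v^{(0)}]_{11}=nd^2$. For general $t$ the two expressions differ by the factor $\mu_1^2/\vartheta=d$.

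So \eqref{eq:graph_correlation_v} cannot be established as stated, and the paper's own proof has the same defect. It identifies the functional with $[f_{\phi_i,t},f_{\phi_j,t}]_{w^2/(q-1)}$, which double counts pairs inside a hyperedge when $q\ge 3$. Its final expression also carries $(\mu_i\mu_j)^t$, which does not match the $(\mu_i\mu_j)^{t+1}$ and $1/\vartheta$ normalisation of $\bC_v^{(t)}$.

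What you can prove, with the same error term, is the corrected centering above, and $\bC_v^{(t)}$ must be redefined accordingly. Nothing downstream breaks:
\begin{itemize}
\item Since $d_{w^2/(q-1)}=\sum_k (w^{(k)})^2 d^{(k)}\le\vartheta$, the corrected centering still gives $\|(B^\star)^tD_w\chi_i\|\lesssim\sqrt n\max(|\mu_i|,\sqrt\vartheta)^{t+1}$.
\item It gives $V^\star V\approx d_{w^2/(q-1)}\,\Sigma^{-1}\bC^{(\ell)}\Sigma^{-1}$, which is still well conditioned by Lemma~\ref{lem:cov_bound}.
\end{itemize}

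A smaller point, shared with the paper, concerns your step $\mu_i^{2t}\tau_i^t=\vartheta^t$. The quantity actually bounded is $\sum_{s=0}^{t}\mu_i^{2(t-s)}\vartheta^{s}$, which equals $(t+1)\vartheta^t$ when $\mu_i^2=\vartheta$. So in that boundary case the ``in particular'' norm bounds hold only up to a $\sqrt{\log n}$ factor.
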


\begin{proof}
    We begin with the first inequality. Define the functional
    \begin{align*}
        f(g, o) = &\mathbf{1}_{(g, o)_{t+1} \text{ is tangle-free}}\\
        &\cdot\sum_{e: o\in e} \left(\sum_{\gamma \in \Gamma^t_{o\to e}} w(\gamma) \sum_{x_{t+1} \in e_t \setminus \{x_{t}\}} \phi_i(\sigma(x_{t+1})) \right)\left(\sum_{\gamma \in \Gamma^t_{o\to e}} w(\gamma) \sum_{x_{t+1} \in e_t \setminus \{x_{t}\}} \phi_j(\sigma(x_{t+1})) \right).
    \end{align*}
    By the same argument as in the above lemma, $f$ is $(t+1)$-local and
    \[ f(g, o) \leq 4 \|w\|_{\infty}^{2t} |(g, o)|_{t+1}^2 \]
    Further, by definition,
    \[ \sum_{x \in [n]} f(G, x) = \langle B^t J \chi_i, B^t J \chi_j \rangle. \]
    From Proposition~\ref{prop:concentration:gw_transform}, with $\beta = 2$ and $\alpha = \|w\|_\infty^{2t}$ we have with probability at least $1 - n^{-1}$,
    \[ \left| \sum_{x\in[n]} f(G, x) - n\sum_{i \in [r]} \pi_i \bar f(i) \right| \lesssim  \log(n)^{7/2} \|w\|_\infty^{2t} R_g^{3t} \sqrt{n}. \]
    We now compute $\bar f(i)$. When $(T, \rho)$ is a tree, we have
    \[ f = [f_{\phi_i, t}, f_{\phi_{j}, t}], \]
    where the operator $[\cdot, \cdot]$ was defined in \eqref{eq:gw_functionals:def_one_step_correlation}. By Lemma~\ref{lem:gw_functionals:one_step_result} and Proposition~\ref{prop:gw_transform:computations},
    \begin{align*} 
        \bar f &= \bQ_{\mathbf{1}}\overline{f_{\phi_i, t} f_{\phi_j, t}} + \bQ_{\mathbf{1}}^{(3)} \times_1 \overline{f_{\phi_i, t}} \times_2 \overline{f_{\phi_j, t}} \\
        &= (\mu_i \mu_j)^{t} \bQ_{\mathbf{1}}\left(\phi_i \circ \phi_j + \sum_{s = 0}^{t-1} \frac{\mathbf{K}^s y^{(\phi_i,\phi_j)}}{(\mu_i\mu_j)^{s+1}}\right) + (\mu_i \mu_j)^t \bQ_{\mathbf{1}}^{(3)} \times_1 \phi_i \times_2 \phi_j.
    \end{align*}
    Now, by Assumption~\ref{assumption:degree}, $\pi$ is a left eigenvector of $\bQ^{(k)}$ with eigenvalue $d^{(k)}$, and for any $k \in [K]$,
    \[ \bQ^{(3, k)} \times_3 \pi =  \sum_{m} \pi_m \mathbf{Q}^{(3, k)}_{ijm} = [\Pi \mathbf{Q}^{(k)}]_{ij},  \]
    hence the following holds:
    \begin{align*}
        \pi^\star \bQ_{\mathbf{1}}\phi_i \circ \phi_j &= d \langle \phi_i, \phi_j \rangle_\pi = d \delta_{ij} \\
        \pi^\star \bQ_{\mathbf{1}}^{(3)} \times_1 \phi_i \times_2 \phi_j &= \phi_i^\star \Pi \bQ_{q-2} \phi_j \\
        \pi^\star \bK^s y^{(\phi_i, \phi_j)} &= \vartheta^{s+1} \delta_{ij} + \vartheta^s \phi^\star_i \Pi\bQ_{(q-2)w^2} \phi_j .
    \end{align*}
    Plugging into the above expression and summing the geometric series, we find
    \begin{align*} \langle \pi, \bar f \rangle &= (\mu_i \mu_j)^{t} \left( d\left[\frac{1 - \tau_{ij}^{t+1}}{1 - \tau_{ij}} \delta_{ij} + \tau_{ij} \cdot \frac{1 - \tau_{ij}^{t}}{1 - \tau_{ij}} \phi_i^\star \Pi\bQ_{(q-2)w^2/\vartheta}\, \phi_j \right] + \phi_i^\star\Pi  \bQ_{q-2} \phi_j \right)\\&= (\mu_i \mu_j)^t\, (d C_{ij}^{(t)} +\phi_i^\star \Pi \bQ_{q-2} \phi_j), \end{align*}
    which finishes the first part.

    For the second inequality, we define
    \begin{equation*}
        f(g, o) = \mathbf{1}_{(g, o)_{t+1} \text{ is tangle-free}}\,\sum_{e \ni o} \frac{w(e)^2}{q(e) - 1} \left(\sum_{\gamma \in \Gamma_{o \to e}^t} w(\gamma) \phi_i(\sigma(x_t)) \right) \left(\sum_{\gamma \in \Gamma_{o \to e}^t} w(\gamma) \phi_j(\sigma(x_t)) \right).
    \end{equation*}
    As before, $f$ is $(t+1)$-local and satisfies
    \[ |f(g, o)| \leq 4\|w\|_\infty^{2(t+1)} \, |(g, o)_{t+1}|^2\]
    On the other hand, we have
    \[ \langle (B^\star)^t D_w \chi_i, (B^\star)^t D_w \chi_j \rangle = \sum_{x \to e} \left( \sum_{\gamma: y \to (x \to e)} w(\gamma) \phi_i(\sigma(y)) \right) \left( \sum_{\gamma: y \to (x \to e)} w(\gamma) \phi_j(\sigma(y)) \right), \]
    and each non-backtracking path ending at $x \to e$ can be reversed into a non-backtracking path starting at $x' \to e$ for $x' \neq x$. As a result, we have
    \[\sum_{x \in [n]} f(G, x) = \langle (B^\star)^t D_w \chi_i, (B^\star)^t D_w \chi_j \rangle,\]
    and hence by Proposition~\ref{prop:concentration:gw_transform}
    \[ \left| \langle (B^\star)^t D_w \chi_i, (B^\star)^\ell D_w \chi_j \rangle - n \langle \pi, \bar f \rangle \right| \lesssim \log(n)^{7/2} \|w\|_\infty^{2(t+1)} R_g^{3t+3}. \]
    On the other hand, if we view $f$ as a tree functional we have
    \[ f = [f_{\phi_i, t}, f_{\phi_j, t}]_{w^2/(q-1)}, \]
    and hence from Lemma~\ref{lem:gw_functionals:one_step_result} and Proposition~\ref{prop:gw_transform:computations},
    \begin{align*}
        \bar f &= \bQ_{w^2/(q-1)}\overline{f_{\phi_i, t} f_{\phi_j, t}} + \bQ_{w^2/(q-1)}^{(3)} \times_1 \overline{f_{\phi_i, t}} \times_2 \overline{f_{\phi_j, t}} \\
        &= (\mu_i \mu_j)^{t} \bQ_{w^2/(q-1)}\left(\phi_i \circ \phi_j + \sum_{s = 0}^{t-1} \frac{\mathbf{K}^s y^{(\phi_i,\phi_j)}}{(\mu_i\mu_j)^{s+1}}\right) + (\mu_i \mu_j)^t \bQ_{w^2/(q-1)}^{(3)} \times_1 \phi_i \times_2 \phi_j.
    \end{align*}
    Similarly to the previous computation, this implies
    \[ \langle \pi, \bar f \rangle = (\mu_i \mu_j)^t \left(\left[ \sum_k \left(w^{(k)}\right)^2 d^{(k)} \right] \cdot \Gamma_{ij}^{(t)} + \phi_i^\star \bQ_{w^2(q-2)/(q-1)} \phi_j \right). \]

    Finally, when $t \leq \ell \leq \kappa \log_R(n)$, since $R_g \geq 1$ we have
    \[ \|w\|_\infty^{2t} R_g^{3t} \leq R^{2t} \vartheta^t \leq n^{4 \kappa} \sqrt{n}. \]
    As a result, the RHS of \eqref{eq:graph_correlation_u} is bounded by $\log(n)^{7/2} n^{4\kappa+1/2} \vartheta^t \lesssim n \vartheta^t$ as long as $\kappa < 1/8$. On the other hand, for $i \geq r_0$
    \[ n \mu_i^{2t} \bC_u(i, i) \lesssim n \max(1, \tau_i^{t}) \mu_i^{2t} \lesssim n \max(\mu_i^{2t}, \vartheta^t), \]
    hence the second part of the lemma follows from the triangle inequality.
\end{proof}

\begin{lemma}\label{lem:hypertree_process_3}
Let $\ell \le \kappa \log_R(n)$. With probability at least $1 - \tilde O(n^{4\kappa - 1})$, the following inequality holds for any $i \in [r]$ and $0 \le t \le \ell$:
    \begin{equation}
    \|B^{t+1} J_{\chi_i} - \mu_iB^tJ_{\chi_i}\|^2 \lesssim n \vartheta^{t+1} + \|w\|_\infty^{2t}\log^3(n)n^{6\kappa + 1/2}. \notag
    \end{equation}
    In particular for $\kappa \le 1/8$:
    \begin{equation}\label{eq:hypertree_process_3}
    \|B^{t+1} J_{\chi_i} - \mu_iB^tJ_{\chi_i}\|^2 \lesssim n\vartheta^{t+1}.
    \end{equation}
\end{lemma}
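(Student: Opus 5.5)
The plan is to follow the template of Lemma~\ref{lem:hypertree_process_2}: write the squared norm as a sum of a local functional over vertices, concentrate that sum with Proposition~\ref{prop:concentration:gw_transform}, and evaluate the Galton--Watson transform with the martingale identities of Proposition~\ref{prop:eigenpair_identities}.

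\textbf{Step 1: local functional.} On the event that $G$ is $(\ell+1)$-tangle-free (Lemma~\ref{lem:tangle-free}), we have
\[
\big[(B^{t+1}-\mu_i B^t)J\chi_i\big](o\to e)=\sum_{\gamma\in\Gamma^t_{o\to e}} w(\gamma)\Big(\sum_{x_{t+2}} w\,\phi_i(\sigma(x_{t+2})) - \mu_i \phi_i(\sigma(x_{t+1}))\Big).
\]
Here the sums run over non-backtracking extensions, exactly as in the proof of Lemma~\ref{lem:hypertree_process_2}. Define
\[
f(g,o)=\mathbf 1_{(g,o)_{t+2}\text{ tangle-free}}\sum_{e\ni o}\big(\cdots\big)^2 .
\]
This $f$ is $(t+2)$-local, satisfies $\sum_x f(G,x)=\|B^{t+1}J\chi_i-\mu_iB^tJ\chi_i\|^2$, and obeys $|f|\le 4\|w\|_\infty^{2t+2}(1+|\mu_i|/\|w\|_\infty)^2|(g,o)_{t+2}|^2$, using $\phi_i$ bounded and at most two paths between any two vertices.

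\textbf{Step 2: concentration.} Apply Proposition~\ref{prop:concentration:gw_transform} with $\beta=2$ and $\alpha\lesssim\|w\|_\infty^{2t}$, where constants absorb $\mu_i,\|w\|_\infty$ and $R_g^{O(1)}$. The fluctuation is $\lesssim \|w\|_\infty^{2t}\log^{7/2}(n)R_g^{3t}\sqrt n$. Since $R_g^{3t}\le R^{3\ell}\le n^{3\kappa}$ and we only need a loose exponent, this gives the stated error $\|w\|_\infty^{2t}\log^3(n) n^{6\kappa+1/2}$ up to the log power.

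\textbf{Step 3: tree computation.} On a hypertree, $f=\sum_{e\ni\rho}(g_e)^2$, where $g_e$ is the contribution of the branch through $e$ to $f_{\phi_i,t+1}-\mu_i f_{\phi_i,t}$ viewed from $\rho$ with shifted depth. Equivalently, $f=[F_1,F_1]$-type, with $F_1=f_{\phi_i,t}-\mu_i f_{\phi_i,t-1}$ evaluated at children and then propagated one step by $\partial_w$. Use Lemma~\ref{lem:gw_functionals:one_step_result}: the cross term $\bQ^{(3)}\times_1\overline{F_1}\times_2\overline{F_1}$ vanishes because $\overline{F_1}=\mu_i^{t}\phi_i-\mu_i\mu_i^{t-1}\phi_i=0$ by Proposition~\ref{prop:gw_transform:computations}. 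The diagonal term is $\bQ_{\cdot}\,\overline{F_{\phi_i,t-1}}$-like, namely $\bQ_a\bK^{t}y^{(\phi_i,\phi_i)}$ for a bounded weight $a$, again by Proposition~\ref{prop:gw_transform:computations}. Pairing with $\pi$ and using that $\pi$ is a left Perron vector of each $\bQ^{(k)}$ gives $\langle\pi,\bar f\rangle\lesssim \vartheta^{t}\langle\pi,y\rangle\lesssim\vartheta^{t+1}$. Indeed $\pi^\star\bK^s=\vartheta^s\pi^\star$, and $\langle \pi,y^{(\phi_i,\phi_i)}\rangle\lesssim\vartheta$ since $\pi^\star\bQ^{(3)}\times\phi_i\times\phi_i=\phi_i^\star\Pi\bQ_{(q-2)w^2}\phi_i\lesssim\vartheta$. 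Multiplying by $n$ gives the main term $n\vartheta^{t+1}$.

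\textbf{Conclusion and main obstacle.} Add the main term and the fluctuation. For $\kappa\le1/8$, use $\|w\|_\infty^{2t}\le R^{2t}\vartheta^{t}\le n^{2\kappa}\vartheta^t$ so that $\|w\|_\infty^{2t} n^{6\kappa+1/2}\log^3 n\lesssim n\vartheta^{t+1}$, which gives \eqref{eq:hypertree_process_3}. A union bound over $i\le r$ and $t\le\ell$ costs only a polylog in the failure probability. The delicate step is Step 3: correctly matching the indexing of the graph vector (oriented edge $o\to e$, path weights starting after $e$, the final factor $w$) to the tree operators $\partial_a,[\cdot,\cdot]_a$, with the right weights $a$. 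We need to be certain that the mean $\overline{F_1}$ exactly cancels, otherwise a term of order $\mu_i^{2t}$ survives; the cancellation rests entirely on the martingale property of Proposition~\ref{prop:martingale}.
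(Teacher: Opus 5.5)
Your argument follows the paper's proof: the same squared local functional, Proposition~\ref{prop:concentration:gw_transform} with $\beta=2$, and $\langle\pi,\bar f\rangle$ computed from $\overline{F_{\phi_i,t}}=\bK^t y^{(\phi_i,\phi_i)}$ together with $\pi^\star\bQ_{\mathbf 1}=d\,\pi^\star$ and $\pi^\star\bK=\vartheta\,\pi^\star$. You are more careful than the paper in two places. First, the functional is $(t+2)$-local. Second, on a tree it equals $[G,G]_{\mathbf 1}$ with $G=f_{\phi_i,t+1}-\mu_i f_{\phi_i,t}$, not literally $\partial F_{\phi_i,t}$, since the hyperedge $e\ni o$ itself carries no weight. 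Your remark that the cross term $\bQ^{(3)}_{\mathbf 1}\times_1\bar G\times_2\bar G$ vanishes because $\bar G=0$ is exactly what justifies the paper's identity $\bar f=\bQ_{\mathbf 1}\bK^t y^{(\phi_i,\phi_i)}$. Your index shift via $F_1=f_{\phi_i,t}-\mu_i f_{\phi_i,t-1}$ is muddled, but your final formula is the right one with $a=\mathbf 1$.

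The one step that fails as written is the final comparison. From $\|w\|_\infty^{2t}\le R^{2t}\vartheta^t\le n^{2\kappa}\vartheta^t$ you only get $\|w\|_\infty^{2t}n^{6\kappa+1/2}\log^3 n\le n^{8\kappa+1/2}\vartheta^t\log^3 n$. This is $\lesssim n\vartheta^{t+1}$ only for $\kappa<1/16$; at $\kappa=1/8$ it is of order $n^{3/2}\vartheta^t$. Using your Step~2 bounds $R_g^{3t}\le n^{3\kappa}$ and $\|w\|_\infty^{2t}\le n^{2\kappa}\vartheta^t$ separately still loses, giving $n^{5\kappa+1/2}$. The lemma's first display has the same looseness, and the paper's proof does not pass through it.

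Instead, bound the Step~2 fluctuation jointly. With $R=R_g^2R_w$, $R_w=\|w\|_\infty/\sqrt\vartheta$ and $R_g\ge 1$,
\[
\|w\|_\infty^{2t+2}R_g^{3t+3}=R_w^{2t+2}R_g^{3t+3}\vartheta^{t+1}\le R^{2t+2}\vartheta^{t+1}\lesssim n^{2\kappa}\vartheta^{t+1}.
\]
Hence the fluctuation is $\lesssim\log^{7/2}(n)\,n^{2\kappa+1/2}\vartheta^{t+1}\lesssim n\vartheta^{t+1}$ for every $\kappa<1/4$, in particular for $\kappa\le 1/8$. The extra $R_g^{O(1)}$ coming from $(t+2)$-locality is only a constant.

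For this comparison, keep $\alpha\lesssim\|w\|_\infty^{2t+2}$ (using $|\mu_i|\le R_g\|w\|_\infty$) rather than absorbing a factor $\|w\|_\infty^2$ into the constant. That way the comparison with $\vartheta^{t+1}$ is homogeneous in $w$. With this change your proof is complete.
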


\begin{proof}
    Define the functional
    \begin{align*} f(g, o) = &\mathbf{1}_{(g, o)_{t+1} \text{ is tangle-free}}\, \sum_{e: o\in e} \\
    &\cdot\left( \sum_{\gamma \in \Gamma^{t+1}_{o\to e}} w(\gamma) \sum_{x_{t+1} \in e_t \setminus \{x_{t}\}} \phi_i(\sigma(x_t)) - \mu_i\sum_{\gamma \in \Gamma^t_{o\to e}} w(\gamma) \sum_{x_{t+1} \in e_t \setminus \{x_{t}\}} \phi_i(\sigma(x_t)) \right)^2. 
    \end{align*}
    $f$ is $(t+1)$-local, and by the same argument as before we have
    \[ |f(g, o)| \leq 2 |(g, o)_{t+1}|^2 \|w\|^{2t}_\infty \]
    It is also easy to check that
    \[ \sum_{x \in [n]} f(G, x) = \left\|B^{t+1}J {\chi_i} - \mu_iB^tJ {\chi_i}\right\|^2 \]
    As a result, from Proposition~\ref{prop:concentration:gw_transform} and the triangle inequality, 
    \[ \left\|B^{t+1}J {\chi_i} - \mu_iB^tJ {\chi_i}\right\|^2 \leq \left|n \langle \pi, \bar f \rangle \right| + c_1 \log^{7/2}(n) \|w\|_\infty^{2t}R_g^{3t} \sqrt{n}.\]
        On the other hand, we have $f = \partial F_{\phi_i, t}$, where $F$ was defined in Proposition~\ref{prop:gw_transform:computations}. As a result, using Lemma~\ref{lem:gw_functionals:one_step_result} and the computations from the previous lemma,
    \[ \langle \pi, \bar f \rangle = d \langle \pi, \mathbf K^t y^{(\phi_i, \phi_i)} \rangle = d \vartheta^{t+1} \bC^{(0)}_{ii} \lesssim \vartheta^{t+1}, \]
    having used Lemma~\ref{lem:cov_bound} for the last inequality. On the other hand, by the same reasoning as above, whenever $\kappa < 1/8$,
    \[ \log^{7/2}(n) \|w\|_\infty^{2t+2}R_g^{3t+3} \sqrt{n} \lesssim \log^{7/2}(n) n^{4\kappa +1/2} \vartheta^{t+1} \lesssim n \vartheta^{t+1}, \]
    and the result follows.
\end{proof}

\begin{lemma}\label{lem:hypertree_process_4} Let $\ell \le \kappa \log_R(n)$ and $\kappa \le 1/12$. With probability at least $1 - \tilde O(n^{4\kappa - 1})$, the following inequality holds for any unit vector $w$ that is orthogonal to all $(B^\star )^\ell D_w \chi_i, i \in [r_0]$:
    \begin{equation}\label{eq:hypertree_process_4}
         \left|\left\langle \chi_i, D_w B^t w\right\rangle\right| \lesssim \sqrt{n}\vartheta^{t/2}.
    \end{equation}
\end{lemma}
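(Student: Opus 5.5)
We follow the argument of \cite{bordenave2018nonbacktracking,stephan2020non}: the idea is to move the power of $B$ onto $\chi_i$ and use the fact that $w$ is orthogonal to the long-time vectors. Write
\[ \langle \chi_i, D_w B^t w\rangle = \langle (B^\star)^t D_w \chi_i, w\rangle .\]
Since $w \perp (B^\star)^\ell D_w\chi_i$ for $i\in[r_0]$, we subtract a suitably rescaled multiple of this vector. Concretely, for $i \le r_0$ we use the telescoping identity
\begin{align*}
\langle (B^\star)^t D_w\chi_i, w\rangle &= \sum_{s=t}^{\ell-1} \mu_i^{t-s-1}\langle (\mu_i (B^\star)^s - (B^\star)^{s+1}) D_w\chi_i, w\rangle \\
&\quad + \mu_i^{t-\ell}\langle (B^\star)^\ell D_w\chi_i, w\rangle,
\end{align*}
where the last term vanishes by assumption. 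For $i > r_0$ (or $i$ with $\mu_i^2 \le \vartheta$) no orthogonality is needed: we bound $|\langle (B^\star)^tD_w\chi_i,w\rangle| \le \|(B^\star)^t D_w\chi_i\|$, which by Lemma~\ref{lem:hypertree_process_2} is $\lesssim \sqrt n \max(|\mu_i|,\sqrt\vartheta)^{t+1} \lesssim \sqrt n\,\vartheta^{t/2}$, up to the constant factor $\sqrt\vartheta$.

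The key input is the adjoint analogue of Lemma~\ref{lem:hypertree_process_3}:
\[ \|(B^\star)^{s+1} D_w\chi_i - \mu_i (B^\star)^s D_w\chi_i\|^2 \lesssim n\vartheta^{s+1} \]
for all $s\le \ell$. We prove this exactly as Lemma~\ref{lem:hypertree_process_3}, with two changes. First, we use the functional of the second part of Lemma~\ref{lem:hypertree_process_2}, built from reversed non-backtracking paths, in place of the one used there. Second, the tree computation gives $\langle\pi,\bar f\rangle = d_{w^2/(q-1)}\langle \pi, \bK^s y^{(\phi_i,\phi_i)}\rangle \lesssim \vartheta^{s+1}$ by the martingale increment formula of Proposition~\ref{prop:eigenpair_identities}. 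Concentration follows from Proposition~\ref{prop:concentration:gw_transform}, with error $\log^{7/2}(n)\|w\|_\infty^{2s+2}R_g^{3s+3}\sqrt n \lesssim n\vartheta^{s+1}$ when $\kappa$ is small, which is where $\kappa\le 1/12$ is used. A union bound over $s\le\ell$ and $i\in[r]$ costs only a log factor in the failure probability.

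Plugging this into the telescoping sum with Cauchy--Schwarz and $\|w\|=1$ gives
\[ |\langle \chi_i, D_wB^tw\rangle| \lesssim \sqrt n \sum_{s=t}^{\ell-1} |\mu_i|^{t-s-1}\vartheta^{(s+1)/2} = \sqrt n\,\vartheta^{t/2}\sum_{s\ge t} \tau_i^{(s+1-t)/2}. \]
Since $\tau_i \le \tau_{r_0}<1$, this is a geometric series bounded by a constant $(1-\sqrt{\tau_{r_0}})^{-1}$, which yields \eqref{eq:hypertree_process_4}. We expect the main obstacle to be the adjoint increment bound. It requires checking carefully that the reversed-path functional on the tree is exactly $\partial_{w^2/(q-1)}F_{\phi_i,s}$, including the weight of the first hyperedge and the $(q-1)^{-1}$ normalisation, so that its Galton--Watson transform reduces to $\bK^s y^{(\phi_i,\phi_i)}$. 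We also need tangle-freeness up to radius $\ell+1$ so that graph and tree functionals agree.
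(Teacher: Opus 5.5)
Your argument is correct. The telescoping identity, the orthogonality killing the $s=\ell$ term, the geometric series controlled by $\tau_i<1$, and the probability bookkeeping all match the paper's proof.

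Where you diverge is the increment estimate for the adjoint. You propose to prove
\[
\|(B^\star)^{s+1}D_w\chi_i-\mu_i(B^\star)^sD_w\chi_i\|^2\lesssim n\vartheta^{s+1}
\]
from scratch, using a reversed-path local functional and a fresh Galton--Watson computation. The paper instead writes $w=JJ^{-1}w$ and uses that $J$ is symmetric, together with the parity-time symmetry $J(B^\star)^kD_w=D_wB^kJ$ (Lemma~\ref{lem:BPPB}). This bounds each summand by $\|D_w\|\,\|J^{-1}\|\,\|B^{s+1}J\chi_i-\mu_iB^sJ\chi_i\|$, with $\|J^{-1}\|=1$ by Lemma~\ref{lem:J_eigenvalue}, so Lemma~\ref{lem:hypertree_process_3} applies verbatim. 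Put differently, your adjoint bound is an immediate corollary of Lemma~\ref{lem:hypertree_process_3}, since $(B^\star)^sD_w\chi_i=J^{-1}D_wB^sJ\chi_i$. The step you single out as the main obstacle can therefore be skipped entirely.

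Your direct route does go through. The Poisson (Mecke) property of the child hyperedges shows that removing the distinguished hyperedge $e\ni\rho$ leaves a tree with the same law. This gives $\langle\pi,\bar f\rangle=d_{w^2/(q-1)}\vartheta^s\langle\pi,y^{(\phi_i,\phi_i)}\rangle$. That agrees with your re-rooted form $\partial_{w^2/(q-1)}F_{\phi_i,s}$, because $\pi^\star\bQ_{w^2/(q-1)}=d_{w^2/(q-1)}\pi^\star$.

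The trade-off is that your route is self-contained and does not use invertibility of $J$, but it costs a second concentration argument for something the algebraic identity gives for free. Your separate treatment of $i>r_0$ via Lemma~\ref{lem:hypertree_process_2} is handled in the paper inside the proof of \eqref{eq:norm_bound5} rather than in this lemma.
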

\begin{proof}
    To begin, we have 
    \begin{equation*}
    \begin{split}
        \left|\left \langle \chi_i, D_w B^tw\right \rangle \right| 
        &= \left|\left \langle (B^\star )^tD_w \chi_i, w\right \rangle\right| \\
        &= \left|\left\langle \mu_i^{t-\ell}(B^\star )^{\ell}D_w\chi_i -(B^\star )^t D_w\chi_i, w\right \rangle\right| \\
        &= \left|\left\langle \sum_{s = t}^{\ell-1} \mu_i^{t-s-1}(B^\star )^{s+1}D_w\chi_i - \mu_i^{t-s}(B^\star )^{s}D_w\chi_i,w \right\rangle\right| \\
        &\le \sum_{s = t}^{\ell-1}\left|\mu_i^{t-s-1}\right|\left|\left\langle (B^\star )^{s+1}D_w\chi_i - \mu_i(B^\star )^{s}D_w\chi_i,w \right\rangle\right|
    \end{split}
    \end{equation*}
    with the second and third equalities following from the orthogonality assumption.
    Next we consider an individual summand. Writing $I = JJ^{-1}$ we have 
    \begin{equation}
    \begin{split}
    \left|\left\langle (B^\star )^{s+1}D_w\chi_i - \mu_i(B^\star )^{s}D_w\chi_i,w \right\rangle\right| 
    &= 
    \left|\left\langle (B^\star )^{s+1}D_w\chi_i - \mu_i(B^\star )^{s}D_w\chi_i, JJ^{-1}w \right\rangle\right| \\
    &= \left|\left\langle J(B^\star )^{s+1}D_w\chi_i - J\mu_i(B^\star )^{s}D_w\chi_i, J^{-1}w \right\rangle\right| \\
    &= \left|\left\langle D_w(B^{s+1}J\chi_i) - D_w(\mu_iB^{s}J\chi_i), J^{-1}w \right\rangle\right| \\
    &\le \|D_w\| \|J^{-1}\| \|B^{s+1}J\chi_i - \mu_iB^{s}J \chi_i\|_2 \\
    &\lesssim \sqrt{n}\vartheta^{(s+1)/2}. \notag
    \end{split}
    \end{equation}
    Here the second equality follows from $J$ being symmetric and the third equality follows from Lemma \ref{lem:BPPB}. For the last inequality we used Lemma \ref{lem:J_eigenvalue}, which in particular implies that the $\|J^{-1}\| = 1$, and Lemma \ref{lem:hypertree_process_3}. 
    From this we get that 
    \begin{align*}
        \sum_{s = t}^{\ell-1}\left|\mu_i^{t-s-1}\right|\left|\left\langle (B^\star )^{s+1}D_w\chi_i - \mu_i(B^\star )^{s}D_w\chi_i,w \right\rangle\right| &\le 
        \sqrt{n}|\mu_i|^t \sum_{s = t}^{\ell-1}\left( \frac{\vartheta}{\mu_i^2} \right)^{(s+1)/2} \\
        &= \sqrt{n}\vartheta^{t}\sum_{s = 1}^{\ell-t}\tau_i^{(s+1)/2}.
    \end{align*}
    Lastly, since $i \in [r_0]$ we have that $\tau_i < 1$. Therefore the summation is at most $\sqrt{\tau_i}/(1-\sqrt{\tau_i})$, which implies the lemma.
\end{proof}
\section{Proof of Proposition~\ref{prop_main}}\label{sec:proof_prop_main}
\subsection{Proof of \eqref{eq:norm_bound1}, \eqref{eq:norm_bound2}, \eqref{eq:norm_bound3} and \eqref{eq:norm_bound4}}
To begin we note that any $r_0 \times r_0$ matrix $M$ satisfies
\begin{equation*}
    \|M\| \le \|M\|_{\mathrm{HS}} \le r_0 \max_{i,j} |M_{ij}|.
\end{equation*}
For \eqref{eq:norm_bound1} we have, for any $i,j \in [r_0]$ that 
\begin{equation}
    \begin{split}
    \left|[U^\star U - \bC_u^{(\ell)}]_{ij}\right| &= \left| \langle u_i,u_j  \rangle - [\bC_u^{(\ell)}]_{ij}\right| \\
    &= (n(\mu_i\mu_j)^{\ell})^{-1}\left|\left\langle B^\ell J{\chi_i},B^\ell J{\chi_j} \right \rangle - n(\mu_i\mu_j)^\ell(\bC_{u}^{(\ell)})_{ij}\right| \\
    & \lesssim \frac{\log(n)^{7/2} \|w\|_\infty^{2\ell} R_g^{3\ell}}{\sqrt{n} (\mu_i\mu_j)^\ell} \notag
\end{split}
\end{equation}
In the second line we used the definition of $U$, in the third line we used the first consequence of Lemma \ref{lem:hypertree_process_2} (with $t = \ell$). Since $i, j \in [r_0]$, we have $\mu_i\mu_j > \vartheta$, and hence
\[ \frac{\log(n)^{7/2} \|w\|_\infty^{2\ell} R_g^{3\ell}}{\sqrt{n} (\mu_i\mu_j)^\ell} \leq \frac{\log(n)^{7/2}R^{2\ell}}{\sqrt{n}} \leq \log(n)^{7/2} n^{2\kappa - 1/2}.\]
When $\kappa < 1/8$, the bound above is $O(n^{-1/4})$, as requested.
In a completely analogous argument for \eqref{eq:norm_bound2} we have, 
\begin{equation}
    \max_{i,j \in [r_0]} \left|[V^{\star} V-\bC_v^{(\ell)}]_{ij}\right| \lesssim \frac{\log(n)^{7/2}R^{2\ell+2}}{\sqrt{n}},\notag
\end{equation}
where the only difference is in the application of the second consequence of Lemma \ref{lem:hypertree_process_2} instead of the first. Since \(R^\ell=n^\kappa\), the preceding bound is \(O(n^{-1/4})\)
for \(\kappa<1/8\). Next, we consider \eqref{eq:norm_bound3}. We have, for any $i,j \in [r_0]$ that 
\begin{equation}
    \begin{split}
    \left|[U^\star V - I_{r_0}]_{ij}\right| &= |\left\langle u_i,v_j\right\rangle - \delta_{ij}| \\
    &= (n\mu_i^{\ell}\mu_j^{\ell+1})^{-1}\left|\langle B^{\ell}J\chi_i,(B^\star)^\ell D_w \chi_j \rangle - n\mu_i^{\ell+1}\mu_j^{\ell}\delta_{ij}\right| \\
    &= (n\mu_i^{\ell}\mu_j^{\ell+1})^{-1}\left|\langle D_w B^{2\ell}J\chi_i, \chi_j \rangle - n\mu_i^{2\ell+1}\delta_{ij}\right| \\
    &\lesssim 
        \frac{\log(n)^{5/2} \|w\|^{2\ell+1}_\infty R_g^{4\ell+2}}{\sqrt{n} \mu_i^\ell \mu_j^{\ell+1}}.
    \end{split}\notag
\end{equation}
To obtain the inequality on the last line we used Lemma \ref{lem:hypertree_process_1} with $t = 2\ell$. As before, the latter is bounded from above by $\log(n)^{5/2} R^{2\ell+1} n^{-1/2} \lesssim n^{-1/4}$ as long as $\kappa < 1/8$.

For \eqref{eq:norm_bound4} we note that $(\Sigma^\ell)_{ij}\mu_i^{\ell}\mu_j^{\ell+1}\delta_{ij} = \mu_i^{3\ell+1}\delta_{ij}$. Then we have 
\begin{equation}
    \begin{split}
        |(V^\star B^\ell U - \Sigma^\ell)_{ij}| &= \left|\left \langle v_i,B^\ell u_j \right \rangle - \Sigma_{ij}^\ell\delta_{ij}\right| \\
        &= (n\mu_i^{\ell+1}\mu_j^{\ell})^{-1}\left|\langle B^{2\ell}J\chi_j,(B^\star)^\ell D_w \chi_i \rangle - n\mu_i^{3\ell+1}\delta_{ij}\right| \\
    &= (n\mu_i^{\ell+1}\mu_j^{\ell})^{-1}\left|\langle D_w B^{3\ell}J\chi_j, \chi_i \rangle - n\mu_i^{3\ell+1}\delta_{ij}\right| \\
    &\lesssim 
        \frac{\log(n)^{5/2} \|w\|^{3\ell+1}_\infty R_g^{6\ell+3}}{\sqrt{n} \mu_i^\ell \mu_j^{\ell+1}}.\notag
    \end{split}
\end{equation}
Multiplying and dividing by $\mu_1^\ell$, we note again that 
\[ \frac{\|w\|^{3\ell+1}_\infty R_g^{6\ell+3}}{\mu_i^\ell \mu_j^{\ell+1}\mu_1^\ell} \leq R^{3\ell+1},\]
and the proof proceeds as above.
\subsection{Proof of \eqref{eq:norm_bound5}}
\eqref{eq:norm_bound5} follows immediately from the following lemma: 
\begin{lemma}
There exists constants $c_1,c_2 > 0$ such that the following holds w.p. at least $1-n^{-c_1}$: Let $\ell \le \kappa \log_{R}(n)$ with $\kappa < \frac{1}{12}$. Let $w$ be any unit vector that is orthogonal to $(B^\star )^\ell D_w \chi_j$ for every $j \in [r]$. Then for all $n$ sufficiently large we have 
\begin{equation}\label{eq:eq_bound5_vec2}
    \|B^\ell w\| \lesssim \log^{c_2}\vartheta^{\ell/2}.
\end{equation}
\end{lemma}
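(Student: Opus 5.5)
The plan is to combine Lemma~\ref{lem:Blwnorm} with the norm bounds of Proposition~\ref{prop:trace} and the orthogonality estimate of Lemma~\ref{lem:hypertree_process_4}. First we restrict to the event on which $G$ is $\ell$-tangle-free (Lemma~\ref{lem:tangle-free}, probability $1-\tilde O(n^{4\kappa-1})$), on which $B^\ell = B^{(\ell)}$ and $B^{k}=B^{(k)}$ for all $k\le \ell$. We then intersect with the high-probability events of Proposition~\ref{prop:trace} (for all $1\le k\le\ell$), Lemma~\ref{lem:hypertree_process_4}, and Lemma~\ref{lem:hypertree_process_2}. By a union bound over $O(\ell)$ events, the intersection still has probability $1-n^{-c_1}$ for some $c_1>0$ when $\kappa<1/12$.

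Fix a unit vector $w$ orthogonal to every $(B^\star)^\ell D_w\chi_j$. Apply Lemma~\ref{lem:Blwnorm} and bound each term in turn.
\begin{enumerate}
\item By \eqref{eq:Delta}, $\|\Delta^{(\ell)}\|\lesssim \log^c(n)\vartheta^{\ell/2}$.
\item By \eqref{eq:KBl} and \eqref{eq:DeltaK}, the terms $\|\mathsf K B^{(\ell-1)}\|$ and $\|\Delta^{(\ell-1)}\mathsf K^{(1)}\|$ are at most $\log^c(n) n^{-1/2}R^\ell\vartheta^{\ell/2}$, since $\binom{n}{q-1}^{-1/2}\le n^{-1/2}$.
\item By \eqref{eq:Rk}, $\sum_{t}\|R_t^{(\ell)}\|\lesssim \ell\log^c(n) n^{-1}R^\ell\vartheta^{\ell/2}$.
\item By \eqref{eq:Sk}, $\sum_t\|S_t^{(\ell)}\|\lesssim \ell\log^c(n)n^{-1/2}R^\ell\vartheta^{\ell/2}$.
\end{enumerate}
Since $R^\ell=n^{\kappa}$ with $\kappa<1/12<1/2$, each of the terms in items 2 to 4 is $o(\vartheta^{\ell/2})$. (The extra $\ell\lesssim\log n$ factors are absorbed into $\log^{c_2}(n)$.)

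The main term is the middle sum
\[
\frac1n\sum_{j=1}^r\sum_{t=1}^{\ell-1}|\mu_j|\,\|\Delta^{(t-1)}J\chi_j\|\,|\langle\chi_j,D_wB^{(\ell-t-1)}w\rangle|.
\]
By \eqref{eq:Delta_chi}, $\|\Delta^{(t-1)}J\chi_j\|\lesssim \log^c(n)\sqrt n\,\vartheta^{(t-1)/2}$. The inner product is handled separately for two ranges of $j$.
\begin{itemize}
\item For $j\in[r_0]$, we apply Lemma~\ref{lem:hypertree_process_4} with exponent $\ell-t-1$, which gives $\lesssim \sqrt n\,\vartheta^{(\ell-t-1)/2}$. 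That lemma needs orthogonality to $(B^\star)^\ell D_w\chi_j$ only for $j\in[r_0]$, and this is part of our hypothesis.
\item For $j>r_0$, we use Cauchy--Schwarz and the second part of Lemma~\ref{lem:hypertree_process_2}: $|\langle (B^\star)^{s}D_w\chi_j,w\rangle|\le \|(B^\star)^sD_w\chi_j\|\lesssim\sqrt n\max(|\mu_j|,\sqrt\vartheta)^{s+1}=\sqrt n\,\vartheta^{(s+1)/2}$, since $|\mu_j|\le\sqrt\vartheta$ there.
\end{itemize}
In either case each summand is $\lesssim \log^c(n)\,|\mu_j|\,\vartheta^{(\ell-1)/2}$ up to constants, using $\vartheta\gtrsim1$ in the regime of Lemma~\ref{lem:hypertree_process_4}. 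Summing over the $\ell\lesssim\log n$ values of $t$ and the $r$ values of $j$ gives $\log^{c+1}(n)\,\vartheta^{\ell/2}$, absorbing $|\mu_j|/\sqrt\vartheta\le|\mu_1|/\sqrt\vartheta$ as a constant. Collecting all terms yields \eqref{eq:eq_bound5_vec2}.

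The main obstacle is the middle sum. Lemma~\ref{lem:hypertree_process_4} is stated for a fixed $t$ and, as written, gives $\vartheta^{t}$ in its last line where $\vartheta^{t/2}$ is needed. I would recheck its geometric sum, $|\mu_i|^t\,\vartheta^{(s+1)/2}|\mu_i|^{-(s+1)}$ summed over $s\ge t$, which is $\lesssim \vartheta^{(t+1)/2}/(1-\sqrt{\tau_i})$, so it gives the right $\vartheta^{t/2}$ order. The constant $(1-\sqrt{\tau_{r_0}})^{-1}$ is treated as absolute. One also has to make sure the high-probability events in Lemmas~\ref{lem:hypertree_process_3} and~\ref{lem:hypertree_process_4} hold simultaneously for all $t\le\ell$; a union bound over $t$ handles this. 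Finally, to derive \eqref{eq:norm_bound5} itself, take the supremum over unit $w\in\img(V)^\perp$, noting that $\img(V)^\perp$ is exactly the set of vectors orthogonal to $(B^\star)^\ell D_w\chi_i$ for $i\in[r_0]$.
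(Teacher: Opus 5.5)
Your proposal is correct and follows the paper's route. You expand via Lemma~\ref{lem:Blwnorm} on the tangle-free event and control the remainder terms with Proposition~\ref{prop:trace}. You then split the middle sum into $j\in[r_0]$, handled by Lemma~\ref{lem:hypertree_process_4}, and $j>r_0$, handled by Cauchy--Schwarz with Lemma~\ref{lem:hypertree_process_2}.

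Your bookkeeping is slightly more careful than the paper's. You keep the $\sqrt n$ from \eqref{eq:Delta_chi}, so the middle sum is of order $\log^c(n)\vartheta^{\ell/2}$, not $n^{-1/2}\log^c(n)\vartheta^{\ell/2}$. You also correctly fix the exponent in Lemma~\ref{lem:hypertree_process_4} and note that only orthogonality for $j\in[r_0]$ is needed. None of these changes the conclusion.
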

\begin{proof}
    With probability $1-n^{-c_1}$ we have, by Lemma \ref{lem:tangle-free} and Proposition \ref{prop:trace} that $H$ is $\ell$-tangle free and all the norm bounds in Proposition \ref{prop:trace} hold. Therefore $B^\ell w = B^{(\ell)}w$ and we may expand $\|B^{(\ell)}w\|$ according to \ref{lem:Blwnorm}, apply the norm bounds from Proposition \ref{prop:trace} on the terms in the expansion, and simplify the expression (collecting lower order terms and such) to get that
    \begin{equation}\label{eq:bound5_ineq_1}
            \|B^\ell w\| \le \log^c(n)\vartheta^{\ell/2} + \log^c(n)n^{\kappa-1/2} + \frac{\log^c(n)}{n}\sum_{j = 1}^r\sum_{t=1}^{\ell-1}\mu_j\vartheta^{(t-1)/2}\left|\left\langle\chi_j,D_w B^{(\ell-t-1)}w \right\rangle\right|. 
    \end{equation}
We now bound the double summation in \eqref{eq:bound5_ineq_1} by casing on $j$. When $j \in [r_0]$ we apply the estimate \eqref{eq:hypertree_process_4} from Lemma \ref{lem:hypertree_process_4} to get 
\begin{align*}
\frac{\log^c(n)}{n}\sum_{j = 1}^{r_0}\sum_{t=1}^{\ell-1}\mu_j\vartheta^{(t-1)/2}\left|\left\langle\chi_j,D_w B^{(\ell-t-1)}w \right\rangle\right| &\lesssim n^{-1/2}\log^c(n) \vartheta^{(t+1)/2}\vartheta^{(\ell-t-1)/2} \\
&= n^{-1/2}\log^c(n)\vartheta^{\ell/2}. 
\end{align*}
When $j > r_0$ we have $\mu_j^2 \le \vartheta$. Since $\kappa < 1/12$, Lemma \ref{lem:hypertree_process_2} implies that 
\[
    \left| \left\langle \chi_j,D_w B^{(\ell-t-1)}w \right\rangle \right| \le \|(B^\star )^{\ell-t-1}D_w\chi_j\|_2 \lesssim  \sqrt{n}\sqrt{\vartheta}^{(\ell-t-1)},
\]
from which we get 
\[
\frac{\log^c(n)}{n}\sum_{j = r_0+1}^{r}\sum_{t=1}^{\ell-1}\mu_j\vartheta^{(t-1)/2}\left|\left\langle\chi_j,D_w B^{(\ell-t-1)}w \right\rangle\right| \lesssim n^{-1/2}\log^c(n)\vartheta^{\ell/2}
\]
We conclude that the right hand side of \eqref{eq:norm_bound5} is dominated by $\log^c(n)\vartheta^{\ell/2}$. The result follows.
\end{proof}
\subsection{Proof of \eqref{eq:norm_bound6}}
\eqref{eq:norm_bound6} follows immediately from the following lemma: 
\begin{lemma}\label{lem:norm_bound_5}
There exists constants $c_1,c_2 > 0$ such that the following holds w.p. at least $1-n^{-c_1}$: Let $\ell \le \kappa \log_{R}(n)$ with $\kappa < 1/12$. Let $w$ be any unit vector that is orthogonal to $B^\ell J \chi_j$ for every $j \in [r]$. Then for all $n$ sufficiently large we have 
\begin{equation}\label{eq:eq_bound5_vec}
    \|(B^\ell)^\star  w\| \lesssim \log^{c_2}\vartheta^{\ell/2}.
\end{equation}
Proving this requires a different expansion of $B^{(\ell)}w$ to that of \eqref{eq:expansionBl}, albeit completely analogous, to then prove operator norm bounds on the terms appearing in the expansion exactly as was done for \eqref{eq:expansionBl} in Proposition \ref{prop:trace}, and then finally repeat the proof of \eqref{eq:norm_bound5}. As this type of symmetric argument was carried out in \cite{stephan2024sparse} to prove a bound on $\|(B^\star )^\ell w\|$ in their setting, we will omit the proof and refer the interested reader to Section B.8 in \cite{stephan2024sparse} for more details.   
\end{lemma}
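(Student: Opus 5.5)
The plan is to transfer the bound for $B^\ell$ restricted to $\img(V)^\bot$ to the adjoint side. The tool is the parity-time symmetry of Lemma~\ref{lem:BPPB}, which converts $(B^\star)^\ell$ into a conjugate of $B^\ell$, so that the argument used for \eqref{eq:norm_bound5} can be rerun on a mirrored expansion.

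\textbf{Step 1 (reduction).} From Lemma~\ref{lem:BPPB}, $(B^\star)^\ell = J^{-1} D_w B^\ell J D_w^{-1}$ on the support of the weights. Given a unit $w\perp B^\ell J\chi_j$ for all $j$, set $w' = J D_w^{-1}w$. However, $D_w^{-1}$ may be large when some weights are small, and orthogonality is not transported cleanly. I would therefore not conjugate, and instead expand $(B^{(\ell)})^\star$ directly. Transposing Lemma~\ref{lem:expansionBl} gives
\[
(B^{(\ell)})^\star = (\Delta^{(\ell)})^\star + (B^{(\ell-1)})^\star\mathsf K^\star + \sum_t (B^{(\ell-t-1)})^\star (\mathsf K^{(2)})^\star(\Delta^{(t-1)})^\star + \cdots.
\]
Write $(\mathsf K^{(2)})^\star = \overline D^\star + L^\star$, where
\[
\overline D^\star = \frac1n\sum_i \mu_i D_w\chi_i (J\chi_i)^\star .
\]
Operator norms are invariant under transposition, so Proposition~\ref{prop:trace} bounds every term except the $\overline D^\star$ term.

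\textbf{Step 2 (the $\overline D^\star$ term).} This term contributes
\[
\frac1n\sum_j\sum_t \mu_j \,\|(B^{(\ell-t-1)})^\star D_w\chi_j\|\,\bigl|\langle J\chi_j,(\Delta^{(t-1)})^\star w\rangle\bigr|
= \frac1n\sum_j\sum_t \mu_j\,\|(B^\star)^{\ell-t-1}D_w\chi_j\|\,\bigl|\langle \Delta^{(t-1)}J\chi_j, w\rangle\bigr|.
\]
Here the roles are swapped relative to Lemma~\ref{lem:Blwnorm}: the factor that needs orthogonality is $\langle \Delta^{(t-1)}J\chi_j, w\rangle$. The bound by \eqref{eq:Delta_chi} alone gives $\sqrt n\,\vartheta^{(t-1)/2}$, and Lemma~\ref{lem:hypertree_process_2} gives
\[
\|(B^\star)^{\ell-t-1}D_w\chi_j\|\lesssim\sqrt n\,\max(\mu_j,\sqrt\vartheta)^{\ell-t}.
\]
For $j>r_0$ the product is already at most $\vartheta^{\ell/2}$ up to polylog factors, since $\mu_j\le\sqrt\vartheta$. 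For $j\le r_0$ the growth $\mu_j^{\ell-t}$ is too large, so the orthogonality must be used. I would replace $\Delta^{(t-1)}J\chi_j$ by $B^{t-1}J\chi_j$ minus the remaining expansion terms. Then I would exploit $w\perp B^\ell J\chi_j$ through the telescoping argument of Lemma~\ref{lem:hypertree_process_4}, using Lemma~\ref{lem:hypertree_process_3}. That yields
\[
|\langle B^{s}J\chi_j,w\rangle|\lesssim\sqrt n\,\vartheta^{s/2}\quad\text{for } s\le \ell,
\]
which is the mirror image of \eqref{eq:hypertree_process_4}. In practice it is cleaner to organize the expansion as in \cite[Section B.8]{stephan2024sparse}, putting $B^{(t-1)}$ on the left factor and $\Delta$ on the right. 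The projection term then becomes
\[
\frac1n \mu_j\,|\langle B^{t-1}J\chi_j,w\rangle|\,\|(\Delta^{(\ell-t-1)})^\star D_w\chi_j\|.
\]
Here the first factor is controlled by the mirrored telescoping bound and the second by a $D_w$-variant of \eqref{eq:Delta_chi}, with the same trace proof. The product is then at most $\mathrm{polylog}(n)\,\vartheta^{\ell/2}$.

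\textbf{Step 3 (assembly).} Combining the pieces on the $\ell$-tangle-free event of Lemma~\ref{lem:tangle-free} gives $\|(B^\ell)^\star w\|\lesssim \log^{c}(n)\vartheta^{\ell/2}$, and \eqref{eq:norm_bound6} follows by taking the supremum over $w$.

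The main obstacle is Step 2: establishing the alternative expansion with $B^{(t-1)}$ on the left. This requires redoing the telescoping decomposition with the centered factors placed at the end of the path, then checking that the new remainder terms still fall under Proposition~\ref{prop:trace}'s trace bounds.
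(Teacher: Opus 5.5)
Your final route is correct and is exactly the argument the paper omits and defers to \cite[Section B.8]{stephan2024sparse}: you use the mirrored expansion with $B^{(t-1)}$ on the left and $\Delta^{(\ell-t-1)}$ on the right, so that the rank-$r$ part of $\mathsf{K}^{(2)}$ yields $\frac1n\mu_j\langle B^{t-1}J\chi_j,w\rangle\,(\Delta^{(\ell-t-1)})^\star D_w\chi_j$, bound the inner product by telescoping against $w\perp B^\ell J\chi_j$ via Lemma~\ref{lem:hypertree_process_3}, and bound all other terms by trace estimates as in Proposition~\ref{prop:trace}. One correction to your Step~1 aside: orthogonality does transport exactly under the conjugation, since Lemma~\ref{lem:BPPB} gives $\langle J D_w^{-1} w, (B^\star)^\ell D_w\chi_j\rangle = \langle w, B^\ell J\chi_j\rangle = 0$, so when all $w^{(k)}\neq 0$ the lemma follows at once from \eqref{eq:norm_bound5}, with an extra constant factor $(q_{\max}-1)\|w\|_\infty/\min_k|w^{(k)}|$ that is precisely the small-weight issue you flagged.
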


\subsection{Proof of \eqref{eq:norm_bound7}}
\begin{proof}
\eqref{eq:norm_bound7} is an immediate consequence of \eqref{eq:Bl} from Proposition \ref{prop:trace} since, from the definition of $R$ and $\ell$, $R^\ell \lesssim n^{1/4}$.
\end{proof}

\paragraph{Acknowledgements}
YZ was partially
supported by the Simons Grant MPS-TSM-00013944.
\bibliographystyle{alpha}
\bibliography{ref.bib}

\appendix

\section{Additional proofs}\label{appendix:proof}
\subsection{Proof of Lemma~\ref{lem:deterministic_Q}}
\begin{proof} In this proof, for simplicity, we drop the dependence on $k$ in $d^{(k)}, q^{(k)}, \mathbf P^{(k)}, \bQ^{(k)}$, and $\bD^{(k)}$.
For any vectors $u,v\in \R^r$, define
\[
\langle u,v\rangle_\pi := \sum_{a=1}^r \pi_a u_a v_a,
\qquad
\|v\|_\pi^2 := \langle v,v\rangle_\pi.
\]
Since $\bD$ is symmetric and $\bQ=\bD\Pi$, we have
$
\langle u,\bQ v\rangle_\pi
= u^\star \Pi D\Pi v
= v^\star \Pi D\Pi u
= \langle Qu,v\rangle_\pi$.

So $Q$ is self-adjoint with respect to $\langle\cdot,\cdot\rangle_\pi$. Therefore, it is enough to prove that for every $v\in \mathbb{R}^r$,
\begin{equation}
\langle v,Qv\rangle_\pi \ge -\frac{d}{q-1}\|v\|_\pi^2.
\label{eq:goal-qf}
\end{equation}
Indeed, from \eqref{eq:goal-qf}, taking $v$ to be an eigenvector of $Q$ immediately gives the eigenvalue bound.

Using $\bQ_{ab}=\bD_{ab}\pi_b$, we get
$
\langle v,\bQ v\rangle_\pi
= \sum_{a,b=1}^r \pi_a v_a \bQ_{ab} v_b
= \sum_{a,b=1}^r \pi_a\pi_b \bD_{ab} v_a v_b$.
Now substitute the definition of $\bD_{ab}$:
\[
\langle v,\bQ v\rangle_\pi
= \sum_{a,b=1}^r \sum_{\underline k\in [r]^{q-2}}
 p_{a,b,\underline k}\,\pi_a\pi_b\Bigl(\prod_{\ell\in \underline k}\pi_\ell\Bigr) v_a v_b.
\]
If we relabel $(a,b,\underline k)$ as a full $q$-tuple $(i_1,\dots,i_q)$, then
\begin{equation}
\langle v,\bQ v\rangle_\pi
= \sum_{i_1,\dots,i_q\in [r]}
 p_{i_1,\dots,i_q}\,\pi_{i_1}\cdots \pi_{i_q}\, v_{i_1}v_{i_2}.
\label{eq:qf-expanded}
\end{equation}
For convenience, write
$
c_{i_1,\dots,i_q}:=p_{i_1,\dots,i_q}\,\pi_{i_1}\cdots \pi_{i_q}$.
Because $\mathbf P$ is symmetric, the coefficients $c_{i_1,\dots,i_q}$ are symmetric in the $q$ indices. For each ordered pair $(s,t)$ with $s\neq t$, define
\[
S_{st}:=\sum_{i_1,\dots,i_q\in [r]} c_{i_1,\dots,i_q}\, v_{i_s}v_{i_t}.
\]
By symmetry,  $S_{st}=S_{12}$ for every $s\neq t$. 
Since there are exactly $q(q-1)$ ordered pairs $(s,t)\in [q]^2$ with $s\neq t$, averaging gives
\[
S_{12}=\frac{1}{q(q-1)}\sum_{s\neq t} S_{st}.
\]
Using \eqref{eq:qf-expanded}, this becomes
\begin{equation}
q(q-1)\langle v,Qv\rangle_\pi
= \sum_{i_1,\dots,i_q\in [r]} c_{i_1,\dots,i_q}\sum_{s\neq t} v_{i_s}v_{i_t}.
\label{eq:symmetrized}
\end{equation}
For any real numbers $z_1,\dots,z_q$, $\sum_{s\neq t} z_s z_t
= \Bigl(\sum_{s=1}^q z_s\Bigr)^2 - \sum_{s=1}^q z_s^2
\ge -\sum_{s=1}^q z_s^2$.
Apply this with $z_s=v_{i_s}$. Since every coefficient $c_{i_1,\dots,i_q}$ is nonnegative, \eqref{eq:symmetrized} implies
\begin{equation}
q(q-1)\langle v,Qv\rangle_\pi
\ge -\sum_{i_1,\dots,i_q\in [r]} c_{i_1,\dots,i_q}\sum_{s=1}^q v_{i_s}^2.
\label{eq:after-ineq}
\end{equation}
Because the coefficients $c_{i_1,\dots,i_q}$ are symmetric, 
\[
\sum_{i_1,\dots,i_q} c_{i_1,\dots,i_q}\sum_{s=1}^q v_{i_s}^2
= q\sum_{i_1,\dots,i_q} c_{i_1,\dots,i_q}v_{i_1}^2.
\]
Now separate the first index:
\[
q\sum_{i_1,\dots,i_q} c_{i_1,\dots,i_q}v_{i_1}^2
= q\sum_{i_1=1}^r \pi_{i_1}v_{i_1}^2
\sum_{i_2,\dots,i_q\in [r]} p_{i_1,\dots,i_q}\prod_{u=2}^q \pi_{i_u}.
\]
By Assumption~\eqref{assumption:degree}, the inner sum equals $d$ for every $i_1$. Therefore,
\begin{equation}
\sum_{i_1,\dots,i_q} c_{i_1,\dots,i_q}\sum_{s=1}^q v_{i_s}^2
= qd\sum_{i_1=1}^r \pi_{i_1}v_{i_1}^2
= qd\,\|v\|_\pi^2.
\label{eq:rightside}
\end{equation}
Combining \eqref{eq:after-ineq} with \eqref{eq:rightside}, we obtain
$
q(q-1)\langle v,Qv\rangle_\pi \ge -qd\,\|v\|_\pi^2.
$
This proves \eqref{eq:goal-qf}. Hence, the conclusion holds.
\end{proof}
\subsection{Proof of Lemma~\ref{lem:J_eigenvalue} }
    \begin{proof}
    Note that $J$ acts independently on each hyperedge $e\in H$. 
Restricting $J$ to the oriented hyperedges $\{x\to e : x\in e\}$ of a fixed hyperedge $e$ of size $q_e\ge 2$, its matrix representation is
\[
J_e = \mathbf 1\mathbf 1^\top - I_{q_e},
\]
where $\mathbf 1\in\mathbb R^{q_e}$ is the all-ones vector.
The eigenvalues of $J_e$ are $q_e-1$ with multiplicity $1$ (corresponding to $\mathbf 1$) and $-1$ with multiplicity $q_e-1$.
Since $q_e\ge 2$, all eigenvalues are nonzero, and hence $J_e$ is invertible.
Because $J$ is block-diagonal with blocks $\{J_e\}_{e\in H}$, it follows that $J$ is invertible and its eigenvalues are given by $-1$ and $q^{(k)}-1, 1\leq k\leq K$.
\end{proof}

\subsection{Proof of Lemma~\ref{lem:BPPB}}
\begin{proof}
When $k=0$, we can check by definition that 
\begin{align}\label{eq:JDw}
JD_w =D_w J.
\end{align} When $k=1$, we can check that 
  \begin{align}
      (D_w BJ)_{(x\to e),(y\to f)}=w_ew_f |e\cap f\setminus \{x,y\}|\1 \{e\not=f\}= (JB^\star D_w)_{(x\to e),(y\to f)}, \notag
  \end{align}
  hence $D_w BJ=JB^\star D_w$. Since $J$ is invertible, from \eqref{eq:JDw},
  \begin{align}
      (J^{-1}D_w) B=B^\star (J^{-1}D_w).\notag
  \end{align}
  Then for any $k\geq 2$,
  \begin{align}
      J^{-1} D_w B^k=J^{-1}D_w B B^{k-1}=B^\star J^{-1}D_w B^{k-1}=\cdots=(B^\star )^k J^{-1} D_w=(B^\star )^k D_w J^{-1},\notag
  \end{align}
  which implies $D_w B^kJ=J(B^\star )^k D_w$.
\end{proof}

\subsection{Proof of Lemma~\ref{lem:cov_bound}}\label{sec:proof_lem_cov_bound}

Throughout this section, we shall use the following consequence of Lemma~\ref{lem:deterministic_Q} and the Perron-Frobenius theorem: if $a$ is a positive vector, the matrix $\bQ_a$ satisfies
\begin{equation}\label{eq:app:bounds_Q_weighted}
    - d_{a/(q-1)} \cdot I_{r} \preceq  \bQ_a \preceq d_a I_r.
\end{equation}

We recall the definition of $\bC^{(\ell)}$:
\[ \bC^{(\ell)}_{ij} = \frac{1 - \tau_{ij}^{\ell+1}}{1 - \tau_{ij}} \delta_{ij} + \tau_{ij} \frac{1 - \tau_{ij}^{\ell+1}}{1 - \tau_{ij}} \phi_i^\star \Pi \bQ_{(q-2)w^2/\vartheta} \phi_j = \delta_{ij} + \sum_{t = 1}^\ell \underbrace{\tau_{ij}^t \left( \delta_{ij} + \phi_i^\star \Pi \bQ_{(q-2)w^2/\vartheta} \phi_j \right)}_{:= \tilde\bC^{(t)}_{ij}} \]
It thus suffices to show that $\tilde\bC^{(t)} \succeq 0$ to imply that $\bC^{(\ell)} \succeq I_{r_0}$. For $i \in [r_0]$, we define
\[ \eta_i = \frac{\sqrt{\vartheta}}{\mu_i}, \]
so that $\tau_{ij} = \eta_i\eta_j$. Given $\alpha \in \R^{r_0}$, recalling that $\phi_i^\star \Pi \phi_j = \delta_{ij}$,
\begin{align*}
    \alpha^\star \tilde\bC^{(t)} \alpha &= \vartheta^{-1} \left( \sum_i \alpha_i \eta_i^t \phi_i \Pi^{1/2} \right) \left( \vartheta I_{r_0} + \Pi^{1/2}\bQ_{(q-2)w^2}\Pi^{-1/2} \right) \left( \sum_i \alpha_i \eta_i^t \phi_i \Pi^{1/2} \right) \\
    &= \vartheta^{-1}\beta_t^\star \left( \sum_{k=1}^K \left(w^{(k)}\right)^2 (q^{(k)} - 1) \left[ d^{(k)} I_{r_0} + (q^{(k)}-2)\Pi^{1/2} D^{(k)} \Pi^{1/2} \right]  \right)\beta_t,
\end{align*}
having defined $\beta_t = \sum_i \alpha_i \eta_i^t \phi_i \Pi^{1/2}$. The matrix $\Pi^{1/2} \bD^{(k)} \Pi^{1/2}$ is a symmetric matrix with the same eigenvalues as $\bQ^{(k)}$, and from \eqref{eq:app:bounds_Q_weighted} we have
\[ \vartheta I_{r_0} + \bQ_{w^2(q-2)} \succeq (\vartheta - d_{w^2(q-2)/(d-1)}) \cdot I_{r_0} = d_{w^2/(q-1)} \cdot I_{r_0} \]
This implies that $\alpha^\star \tilde\bC^{(t)} \alpha \geq 0$ and the lower bound $\bC^{(\ell)} \succeq I_{r_0}$ holds.
For the upper bound, we have $\bQ_{w^2(q-2)} \preceq d_{w^2(q-2)} \cdot I_{r_0}$, hence
\begin{align*}
    \alpha^\star \tilde\bC^{(t)} \alpha &\leq \vartheta^{-1} \beta_t^\star \left(  \sum_{k=1}^K \left(w^{(k)}\right)^2 (q^{(k)} - 1)^2 d^{(k)} \right)\beta_t \\
    &\leq  (q_{\max}-1) \|\beta_t\|^2 \\
    &\leq (q_{\max}-1) \sum_{i=1}^{r_0} \tau_i^{t} \alpha_i^2 \\
    &\leq (q_{\max} -1) \tau_{{r_0}}^t \|\alpha\|^2.
\end{align*}
Summing those bounds yields
\[ \bC^{(\ell)} \preceq 1 + \frac{\tau_{r_0}(q_{\max}-1)}{1 - \tau_{r_0}}, \]
as requested.

We now move to $\bC_u^{(t)}$ and $\bC_v^{(t)}$. From eq. \eqref{eq:app:bounds_Q_weighted}, we have
\begin{align*} 
-d_{(q-2)/(q-1)} \cdot I_{r_0} \preceq \bQ_{(q-2)} &\preceq d_{(q-2)} \cdot I_{r_0}  \\
-d_{w^2(q-2)/(q-1)^2} \cdot I_{r_0}\preceq \bQ_{w^2(q-2)/(q-1)} &\preceq d_{w^2(q-2)/(q-1)} \cdot I_{r_0}
\end{align*}
Summing with the previous results on $\bC^{(t)}$, we have
\begin{align*} 
d_{1/(q-1)} \cdot I_{r_0} &\preceq \bC^{(t)}_u \preceq \left[d \left(1 + \frac{\tau_{r_0}(q_{\max}-1)}{1 - \tau_{r_0}}\right) + d_{q-2} \right] I_{r_0} \\
\vartheta^{-1} d_{w^2/(q-1)^2} \cdot  I_{r_0}&\preceq \bC^{(t)}_v
\preceq \vartheta^{-1} \left[ d_{w^2/(q-1)}  \left(1 + \frac{\tau_{r_0}(q_{\max}-1)}{1 - \tau_{r_0}}\right) + d_{w^2(q-2)/(q-1)} \right] \cdot I_{r_0} 
\end{align*}

\subsection{Proof of Lemma~\ref{lem:reduced_eigen_computations}}

The first equality comes from the identity
\[ \langle SD_w B^\ell J\chi_i, \tilde \phi_j \rangle = \langle D_w B^\ell J\chi_i, S^\top \tilde\phi_j \rangle =  \langle D_w B^\ell J\chi_i, \chi_j \rangle\]
and the application of Lemma \ref{lem:hypertree_process_1}.

For the second, define
\[ f(g, o) = \mathbf{1}_{(g, o)_\ell \text{ is tangle-free}}\, \left(\sum_{\gamma \in \Gamma^\ell_o} w(\gamma) \sum_{x_{\ell+1} \in e_\ell \setminus \{x_\ell\}} \phi_i(\sigma(x_{\ell+1})) \right)^2. \]
The functional $f$ is $(t+1)$-local, and similarly to Lemma~\ref{lem:hypertree_process_2} one can check that whenever $G$ is tangle-free,
\begin{align} 
    \sum_{x \in [n]} f(G, x) &= \|SD_w B^\ell J\chi_i\|^2, \label{eq:square_f}\\
    |f(g, o)| &\leq 2 \|w\|_\infty^{2t+2} \left|(g, o)_{t+1}\right|^2.\notag
\end{align}
On the other hand, we have
\[ f(T, \rho) = \overline{f_{\phi_i, t+1}^2} = \mu_i^{2(t+1)} \bC_{ii}^{(\ell)}, \]
from the computations in Lemma~\ref{lem:hypertree_process_2}. The rest of the proof proceeds as in  \eqref{eq:norm_bound1}, noticing that
\[ \bC_{ii}^{(\ell)} = \frac{1 - \tau_{i}^{\ell+1}}{1 - \tau_{i}} + \tau_{i} \cdot \frac{1 - \tau_{i}^{\ell}}{1 - \tau_{i}} \phi_i^\star \Pi \bQ_{(q-2)w^{2}/\vartheta} \, \phi_i = \gamma_i + O(n^{-c}). \]

\subsection{Proof of Proposition~\ref{thm:IharaBass}}
\begin{proof}[Proof of Proposition~\ref{thm:IharaBass}]
From \cite[Lemma 2]{stephan2024sparse}, $J_k$ satisfies $J_k=J_k^\star $ and when restricted on $E_k$:
\begin{align}\label{eq:J_k_identity}
J_k^2=(q^{(k)}-2)J_k+(q^{(k)}-1) I_{E_k}.
\end{align}
Set
\[
S_w:=\sum_{k=1}^K w^{(k)} S_k,\qquad
J_w:=\sum_{k=1}^K w^{(k)} J_k.
\]
 We have from \eqref{eq:T_S_P_relations}, $B=T^\star S_w-J_w$, hence
\[
\lambda I-B=(\lambda I+J_w)-T^\star S_w.
\]
Whenever $\lambda I+J_w$ is invertible, the matrix determinant formula gives
\begin{equation}\label{eq:mdl-step}
\det(\lambda I-B)
=
\det(\lambda I+J_w)\;
\det\!\Big(I-S_w(\lambda I+J_w)^{-1}T^\star \Big).
\end{equation}

From Lemma~\ref{lem:J_eigenvalue}, $J_w$ has eigenvalues $-w^{(k)}$ with multiplicity $(q^{(k)}-1)m_k$ and $w^{(k)}(q^{(k)}-1)$ with multiplicity $m_k$, hence
\begin{align}\label{eq:charJ}
\det(\lambda I+J_w)
=
\prod_{k=1}^K
(\lambda-w^{(k)})^{(q^{(k)}-1)m_k}\;
(\lambda+w^{(k)}(q^{(k)}-1))^{m_k}.
\end{align}
Since $(S_k,T_k,J_k)$ have disjoint supports for different $k$,
\begin{align*}
S_w(\lambda I+J_w)^{-1}T^\star 
=\sum_{k=1}^K w^{(k)} S_k(\lambda I+w^{(k)}J_k)^{-1}T_k^\star .
\end{align*}
Define 
\[\Delta_k(\lambda):=(\lambda-w^{(k)})\bigl(\lambda+w^{(k)}(q^{(k)}-1)\bigr).\]
Using \eqref{eq:J_k_identity}, one checks
\[
(\lambda I+w^{(k)}J_k)^{-1}
=
\frac{(\lambda+w^{(k)}(q^{(k)}-2))I-w^{(k)}J_k}{\Delta_k(\lambda)}.
\]
With \eqref{eq:T_S_P_relations}, this yields
\[
w^{(k)} S_k(\lambda I+w^{(k)}J_k)^{-1}T_k^\star 
=
\frac{w^{(k)}\lambda}{\Delta_k(\lambda)}A_k
-\frac{{w^{(k)}}^2(q^{(k)}-1)}{\Delta_k(\lambda)}D_k.
\]
Therefore
\begin{align}\label{eq:matrix_det}
\det\!\Big(I-S_w(\lambda I+J_w)^{-1}T^\star \Big)
=
\det\!\Big(
I-\sum_{k=1}^K \frac{w^{(k)}\lambda}{\Delta_k(\lambda)}A_k
+\sum_{k=1}^K \frac{{w^{(k)}}^2(q^{(k)}-1)}{\Delta_k(\lambda)}D_k
\Big).
\end{align}

Rearrange the eigenvector $\tilde v\in \R^{2Kn}$ of $\tilde B$  as $(x,y)$ with
\[
x=(v_1^{\leftarrow},\dots,v_K^{\leftarrow})\in\mathbb R^{Kn},
\qquad
y=(v_1^{\rightarrow},\dots,v_K^{\rightarrow})\in\mathbb R^{Kn}.
\]
With respect to this decomposition, the matrix
$\lambda I-\tilde B$ has the block form
\[
\lambda I-\tilde B
=
\begin{pmatrix}
\lambda I & -M\\
-N & \lambda I-P
\end{pmatrix},
\]
where $M, N,P\in \R^{Kn\times Kn}$  with a $K\times K$ block structure satisfying
\[
M_{k\ell}=w_\ell(D_k-\delta_{k\ell}I),\qquad
N_{k\ell}=w_\ell(q^{(k)}-1)\delta_{k\ell}I,
\qquad
P_{k\ell}=w_\ell(A_k-(q^{(k)}-2)\delta_{k\ell}I).
\]
For $\lambda\neq 0$, take the Schur complement of $\lambda I$:
\begin{equation}\label{eq:schur-short}
\det(\lambda I-\tilde B)
=\lambda^{Kn}\det\!\Big(\lambda I-P-\lambda^{-1}NM\Big).
\end{equation}
Since $N$ is block-diagonal, $(NM)_{k\ell}=w^{(k)}(q^{(k)}-1)w_\ell(D_k-\delta_{k\ell}I)$.
A direct simplification gives, for each $(k,\ell)$-block,
\[
\big(\lambda I-P-\lambda^{-1}NM\big)_{k\ell}
=
\frac{\Delta_k(\lambda)}{\lambda}\,\delta_{k\ell}I
-\;w_\ell\Big(A_k+\frac{w^{(k)}(q^{(k)}-1)}{\lambda}D_k\Big).
\]
Now factor $(\Delta_k(\lambda)/\lambda)I$ from the $k$-th block row to obtain
\[
\lambda I-P-\lambda^{-1}NM
=
\operatorname{diag}\!\Big(\tfrac{\Delta_1(\lambda)}{\lambda}I,\dots,
\tfrac{\Delta_K(\lambda)}{\lambda}I\Big)\,
\Big(I-UV\Big),
\]
where $U\in\mathbb R^{Kn\times n}$ stacks the $n\times n$ blocks
$A_k+\frac{w^{(k)}(q^{(k)}-1)}{\lambda}D_k$, and
$V\in\mathbb R^{n\times Kn}$ has blocks
$V_\ell=\frac{w_\ell\lambda}{\Delta_\ell(\lambda)}I$.
Hence
\[
\det\!\Big(\lambda I-P-\lambda^{-1}NM\Big)
=
\prod_{k=1}^K\Big(\tfrac{\Delta_k(\lambda)}{\lambda}\Big)^n
\det(I-UV).
\]
By the matrix determinant formula, $\det(I-UV)=\det(I-VU)$, and we have
\[
VU
=
\sum_{k=1}^K \frac{w^{(k)}\lambda}{\Delta_k(\lambda)}A_k
+\sum_{k=1}^K \frac{{w^{(k)}}^2(q^{(k)}-1)}{\Delta_k(\lambda)}D_k.
\]
Plugging this into \eqref{eq:schur-short}  yields
\begin{equation}\label{eq:det-tildeB-short}
\det(\lambda I-\tilde B)
=
\Bigl(\prod_{k=1}^K \Delta_k(\lambda)^n\Bigr)
\det \Bigl(
I_n
-\sum_{k=1}^K \frac{w^{(k)}\lambda}{\Delta_k(\lambda)}A_k
+\sum_{k=1}^K \frac{{w^{(k)}}^2(q^{(k)}-1)}{\Delta_k(\lambda)} D_k
\Bigr).
\end{equation} Since both sides are rational functions in $\lambda$, the identity holds for all $\lambda \not\in \{w^{(k)},-w^{(k)}(q^{(k)}-1)\}$.
Combining \eqref{eq:det-tildeB-short} with \eqref{eq:mdl-step}, \eqref{eq:charJ}, and  \eqref{eq:matrix_det}  gives \eqref{eq:ihara-bass-lambda}.
\end{proof}

\subsection{Proof of Proposition~\ref{prop:Bethe_hessian_relation}}
\begin{proof}

The following matrix identities are  easy to check:
\begin{equation}\label{eq:T_S_P_relations}
\begin{aligned}
    S_kS_\ell^\star  &= D_k \delta_{kl},  \qquad  T_kT_\ell^\star =\left((q^{(k)}-2)A_k+(q^{(k)}-1)D_k\right)\delta_{kl}, \qquad S_kT_\ell^\star &= A_k \delta_{kl}, \\
    S_kJ_\ell &= T_k \delta_{kl}, \qquad J_k J_\ell= \left((q^{(k)}-2)J_k + (q^{(k)} - 1)I\right) \delta_{kl}, &\\
    \qquad B &= \left( \sum_k T_k \right)^\star \left(\sum_k w^{(k)} S_k\right)  - \sum_k w^{(k)} J_k . 
\end{aligned}
\end{equation}
Equation \eqref{eq:T_S_P_relations} emphasizes the deep connections between $B$ and the matrices $A_k$ and $D_k$.

    We begin from the relation $Bv=\lambda v$, which gives
    \begin{align} \label{eq:Bv}
    \left( \sum_k T_k \right)^\star \left(\sum_k w^{(k)} S_k v\right)  - \sum_k w^{(k)} J_k v = \lambda v. 
    \end{align}
    Multiplying on both sides by $S_k J_k^{-1}$, using the relations \eqref{eq:T_S_P_relations}, we find 
    \begin{align*} 
        \lambda v_k^{\leftarrow} &= S_k J_k^{-1} T_k^\star  \left( \sum_{\ell} w_\ell S_\ell \right) v - w^{(k)} S_k J_k^{-1} J_k v \\
        &= S_k S_k^\star  \left( \sum_{\ell} w_\ell S_\ell \right) v - w^{(k)}  v_k^{\rightarrow} \\
        &= D_k \sum_{\ell} w_\ell v_\ell^\rightarrow - w^{(k)} v_k^{\rightarrow}.
    \end{align*}
    Now, multiplying by $S_k$ in \eqref{eq:Bv} instead:
    \begin{align*}
        \lambda v_k^{\rightarrow} &= S_k T_k^\star \left( \sum_\ell w_\ell S_\ell \right) v - w^{(k)} S_k J_k  v \\
        & = A_k \left(\sum_{\ell} w_\ell S_\ell v \right) - w^{(k)} S_k \left( (q^{(k)} - 2)I + (q^{(k)} - 1)J_k^{-1} \right) v \\
        &= A_k \left( \sum_{\ell} w_\ell v_\ell^{\rightarrow} \right) - w^{(k)} (q^{(k)} - 2) v_k^{\rightarrow} - w^{(k)}(q^{(k)} - 1) v_k^{\leftarrow}.
    \end{align*}
    Those relations are equivalent to $\tilde B \tilde v = \lambda \tilde v$, which proves the first claim.

    For the second claim, we note that from the equations above,
    \begin{align*}
          \lambda v_k^{\leftarrow} &=D_ky-w^{(k)}v_k^{\rightarrow} \\
            \lambda v_k^{\rightarrow} &=A_k y- w^{(k)} (q^{(k)} - 2) v_k^{\rightarrow} - w^{(k)}(q^{(k)} - 1) v_k^{\leftarrow}.
    \end{align*}
    Eliminating $v_k^{\leftarrow}$ with the first equation, we obtain 
    \begin{align}
        v_k^{\rightarrow}= \Delta_k(\lambda) ^{-1} \left(\lambda A_ky-w^{(k)}(q^{(k)}-1) D_ky\right). \notag
    \end{align}
    Multiplying by $w^{(k)}$ and summing over $1\leq k\leq K$ gives exactly $H(\lambda) y=0$.
\end{proof}

\subsection{Proof of Theorem~\ref{thm:weakreconstruction}}

Note that after the weighting, since some $w^{(k)}$ might be negative, the first eigenvector of $\bQ$ is not necessarily $\1\in \R^k$. 
 Step 2 in Algorithm~\ref{alg:provable_algorithm} is used to detect the first non-trivial eigenspace of $\bQ$ orthogonal to the trivial eigenspace $\1$.

\begin{itemize}
    \item  Case 1: If  $|\langle \tilde y_1,\1\rangle |>\frac{1}{\log n}$ from part (1) in Theorem~\ref{th:main_reduced},  with high probability $\tilde y_1$ has a nonzero overlap with $\1$.
    \item Case 2:
  If  $|\langle \tilde y_1,\1\rangle |\leq \frac{1}{\log n}$, then  $\tilde y_1$ has a nonzero overlap with $\tilde \phi_1\not=\1$ and $  \phi_1$ is an eigenvector of $\bQ$ orthogonal to $\1$. 
\end{itemize}
According to  Step 2 in Algorithm~\ref{alg:provable_algorithm}, $u$ is an eigenvector of $\tilde B$ associated with the first  subspace of $\bQ$ orthogonal to $\1$. We assume Case 1 holds; the analysis for Case 2 is verbatim.


Equation~\eqref{eq:approx_y_SDu} in the  proof of Theorem \ref{th:main_reduced}  and \eqref{eq:def_u_v} imply that with high probability, 
\[ \|u -\bar u_2\| = O(n^{1/2-c}), \quad \text{with} \quad \bar u_2 = \frac{SD_wB^\ell J\chi_2}{\mu_2^{\ell+1}\sqrt{\gamma_2}},\]
 where $\gamma_2$ is defined in \eqref{eq:def_gamma_i} and $u$ is defined in Algorithm \ref{alg:provable_algorithm}.
Similar to  \eqref{eq:square_f},  when $G$ is $\ell$-tangle-free, which happens with  probability $1-O(n^{-c})$, \[\bar u_2(x) =  \frac{1}{\mu_2^{\ell+1}\sqrt{\gamma_2}} f(G, x),\]
where
\[ f(g, o) = \mathbf{1}_{(g, o)_\ell \text{ is tangle-free}}\, \sum_{\gamma \in \Gamma^\ell_o} w(\gamma) \sum_{x_{\ell+1} \in e_\ell \setminus \{x_\ell\}} \phi_2(\sigma(x_{\ell+1})), \]
where  $\Gamma_o^{\ell}$ is the set of all non-backtracking path of length $\ell$ starting at $o$.
The vector $\bar u_2$ then satisfies the weak convergence estimates of Proposition \ref{prop:concentration:gw_transform}. From \cite{stephan2024sparse}, the following holds:
\begin{lemma}[Lemma 31 in \cite{stephan2024non}]
    For any $i \in [r]$, there exists a random variable $X_i$ such that for any $K \geq 0$ that is a continuity point of $X_i$, with probability at least $1-n^{-c}$,
    \[\frac1n\sum_{x\in[n]} \ind_{\sigma(x)=i}\,  u(x) \ind_{| u(x)| \leq K} = \pi_i \,\E*{X_i \ind_{|X_i| \leq K}}+O(n^{-c}).\]
    Furthermore, we have
    \[ \sum_{i \in [r]} \pi_i \E{X_i} = 0, \quad  \sum_{i \in [r]} \pi_i \E{X_i^2} = 1 \quand \sum_{i \in [r]} \pi_i \E{X_i}^2 = \gamma_2^{-1}. \]
\end{lemma}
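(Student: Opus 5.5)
The plan is to identify $u(x)$, up to an $\ell^2$ error of order $n^{1/2-c}$, with a local functional $f(G,x)/(\mu_2^{\ell+1}\sqrt{\gamma_2})$, and then pass to the hypertree limit via the martingale of Proposition~\ref{prop:martingale}.

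\textbf{Step 1 (approximation).} From the discussion preceding the lemma, $\|u-\bar u_2\| = O(n^{1/2-c})$. For a fixed continuity point $K$, the truncation $t\mapsto t\ind_{|t|\le K}$ is only Lipschitz away from $\pm K$. So I will replace it by a Lipschitz $h_\epsilon$ that agrees with it outside $(K-\epsilon,K+\epsilon)$. Cauchy--Schwarz then gives
\[
\frac1n\sum_x |h_\epsilon(u(x))-h_\epsilon(\bar u_2(x))| \lesssim \epsilon^{-1} n^{-c},
\]
and the cost of swapping $h_\epsilon$ for the true truncation is controlled by the mass of $\bar u_2$ near $\pm K$.

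\textbf{Step 2 (concentration).} The summand $\ind_{\sigma(x)=i}\,h_\epsilon(\bar u_2(x))$ is $(\ell+1)$-local and bounded by $K$. Lemma~\ref{lem:spatial-average-3} (with $\beta=0$) shows that its empirical average equals
\[
\pi_i\,\E{h_\epsilon(Y_\ell)\given \sigma(\rho)=i} + O(n^{-c}), \qquad Y_\ell := \frac{f_{\phi_2,\ell+1}(T,\rho)}{\mu_2^{\ell+1}\sqrt{\gamma_2}} = \frac{Z_{\ell+1}}{\sqrt{\gamma_2}},
\]
with $Z_t$ the martingale of Proposition~\ref{prop:martingale}.

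\textbf{Step 3 (limit variable).} Since $\tau_2<1$, Proposition~\ref{prop:eigenpair_identities} gives $Z_t\to Z_\infty$ in $L^2$. Set $X_i$ to have the law of $Z_\infty/\sqrt{\gamma_2}$ conditionally on $\sigma(\rho)=i$. The difference between level $\ell+1$ and the limit is bounded by
\[
\E{(Z_{\ell+1}-Z_\infty)^2} \lesssim \tau_2^{\ell} = n^{-c}.
\]
Letting $\epsilon\to 0$ and using that $K$ is a continuity point of $X_i$ gives the main display. Making this rate quantitative, so that one obtains $O(n^{-c})$ rather than merely $o(1)$, is the main obstacle. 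I would choose $\epsilon=n^{-c/3}$ and bound $\Pb{|X_i|\in(K-\epsilon,K+\epsilon)}$. If no rate is available, I would settle for a statement uniform in continuity points with $o(1)$ error, which suffices for the overlap bound.

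\textbf{Step 4 (moment identities).} The first identity is $\sum_i\pi_i\E{X_i}=\gamma_2^{-1/2}\langle\pi,\phi_2\rangle=0$, using orthogonality to $\phi_1=\1$ in Case~1. The second is $\sum_i\pi_i\E{X_i}^2=\gamma_2^{-1}\|\phi_2\|_\pi^2=\gamma_2^{-1}$, since $\E{Z_\infty\given\sigma(\rho)=i}=\phi_2(i)$. For the third, $\sum_i\pi_i\E{X_i^2}=1$, I will use Proposition~\ref{prop:gw_transform:computations} together with the computation of Lemma~\ref{lem:reduced_eigen_computations}. These give
\[
\langle\pi,\overline{f^2_{\phi_2,\ell+1}}\rangle/\mu_2^{2(\ell+1)} \to \gamma_2,
\]
which is consistent with the normalization $\|u\|^2=n$.
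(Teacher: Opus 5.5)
Your plan follows the route the paper relies on, but the paper stops after the setup. It shows $\|u-\bar u_2\|=O(n^{1/2-c})$, writes $\bar u_2(x)=f(G,x)/(\mu_2^{\ell+1}\sqrt{\gamma_2})$ for an $(\ell+1)$-local $f$ that equals $f_{\phi_2,\ell+1}$ on hypertrees, notes that Proposition~\ref{prop:concentration:gw_transform} applies, and then cites the external lemma without proof. Your Steps 2--4 fill in that argument correctly:
\begin{itemize}
\item Taking $X_i$ to be $Z_\infty/\sqrt{\gamma_2}$ conditionally on $\sigma(\rho)=i$ is the right choice.
\item The bound $\mathbb{E}[(Z_{\ell+1}-Z_\infty)^2\mid\sigma(\rho)=i]\lesssim\tau_2^{\ell}$ follows from Proposition~\ref{prop:eigenpair_identities}, because $\bK$ is nonnegative with all row sums equal to $\vartheta$.
\item The three moment identities check out. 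The last one holds because $\sum_j\pi_j\,\mathbb{E}[Z_t^2\mid\sigma(\rho)=j]\to(1+\tau_2 a)/(1-\tau_2)=\gamma_2$, with $a=\phi_2^\star\Pi\bQ_{(q-2)w^2/\vartheta}\phi_2$.
\end{itemize}
One small repair in Step 1: the window mass you must control is that of $u$, not $\bar u_2$. Pass to $\bar u_2$ by doubling the window and paying a Chebyshev term $\|u-\bar u_2\|^2/(n\epsilon^2)=n^{-2c}/\epsilon^2$. Then pass to the tree through one more bounded local bump functional.

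The genuine gap is the one you flag, and it cannot be closed from the hypothesis as stated. After smoothing, your error contains $\omega_K(\epsilon):=\mathbb{P}(|X_i|\in(K-\epsilon,K+\epsilon))$. Continuity of the law of $|X_i|$ at $K$ only gives $\omega_K(\epsilon)\to 0$, with no rate. Nothing in the paper establishes regularity of the law of the martingale limit, and the paper gives no argument for the $O(n^{-c})$ error beyond the citation.

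What your argument actually proves, on an event of probability $1-n^{-c}$, is an error of $O(n^{-c})+O(\omega_K(n^{-c'}))$. That is convergence in probability for each fixed continuity point $K$. You should drop the phrase ``uniform in continuity points'', since uniformity is not available in general. This weaker conclusion suffices for Theorem~\ref{thm:weakreconstruction} if you replace its $O(n^{-c})$ by $o(1)$.

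If you want a genuine polynomial rate, choose $K$ generically. Since $\int_{\R}\omega_K(\epsilon)\,dK=2\epsilon$, every $K$ outside a set of Lebesgue measure at most $2\sqrt{\epsilon}$ satisfies $\omega_K(\epsilon)\le\sqrt{\epsilon}$. A threshold chosen off this set, or at random in a fixed interval, with $\epsilon=n^{-c'}$, then gives an error of $O(n^{-c''})$.
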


By the concentration bound in Proposition \ref{prop:concentration:gw_transform} and Algorithm~\ref{alg:provable_algorithm}, for all $i$, with probability at least $1-n^{-c}$, 
\begin{align} 
\frac1n \sum_{x \in [n]} \ind_{\sigma(x) = i} \ind_{x \in V^+} =\pi_i \left( \frac12 + \frac{\E*{X_i \ind_{|X_i| \leq K}}}{2K} \right) +O(n^{-c}). \notag
\end{align}
The rest of the proof follows from \cite[Proof of Theorem 3]{stephan2024sparse}. Whenever Case 2 holds, we repeat the same proof but with $u_1$ instead of $u_2$, which finishes the proof of Theorem~\ref{thm:weakreconstruction}.

\end{document}